\newcommand{\otherlabel}[2]{\protected@edef\@currentlabel{#2}\label{#1}}
\newtheorem{theorem}{Theorem}
\newtheorem{lemma}{Lemma}
\newtheorem{proposition}{Proposition}
\newtheorem{definition}{Definition}
\newtheorem{corollary}{Corollary}
\newtheorem{assumption}{Assumption}
\renewcommand{\vec}[1]{\bm{#1}}
\newcommand{\change}[1]{\textcolor{black}{ #1}} 
\newcommand{\polylog}[1]{\mathrm{polylog}(#1)}
\renewcommand{\vec}{\mathbf}
\newcommand{\dR}{\mathbb{R}}
\newcommand{\dN}{\mathbb{N}}
\newcommand{\dE}{\mathbb{E}}
\newcommand{\R}{\mathbb{R}} % Reals
\newcommand{\N}{\mathbb{N}}
\newcommand{\cH}{\mathcal{H}}
\newcommand{\Ea}[1]{\E\left[#1\right]}
\newcommand{\Eb}[2]{\E_{#1}\left[#2\right]}
\newcommand{\E}{\mathbb{E}}
\newcommand{\bilingualcommand}[3]{%
	\newcommand{#1}[1][\ ]{%
		##1%
		\iflanguage{english}{\text{#2}}{%
			\iflanguage{french}{\text{#3}}{}%
		}%
		##1%
	}%
}
\bilingualcommand{\where}{where}{où}
\bilingualcommand{\textif}{if}{si}
\bilingualcommand{\textand}{and}{et}
\bilingualcommand{\textiff}{if and only if}{si et seulement si}
\bilingualcommand{\otherwise}{otherwise}{sinon}
\newcommand{\eps}{\varepsilon}
\title{The Computational Advantage of Depth: Learning High-Dimensional Hierarchical Functions with Gradient Descent}
\author[1,2]{Yatin Dandi}
\author[1]{Luca Pesce}
\author[2]{Lenka Zdeborov\'a}
\author[1]{Florent Krzakala}
\affil[1]{\small  
Information, Learning and Physics Laboratory. Ecole Polytechnique F\'{e}d\'{e}rale de Lausanne (EPFL), CH-1015 Lausanne, Switzerland.}
\affil[2]{\small 
Statistical Physics of Computation Laboratory. Ecole Polytechnique F\'{e}d\'{e}rale de Lausanne (EPFL), CH-1015 Lausanne, Switzerland.}
\date{}
\begin{document}

\maketitle

\begin{abstract}
    Understanding the advantages of deep neural networks trained by gradient descent (GD) compared to shallow models remains an open theoretical challenge.  
   In this paper, we introduce a class of target functions (single and multi-index Gaussian hierarchical targets) that incorporate a hierarchy of latent subspace dimensionalities. This framework enables us to analytically study the learning dynamics and generalization performance of deep networks compared to shallow ones in the high-dimensional limit. Specifically, our main theorem shows that feature learning with GD successively reduces the effective dimensionality, transforming a high-dimensional problem into a sequence of lower-dimensional ones. This enables learning the target function with drastically less samples than with shallow networks. While the results are proven in a controlled training setting, we also discuss more common training procedures and argue that they learn through the same mechanisms.  %These findings open the way to further quantitative studies of the crucial role of depth in learning hierarchical structures with deep networks.
\end{abstract}

% \vspace{-0.3cm}
%\section{Introduction}
% \vspace{-0.1cm}
Understanding the computational benefits of deep neural networks over their shallow counterparts is a central question in modern machine learning theory \citep{sejnowski2020unreasonable,zhang2021understanding}. While shallow models can approximate any complex functions \cite{cybenko1989approximation}, deep networks almost universally exhibit remarkable advantages in practice \citep{lecun2015deep,adlam2023kernel}. There has been much progress in approximation theory on the advantage of depth (see e.g. \cite{mhaskar2016deep,pmlr-v49-telgarsky16,mhaskar2017and,poggio2017and} and reference therein), however, the dynamics of learning with gradient descent is a more complex question. A fundamental open problem  is thus:
\begin{center}
\textit{
Can one quantify the computational advantage of deep models trained with gradient-based methods with respect to shallow models in some rigorously analyzable setting?
} 
\end{center} 
One line of work on GD-based methods in deep networks leading to interesting results is in the setting of deep \textit{linear} network ---see e.g. \cite{saxe2014exact,ji2019gradient,arora2019convergence,lee2019wide,ghorbani2021linearized}.
While deep linear networks offer valuable insights into nonlinear learning dynamics, their simplicity renders them insufficient to capture the complexity of hierarchical feature learning.

Another popular line of research is to study the dynamics of gradient-based methods learning multi-index functions with shallow models \citep{BenArous2021,ba2020generalization,ghorbani2020neural,bietti2022learning,abbe2023sgd,troiani2024fundamental}. 
% Such existing solvable models predominantly fall into two categories: kernel and random feature models \citep{loureiro2021learning,mei_generalization_2022,mei2022generalization}, and multi-index functions learned with shallow models \citep{ba2020generalization,ghorbani2020neural,bietti2022learning,abbe2023sgd,troiani2024fundamental}. 
Multi-index functions provide a rich class of targets, but their efficient learnability by shallow two-layer networks \citep{arnaboldi2024repetita,lee2024neural} undermines their utility as benchmarks for understanding the computational advantages of depth. This motivates the following consideration: 
\begin{center}
\textit{
What is the natural model of targets to consider for understanding the emergent computational advantage of depth when training with gradient-based methods?
} 
\end{center} 
The present paper addresses both these questions. To answer the latter,  we introduce a class of target functions designed to probe the hierarchical structure and computational potential of deep networks. 
%These functions, which we shall refer to as  
These {\it Multi-Index Gaussian-Hierarchical Target} (MIGHT) functions encapsulate a hierarchy of latent subspaces with varying dimensionalities. We then proceed to answer the former 
interrogative by analyzing the learning dynamics of multi-layer neural networks on such targets, providing a characterization of the computational advantages afforded by depth. We show how depth enables a hierarchical decomposition of tasks, reducing the effective dimensionality at each layer, and leading to a quantifiable improvement in sample complexity over shallow models.

\begin{figure*}[ht]
\centering
\includegraphics[width=0.7\linewidth]{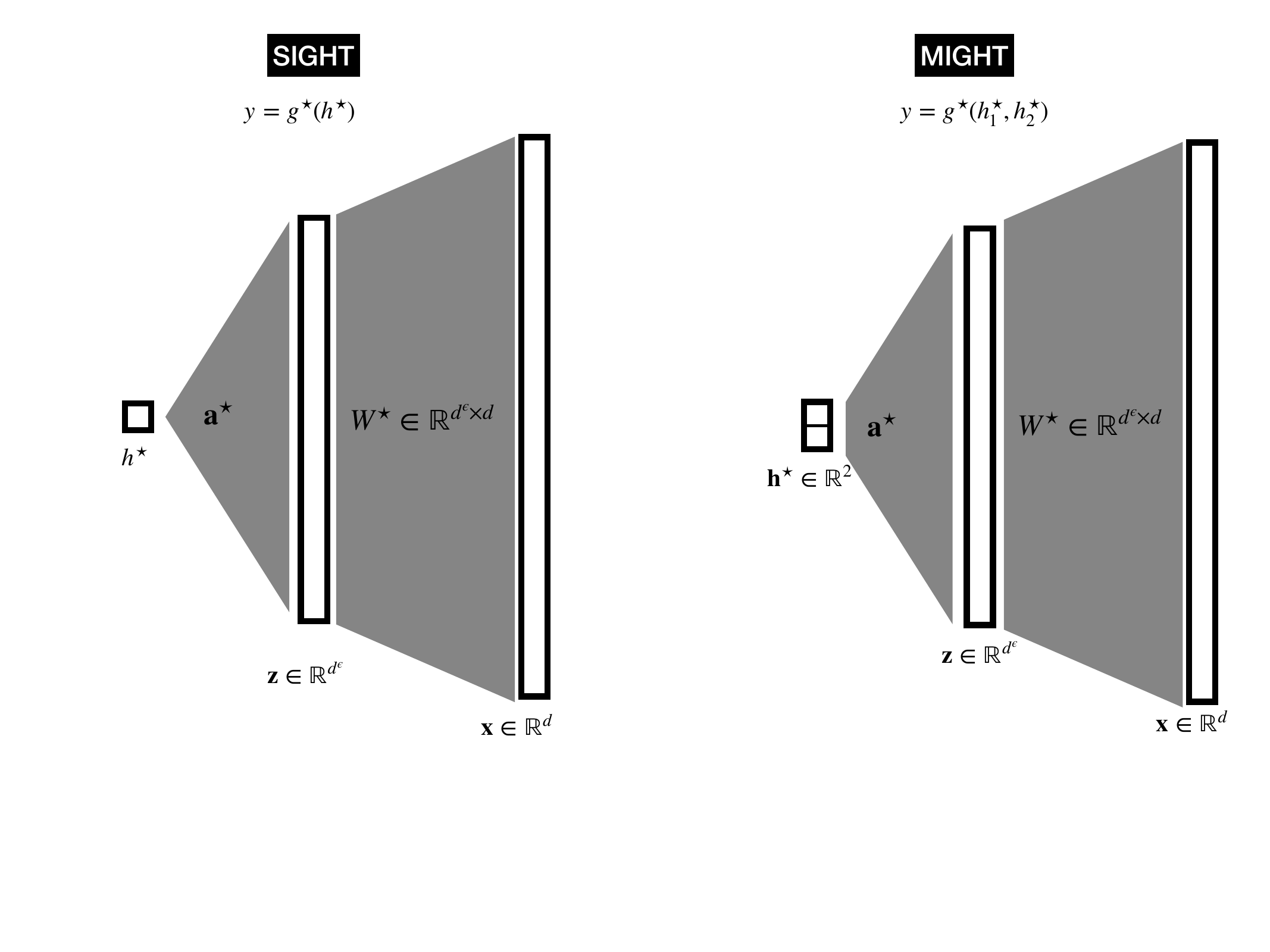}
% \vspace{-1.5cm}
\caption{\textbf{SIGHT and MIGHT targets:} Illustration of Single and Multi Index Gaussian Hierarchical Targets, i.e., SIGHT in eq.~\eqref{eq:3layer_target-reduced} and MIGHT in eq.~\eqref{eq:3layer_target_might}. {\bf Left: A SIGHT function.} Here we first go from ${\bf x} \in {\mathbb R}^d$ to ${\bf z} \in {\mathbb R}^{d^{\varepsilon}}$. After applying the polynomial transformation pointwise (not shown), this is projected to create a scalar $h^\star \in {\mathbb R}$. One can then output the label $y=g^\star(h^\star)$. {\bf Right: A MIGHT function.} Again, we go from ${\bf x} \in {\mathbb R}^d$ to ${\bf z} \in {\mathbb R}^{d^{\varepsilon}}$. After applying the polynomial transformation pointwise, we finally projecte on two values $h_{4,1}^\star$ and $h_{4,2}^\star$, from which we create $y$ as a two-index function $y=g^\star(h_{4,1}^\star,h_{4,2}^\star)$.
     \label{fig:app:sandm_targets}}
\end{figure*}

\section{Hierarchical Targets and Main Results}
\label{main:def:targer}
\subsection{Single-Index Gaussian Hierarchical Targets}
Our simpler setting, where the task ---using Gaussian i.i.d. data $\{\vec x_\mu\}_{\mu =1}^n \in \mathbb R^{n \times d}$--- to learn the following Single-Index Gaussian Hierarchical Target (SIGHT) function class that we write in three equivalent forms as:
\begin{align}
\label{eq:3layer_target}
f^\star(\vec{x}) &=  g^\star\left(
\frac{\vec{a}^{\star^\top} \, 
P_k\left(W^\star \vec{x}\right) }{\sqrt{d^{\varepsilon_1}}} \right),\,\, {\vec x \in \mathbb{R}^d},\,\\
&= g^\star\left(
\frac{\vec{a}^{\star^\top}  
P_k\left(
\vec z^\star
\right)} {\sqrt{d^{\varepsilon_1}}} \right),\,\,
{\vec z^\star = W^\star \vec x \in  \mathbb R^{d^{\eps_1}}},
\label{eq:3layer_target-reduced}
\\
&= g^\star\left(h^\star \right),\,\, %h^\star \in \mathbb{R}
 h^\star =  \vec{a}^\star \cdot 
{P_k\left( {\bf z}^\star \right) }/{\sqrt{d^{\varepsilon_1}}} 
\,  \in {\mathbb R}\,.
\end{align}
Here $P_k$ is a fixed polynomial applied component-wise, and $d^{\varepsilon_1}$ denotes the dimensionality of the {\it second-layer features} (non-linear features) in the intermediate layer, which we choose to be  \change{$\varepsilon_1 \in (0,1)$}. The {\it first-layer} features (linear features) are $ {\bf z}^\star = W^\star \vec{x}$, where $W^\star\!\in\! \mathbb{R}^{d^{\varepsilon_1}\!\times\!\,d}$ has  orthonormal unit vectors as rows, and ${\bf a}^* \in \mathbb R^{d^{\varepsilon_1}}$ is chosen randomly from a \change{fixed distribution}. We refer to the variable $h^*$ as the \textit{index} in the name of the class.  
This construction, a generalization of the hidden manifold model \citep{goldt2020modeling}, is motivated by the compositional structure present in real-world functions and by the analysis carried over by  \cite{wang2023learning,nichani2024provable}. The strictly decreasing 
dimensionality of the features across depth allows us to avoid the pitfall of the original hidden manifold model \citep{goldt2020modeling} that turns out to be equivalent to a Gaussian linear target \citep{goldt2022gaussian,hu2022universality,montanari2022universality}. 

\begin{figure*}[ht]
    \centering
    \includegraphics[width=0.7\linewidth]{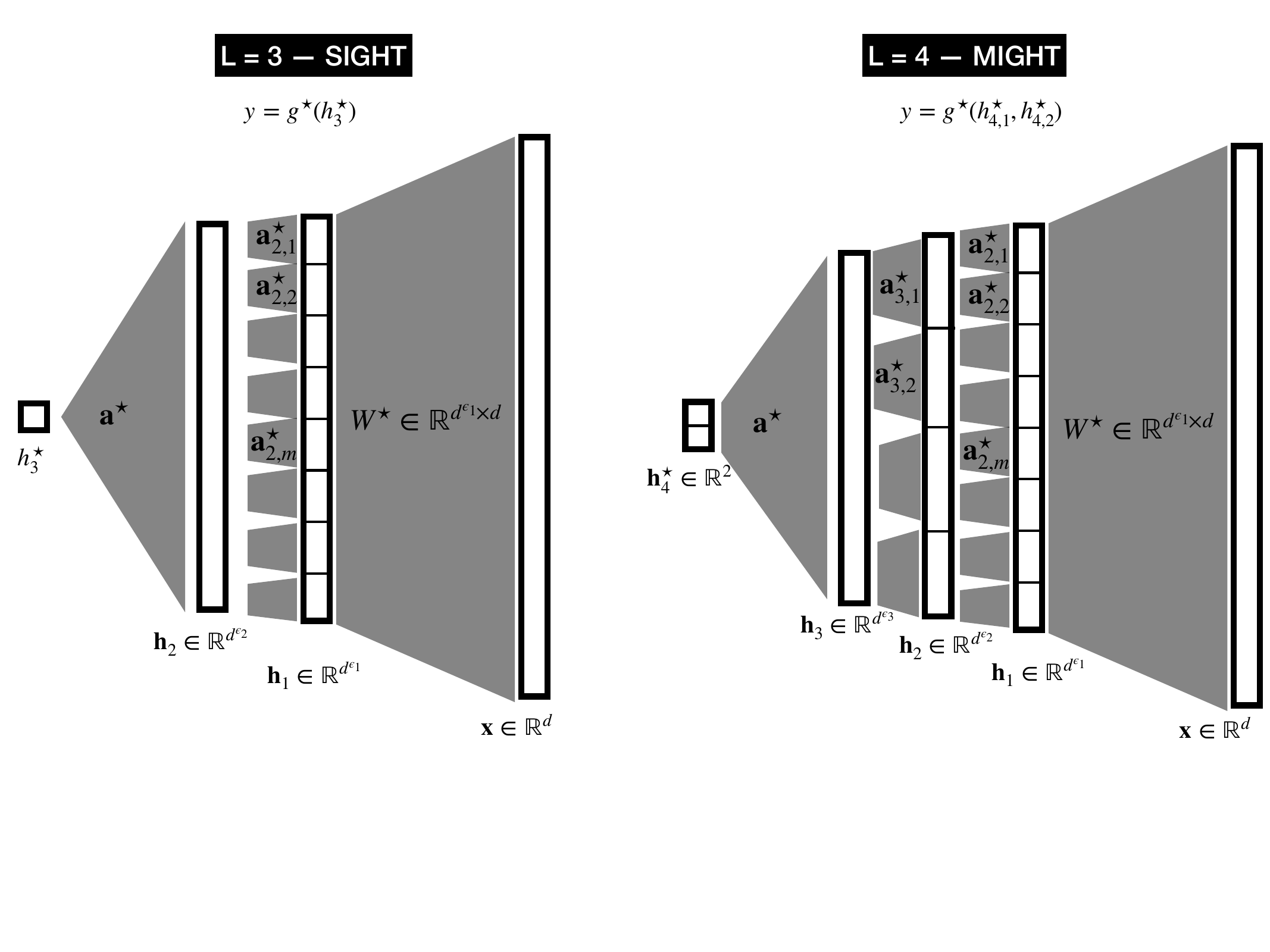}
    % \vspace{-1.5cm}
    \caption{\textbf{Deep SIGHT and MIGHT:} Illustration of deep target functions. {\bf Left: A SIGHT function with depth $L=3$.} Here we first go from ${\bf x} \in {\mathbb R}^d$ to ${\bf h}_1 \in {\mathbb R}^{d^{\varepsilon_1}}$. After applying the polynomial transformation pointwise (not shown), we now divide ${\bf h}_1$ into $d^{\varepsilon_2}$ blocks of sizes $d^{\varepsilon_1-\varepsilon_2}$. Each of these blocks is projected to create one of the components of ${\bf h}_1 \in {\mathbb R}^{d^{\varepsilon_2}}$. After another polynomial transformation (not shown)  we finally project to a single value $h_3^\star$. We can then output the label $y=g^\star(h_3^\star)$. {\bf Right: A MIGHT function with depth $L=4$.} Again, we go from ${\bf x} \in {\mathbb R}^d$ to ${\bf h}_1 \in {\mathbb R}^{d^{\varepsilon_1}}$. After applying the polynomial transformation pointwise (not shown), we now divide ${\bf h}_1$ into $d^{\varepsilon_2}$ blocks of sizes $d^{\varepsilon_1-\varepsilon_2}$. Each of these blocks is projected to create one of the components of ${\bf h}_2\in {\mathbb R}^{d^{\varepsilon_2}}$. We repeat this operation: we further divide ${\bf h}_2$ into $d^{\varepsilon_3}$ blocks of sizes $d^{\varepsilon_2-\varepsilon_3}$ and each of these blocs is projected to create one of the components of ${\bf h}_3 \in {\mathbb R}^{d^{\varepsilon_3}}$. After another polynomial transformation (not shown) we finally project on two values $h_{4,1}^\star$ and $h_{4,2}^\star$ and create $y$ as a two-index function $y=g^\star(h_{4,1}^\star,h_{4,2}^\star)$.
         \label{fig:app:deep_targets}}
\end{figure*}

% \vspace{-0.3cm}
\subsection{Multi-Index Gaussian Hierarchical Targets}
% \vspace{-0.1cm}
A simple generalization of the above construction is to include many non-linear features, leading to Multi-Index Gaussian Hierarchical Targets (MIGHT) defined as:
\begin{align}
\label{eq:3layer_target_might}
f^\star(\vec{x}) =  g^\star\left( h^\star_{1}(\vec{x}),\ldots, h^\star_{r}(\vec{x})\right),
\end{align}
where
\begin{equation}
\label{eq:non_linear_feature_def_might}
    h^\star_m(\vec{x}) = \frac{1}{\sqrt{d^{\varepsilon_{1}}}} {\bf a}^{\star \top}_m P_{k,m}\left(W_m^\star \vec{x}\right),\, m=1,\ldots r\,,
\end{equation}
with now  $r$ directions, each with their own  layer weights (${\bf a}_m$ and $W^\star_m$), and 
polynomials ($P_{k,m}$).
% \vspace{-0.3cm}

\subsection{Deep Multi-Index Hierarchical Targets}
% \vspace{-0.1cm}

Finally, we define the {\it deep} version of MIGHTs as
%(see also App.~\ref{sec:app:plots}) as:
%
\begin{equation}
    \label{eq:target_def_deep}
    f^\star(\vec{x}) =  g^\star\left( h^\star_{L,1}(\vec{x}),\ldots,h^\star_{L,r}(\vec{x})\right),
\end{equation}
with Gaussian data $\{\vec x_\mu\}_{\mu =1}^n \!\in\!\mathbb R^d$, and where each features ${\bf h}^\star_{\ell}(\vec{x})\!\in\!{\mathbb R}^{d^\varepsilon_\ell}$ are recursively defined as:
\begin{equation}
    \label{eq:non_linear_feature_def}
    {h}^\star_{\ell,m}(\vec{x}) = \frac{1}{\sqrt{d^{\varepsilon_{\ell-1}-\varepsilon_{\ell}}}}\vec a^{\star^\top}_{\ell,m} P_{k, m, \ell}\left({\bf h}^\star_{\ell-1,\{1+(m-1)d^{\varepsilon_{\ell-1}-\varepsilon_{\ell}},\ldots, md^{\varepsilon_{\ell-1}-\varepsilon_{\ell}}  \}}(\vec{x})\right), 
\end{equation}
with $\ell = 1 \cdots L$, $m = 1\cdots d^{\varepsilon_{\ell}}$ and where ${\bf a}^\star_{\ell,m} \in \mathbb{R}^{d^{\varepsilon_{\ell-1}-\varepsilon_{\ell}}}$  acts on the $m_{th}$ block of the previous layer feature $\vec h^\star_{\ell-1}(\vec{x})$ (each of them being of size $d^{\varepsilon_{\ell-1}-\varepsilon_{\ell}}$). Again $P_{k,m, \ell}$ are fixed polynomials for $\ell\!=\!1, \cdots, L\!-\!1$; $d^{\varepsilon_{\ell}}$ denotes the dimensionality of the features at layer $\ell$, which we choose to be strictly decreasing across depth, i.e., $1 > \varepsilon_1 \!>\! \varepsilon_2 \!>\! \cdots \!>\! \varepsilon_{L-1} \!>0$, with ${\bf h}^\star_L \in \mathbb{R}^r$ being finite-dimensional. This "tree-like" construction of $f^\star(\vec{x})$ ensures that for any layer index $\ell \in 1, \cdots, L$,  the hidden features 
$h^\star_{\ell, m}(\vec{x})$ remain independent for different index $m \in 1, \cdots, d^{\varepsilon_{\ell}}$. (Appendix \ref{app:multiple_layers_ind})
% For clarity, we illustrate the deep version of SIGHT and MIGHT functions in Fig.~\ref{fig:app:deep_targets} in App.~ \ref{app:illust}, where the tree structure of the deep version of these target functions is apparent. 

%{\color{red}LZ: What follows is very obscure. Is it needed?} 
% For the sake of simplicity, we restrict ourselves to the connection weights $A^\star$ possessing a ``tree-like" structure, and not mi
% \begin{equation}
% \label{eq:tree_like_connection}
%     A^\star_\ell = \begin{pmatrix}
%         a^\star_{\ell,1} & 0 & \cdots & 0\\
%         0 &  a^\star_{\ell,2} & \cdots & 0\\
%         \vdots \\
%         0 &  0 & \cdots & a^\star_{\ell,d^{\varepsilon_{\ell}}}\\
%     \end{pmatrix},
% \end{equation}
% {\color{red}LZ: Why this dimension, do not understand.} 

Finally, the $1^{\rm st}$-layer features are defined as \looseness=-1
\begin{equation}
\label{eq:linear_feature}
     {\bf h}^\star_{1}(\vec{x})= {\bf z}^\star = W^\star \vec{x},
\end{equation}
where $W^\star\!\in\! \mathbb{R}^{d^{\varepsilon_1}\!\times\!\,d}$ has  orthonormal unit vectors as rows. By explicitly incorporating multiple levels of non-linear feature transformations, each associated with a progressively reduced latent dimensionality, it models the deep hierarchical structure is a feature of complex real-world tasks, see e.g. \citep{mallat1999wavelet,lecun2015deep,mossel2016deep,cagnettarandomhierarchy,cagnetta2024towards, sclocchi2025probing}. 
% \change{We exemplify SIGHT~(\ref{eq:3layer_target}) and MIGHT~(\ref{eq:3layer_target_might}) functions in Fig.~\ref{fig:app:sandm_targets}, and their deep version (\ref{eq:target_def_deep}), where the tree structure of the deep version of these targets is apparent,  in Fig.~\ref{fig:app:deep_targets}.}
We exemplify visually SIGHT~(\ref{eq:3layer_target}) and MIGHT~(\ref{eq:3layer_target_might}) functions in Fig.~\ref{fig:app:sandm_targets}, and their deep version (\ref{eq:target_def_deep}) in Fig.~\ref{fig:app:deep_targets}. These illustrations clarify how hierarchical compositions operate across layers and how depth progressively compresses the input through structured non-linear transformations. Specifically, they highlight the architectural transition from shallow models to deeper ones, where each layer reduces the effective dimensionality via localized polynomial projections. The tree-like structure of the deep targets, emphasizes the compositional nature of the learning task and motivates the layer-wise training regime analyzed in the main results.

%Moreover, the tree-like structure of the connection weights, combined with the independence of hidden features at each layer, ensures the analytical tractability while preserving the expressive power of the hierarchy.

\subsection{Learning Model} 
% \vspace{-0.1cm}
\begin{wrapfigure}{rt}{0.45\textwidth}
    \centering
        \vspace{-1cm}
    {\includegraphics[width=0.85\linewidth]{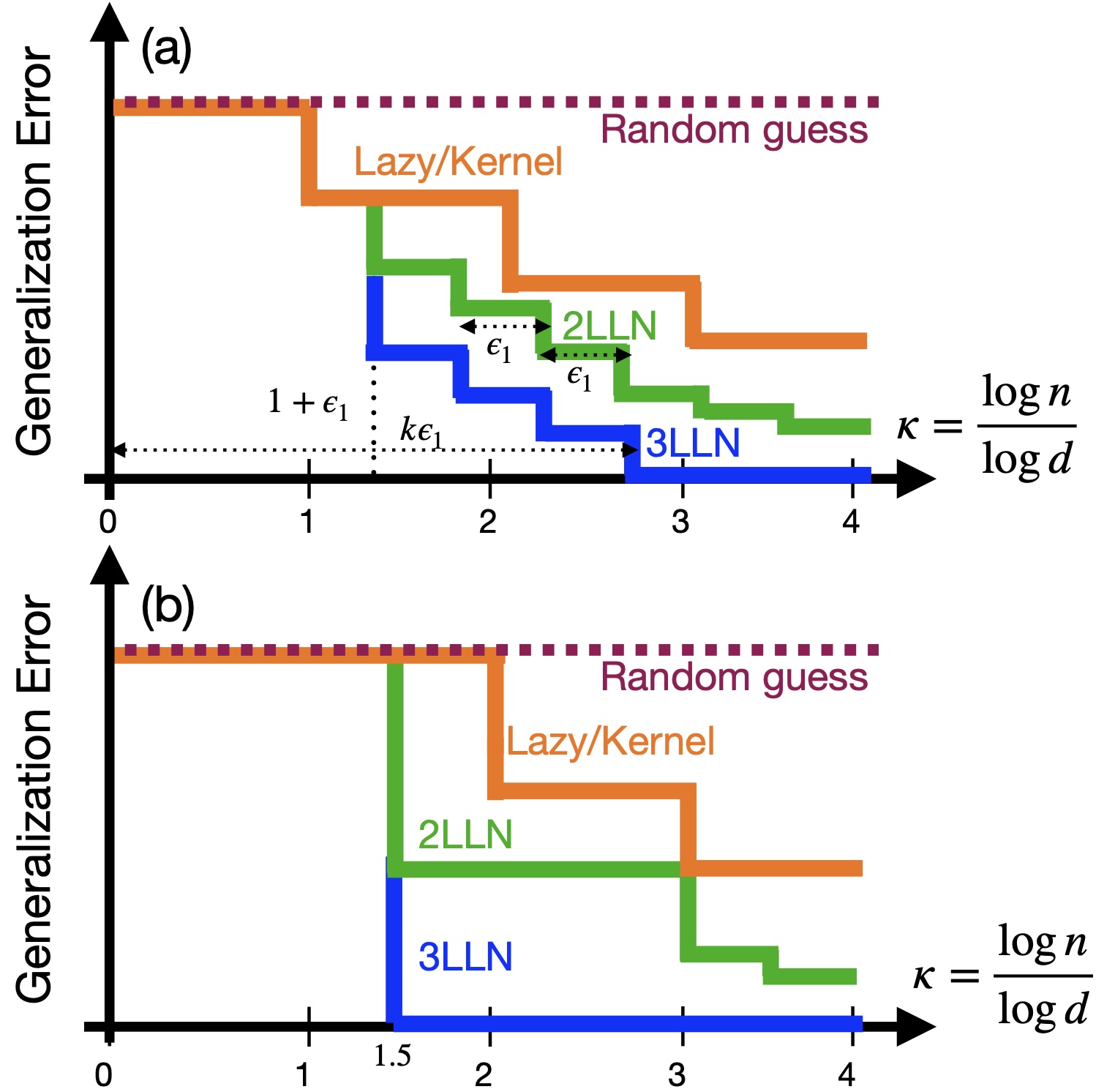}}
    \caption{An illustration of the phase transitions in learning SIGHT according to the main Theorem~\ref{thm:main_theorem} denoting the computational advantage of depth for two different target model: \textbf{(a)} generic shallow SIGHT function (eq.~\eqref{eq:3layer_target}) and \textbf{(b)} the  example  in eq.~\eqref{main-example}. %All methods learn functions of increasing complexity with more data, but three layer nets do so more efficiently than two layer ones, which are themselves more efficient than kernels.
    \label{mainfig}
    }
    \vspace{-1cm}
\end{wrapfigure}

We now consider learning SIGHT and MIGHT functions $f^\star(\vec{x})$ through an $L$-layer neural network, that is a standard multi-layer perceptron:
\begin{equation}
\hat{f}_\theta(\vec{x}) = b_L+\vec{w}_L^\top \sigma({\bf b}_{L-1}+W_{L-1} \cdots \sigma({\bf b}_1+ W_1(\vec{x} ))),
\end{equation}
where $\theta$ denotes the ensemble of trainable parameters $\{{\bf b}_\ell,W_\ell,\, \ell =1 \cdots L\}$. The hidden layer weights have dimension $W_\ell \in \mathbb{R}^{p_{\ell} \times p_{\ell-1}}$ for $\ell \in \{2,\cdots, L-1\}$ with readout layer $W_L \in \mathbb{R}^{p_L}$ and first layer $W_1 \in \mathbb{R}^{p_1 \times d}$, and the biases ${\bf b}_\ell$ are in $\mathbb{R}^{p_\ell}$. We shall consider Empirical Risk Minimization (ERM) of the square loss $\hat {\mathcal{R}}(\{\vec x_\mu\}) = \sum_{\mu =1}^n \left(f^*({\bf x}_{\mu}) - \hat f_{\theta}({\bf x}_{\mu}) \right)^2$ with gradient descent.
% \lz{We are using $p_l$ both for the polynom name and for the with of each layer. CHANGE one of them!!!!! }

\subsection{Main Results in a Nutshell}\label{sec:main_result} 
% \vspace{-0.1cm}
The backbone of our results is the analysis of the asymptotic performance of learning SIGHT and MIGHT functions using multi-layer networks trained with Gradient Descent on Gaussian data, as both $n$ (the number of data) and $d$ (the dimension of the data) grow to infinity. We unveil a series of sharp thresholds in the sample complexity ratio $\kappa = \frac{\log n}{\log d}$ where neural networks learn the target with increasing accuracy. To summarize:\looseness=-1

\begin{itemize}
[noitemsep,leftmargin=1em,wide=0pt]
    \item Our targets offer a solvable playground to unveil the computational advantage of deep networks over shallow ones. The learning mechanism can be viewed as the reduction of the ``effective dimension" in which networks trained on $f^\star(\vec{x})$  successively reduces the dimensionality of the search space:
 %   \vspace{-0.1cm}
  %  \begin{equation}
    $d^{\varepsilon_1} \rightarrow d^{\varepsilon_2} \rightarrow d^{\varepsilon_3}, \cdots, \rightarrow r.$
    %\label{eq:dim_red}
   %   \end{equation}
     Depth acts as a progressive filter that {\it distills} data into lower-dimensional representations (a coarse-graining mechanism akin to renormalization in physics), enabling the learning of subsequent layers.     
  %  \item We focus our mathematical analysis on the layer-wise training of three-layer networks, where each layer is trained sequentially and independently. Under specific universality and isotropy assumptions, we establish a precise result on the recovery of the second-layer features $\vec{h}^\star_2$ given the first-layer outputs, and subsequently the third-layer features $\vec{h}^\star_3$. Our findings show that this hierarchical recovery requires a sample complexity of the form $\tilde{O}(d^{k\varepsilon_1 + \varepsilon_2})$. This result offers a rigorous demonstration of how depth enables effective dimensionality reduction in the MIGHT framework.

  \item We focus the rigorous analysis in the paper on the case of shallow SIGHT functions (eq.~\eqref{eq:3layer_target}) learned by $3-$layer networks, where each layer is trained sequentially and independently. We prove that a three-layer network trained in a layer-wise fashion can learn a SIGHT function $f^\star(\vec{x})$ efficiently. Specifically, the network first recovers $W^\star$ using $\tilde{O}(d^{\varepsilon_1 + 1})$ samples, then reconstructs $h^*$ with $\tilde{O}(d^{k \varepsilon_1})$ samples (with $k$ denoting the degree of $P_k$, in case $k \varepsilon_1< 1+\varepsilon_1$ both happen at $1+\varepsilon_1$), and finally fits $f^\star$ as a function of $h^\star$ using only $\tilde{O}(1)$ samples. This sample complexity aligns with predictions from the dimension-reduction/coarse-graining perspective, where earlier layers successively reduce the effective dimensionality of the learning problem.
  %and represents a strict improvement over both a 2-layer network and a 3-layer network where only the last two layers are trained. 
  We also present additional results for deeper targets and networks.
%\item The mechanism behind the dimensional reduction  is captured mathemactially through two key training paradigms: one that updates the Conjugate Kernel in a layerwise fashion, and another where only the first layer undergoes training via a Neural Tangent Kernel update. These findings provide insights into the mechanisms underlying these advantages, drawing connections to random feature models and the consequent dimensionality reductions. 
    \item We further explore the problem through numerical simulations using more realistic training procedures than those covered by the theorems. Our results suggest that the dimensionality reduction mechanism remains broadly applicable. Notably, we illustrate such a phenomenon using 3-layer networks training all the layers jointly with standard backpropagation. %To further hammer on the advantage of the 3-layer architecture, we also show that it outperforms 2-layer ones even {\it when only the first layer is trained}. Indeed we show that the second layer’s features can still adapt via an evolving neural tangent kernel mechanism, and the networks outperform shallow two-layer ones, which illustrates the architectural nature of their superiority. 
    We provide the code of our simulations at \href{https://github.com/IdePHICS/ComputationalDepth}{https://github.com/IdePHICS/ComputationalDepth}.
\end{itemize}

\subsection{Related Works}\label{sec:rel_work}
% \vspace{-0.6cm}

\paragraph {Random Feature Models ---} A key attribute enabling the effectiveness of neural networks is their ability to adjust to low-dimensional features present in the training data. However, interestingly, much of the current theoretical understanding of neural networks comes from studying their lazy regime, where features are not learned during training. One of the most pre-eminent examples of such ``fixed-features'' regimes are Random Feature (RF) models, initially introduced as a computationally efficient approximation to kernel methods by \cite{rahimi2007random}, they have gained attention as models of two-layer neural networks in the lazy regime. One of the main motivations is their sharp generalization guarantees in the high-dimensional limit \citep{gerace2020generalisation,goldt_gaussian_2021,mei2022generalization, Mei2023,xiao2022precise,defilippis2024dimension}. As mentioned, however, the performance of such methods, and of any kernel method in general, is limited. A fundamental theorem in \cite{mei2022generalization} states that only a polynomial approximation up to degree $\kappa_{\rm RF}$ of any target $f^*$, with $\kappa_{\rm RF} = {\rm min} (\kappa_1,\kappa_2)$ when learning with $n=d^{\kappa_1}$ data and $p=d^{\kappa_2}$ features. While even shallow networks can surpass these limitations \citep{ghorbani2021linearized,ba2020generalization,dandi2024random}, this relation for $\kappa_{\rm RF}$ plays a fundamental role in our analysis.

\paragraph{Multi-index Models ---}
Despite the theoretical successes in describing fixed feature methods, the holy grail of machine learning theory remains a rigorous description of network adaptation to low-dimensional features. A popular model to study such low-dimensional structure in the learning performance is the  {\it multi-index model}. For this class of target (denoted as $f^\star_{MI}$), the input datum $\vec x$ is projected on a $r-$dimensional subspace $W^\star = \{\vec{w}^\star_j, j \in 1 \cdots r\}$ and the input-output relation depend solely on a non-linear map $g^\star$ of these $r$ (linear) features : 
\begin{align}
    f^\star_{\rm MI}(\vec x) = g^\star(\vec{x}^\top \vec{w}^\star_1, \dots,\vec{x}^\top \vec{w}^\star_r)
\end{align}
While the information theoretical performance is well understood \cite{barbier2019optimal,aubin2018committee}, there has been intense theoretical scrutiny to characterize the sample complexity needed to learn multi-index models with shallow models. On the one hand, kernel methods can only learn a polynomial approximation \citep{mei2022generalization}; on the other hand, the situation in neural networks appears more complicated at first as the hardness of a given $f^\star_{\rm MI}$ has been characterized by the ``information'' and ``leap'' exponents \cite{BenArous2021, abbe2022merged, dandi2024twolayer, damian2024computational, dandi2024benefits,arnaboldi2024repetita, lee2024neural, bietti2023learning, simsek2024learning, arous2024stochastic}.
It was shown, however, that simple modification of vanilla Stochastic Gradient Descent (SGD), such as Extra-Gradient methods or Sharpness Aware Minimizers, are able to attain sample complexity corresponding to Statistical Query (SQ) lower bound \citep{arnaboldi2024repetita, lee2024neural}, and are essentially optimal up to polylog factors in the dimension \citep{damian2024computational,troiani2024fundamental}. A motivation of the present work is to go beyond such limitations and analyze hierarchical feature learning.

% \subsection*{Benefit of reusing batches} We are able to show that the sample complexity can be reduced up to $n = O(d_{\rm eff}^{k})$ by reusing batches. The result in Fig.~\ref{fig:online/offline} exeplifies this for $d_{\rm eff} = \sqrt{d}$ and $k = 3$
%     \begin{enumerate}
%         \item We first present a rigorous analysis based on matrix concentration arguments for the functional single-index model, i.e. $r=1$. The analysis is based on linearization arguments on the dynamics of the preactivations. 
%         \item We present heuristic arguments for the functional multi-index case ($r>1$). 
%         \item The above sample complexity matches what one would expect from Information Theoretic arguments (simple parameter counting). 
%     \end{enumerate}
% \subsection*{General Depth Hierarchical Targets} We extend recursively the above argument to deeper networks using the hierarchical tree structure over multiple layers. 

\paragraph{3-Layers Networks  ---} Substantial effort has been devoted to investigating the approximation advantages conferred by deeper neural network architectures \citep{pmlr-v49-telgarsky16,eldan2016power, safran2022optimization}. However, it remains unclear how these approximation gaps translate into sample complexity ones for neural networks when trained through gradient descent. An important step towards the role of depth in neural networks has been carried over by  \cite{wang2023learning,nichani2024provable}, who proved separation results between the test performance of 2 \& 3 layer networks. More precisely, \cite{wang2023learning}  proved that 3-layer architectures with a fixed first layer can learn a target function of the form $g^\star(\vec{x}^\top A \vec{x})$ in $n = \tilde{O}(d^4)$ samples through a single-gradient step on the second layer, where $\vec{x} \in \mathbb{R}^d$ and $A \in \mathbb{R}^{d \times d}$. In contrast, 2-layer networks require a super-polynomial number of samples in terms of the degree of $g^\star$. \cite{nichani2024provable} subsequently improved the sample complexity to $\tilde{O}(d^2)$ and generalized the result to functions of $p_{th}$-order polynomials. 
\cite{fu2025learning} further extended these results to learning multiple-nonlinear features. 
We go beyond these results to prove stronger separation results by analyzing fully trained networks without a fixed first layer.
\paragraph{Coarse-graining ---} 
The dimensionality reduction we describe is closely related to the concept of learning features across different scales. This idea has been explored in the context of machine learning through connections with the renormalization group \cite{wilson1971renormalization} in physics, where each scale corresponds to a distinct set of features. Such techniques have inspired studies of deep neural networks \citep{mehta2014exact,li2018neural,marchand2023multiscale}.  Here, we present a concrete example of such a coarse-graining mechanism, illustrating how hierarchical structures can be analyzed explicitly.

\paragraph{Hierarchical data models ---} 

\change{A key insight in explaining the superiority of deep over shallow networks is that depth enables neural networks to progressively reduce the effective dimensionality of the learned data representation \citep{tishby2015deep, Alemi2017Deep,Achille2018Emergence,Saxe_2019,ansuini2019intrinsic, doimo2020hierarchical, cagnettarandomhierarchy}. This aligns with the latent hierarchical structure observed in real-world data, which deep models exploit through layer-wise composition. Leveraging these observations, the construction of hierarchical data models has been central in theoretical analysis \citep{mossel2016deep, allen2019can, abbe2022merged, sclocchi2025probing, sclocchi2025phase, cagnetta2024towards, cagnettarandomhierarchy}.} 

 \cite{poggio2017and} considered tree-like structures analogous to our SIGHT (\ref{eq:3layer_target}) and MIGHT (\ref{eq:3layer_target_might}), leading to provable approximation benefits across depth. More generally, \citep{poggio2023deep} linked compositional sparsity to efficient computability. Since learnability subsumes computability, such computational sparsity is expected to be necessary for efficiently learnable functions, further supporting our construction. However, since efficiently learnable functions form a strict subset of computable functions, functions learnable by gradient descent must possess additional structure on top of being compositions of local/sparse functions. Our class of targets shows that one such additional structure is obtained by insuring sufficient regularity/stability w.r.t intermediate features at each step. In our setting, such regularity is ensured by the presence of low-degree dependence on lower level features. This mirrors the dependence structure in real data, where for instance, the target labels for images/language datapoints have direct correlations with low-level features such as edges or bi-gram, trigram counts.
 
 % As mentioned earlier, however, the improvements in approximation efficiency do not translate to advantages in learnability.

Results supporting the benefits of deepth for tree-like hierarchical models have been provided by \cite{mossel2016deep} and \cite{cagnettarandomhierarchy} who consider tree-structured inputs, in contrast to our focus on structured targets (but non structured input). In particular in the  analysis of \cite{cagnettarandomhierarchy}, their “random hierarchy model” hierarchical correlations between labels and input patches enable a form of clustering that leads to dimension reduction, analogous to the progressive coarse-graining in our setting. They also provide heuristic sample complexity predictions and show that even a single gradient step can recover coarse features,  enabling the advantage of deep learning. It would be interesting to extend our framework to analyze such models under layer-wise training via gradient descent.

\paragraph{Universality ---} 
    A crucial role in our analysis is played by the asymptotic Gaussianity of $\vec {h}^\star_{\ell}(\vec{x})$ which leads to a simplified description of how dependencies on  $\vec {h}^\star_{\ell}(\vec{x})$ propagate to lower-level features. Such a property
    is a crucial component of the analysis in  \cite{nichani2024provable,wang2023learning}. Specifically, \cite{nichani2024provable,wang2023learning} showed that the projection of $g^\star(\langle \text{He}_k(\vec{x}), A \rangle)$ on degree-$k$ Hermite polynomials lies along the non-linear feature $\langle \text{He}_k(\vec{x}), A \rangle$ while $g^\star$ has vanishing projections on lower degree terms. 
    We generalize these results to describe the projections on all degree components.

% We provide the sample complexity for online training algorithms as $n = O(d_{\rm eff}^{2k})$. This result generalizes the findings of \cite{nichani2024provable}, which focus on a fixed polynomial degree $k = 2$. 
% {\bf FK ? ?? what does this means?????}
% Our result follows from a direct computation:
% \[
% \vec{x}^\top A \vec{x} = \sum_{i \in [d]} \lambda_i \vec{x}_i^\top \vec{v}_i \vec{v}_i^\top \vec{x}_i,
% \]
% where $g^\star(\vec{x}^\top \vec{v}_i) = (\vec{x}^\top \vec{v}_i)^2$. Notably, the authors of \cite{nichani2024provable} do not consider the training of the first layer.
% {\bf Sufficient statistics --} 
% {\bf FK : Why do we need this in the main?!?!?! }
% Starting with the pivotal in the $90'$s, approach with summary statistics to study the learning behaviour of neural networks// Recently, revival with hardness exponents // We perform a leap forward correctly identifying the order paraemters for deep targets: overlap in function space between $h(\vec x)$ and $h^\star(\vec x)$. The main result will deal with the dynamics of this object under gradient descent updates. Below, the definition of the order parameters:

\section{Heuristic argument underlying the main results}
\label{sec:heuristic}

Before presenting the main technical results, we describe here a heuristic argument describing the narrative behind the results. 
%Understanding the full learning dynamics of deep neural networks remains an open and formidable challenge in machine learning theory. 
For concreteness, we focus here on learning a shallow SIGHT function \eqref{eq:3layer_target} as a first step toward a broader understanding. For concreteness, we will discuss the following example (later used in Fig.~\ref{fig:gen_error_fig1}):
\begin{equation}
\label{main-example}
f^\star(\vec{x}) =  \tanh\left(
\frac{\vec{a}^{\star^\top} \, 
P_3\left(W^\star \vec{x}\right) }{\sqrt{d^{\varepsilon_1=1/2}}} \right)
\end{equation}
with a polynomial $P_3(x)={\rm He}_2(x)+{\rm He}_3(x)$ (the second and third Hermite polynomials), $\varepsilon_1=1/2$, and discuss the performance of different learning architectures, highlighting the dimensionality reduction due to feature learning. The learning dynamics for general SIGHT (eq.~\eqref{eq:3layer_target}) and the particular example above (eq.~~\eqref{main-example}) are illustrated in Fig.~\ref{mainfig} respectively in the top and bottom panel.

\begin{enumerate}[noitemsep,leftmargin=1em,wide=0pt]
\item[a)] {\bf Kernel methods, or random feature models}, can only learn a polynomial approximation of degree $\kappa$ in the Hermite basis of $f^\star$ if $n = O(d^\kappa)$ \citep{mei2022generalization}. This is a strong limitation that leads to poor performance as the learning method is not sensitive to the presence of relevant low-dimensional structure, but rather only to the degree of the target. 
%{\color{red}LZ: Here it starts to be unclear, what exactly is quadratic? Be more precise in what you want to stay here. I could not follow the next two sentences.} 
In the example \eqref{main-example}, the lowest (Hermite) polynomial order is quadratic in ${\bf x}$ (as can be seen by expanding the $\tanh$): learning it thus requires $n=O(d^2)$ samples of data for a kernel method to beat random performance. Learning the cubic approximation would requires  $n=O(d^3)$ samples, etc. The corresponding thresholds are sketched in orange in Fig.~\ref{mainfig}. 

\item[b)] We now turn to {\bf two layer net} of the form (we do not write explicitly the additional biases for clarity) with a number of neurons $p$ at least of order $\Theta(d^{k\varepsilon_1+\delta})$
\begin{equation}
\hat{f}_\theta(\vec{x}) = \vec{w}_2^\top \sigma(W_1(\vec{x}))\,
\end{equation}
Thanks to feature learning, such architecture should perform better: Indeed, for $W_1$ to learn the $d \times d^{\eps_1}$ first-layer feature matrix $W^\star$, we need {\it at least} $n=O(d\times d^{\eps_1})$ data. If $n \gg d^{1+\eps_1}$, we thus expect that $W_1$ correlates with $W^\star$. Intuitively, $W_1$ is then close to a noisy random rotation of  $W^\star$and behaves roughly as $W_1 \approx Z_1 W^\star + Z_2$ (with $Z_1$ and $Z_2$ are essentially random matrices). The two-layer neural net thus now  behaves as:
\begin{equation}
\label{eq:two_layer_spherical_approx}
\hat{f}_\theta(\vec{x}) \approx \vec{w}_2^\top \sigma\left(
Z_1 {\vec z}^\star  + Z_3 \right)\,.
\end{equation}
Fitting now the outer weights $\vec{w}_2$ leads, once again, to a random feature model, but now applied to the target  eq.~\eqref{eq:3layer_target-reduced} seen as a function of ${\bf z}$ instead of eq~\eqref{eq:3layer_target} seen as a function of ${\bf x}$. This leads to an effective Random Feature model with respect to {\it the lower dimensional vector} $\{{\vec z}^\star \in {\mathbb R}^{d^{\eps_1}}\}$. Thanks to this dimensional reduction from dimension $d$ to the effective one $d^{\eps_1}$, we just need  $n={(d^{\varepsilon_1})}^\kappa$ samples of data to now fit a $\kappa-$th degree polynomial approximation of $f^\star$. This is a drastic improvement.\looseness=-1

Coming back to the example: with  $n=O(d^{1+\varepsilon_1=1.5})$, $\kappa=1.5$, data samples a two-layer net learns the first layer representation $W^\star$, leading to a dimensionality reduction from $d$ to $\sqrt{d}$. From $n=O(d^{3 \varepsilon_1 = 1.5})$, $\kappa=1.5$, we are also able to fit a (Hermite) polynomial approximation of degree $3$ of the target viewed as a function of $\vec z$. The next order in the expansion of \eqref{main-example} is power $6$ in $\vec z$, and thus will be fitted at $\kappa=3$. \change{We discuss the extension of the above arguments to two-layer networks trained with a general gradient-based algorithm in App.~\ref{app:depth_sep}
}.

\item[c)] We now finally consider {\bf a three-layer neural networks}, with width $p_2=p_1 = \Theta(d^{k\varepsilon_1+\delta})$:
\begin{equation}
\hat{f}_\theta(\vec{x}) = \vec{w}_3^\top \sigma(W_{2} \sigma(W_1\vec{x}))\,.
\end{equation}
We still expect that $W_1$ learns the first-layer features  $W^\star$ when $n \gg d^{1+\eps_1}$, at which point:
\begin{equation}
\hat{f}_\theta(\vec{x}) \approx \vec{w}_3^\top \sigma(
W_{2}  \sigma\left(
Z_1 \vec z^\star + Z_3)\right)
\end{equation}
However, contrary to the previously depicted shallow case, three-layer networks can further approximate $h^\star$ by updating the second layer. With each power of $d^{\eps_1}$ we expect to be fit an power approximation of $h^\star$ and, in particular, with $ n = O(d^{k\eps_1})$, we expect the second layer preactivation $h_2(\vec x) = W_2 \sigma(W_1 \vec x)$ to correlate compeltly with the ($k-$polynomial) features $h^\star$. Therefore, denoting again $Z_4, Z_5$ as random matrices, a 3-layer network now acts as:
\begin{equation}
\hat{f}_\theta(\vec{x}) \approx \vec{w}_3^\top \sigma\left(Z_4 h^\star + Z_5\right),
\end{equation} 
Fitting now $\vec w_3$ leads to a random feature model on the {\it scalar} $h$, which can be fitted perfectly with any growing number of samples $n$. In other words, through successive coarse-graining from $d^{\rm eff}\!=\!d\!\to\!\sqrt{d}\!\to\!1$, we have reduced the dimension from a diverging one ($d$) to a finite one. 

Note that generalization error as plotted in Fig.~\ref{mainfig} can jump for two reasons as $n$ increases: either because of a reduction of the dimension $d_{\rm eff}$, or because of an increase of polynomial fitting power within this dimension. The phenomenology is a bit simpler in the particular example \eqref{main-example}, where the advantage of a three-layer net is considerable: for $n = O(d^{1.5})$, the network learns to represent the non-linear features $h^\star$ directly, and thus can learn the entire function. \looseness=-1
\end{enumerate}
%\vspace{-0.3 cm}
While such parameter counting sounds reasonable,  this heuristic may fail for general data distributions, as high-degree polynomials may localize on low-dimensional structures and develop heavy tails. However, for Gaussian and spherical measures, isotropy and hypercontractivity ensures that such polynomials remain delocalized and well-concentrated [Lemma \ref{lem:hyper}]. Our analysis relies on proving that such a property holds under feature learning, and even for deep non-linear hidden features. 
% This delocalization is crucial for matrix concentration arguments, as it enables the approximation of eigenfunctions via random projections \cite{mei_generalization_2022}.
\\ This scenario, illustrated in Fig.~\ref{mainfig}, extends {\it mutatis mutandis} to generic deep multi-layer MIGHT functions, where a sequence of transitions emerges progressively across the layers. 
%$d^{\varepsilon_1} \rightarrow d^{\varepsilon_2} \rightarrow d^{\varepsilon_3}, \cdots, \rightarrow r$. 
Consider for instance the following hierarchical target function from eq.~\eqref{eq:non_linear_feature_def} (see also Fig.~\ref{fig:app:deep_targets}):
    \begin{equation}
  f^\star(\vec{x}) =  \tanh\left(\frac{\vec{a}^{\star^\top}
P_{k'}\left({\bf h}^\star_{2}\right) }{\sqrt{d^{\varepsilon_2}}} \right),\,        {h}^\star_{2,m} = \left(\frac{\vec{a}_{2,m}^{\star^\top} 
P_k\left({\bf h}^\star_{1}=W^\star {\bf x}\right)_{\{1+(m-1)d^{\varepsilon_1-\varepsilon_2},\ldots,md^{\varepsilon_1-\varepsilon_2}\}} }{\sqrt{d^{\varepsilon_1-\varepsilon_2}}}\right)\,.\nonumber
    \end{equation}
In this case we expect a reduction from $d\!\to\!d^{\varepsilon_1}\!\to\!d^{\varepsilon_2}\!\to\!1$. The first one arises at  $n=O(d^{\varepsilon_1+1})$ when learning $W^\star$, then at $n=O(d^{k\varepsilon_1+\varepsilon_2})$ (to learn all the $d^{\varepsilon_2}$ polynomials, each of them requiring $d^{k\varepsilon_1}$ data) and finally at $n=O(d^{k' \varepsilon_2})$ to learn the activation in the $\tanh$ (a single $k'$ polynomial in dimension $d^{\varepsilon_2}$). Note that while these must proceed in this order, some of these jumps can happen at the same value of $\kappa$. For instance, if $k'\varepsilon_2\!<\!k\varepsilon_1+\varepsilon_2$, then the last two jumps arise simultaneously.

%    f^\star(\vec{x}) =  \tanh\left({\vec{a}_2^{\star^\top}
%p_3\left({\bf h}^\star_{(2)}\right) }/{\sqrt{d^{\varepsilon_2=1/4}}} \right)$, with ${\bf h}^\star_{(2)} = \left({\vec{a}_2^{\star^\top} 
%p_3\left({\bf h}^\star_{(1)}\right) }/{\sqrt{d^{\varepsilon_1=1/2}}}\right)$
%and $h^\star_1=.   

%FK large enough width (d>n)

\section{Main Theoretical Results}
\label{sec:main_theorems}
% So far the {\it degree counting} analysis is based on information theoretic arguments, computing how many parameters are needed to learn the function. 
 We now turn to the main part of our results that describe learning of the SIGHT and MIGHT function classes with deep neural networks trained by gradient descent. We present a rigorous analysis of gradient-based Empirical Risk Minimization (ERM). 
 %As is known through the analysis of multi-index functions, this analysis is not a negligible detail. 
Since a complete rigorous analysis of gradient descent in deep networks is extremely challenging -- and hitherto elusive-- we first present a rigorous description for the SIGHT target of eq.~\eqref{eq:3layer_target} under a specific deep-learning schedule. This approach enables us to provide precise theorems that capture the hierarchical learning process.
We analyze the following training procedure:
% \vspace{-0.2 cm}
%and formally justify the heuristic picture outlined above:
%\lz{The following training procedure needs to be written as concisely as possible. Remove ALL the comments about why and relation to the proofs. Just specify the training about which we have proofs so that anyone can grasp it readily. Ideally state in an order in which you would state it in the code, e.g. staring with initialiation etc.}
\looseness=-1
\begin{itemize}[noitemsep,leftmargin=1em,wide=0pt]
 \item {\bf Initialization:} The parameters of the model $ \hat{f}_\theta(\vec{x}) = \vec{w}_3^\top \sigma(W_{2} \sigma(W_1\vec{x}))$ are initialized as $W_{1,i} \sim U(\mathcal{S}^{d-1}(1))$ for $i \in [p_1]$, $W_2 = \mathbf{I}_{p_1}$, and $w_{3,i}= 1$ for $i \in [p]$, where $U(\mathcal{S}^{d-1}(1))$ denotes the uniform distribution on the unit sphere in $\mathbb{R}^d$.
    \item \textbf{Layer-wise training}: (i) We first perform a pre-determined number $T_1$ of gradient updates on the first layer $W_1$ on independent batches of data for each step \cite{dandi2024twolayer}. (ii) Subsequently, we re-initialize the second layer $W_2$ and perform a single large gradient step. (iii) Finally, we update $\vec{w}_3$ through ridge regression. %(equivalent to optimizing $\vec{w}_3$ through gradient descent on squared loss with an appropriately chosen step size). 
     Layer-wise training procedures are a common simplifying assumption in the analysis of two-layer networks \citep{damian2022neural,abbe2023sgd,dandi2024twolayer}. 
     \change{A complete analysis of the joint training remains open even for two-layer networks except for training of layers at differing time scales \cite{berthier2024learning,bietti2022learning}. An interesting direction for future work is to construct targets where joint-training of the layers provably surpasses layer-wise training. Such a class of targets was considered in \citep{allen2023backward}, who illustrated the advantage of joint-training through the mechanism of ``backward feature correction".
}\looseness=-1
  \item \textbf{Neuron-wise spherical projections}: While updating the first layer parameters $W_1$, we utilize spherical-gradient and project each neuron onto the unit sphere. Such spherical projections are commonly utilized in the literature on two-layer networks \citep{BenArous2021, abbe2023sgd}. 
   % \item \textbf{Single-index hierarchical targets} ($r=1$): For our rigorous result, we restrict ourselves to the setting of a single non-linear feature i.e. targets of the form:
%\begin{equation}\label{def:sing-ind}
%      f^\star(\vec{x})=  g^\star\left(h^\star(\vec{x})\right),
%    \end{equation}
%    where $h^\star(\vec{x}) = \frac{\vec{a}^\star \cdot 
%p_k\left( W^\star\vec{x} \right)}{\sqrt{d^{\varepsilon_1}}}$.

%The above model suffices to illustrate the dimension-reduction phenomenon described in 
 % Equation \ref{eq:dim_red} while simplifying our analysis since $h_2(\vec{x})$ is required to recover a single non-linear feature $h^\star(\vec{x})$. 
%    {\color{red} LZ: I find the explanation below of the training procedure unclear. Can we stick to the essential points?}
    \item \textbf{Pre-conditioning of gradient for the second layer}: 
    %Since the analysis of gradient descent with the same batch size on a diverging number of updates is quite challenging, we instead reduce the number of required updates 
    We use a pre-conditioning of the gradient step --- broadly used in various optimization schedules (e.g.~Adam \cite{kingma2014adam})--- using the sample-covariance of the features as preconditioning matrix, i.e., 
    $$\Delta W_2 = -\eta (\frac{1}{n}\sigma( X W_1^\top)^\top (\sigma(X W_1^\top))^{-1} \nabla_{W_2} \mathcal{L}$$
    %Let $\vec h_2(\vec{x}) = W_2\sigma(W_1 \vec{x})$ denote the pre-activations output by the second layer. 
    Through the feature map $ \vec{x} \rightarrow \sigma(W_1 \vec{x})$, the updates of $W_2$ in parameter space translate to updates to  $h_2(\vec{x})$ in function space.
    %Indeed (see Appendix \ref{app:pre-cond}) due to the spectral bias of the kernel defined by the feature maps $\sigma(W_1 \vec{x})$, the gradient $\nabla_{W_2}\mathcal{L}$ with respect to $W_2$ is dominated by directions corresponding to low-degree updates to $ \vec h_2(\vec{x})$, with the components along degree-$k$ functions decaying as $\frac{1}{d^{k \varepsilon_1}}$. This decay necessitates $\Theta(d^{k \varepsilon_1})$ gradient updates on $W_2$ for $\vec h_2(\vec{x})$ to recover degree-$k$ components of $h^\star(\vec{x})$ for under the optimal sample-complexity of $\Theta(d^{k \varepsilon_1})$. 
    Without such pre-conditioning, online SGD leads to a worse sample complexity of $\Theta(d^{2k \varepsilon_1})$ as in the single-step analysis of \cite{nichani2024provable}, \change{as we explain further in Appendix \ref{app:pre-cond}.} \change{Although pre-conditioning plays an important role in the proof scheme, we argue that the core of the results hold in more realistic routine in Sec.~\ref{sec:numerics}.} 
    \looseness=-1 
\end{itemize}
With this algorithm, we can now study gradient descent and demonstrate the learning of a class of SIGHT function. The theorem will assume the following conditions:
\begin{itemize}[noitemsep,leftmargin=1em,wide=0pt]
 \item \textbf{Uniform weighting}: We set  $\{a^\star_{i}=1$  for all $i \in 1 \cdots d^{\varepsilon_1}\}$: This operation ensures isotropic dependence along all components, simplifying the analysis. While $a^\star_{i}=1$ is a particular choice of target weights, the training algorithm of the model is agnostic to this choice and we, therefore, obtain sample-complexity expected for a general non-linear feature of the form ${\vec{a}^{\star^\top} 
P_k\left( W^\star \vec{x} \right)}/{\sqrt{d^{\varepsilon_1}}}$.
\item \textbf{Information exponent}: We shall indeed require that the information exponent \cite{BenArous2021,abbe2022merged,abbe2023sgd} of $g^\star(\cdot)$ is $1$ and  that of $P_k(\cdot)$ is $2$:
\begin{assumption}\label{ass:target}
Let $z \sim \mathcal{N}(0,1)$ denote a standard normal variable. We assume that $\Ea{g^\star(z)z} \neq 0$, $\Ea{P_k(z)\text{He}_2(z)} \neq 0$.
\end{assumption}

The condition on $g^\star(\cdot)$ is necessary, as gradient descent (without repetition) has a drastic worst complexity for exponents larger than $2$. We expect however, that the condition on $P_k(\cdot)$ can be relaxed to information-exponent $\geq 2$ instead of being exactly $2$. The information exponent of $P_k(\cdot)$ being $1$ results in linear components that do not require recovery of the full subspace spanned by $W^\star$. Thus setting the information exponent of $P_k(\cdot)$ to $2$ simplifies our analysis by avoiding the need for a separate treatment of such linear ``spikes".

We further require the activations of the neural net $\sigma(\cdot)$ to be sufficiently expressive and to satisfy certain alignment conditions:
\begin{assumption}\label{ass:act}
    $\sigma:\mathbb{R} \rightarrow \mathbb{E}$ is analytic, non-polynomial with $\sigma'(0)\neq 0$ and there exist constants $L_1, L_2 \in \mathbb{R}^+, m \in \mathbb{N}$ such that $\abs{\sigma(x)} \leq L_1+L_2\abs{x}^m$. Furthermore, $\sigma(\cdot)$ satisfies: (a)
    $
    \Ea{\sigma(z)\text{He}_j(z)} \neq 0$ for all $1 < j \leq k$, (ii)   $\Ea{\sigma(\sigma(z))\text{He}_2(z)}\Ea{P_k(z)\text{He}_2(z)} > 0$ and iii) $\Ea{\sigma(\sigma(z))z}=0$
\end{assumption}

 The last two conditions ensure that all neurons in $W_1$ recover spherical projections of $W^\star$. In the absence of the above conditions, we still expect recovery of $W^\star$ but with anisotropy across neurons. Such an anisotropy is expected to complicate the subsequent analysis. We show in Appendix \ref{sec:app:act_exist} the existence of a  $\sigma(\cdot)$ satisfying the above set of conditions.

The next assumption, however, is only a technical one that arises only because we used $\{a^\star_{i}=1\}$. It could be relaxed by taking Gaussian values, or by performing more gradient steps on $W_2$, but this would complicate the proof. We discuss this in detail in App.~\ref{sec:app:ass_bad}:
\begin{assumption}\label{ass:bad}
$\Eb{z \sim \mathcal{N}(0,1)}{g^\star(z)\text{He}_j(z)} = 0$ for $1 < j \leq k$
\end{assumption}

\end{itemize}

Under the above assumptions, our main result now establishes hierarchical learning for the target of the form \eqref{eq:3layer_target}
 by a three-layer network  $f^\star(\vec{x})$ by first recovering $W^\star$ through the first-layer $W_1$, next recovering $h^\star(\vec{x})$ through the second layer pre-activations $\vec h_2(\vec{x}) = W_2 \sigma(W_1 \vec x)$ and finally fitting $f^\star(\vec{x})$ upon training the last layer $\vec w_3$. The full formal statement of the result is provided in Appendix \ref{sec:app:main_stat}.
 \begin{theorem}[Informal]
\label{thm:main_theorem}
Let $f^\star(\vec{x})$ be as in Eq.~\eqref{eq:3layer_target} with $\varepsilon_1 \in (0,1)$ and consider a three-layer model:
\begin{eqnarray}
    \hat{f}_\theta(\vec{x}) = \vec{w}_3^\top \sigma(b_2 + W_{2}\sigma(W_1\vec{x}+b_1)),
\end{eqnarray}
with $W_1 \in \mathbb{R}^{p_1 \times d}$,  $W_2 \in \mathbb{R}^{p_2 \times p_1}, \vec w_3 \in \mathbb{R}^{p_3}$.

\change{Let $\mathcal{L}_{c}(\theta)$ denote the correlation loss defined as $\mathcal{L}_{cl}(\theta) \coloneqq -\hat{f}_\theta(\vec{x}) f^\star(\vec{x})$}. Under Ass.~\ref{ass:target}-\ref{ass:bad}, for any $0 < \delta < \delta' < 1$, there exist time-steps $T_1 = \mathcal{O}(\mathrm{polylog} d)$ such that with batch-size $n_1 = \Theta(d^{\varepsilon_1+1+\delta}), n_2 = \Theta(d^{k\varepsilon_1+\delta})$ and $p_2=p_1 = \Theta(d^{k\varepsilon_1+\delta'})$, the following holds with high probability as $d \rightarrow \infty$:
\begin{enumerate}[noitemsep,leftmargin=1em,wide=0pt]
%    \item $T_1$ steps of projected SGD on $W_1$ with neuron-wise projections on sphere of radius $\varepsilon$ and step-size $\eta= \mathcal{O}(\sqrt{d^{k\varepsilon_1}})$ on independent batches of size $n_1$ results in $W_1$ learning random projections along $W^\star_1,\cdots, W^\star_r$ upto error $o_d(1)$. Concretely, there exists a sequence of random matrices $Z \in \mathbb{R}^{p_1 \times d^{\varepsilon_1}}$ with independent rows sampled uniformly on the unit sphere i.e $z_i \sim U(\mathcal{S}(1))$:
%    \begin{equation}
%    \label{eq:thmW}
%        \frac{1}{\varepsilon} W_1 = Z (W^\star_1,\cdots, W^\star_r)^\top +o_d(1)
%    \end{equation}
\item $T_1$ steps of neuron-wise spherical SGD on correlation-loss $\mathcal{L}_{c}(\theta)$ applied to $W_1$ with step-size $\eta=\tilde{\eta}\sqrt{p_2}\sqrt{d^{\varepsilon_1}}$ on independent batches of size $n_1$ results in $W_1$ learning random projections along $W^\star$ upto error $o_d(1)$. Concretely, there exists a sequence of random matrices $Z \in \mathbb{R}^{p_1 \times d^{\varepsilon_1}}$ with independent rows sampled uniformly on the unit sphere i.e $z_i \sim U(\mathcal{S}(1))$:
    \begin{equation}
    \label{eq:thmW}
        W_1 =  Z (W^\star) +o(1),
    \end{equation}
    as $d \rightarrow \infty, \tilde{\eta}\rightarrow 0$.
    \item Subsequently, upon reinitializing $W_2=\mathbf{0}_{d \times d}$ and $\vec{w}_3$ with entries $\mathcal{N}(0,1)$, a single pre-conditioned gradient step on correlation loss $\mathcal{L}_{c}(\theta)$ with  step size $\eta_2 = \Theta(\sqrt{p_2})$  and using an independent size $n_2$ results in learning $h^\star$ upto error $o_d(1)$ with the preactivation $\vec{h}_2(\vec{x})=W_2\sigma(W_1 \vec{x}) \in \mathbb{R}^{p_2}$:
    \begin{equation}
    \label{eq:thmH}
    \begin{split}
        \vec h_2(\vec{x}) &= c \vec{w}_3 h^\star(\vec{x})+o_d(1),
    \end{split}
    \end{equation}
    where $c \neq 0$ denotes a constant and the $o_d(1)$ error is w.r.t the metric induced by $L_2(\mathcal{N}(\vec{0}, I_d))$.
    
    \item Upon training $W_1,W_2$ as above, updating $\vec w_3$ with ridge-regression on $\Theta(d^{\delta})$ samples results in approximating $f^\star(\vec{x})$ upto error $o_d(1)$ with the 3-layer predictor $\vec w_3^\top\sigma(W_2\sigma(W_1 \vec{x}))$.
\end{enumerate}
\end{theorem}
The details of the initialization projections and pre-conditioning steps are provided in App.~\ref{app:alg}. The condition $p_2=p_1$ is again solely to simplify the analysis and we expect the results to hold for $p_2 = \Theta(d^{\delta}), p_1=\Theta(d^{k\varepsilon_1+\delta})$.

Since each row of $W^\star_j$ contains $d$ parameters, the complexity $n_1 \approx \Theta(d^{\varepsilon_1+1})$ matches the total number of parameters in $W^\star_1, \cdots, W^\star_r$, and is therefore expected to be the information-theoretic scaling of the sample-complexity required for the (strong) recovery of $W^\star_1, \cdots, W^\star_r$. Similarly, the complexity $n_2 = \Theta(d^{k\varepsilon_1})$ is the expected minimum sample-complexity required for the strong recovery of a degree-$k$ functions on a $d^{\varepsilon_1}$-dim. space.

\paragraph{Proof sketch ---} We provide the full proof of the above result in App.~\ref{sec:app:main_proof}, and highlight the most important steps below:
\\ \noindent (i) \textbf{Composition of Hermite decompositions}: Building upon \cite{wang2023learning}, we use the asymptotic Gaussianity of $h^\star(\vec{x})$ to relate
    the Hermite decomposition of $f^\star(\vec{x})$ to the one of $h^\star(\vec{x})$. 
 \\  \noindent  (ii) \textbf{Low-dimensional dynamics for $W_1$}: Using the compositional Hermite-decomposition above, following \cite{BenArous2021, arnaboldi2024online,abbe2023sgd}, we show that the evolution of $W_1$ during the training of the first layer can be described through an effective dynamics on the overlaps $W_1 (W^\star)^\top$. \change{Unlike the single/multi-index analysis of \cite{BenArous2021, arnaboldi2024online,abbe2023sgd}},
the diverging dimensionality of $W^\star, W_1$ that appear in our approach, as well as the later use of the updated weights $W_2$, requires a careful control over the error terms. Concretely, we show that the components of $W_1$ along $W^\star$, as well as the error terms, maintain isotropy and hypercontractivity through the dynamics.  \change{Moreover, such divergent dimensionality $d^\epsilon$ of $W^\star$ leads to ``strong recovery'' of $W^\star$ by $W_1$. We refer to Appendix \ref{sec:app:first_layer_rec} for detailed technical explanations.}
\\ \noindent (iii) \textbf{Function-space decomposition of the $2^{\rm nd}$-layer pre-activations}: 
Gradient steps on $W_2$ extract statistics in features-space $\sigma(W_1 \vec{x})$. Similar to \citep{wang2023learning,nichani2024provable,fu2025learning}, we show that these statistics appear in the updates for the pre-activations $h_2(\vec{x})$ as projections of a perturbed version of $f^\star$ on the conjugate Kernel defined by the first-layer:
    \begin{align}
    \label{eq:gram-matrix projection}
        \Delta \vec h_2(\vec{x})&= c \sigma(W_1 \vec{x})^\top\hat{\Sigma}_F^{-1}\sigma(W_1 X)f^\star(X), \\
        \hat{\Sigma}_F &=\frac{1}{n}\sigma(W_1 X^\top) \sigma(W_1 X)^\top
\end{align}
    where $X \in \mathbb{R}^{n \times d}$ denotes the batch of data utilized in a gradient step and $c>0$ denotes a constant (When $\hat{\Sigma}_F$ is non-invertible, we add a regularization term $\lambda I$ to ensure that the update in Equation \ref{eq:gram-matrix projection} is well-defined).
\\ \noindent  (iv) \textbf{Concentration of the sample-covariance matrix}: 
   In light of $(iii)$, the recovery of features in $h_2(\vec{x})$ depends on the feature matrix $\sigma(W_1 X)$ being able to approximate and span the relevant functional subspace, which requires both sufficiently many samples and sufficiently many neurons. Building on the matrix-concentration analysis of \cite{mei2022generalization}, we show that the projections onto the $\sigma(W_1 X)$ up to degree-$k$ functions can be well approximated as long as $n,p_1 = \Theta(d^{k\varepsilon_1+\delta})$. Low-degree eigenfunctions concentrate faster since they span lower-dimensional subspaces.

\begin{wrapfigure}{rt}{0.45\textwidth}
% \vspace{-0.7cm}
{\includegraphics[width=0.85\linewidth]{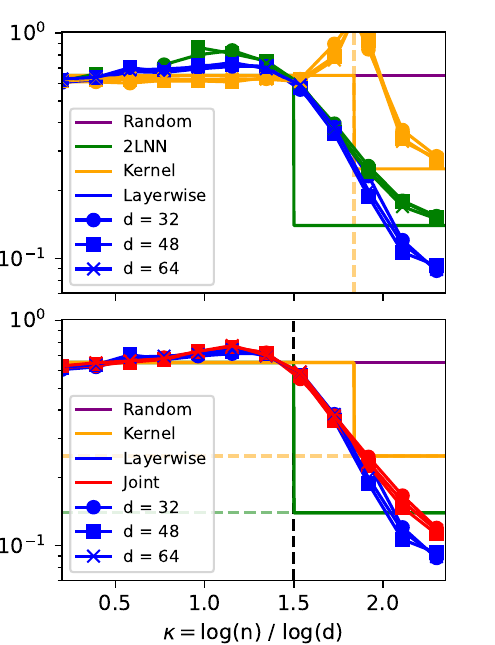}}
\caption{\textbf{Numerical simulation:}  Generalization error versus $\kappa = \sfrac{\log{n}}{\log{d}}$ for  $f^\star(\vec{x}) =  \tanh(
3 {\vec{a}^{\star} \cdot \, 
P_3(W^\star \vec{x})) }/{\sqrt{d^{\varepsilon_1=1/2}}})$ with different training protocols: \textbf{(Top)} kernel ridge regression (orange points) only beats the random performance (purple solid line) starting from $n=d + (d-1)d/2$, and is limited to quadratic approximation (orange line). $2$-layer net (green points), instead, starts to learn at $\kappa=1.5$ (black dashed line) and can beat the quadratic limit (asymptotics is given by the green line). 3-layer net trained with layerwise training (blue markers) not only learn at $\kappa=1.5$ (vertical line). but also surpasses the best possible 2-layer net error, illustrating the advantage of depth;
\textbf{(Bottom)} comparison of layerwise training (blue) with joint training (red) of all the layers of a 3-layer net with standard backpropagation.}
    \label{fig:gen_error_fig1} 
% \vspace{-2em}
\end{wrapfigure}

\paragraph{From SIGHT to MIGHT  ---}  While we expect similar results to hold in generality, the theorem is only fully proven for the class of target in eq.~\eqref{eq:3layer_target}. While a complete proof for MIGHT is a difficult task, we discuss additional ($r>1$ and $\ell >1$)  results in this and subsequent paragraphs.
\\ We first remark that part $(i)$ of Theorem \ref{thm:main_theorem} ( weak-recovery of $W^\star$), \change{under suitable symmetry assumptions on $g^\star(\cdot)$}, holds for arbitrary~$r$, and thus for MIGHT functions $f^\star$ (and not only SIGHT ones) (see App.~ \ref{app:might}). Establishing rigorously part $(ii)$ for $r>1$ involves technical hurdles relating to the control in the Gaussian approximation of $h^\star$. We describe them in App.~ \ref{app:might}.
% These difficulties are not new, and similar to the ones on standard single and multi-index problems, that we describe in App.~ \ref{app:multiple_layers}.
\\ MIGHT functions are interesting in illustrating the role of the information exponent in Ass.~\ref{ass:target}. It is easy to design counterexamples, for instance, the parity problem with $y={\rm sign}(h^\star_1 h^
\star_2 h^\star_3)$  violates Ass.~\ref{ass:target}. We illustrate some of these numerically in App.~~\ref{sec:numerics} (See Fig.~\ref{fig:parity_stair_comparison}). We believe, however, that with reusing batches, the information exponent could be replaced with the much permissive generative one \cite{dandi2024benefits,lee2024neural,arnaboldi2024repetita}.  SIGHT and MIGHT functions are indeed generalizations of the multi-index functions, and the properties of the latter such as information \cite{BenArous2021} and generative exponents \cite{damian2024computational}, and the notion of trivial, easy and hard directions \cite{troiani2024fundamental}) should translate to the former.  

\paragraph{From MIGHT to Deeper MIGHT  ---} Depth introduces more difficulties for rigorous studies, but our mathematical analysis can be extended for more general constructions. By the tree-like hierarchical construction of features (Eq. \eqref{eq:non_linear_feature_def}) for general depth, the components ${\bf h}^\star_{\ell}(\vec{x})$ remain independent and asymptotically Gaussian. Generalizing Thm.~\ref{thm:main_theorem} for $L\ge3$ in its full-generality requires however not only an extension of part $(ii)$ of Thm.~\ref{thm:main_theorem} to $r>1$, but also a careful control over the non-asymptotic rates for the tails of $\vec h^\star_{\ell}(\vec{x})$ and the associated kernels. 
\\ We instead prove a weaker, but useful, result corresponding to the hierarchical weak recovery of a single non-linear feature at a general level of depth $L \in \mathbb{N}$, \change{under an idealized scenario of perfect spherical recovery of hidden features at level $L-1$}. We refer to App.~\ref{app:multiple_layers} for the full formal statement and its proof, which exploits the independence of components of $h^\star_{L-1}(\vec{x})$ and the hyper-contractivity of the  Gaussian measure:
\begin{theorem}\label{thm:multi-layer}
For $L \in \mathbb{N}$, let $f^\star(\vec{x})$ denote a target as in Eq.~\eqref{eq:target_def_deep} with $r=1$, and let  $\delta',\delta$ be arbitrary reals satisfying $0<\delta<\delta'<1$. Consider a model of the form 
$\hat{f}_\theta(\vec{x}) = \vec{w}_L^\top \sigma(W_{L-1} \sigma( W h^\star_{L-1}(\vec{x})))$ with $W \in \mathbb{R}^{p_{L-2} \times d^{\varepsilon_{L-2}
}}$ having $p_{L-2}=\Theta(d^{{k\varepsilon_{\ell-2}}+\delta'})$ rows independently sampled as $\vec w_i \sim U(\mathcal{S}_{d^{\varepsilon_{L-2}}}(1))$.
Under Ass.~\ref{ass:target}-\ref{ass:bad}, after a single step of pre-conditioned SGD on $W_{L-1}$ with batch-size $\Theta(d^{{k\varepsilon_{\ell-2}}+\delta})$, step-size $\Theta(\sqrt{p_{L-1}})$, the pre-activations $ h_{L-1}(\vec{x}) \coloneqq W_{L-1}\sigma( W h^\star_{L-1}(\vec{x}))$ satisfy, for a constant $c>0$:
\begin{equation}
\label{th:multi}
   h_{L-1}(\vec{x}) = c \vec{w}_L h^\star_{L}(\vec{x}) + o_d(1),
\end{equation}
\end{theorem}

\section{General Conjecture for Efficient Hierarchical Learning}

Building on the above results, we now propose a general structure for hierarchical learning and conjecture its relevance in broader settings, as we briefly alluded to in Section~\ref{sec:rel_work} under ``Hierarchical data models". As highlighted in Assumption~\ref{ass:target}, our analysis requires the target nonlinearities $g^\star(\cdot)$ and $P_k(\cdot)$ to have low-degree components (in our case Information Exponents $1,2$ respectively).

More generally, for any such compositional target to be learnable through gradient descent, we conjecture that for every depth level $\ell=1, \cdots, L$, the intermediate representation $\vec h^\star_\ell(\vec{x})$ retains low-degree correlations with the target $y=f^\star(\vec x)$ or transformations of $y$. Concretely, defining the following  require that at every layer $\ell$: $\Ea{f^\star(\vec x)(\vec h^\star_\ell(\vec{x}))^{\otimes k}} = \Theta(1)$
for some small $k \in \mathbb{N}$. More formally, one may introduce the {\it Compositional Information Exponent}:

\begin{definition}[Compositional Information Exponent]
\label{def:C-IE}
Given a SIGHT or a MIGHT function $f^\star(\vec x)$, we define the compositional information exponent at level $\ell$ as: 
\begin{equation}
{\rm CIE(\ell)} = \inf \{k: \norm{\mathbb{E}[(\vec h_{\ell}({\bf x}))^{\otimes k}f^\star({\bf x})]}_F= \Theta(1)\} 
\end{equation}
\end{definition}
Therefore we conjecture that compositional target learnable through gradient descent have, at every layer $\ell$, low $\rm CIE(\ell)$.

Intuitively, even when the mapping from $(\vec h^\star_\ell(\vec{x}))$ to the final label $y=f^\star(\vec x)$ is highly non-linear, the low-degree correlations with the intermediate representations $\vec h^\star_\ell(\vec x)$ ensure that it can be recovered through gradient descent. This aligns with the general wisdom on the structure of real data. The non-trivial correlation of image labels with low-degree features: edges, shapes, and colors, or in the case of text, the correlation between labels and certain word-frequencies or bi-gram counts (see for instance the discussion in \citep{cagnetta2024towards}). Equivalently, such low-degree dependence can be interpreted as robustness of $y$ with respect to perturbations in $(\vec h^\star_\ell(\vec{x}))$, in which case the condition becomes to mantain a robust compositionality.

This hypothesis generalises the classical notion of information exponent—originally formulated for the input layer—to all intermediate representations of a hierarchical model. Consequently, within the SIGHT and MIGHT classes, functions that are efficiently learnable by gradient descent are characterised by the presence of such low-degree alignments at each level. When this condition fails, for example, in parity functions whose first non-vanishing Hermite coefficient lies at large degrees, gradient descent fails to efficiently recover any intermediate features.

Beyond the information exponent, we believe this layer-wise information exponent condition can be replaced by a more generic one, at the price of a more complex analysis. For a start, with repeated passes and data reuse, it is natural to expect that the information exponent can be replaced by the generative exponent, as discussed for two-layer networks in \cite{damian2024computational, dandi2024benefits}. In such settings, the condition can be generalized to:
\[
\norm{\Ea{\mathcal{T}(y)(\vec h^\star_\ell(\vec{x}))^{\otimes k}}}_F = \Theta(1),
\]
for some transformation $\mathcal{T}:\mathbb{R} \rightarrow \mathbb{R}$ and small $k \in \mathbb{N}$. Moreover, it may be possible to replace it by the more permissive “staircase” picture once one goes beyond layer-wise training analyses. Making this precise is an exciting, but challenging, direction for future work.

\begin{wrapfigure}{rt}{0.4\textwidth}
\vspace{-4em}
     \centering
     {\includegraphics[width=0.85\linewidth]{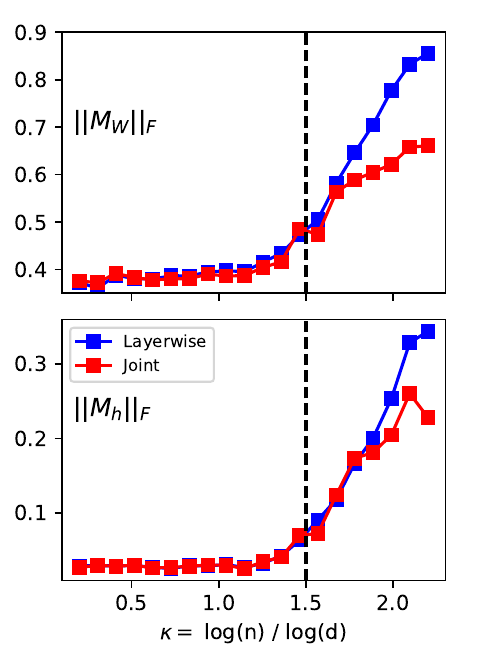}}
     \caption{\textbf{Visualizing Feature Learning:} The Frobenius norm of the overlaps $M_h, M_W$ (Def.~\ref{def:sufficient_stat}), respectively on the top and bottom panel, as a function of the sample complexity $\kappa = \frac{\log n}{\log d}$ for three-layer networks trained with the protocol described in Theorem~\ref{thm:main_theorem} (blue circles) and standard backpropagation (red squares). Following Theorem~\ref{thm:main_theorem}, the behavior sharply changes around $\kappa = 1.5$ (vertical dashed line) where feature learning in both layers arises (same setting as in Fig.~\ref{fig:gen_error_fig1}).}
         \label{fig:theorem_illustration}
         \vspace{-4em}
\end{wrapfigure}
\vspace{-0.5 cm}
\section{Numerical Illustrations}
\label{sec:numerics}

While our theorems provide a rigorous control of learning with a particular, well-conditioned, training procedure, we numerically test the validity of our theory towards describing realistic training routines with mini-batch updates, finite (and rather low) dimensional examples, using multi-pass (instead of a single one) for the second layer, etc.
For concreteness, we consider $f^\star(\vec{x}) =  \tanh\left(
 \frac{3\vec{a}^{\star^\top} \, 
P_3\left(W^\star \vec{x}\right) }{\sqrt{d^{\varepsilon_1=1/2}}} \right)$, a similar example as discussed in Section~\ref{sec:main_theorems} and with, again, a polynomial $P_{k=3}$ with second and third Hermite polynomials. We show simulations in Fig.~\ref{fig:gen_error_fig1} (and refer to App.~\ref{sec:app:numerics} for details) and discuss here the more salient observations: 
%

%\\ 
\noindent \textbf{(i)} First, we compare the performance of kernel methods with those of a two-layer network. On the one hand, the former method should be able to fit the quadratic part of the target function as soon as $n=O(d^2)$ \citep{mei_generalization_2022}. This is well observed, with a  double descent peak when the number of data hits the number of features in a quadratic kernel, i.e. $n_{\rm peak}=d(d-1)/2 + d + 1$. On the other hand, two-layer networks are capable of recovering  $W^\star$ when $n=O(d^{1.5})$, therefore improving the test performance to quadratic and cubic fit when $\kappa \ge 1.5$.
\\ \noindent \textbf{(ii)} We then train a three-layer network, with a {\bf layerwise} approach resembling the procedure in Thm~\ref{thm:main_theorem}, where we train every layer in order, (first $W_1$, then $W_2$, etc.). We do not, however, follow the restrictions of the theorem and just perform a standard gradient descent (no reinitializing, no projection, using minibatch, etc.). Not only does the method starts to learn when $n>d^{1.5}$ but {\bf it outperforms the 2-layer baseline} in agreement with Thm.~\ref{thm:main_theorem}. 
%This is a clear illustration of the advantage of $3-$ layers networks  over $2-$layers ones.
\\ \noindent \textbf{(iii)} Lastly, we consider the standard training procedure ---refered to as {\bf joint training}-- with   backpropagation through the network with mini-batch gradient descent. The routine performs similarly to the layerwise approach, illustrating the generality of the dimensionality reduction beyond the assumptions of Thm.~\ref{thm:main_theorem}.

\vspace{3mm}

We now show that this enhanced generalization performance is due to feature learning. Indeed, the key result in Thm~\ref{thm:main_theorem} refers to the ability of three-layer networks to perform hierarchically dimensionality reduction through feature learning. 
To probe the quality of the learned representations, we shall introduce the ``overlaps'' (or order parameters).  

\vspace{7mm}

\begin{definition}
\label{def:sufficient_stat}
The order parameters for $3-$layer networks are the matrices $M_W \in \mathbb{R}^{p_1 \times rd^{\varepsilon_1}}$ and $M_h \in \mathbb{R}^{p_2 \times r}$ (with $\vec z \sim \mathcal{N}(0,I_d)$)
\begin{align}
\label{eq:sufficient_stat}
    M_W = \frac{W_1W^\star}{\norm{W_1}_F},\,\,
    M_h = \frac{\mathbb{E}[\vec h(\vec{z}) \vec h^\star(\vec z)]}{\sqrt{\mathbb{E}[\vec h(\vec{z})^2]}}. \,\, 
\end{align}
\end{definition}

The behavior of these quantities as a function of the sample complexity $\kappa$ is portrayed in Fig.~\ref{fig:theorem_illustration}. Since we do not follow the strong prescription of Thm.~\ref{thm:main_theorem}, and are working with a low dimensional example, we do not expect a sharp $0/1$ transition as in the idealized scenario, but instead, the components along $W^\star$ to occupy a $\Theta(1)$ fraction (but not full) of the norm of $W_1$.  This is well obeyed (Fig.~\ref{fig:theorem_illustration}) and the predicted crossover at $\kappa=1.5$ is clearly observed in both layerwise and joint training.

\paragraph{Conclusion---}
We introduced a theoretical framework for understanding the computational advantages of deep neural networks over shallow models when learning high-dimensional hierarchical functions, where depth facilitates a progressive reduction of effective dimensionality.

These findings suggest a concise rule of thumb: over‑parameterised multi‑layer networks learn hierarchical or compositional functions efficiently when each layer preserves a low‑degree correlation with the label. Characterising tasks that violate or relax this robust compositionality remains an open and promising direction. We hope our paper will spark interest in these directions.

\paragraph{Acknowledgement---}
We thank Alex Damian, Jason Lee, Bruno Loureiro, Yue M. Lu, Theodor Misiakiewicz, Eshaan Nichani, Tomaso Poggio, Zhichao Wang, Denny Wu, and Mathieu Wyart for insightful discussions. We acknowledge funding from the Swiss National Science Foundation grants SNFS SMArtNet (grant number 212049),  OperaGOST (grant number 200021 200390) and DSGIANGO.
%%%%%%%%%%%%%%%%%%%%%%%%%%%%%%%%%%%%%%%%%%%%%%%%%%%%%%%%%%%%
\newpage

\bibliographystyle{plainnat}
\bibliography{refs}

@article{nichani2024provable,
  title={Provable guarantees for nonlinear feature learning in three-layer neural networks},
  author={Nichani, Eshaan and Damian, Alex and Lee, Jason D},
  journal={Advances in Neural Information Processing Systems},
  volume={36},
  year={2024}
}

@article{arous2024stochastic,
  title={Stochastic gradient descent in high dimensions for multi-spiked tensor PCA},
  author={Arous, G{\'e}rard Ben and Gerbelot, C{\'e}dric and Piccolo, Vanessa},
  journal={arXiv preprint arXiv:2410.18162},
  year={2024}
}

@article{kingma2014adam,
  title={Adam: A method for stochastic optimization},
  author={Kingma, Diederik P and Ba, Jimmy},
  journal={arXiv preprint arXiv:1412.6980},
  year={2014}
}

@article{van2014probability,
  title={Probability in high dimension},
  author={Van Handel, Ramon},
  journal={Lecture Notes (Princeton University)},
  volume={2},
  number={3},
  pages={2--3},
  year={2014}
}

@book{aubrun2017alice,
  title={Alice and Bob meet Banach},
  author={Aubrun, Guillaume and Szarek, Stanis{\l}aw J},
  volume={223},
  year={2017},
  publisher={American Mathematical Soc.}
}

@article{Chatterjee_2006,
   title={A generalization of the Lindeberg principle},
   volume={34},
   ISSN={0091-1798},
   DOI={10.1214/009117906000000575},
   number={6},
   journal={The Annals of Probability},
   publisher={Institute of Mathematical Statistics},
   author={Chatterjee, Sourav},
   year={2006},
   month=nov }

@article{allen2019can,
  title={What Can ResNet Learn Efficiently, Going Beyond Kernels?},
  author={Allen-Zhu, Zeyuan and Li, Yuanzhi},
  journal={Advances in Neural Information Processing Systems},
  volume={32},
  year={2019}
}

@article{poggio2017and,
  title={Why and when can deep-but not shallow-networks avoid the curse of dimensionality: a review},
  author={Poggio, Tomaso and Mhaskar, Hrushikesh and Rosasco, Lorenzo and Miranda, Brando and Liao, Qianli},
  journal={International Journal of Automation and Computing},
  volume={14},
  number={5},
  pages={503--519},
  year={2017},
  publisher={Springer}
}

@inproceedings{tishby2015deep,
  title={Deep learning and the information bottleneck principle},
  author={Tishby, Naftali and Zaslavsky, Noga},
  booktitle={2015 ieee information theory workshop (itw)},
  year={2015},
  organization={Ieee}
}

@article{Achille2018Emergence,
  author = {Alessandro Achille and Stefano Soatto},
  title = {Emergence of Invariance and Disentanglement in Deep Representations},
  journal = {Journal of Machine Learning Research},
  volume = {19},
  number = {50},
  pages = {1--34},
  year = {2018},
  doi = {10.48550/arXiv.1706.01350}
}

@inproceedings{Alemi2017Deep,
  author = {Alexander A. Alemi and Ian Fischer and Joshua V. Dillon and Kevin Murphy},
  title = {Deep Variational Information Bottleneck},
  booktitle = {Proceedings of the International Conference on Learning Representations (ICLR)},
  year = {2017},
  doi = {10.48550/arXiv.1612.00410}
}

@article{Saxe_2019,
  author = {Andrew M. Saxe and Yamini Bansal and Joel Dapello and Madhu Advani and Artemy Kolchinsky and Brendan D. Tracey and David D. Cox},
  title = {On the Information Bottleneck Theory of Deep Learning},
  journal = {Journal of Statistical Mechanics: Theory and Experiment},
  volume = {2019},
  number = {12},
  pages = {124020},
  year = {2019},
  doi = {10.1088/1742-5468/ab3985},
  publisher = {IOP Publishing}
}

@book{axler2020measure,
  title={Measure, integration \& real analysis},
  author={Axler, Sheldon},
  year={2020},
  publisher={Springer Nature}
}

@article{misiakiewicz2022spectrum,
  title={Spectrum of inner-product kernel matrices in the polynomial regime and multiple descent phenomenon in kernel ridge regression},
  author={Misiakiewicz, Theodor},
  journal={arXiv preprint arXiv:2204.10425},
  year={2022}
}

@article{sun2018approximation,
  title={On the approximation properties of random ReLU features},
  author={Sun, Yitong and Gilbert, Anna and Tewari, Ambuj},
  journal={arXiv preprint arXiv:1810.04374},
  year={2018}
}

@article{mhaskar2016deep,
  title={Deep vs. shallow networks: An approximation theory perspective},
  author={Mhaskar, Hrushikesh N and Poggio, Tomaso},
  journal={Analysis and Applications},
  volume={14},
  number={06},
  pages={829--848},
  year={2016},
  publisher={World Scientific}
}

@book{klenke2013probability,
  title={Probability theory: a comprehensive course},
  author={Klenke, Achim},
  year={2013},
  publisher={Springer Science \& Business Media}
}

@inproceedings{fu2025learning,
  title={Learning Hierarchical Polynomials of Multiple Nonlinear Features with Three-Layer Networks},
  author={Fu, Hengyu and Wang, Zihao and Nichani, Eshaan and Lee, Jason D.},
  booktitle={Proceedings of the International Conference on Learning Representations (ICLR)},
  year={2025}
}

@article{lu2022equivalence,
  title={An Equivalence Principle for the Spectrum of Random Inner-Product Kernel Matrices with Polynomial Scalings},
  author={Lu, Yue M and Yau, Horng-Tzer},
  journal={arXiv preprint arXiv:2205.06308},
  year={2022}
}

@article{vershynin2010introduction,
  title={Introduction to the non-asymptotic analysis of random matrices},
  author={Vershynin, Roman},
  journal={arXiv preprint arXiv:1011.3027},
  year={2010}
}

@inproceedings{montanari2022universality,
  title={Universality of empirical risk minimization},
  author={Montanari, Andrea and Saeed, Basil N},
  booktitle={Conference on Learning Theory},
  pages={4310--4312},
  year={2022},
  organization={PMLR}
}

@article{hu2022universality,
  title={Universality laws for high-dimensional learning with random features},
  author={Hu, Hong and Lu, Yue M},
  journal={IEEE Transactions on Information Theory},
  volume={69},
  number={3},
  pages={1932--1964},
  year={2022},
  publisher={IEEE}
}

@article{simsek2024learning,
  title={Learning Gaussian Multi-Index Models with Gradient Flow: Time Complexity and Directional Convergence},
  author={Simsek, Berfin and Bendjeddou, Amire and Hsu, Daniel},
  journal={arXiv preprint arXiv:2411.08798},
  year={2024}
}

@inproceedings{daniely2017depth,
  title={Depth separation for neural networks},
  author={Daniely, Amit},
  booktitle={Conference on Learning Theory},
  pages={690--696},
  year={2017},
  organization={PMLR}
}

@inproceedings{eldan2016power,
  title={The power of depth for feedforward neural networks},
  author={Eldan, Ronen and Shamir, Ohad},
  booktitle={Conference on learning theory},
  pages={907--940},
  year={2016},
  organization={PMLR}
}

@inproceedings{allen2023backward,
  title={Backward feature correction: How deep learning performs deep (hierarchical) learning},
  author={Allen-Zhu, Zeyuan and Li, Yuanzhi},
  booktitle={The Thirty Sixth Annual Conference on Learning Theory},
  pages={4598--4598},
  year={2023},
  organization={PMLR}
}

@techreport{poggio2023deep,
  title={How deep sparse networks avoid the curse of dimensionality: Efficiently computable functions are compositionally sparse},
  author={Poggio, Tomaso},
  institution={Center for Brains, Minds and Machines (CBMM)},
  year={2023},
  number={CBMM Memo No. 138}
}

@InProceedings{pmlr-v49-telgarsky16,
  title = 	 {benefits of depth in neural networks},
  author = 	 {Telgarsky, Matus},
  booktitle = 	 {29th Annual Conference on Learning Theory},
  pages = 	 {1517--1539},
  year = 	 {2016},
  editor = 	 {Feldman, Vitaly and Rakhlin, Alexander and Shamir, Ohad},
  volume = 	 {49},
  series = 	 {Proceedings of Machine Learning Research},
  address = 	 {Columbia University, New York, New York, USA},
  month = 	 {23--26 Jun},
  publisher =    {PMLR},
  pdf = 	 {http://proceedings.mlr.press/v49/telgarsky16.pdf},
  abstract = 	 {For any positive integer k, there exist neural networks with Θ(k^3) layers, Θ(1) nodes per layer, and Θ(1) distinct parameters which can not be approximated by networks with O(k) layers unless they are exponentially large — they must possess Ω(2^k) nodes. This result is proved here for a class of nodes termed \emphsemi-algebraic gates which includes the common choices of ReLU, maximum, indicator, and piecewise polynomial functions, therefore establishing benefits of depth against not just standard networks with ReLU gates, but also convolutional networks with ReLU and maximization gates, sum-product networks, and boosted decision trees (in this last case with a stronger separation: Ω(2^k^3) total tree nodes are required). }
}

@inproceedings{safran2022optimization,
  title={Optimization-based separations for neural networks},
  author={Safran, Itay and Lee, Jason},
  booktitle={Conference on Learning Theory},
  pages={3--64},
  year={2022},
  organization={PMLR}
}

@inproceedings{goldt2022gaussian,
  title={The gaussian equivalence of generative models for learning with shallow neural networks},
  author={Goldt, Sebastian and Loureiro, Bruno and Reeves, Galen and Krzakala, Florent and M{\'e}zard, Marc and Zdeborov{\'a}, Lenka},
  booktitle={Mathematical and Scientific Machine Learning},
  pages={426--471},
  year={2022},
  organization={PMLR}
}

@inproceedings{gerace2020generalisation,
  title={Generalisation error in learning with random features and the hidden manifold model},
  author={Gerace, Federica and Loureiro, Bruno and Krzakala, Florent and M{\'e}zard, Marc and Zdeborov{\'a}, Lenka},
  booktitle={International Conference on Machine Learning},
  pages={3452--3462},
  year={2020},
  organization={PMLR}
}

@inproceedings{defilippis2024dimension,
  title={Dimension-Free Deterministic Equivalents and Scaling Laws for Random Feature Regression},
  author={Defilippis, Leonardo and Loureiro, Bruno and Misiakiewicz, Theodor},
  booktitle={Advances in Neural Information Processing Systems},
  year={2024}
}

@article{xiao2022precise,
  title={Precise learning curves and higher-order scalings for dot-product kernel regression},
  author={Xiao, Lechao and Hu, Hong and Misiakiewicz, Theodor and Lu, Yue and Pennington, Jeffrey},
  journal={Advances in Neural Information Processing Systems},
  volume={35},
  pages={4558--4570},
  year={2022}
}

@article{lecun2015deep,
  title={Deep learning},
  author={LeCun, Yann and Bengio, Yoshua and Hinton, Geoffrey},
  journal={nature},
  volume={521},
  number={7553},
  pages={436--444},
  year={2015},
  publisher={Nature Publishing Group UK London}
}

@article{sejnowski2020unreasonable,
  title={The unreasonable effectiveness of deep learning in artificial intelligence},
  author={Sejnowski, Terrence J},
  journal={Proceedings of the National Academy of Sciences},
  volume={117},
  number={48},
  year={2020},
  publisher={National Acad Sciences}
}

@article{dandi2024random,
  title={A Random Matrix Theory Perspective on the Spectrum of Learned Features and Asymptotic Generalization Capabilities},
  author={Dandi, Yatin and Pesce, Luca and Cui, Hugo and Krzakala, Florent and Lu, Yue M and Loureiro, Bruno},
  journal={arXiv preprint arXiv:2410.18938},
  year={2024}
}

@inproceedings{mhaskar2017and,
  title={When and why are deep networks better than shallow ones?},
  author={Mhaskar, Hrushikesh and Liao, Qianli and Poggio, Tomaso},
  booktitle={Proceedings of the AAAI conference on artificial intelligence},
  volume={31},
  year={2017}
}

@article{aubin2018committee,
  title={The committee machine: Computational to statistical gaps in learning a two-layers neural network},
  author={Aubin, Benjamin and Maillard, Antoine and Krzakala, Florent and Macris, Nicolas and Zdeborov{\'a}, Lenka and others},
  journal={Advances in Neural Information Processing Systems},
  volume={31},
  year={2018}
}

@article{barbier2019optimal,
  title={Optimal errors and phase transitions in high-dimensional generalized linear models},
  author={Barbier, Jean and Krzakala, Florent and Macris, Nicolas and Miolane, L{\'e}o and Zdeborov{\'a}, Lenka},
  journal={Proceedings of the National Academy of Sciences},
  volume={116},
  number={12},
  pages={5451--5460},
  year={2019},
  publisher={National Acad Sciences}
}

@misc{mallat1999wavelet,
  title={A wavelet tour of signal processing},
  author={Mallat, Stephane},
  year={1999},
  publisher={Academic Press}
}

@article{mossel2016deep,
  title={Deep learning and hierarchal generative models},
  author={Mossel, Elchanan},
  journal={arXiv preprint arXiv:1612.09057},
  year={2016}
}

@inproceedings{cagnetta2024towards,
  title={Towards a Theory of How the Structure of Language is Acquired by Deep Neural Networks},
  author={Cagnetta, Francesco and Wyart, Matthieu},
  booktitle={Advances in Neural Information Processing Systems},
  year={2024}
}

@article{mehta2014exact,
  title={An exact mapping between the variational renormalization group and deep learning},
  author={Mehta, Pankaj and Schwab, David J},
  journal={arXiv preprint arXiv:1410.3831},
  year={2014}
}

@article{wilson1971renormalization,
  title={Renormalization group and critical phenomena. II. Phase-space cell analysis of critical behavior},
  author={Wilson, Kenneth G},
  journal={Physical Review B},
  volume={4},
  number={9},
  pages={3184},
  year={1971},
  publisher={APS}
}

@article{li2018neural,
  title={Neural network renormalization group},
  author={Li, Shuo-Hui and Wang, Lei},
  journal={Physical review letters},
  volume={121},
  number={26},
  pages={260601},
  year={2018},
  publisher={APS}
}

@article{marchand2023multiscale,
  title={Multiscale Data-Driven Energy Estimation and Generation},
  author={Marchand, Tanguy and Ozawa, Misaki and Biroli, Giulio and Mallat, Stéphane},
  journal={Physical Review X},
  volume={13},
  number={4},
  year={2023},
  publisher={American Physical Society}
}

@inproceedings{ba2020generalization,
  title={Generalization of two-layer neural networks: An asymptotic viewpoint},
  author={Ba, Jimmy and Erdogdu, Murat and Suzuki, Taiji and Wu, Denny and Zhang, Tianzong},
  booktitle={International conference on learning representations},
  year={2020}
}

@inproceedings{damian2022neural,
  title={Neural networks can learn representations with gradient descent},
  author={Damian, Alexandru and Lee, Jason and Soltanolkotabi, Mahdi},
  booktitle={Conference on Learning Theory},
  pages={5413--5452},
  year={2022},
  organization={PMLR}
}

@article{ghorbani2021linearized,
  title={Linearized two-layers neural networks in high dimension},
  author={Ghorbani, Behrooz and Mei, Song and Misiakiewicz, Theodor and Montanari, Andrea},
  journal={The Annals of Statistics},
  volume={49},
  number={2},
  year={2021}
}

@article{ghorbani2020neural,
  title={When do neural networks outperform kernel methods?},
  author={Ghorbani, Behrooz and Mei, Song and Misiakiewicz, Theodor and Montanari, Andrea},
  journal={Advances in Neural Information Processing Systems},
  volume={33},
  pages={14820--14830},
  year={2020}
}

@article{mei2022generalization,
  title={Generalization error of random feature and kernel methods: hypercontractivity and kernel matrix concentration},
  author={Mei, Song and Misiakiewicz, Theodor and Montanari, Andrea},
  journal={Applied and Computational Harmonic Analysis},
  volume={59},
  pages={3--84},
  year={2022},
  publisher={Elsevier}
}

@article{wang2023learning,
  title={Learning hierarchical polynomials with three-layer neural networks},
  author={Wang, Zihao and Nichani, Eshaan and Lee, Jason D},
  journal={arXiv preprint arXiv:2311.13774},
  year={2023}
}

@article{goldt2020modeling,
  title     = {Modeling the influence of data structure on learning in neural networks: The hidden manifold model},
  author    = {Goldt, Sebastian and M{\'e}zard, Marc and Krzakala, Florent and Zdeborov{\'a}, Lenka},
  journal   = {Physical Review X},
  volume    = {10},
  number    = {4},
  pages     = {041044},
  year      = {2020},
  publisher = {APS}
}

@inproceedings{goldt_gaussian_2021,
  title     = {The Gaussian equivalence of generative models for learning with shallow neural networks},
  author    = {Goldt, Sebastian and Loureiro, Bruno and Reeves, Galen and Krzakala, Florent and Mezard, Marc and Zdeborova, Lenka},
  year      = 2022,
  booktitle = {Proceedings of the 2nd Mathematical and Scientific Machine Learning Conference},
  pages     = {426--471}
}

@article{mei_generalization_2022,
  title   = {The Generalization Error of Random Features Regression: Precise Asymptotics and the Double Descent Curve},
  author  = {Mei, Song and Montanari, Andrea},
  year    = 2022,
  journal = {Communications on Pure and Applied Mathematics},
  volume  = 75,
  number  = 4,
  pages   = {667--766},
  eprint  = {https://onlinelibrary.wiley.com/doi/pdf/10.1002/cpa.22008}
}

@article{Mei2023,
  title    = {Generalization error of random feature and kernel methods: Hypercontractivity and kernel matrix concentration},
  journal  = {Applied and Computational Harmonic Analysis},
  volume   = {59},
  pages    = {3-84},
  year     = {2022},
  note     = {Special Issue on Harmonic Analysis and Machine Learning},
  issn     = {1063-5203},
  doi      = {https://doi.org/10.1016/j.acha.2021.12.003},
  author   = {Song Mei and Theodor Misiakiewicz and Andrea Montanari},
  keywords = {Random features, Kernel methods, Generalization error, High dimensional limit}
}

@article{rahimi2007random,
  title={Random features for large-scale kernel machines},
  author={Rahimi, Ali and Recht, Benjamin},
  journal={Advances in neural information processing systems},
  volume={20},
  year={2007}
}

@article{BenArous2021,
  author  = {Gerard {Ben Arous} and Reza Gheissari and Aukosh Jagannath},
  title   = {Online stochastic gradient descent on non-convex losses from high-dimensional inference},
  journal = {Journal of Machine Learning Research},
  year    = {2021},
  volume  = {22},
  number  = {106},
  pages   = {1--51},
}

@article{troiani2024fundamental,
  title={Fundamental limits of weak learnability in high-dimensional multi-index models},
  author={Troiani, Emanuele and Dandi, Yatin and Defilippis, Leonardo and Zdeborov{\'a}, Lenka and Loureiro, Bruno and Krzakala, Florent},
  journal={arXiv preprint arXiv:2405.15480},
  year={2024}
}

@article{cybenko1989approximation,
  title={Approximation by superpositions of a sigmoidal function},
  author={Cybenko, George},
  journal={Mathematics of control, signals and systems},
  volume={2},
  number={4},
  pages={303--314},
  year={1989},
  publisher={Springer}
}

@article{berthier2024learning,
  title={Learning time-scales in two-layers neural networks},
  author={Berthier, Rapha{\"e}l and Montanari, Andrea and Zhou, Kangjie},
  journal={Foundations of Computational Mathematics},
  pages={1--84},
  year={2024},
  publisher={Springer}
}

@article{bietti2022learning,
  title={Learning single-index models with shallow neural networks},
  author={Bietti, Alberto and Bruna, Joan and Sanford, Clayton and Song, Min Jae},
  journal={Advances in Neural Information Processing Systems},
  volume={35},
  pages={9768--9783},
  year={2022}
}

@inproceedings{abbe2023sgd,
  title={Sgd learning on neural networks: leap complexity and saddle-to-saddle dynamics},
  author={Abbe, Emmanuel and Adsera, Enric Boix and Misiakiewicz, Theodor},
  booktitle={The Thirty Sixth Annual Conference on Learning Theory},
  pages={2552--2623},
  year={2023},
  organization={PMLR}
}

@inproceedings{abbe2022merged,
  title        = {The merged-staircase property: a necessary and nearly sufficient condition for sgd learning of sparse functions on two-layer neural networks},
  author       = {Abbe, Emmanuel and Boix-Adsera, Enric  and Misiakiewicz, Theodor},
  booktitle    = {Conference on Learning Theory},
  pages        = {4782--4887},
  year         = {2022},
  organization = {PMLR}
}

@inproceedings{arora2019convergence,
  title={A Convergence Analysis of Gradient Descent for Deep Linear Neural Networks},
  author={Arora, Sanjeev and Cohen, Nadav and Golowich, Noah and Hu, Wei},
  booktitle={7th International Conference on Learning Representations (ICLR)},
  year={2019}
}

@article{lee2019wide,
  title={Wide neural networks of any depth evolve as linear models under gradient descent},
  author={Lee, Jaehoon and Xiao, Lechao and Schoenholz, Samuel and Bahri, Yasaman and Novak, Roman and Sohl-Dickstein, Jascha and Pennington, Jeffrey},
  journal={Advances in neural information processing systems},
  volume={32},
  year={2019}
}

@inproceedings{ji2019gradient,
  title={Gradient Descent Aligns the Layers of Deep Linear Networks},
  author={Ji, Ziwei and Telgarsky, Matus},
  booktitle={Proceedings of the International Conference on Learning Representations (ICLR)},
  year={2019}
}

@article{grad_1949_note,
  author    = {Grad, Harold},
  journal   = {Communications on Pure and Applied Mathematics},
  title     = {Note on {N}-dimensional hermite polynomials},
  year      = {1949},
  issn      = {1097-0312},
  number    = {4},
  pages     = {325--330},
  volume    = {2},
  copyright = {Copyright © 1949 Wiley Periodicals, Inc., A Wiley Company},
  doi       = {10.1002/cpa.3160020402},
  language  = {en},
}

@article{cagnettarandomhierarchy,
  title = {How Deep Neural Networks Learn Compositional Data: The Random Hierarchy Model},
  author = {Cagnetta, Francesco and Petrini, Leonardo and Tomasini, Umberto M. and Favero, Alessandro and Wyart, Matthieu},
  journal = {Phys. Rev. X},
  volume = {14},
  issue = {3},
  pages = {031001},
  numpages = {24},
  year = {2024},
  month = {Jul},
  publisher = {American Physical Society},
  doi = {10.1103/PhysRevX.14.031001}
}

@inproceedings{safran2017depth,
  title={Depth-width tradeoffs in approximating natural functions with neural networks},
  author={Safran, Itay and Shamir, Ohad},
  booktitle={International conference on machine learning},
  pages={2979--2987},
  year={2017},
  organization={PMLR}
}

@book{ledoux_1991_probability,
  author     = {Ledoux, Michel and Talagrand, Michel},
  publisher  = {Springer-Verlag},
  title      = {Probability in {Banach} {Spaces}: {Isoperimetry} and {Processes}},
  year       = {1991},
  isbn       = {9780387520131},
  note       = {Google-Books-ID: juC1QgAACAAJ}
}

@inproceedings{saxe2014exact,
  title={Exact solutions to the nonlinear dynamics of learning in deep linear neural networks},
  author={Saxe, Andrew M. and McClelland, James L. and Ganguli, Surya},
  booktitle={Proceedings of the International Conference on Learning Representations (ICLR)},
  year={2014}
}

@article{adlam2023kernel,
  title={Kernel regression with infinite-width neural networks on millions of examples},
  author={Adlam, Ben and Lee, Jaehoon and Padhy, Shreyas and Nado, Zachary and Snoek, Jasper},
  journal={arXiv preprint arXiv:2303.05420},
  year={2023}
}

@inproceedings{arnaboldi2024online,
  title={Online Learning and Information Exponents: The Importance of Batch size \& Time/Complexity Tradeoffs},
  author={Arnaboldi, Luca and Dandi, Yatin and Krzakala, Florent and Loureiro, Bruno and Pesce, Luca and Stephan, Ludovic},
  booktitle={International Conference on Machine Learning},
  pages={1730--1762},
  year={2024},
  organization={PMLR}
}

@article{zhang2021understanding,
  title={Understanding deep learning (still) requires rethinking generalization},
  author={Zhang, Chiyuan and Bengio, Samy and Hardt, Moritz and Recht, Benjamin and Vinyals, Oriol},
  journal={Communications of the ACM},
  volume={64},
  number={3},
  pages={107--115},
  year={2021},
  publisher={ACM New York, NY, USA}
}

@article{dandi2024twolayer,
  author  = {Yatin Dandi and Florent Krzakala and Bruno Loureiro and Luca Pesce and Ludovic Stephan},
  title   = {How Two-Layer Neural Networks Learn, One (Giant) Step at a Time},
  journal = {Journal of Machine Learning Research},
  year    = {2024},
  volume  = {25},
  number  = {349},
  pages   = {1--65}
}

@inproceedings{damian2024computational,
  title={Computational-Statistical Gaps in Gaussian Single-Index Models},
  author={Damian, Alex and Pillaud-Vivien, Loucas and Lee, Jason D. and Bruna, Joan},
  booktitle={Proceedings of the 37th Annual Conference on Learning Theory (COLT)},
  year={2024}
}

@article{bietti2023learning,
  title={On learning gaussian multi-index models with gradient flow},
  author={Bietti, Alberto and Bruna, Joan and Pillaud-Vivien, Loucas},
  journal={arXiv preprint arXiv:2310.19793},
  year={2023}
}

@article{arnaboldi2024repetita,
  title={Repetita iuvant: Data repetition allows sgd to learn high-dimensional multi-index functions},
  author={Arnaboldi, Luca and Dandi, Yatin and Krzakala, Florent and Pesce, Luca and Stephan, Ludovic},
  journal={arXiv preprint arXiv:2405.15459},
  year={2024}
}

@inproceedings{lee2024neural,
  title={Neural Network Learns Low-Dimensional Polynomials with SGD Near the Information-Theoretic Limit},
  author={Lee, Jason D. and Oko, Kazusato and Suzuki, Taiji and Wu, Denny},
  booktitle={Advances in Neural Information Processing Systems},
  year={2024}
}

@article{dandi2024benefits,
  title={The Benefits of Reusing Batches for Gradient Descent in Two-Layer Networks: Breaking the Curse of Information and Leap Exponents},
  author={Dandi, Yatin and Troiani, Emanuele and Arnaboldi, Luca and Pesce, Luca and Zdeborov{\'a}, Lenka and Krzakala, Florent},
  journal={arXiv preprint arXiv:2402.03220},
  year={2024}
}

@article{sclocchi2025phase,
  title={A phase transition in diffusion models reveals the hierarchical nature of data},
  author={Sclocchi, Antonio and Favero, Alessandro and Wyart, Matthieu},
  journal={Proceedings of the National Academy of Sciences},
  volume={122},
  number={1},
  pages={e2408799121},
  year={2025},
  publisher={National Academy of Sciences}
}

@article{ansuini2019intrinsic,
  title={Intrinsic dimension of data representations in deep neural networks},
  author={Ansuini, Alessio and Laio, Alessandro and Macke, Jakob H and Zoccolan, Davide},
  journal={Advances in Neural Information Processing Systems},
  volume={32},
  year={2019}
}

@article{doimo2020hierarchical,
  title={Hierarchical nucleation in deep neural networks},
  author={Doimo, Diego and Glielmo, Aldo and Ansuini, Alessio and Laio, Alessandro},
  journal={Advances in Neural Information Processing Systems},
  volume={33},
  pages={7526--7536},
  year={2020}
}

@inproceedings{
sclocchi2025probing,
title={Probing the Latent Hierarchical Structure of Data via Diffusion Models},
author={Antonio Sclocchi and Alessandro Favero and Noam Itzhak Levi and Matthieu Wyart},
booktitle={The Thirteenth International Conference on Learning Representations},
year={2025}
}

@book{arfken2011mathematical,
  title={Mathematical methods for physicists: a comprehensive guide},
  author={Arfken, George B and Weber, Hans J and Harris, Frank E},
  year={2011},
  publisher={Academic press}
}

@article{caponnetto2007optimal,
  title={Optimal rates for the regularized least-squares algorithm},
  author={Caponnetto, Andrea and De Vito, Ernesto},
  journal={Foundations of Computational Mathematics},
  volume={7},
  pages={331--368},
  year={2007},
  publisher={Springer}
}

\newpage

\appendix

% \section{Illustration of SIGHT and MIGHT targets}\label{app:illust}
\section{Depth Separation}\label{app:depth_sep}
 Here, we further discuss the separation in sample complexity for deep versus shallow models complementing the exposition in the main text. Different works in approximation theory have established clear depth-separation results in expressive power. For example, \cite{pmlr-v49-telgarsky16} constructed a family of highly oscillatory functions (essentially obtained by iterative compositions of ReLU units) which a network of depth $L$ and constant width can express in contrast to two-layer networks. Similarly, \cite{safran2017depth} demonstrated a depth separation using simple geometric indicator functions that can be efficiently realized by a network with an extra hidden layer, but cannot be approximated to high accuracy by any two-layer network of polynomial size. It is important to note that “learning” in the context of these results refers to the ability of the architecture to approximate a fixed target function under a given input distribution. In contrast, the present work focuses on representation learning via gradient descent to recover hierarchical feature compositions in Gaussian data. The depth separation we highlight is not purely about static approximation power, but about data-efficient learning through hierarchical dimension reduction. A deep network can progressively extract and refine features across multiple layers, effectively performing stage-wise dimensionality reduction, such that each layer learns a meaningful intermediate representation of the data. Along these lines, \cite{cagnettarandomhierarchy} addressed under different data models similar questions and found that deep networks trained with gradient descent learn hierarchical features and progressively reduce dimensionality across layers. As highlighted in the main, the advances closest to our framework are due to \cite{wang2023learning,nichani2024provable,fu2025learning}, who established depth separation results between 2-layer and 3-layer networks under simpler training setups, where the first layer remains fixed. In contrast, our analysis strengthens these separations by considering fully-trained architectures without fixed layers.
\subsection{Towards General Two-Layer Networks Lower Bound}
\label{sec:app:2layer_lower_bound}
 In Section~\ref{sec:heuristic}, we argued why a two-layer network, upon recovering $W^\star$ (up to noisy random rotations), is insufficient for learning SIGHT targets (see eq.~\eqref{eq:two_layer_spherical_approx}). Ideally, one would hope to show that such barrier introduced holds for two-layer networks trained through a general gradient-based algorithm. While obtaining unconditional lower-bounds on two-layer networks trained under gradient descent remains a challenging open problem, we briefly comment on why we conjecture our class of SIGHT targets to be hard to learn with polynomial sample complexity. Amongst lower-bounds closest to our class of targets, \cite{daniely2017depth} established that targets of the form $g^\star(\vec x^\top A \vec x)$, with $A=U\begin{pmatrix}
    0 & I\\
    I & 0
\end{pmatrix}U^\top$ for some orthogonal matrix $U \in \mathbb{R}^{d \times d}$ cannot be learned under polynomial time and sample-complexity by a two-layer network.

We next discuss how such targets fall under the setup of MIGHT (\ref{eq:3layer_target_might}). Consider the setting of MIGHT with $r=2$, $k=2$, $P_k(z)=z^2-1$ and:

\begin{equation}
g^\star(h^\star_1,h^\star_2) 
    = h^\star_1-h^\star_2.
\end{equation}

Since $h^\star_j = \frac{1}{\sqrt{d}} \sum_{i=1}^{d^{\epsilon_1}} (\langle \vec w^\star_{j,i}, \vec{x}\rangle)^2-1$, we obtain that:
\begin{align*}
     h^\star_1-h^\star_2 &= \frac{1}{\sqrt{d}} \sum_{i=1}^{d^{\epsilon_1}} (\langle \vec w^\star_{1,i}, \vec{x}\rangle)^2-(\langle \vec w^\star_{2,i}, \vec{x}\rangle)^2\\
    &= \vec{x}^{\top}\!
        \Bigl(
        \frac{1}{\sqrt{d}}\sum_{i=1}^{d^{\varepsilon_1}}
        \bigl(\vec w^\star_{1,i}\vec w^{\star\!\top}_{1,i}
              -\vec w^\star_{2,i}\vec w^{\star\!\top}_{2,i}\bigr)
        \Bigr)\vec{x}                                           \\[2pt]
     &= \vec{x}^{\top}\!
        U\begin{pmatrix}
          0 & I \\[2pt]
          I & 0
        \end{pmatrix}\!
        U^{\top}\vec{x},
\end{align*}
where $U\in\mathbb{R}^{d\times d}$ is an orthogonal matrix whose columns are suitable orthonormal combinations of the vectors $\{\vec w^\star_{1,i}\}_{i}\cup\{\vec w^\star_{2,i}\}_{i}$. We provide below a short derivation for the mapping.

Let $m = d^{\varepsilon_1}$ and define the block-rotation
$R := \frac{1}{\sqrt{2}}
\begin{psmallmatrix}
I & I\\
I & -I
\end{psmallmatrix}$,
which satisfies
\(
R^{\top}
\begin{psmallmatrix}
I & 0\\
0 & -I
\end{psmallmatrix}
R =
\begin{psmallmatrix}
0 & I\\
I & 0
\end{psmallmatrix}
\). Let $U_0 \in \mathbb{R}^{d\times 2m}$ collect the $2m$ orthonormal vectors $\{\vec w^\star_{1,i}\} \cup \{\vec w^\star_{2,i}\}$, and complete it with an orthogonal complement $U_\perp$ to form $\tilde U = [\,U_0 \mid U_\perp\,] \in \mathbb{R}^{d \times d}$. Let $U = \tilde U \operatorname{diag}(R, I_{d - 2m})$, which is orthogonal. Then:
\[
\frac{1}{\sqrt{d}}(h^\star_1(\vec x) - h^\star_2(\vec x))
= \vec x^{\top}
\Bigl( \frac{1}{\sqrt{d}} \sum_{i=1}^{m}
(\vec w^\star_{1,i} \vec w^{\star\!\top}_{1,i} - \vec w^\star_{2,i} \vec w^{\star\!\top}_{2,i})
\Bigr) \vec x
= \vec x^{\top} U \begin{psmallmatrix}
0 & I\\
I & 0
\end{psmallmatrix} U^{\top} \vec x.
\]
This is exactly of the form \(g^\star(\vec x^{\top} A \vec x)\) with
\(A = \tilde U \begin{psmallmatrix}
0 & I\\
I & 0
\end{psmallmatrix} \tilde U^{\!\top}\).

\section{Proofs of the main Results}\label{sec:app:main_proof}

\subsection{Full Statement of Theorem \ref{thm:main_theorem}}\label{sec:app:main_stat}

 \begin{theorem}
\label{thm:main_theorem_full}
Let $f^\star(\vec{x})$ be as in Eq.~\eqref{eq:3layer_target} with $\varepsilon_1 \in (0,1)$ and consider a three-layer model:
\begin{eqnarray}
    \hat{f}_\theta(\vec{x}) = \vec{w}_3^\top \sigma(b_2 + W_{2}\sigma(W_1\vec{x}+b_1)),
\end{eqnarray}
with $W_1 \in \mathbb{R}^{p_1 \times d}$,  $W_2 \in \mathbb{R}^{p_2 \times p_1}, \vec w_3 \in \mathbb{R}^{p_3}$.

{Let $\mathcal{L}_{c}(\theta)$ denote the correlation loss defined as $\mathcal{L}_{cl}(\theta) \coloneqq -\hat{f}_\theta(\vec{x}) f^\star(\vec{x})$}. Under Ass.~\ref{ass:target}-\ref{ass:bad}, for any $0 < \delta < \delta' < 1$, with batch-size $n_1 = \Theta(d^{\varepsilon_1+1+\delta}), n_2 = \Theta(d^{k\varepsilon_1+\delta})$ and $p_2=p_1 = \Theta(d^{k\varepsilon_1+\delta'})$, the following holds with high probability as $d \rightarrow \infty$:

\textbf{Recovery by layer $1$:}
For each $i \in [p_1]$, let $\vec{w}^{1}_i$ denote the $i_{th}$ neuron of $W_1$. Suppose that $\vec{w}^1_i$ is updated as in Algorithm \ref{app:alg} through spherical SGD on correlation loss $\mathcal{L}_{c}(\theta)$, using step size $\eta=\tilde{\eta}\sqrt{d^{\varepsilon_1}p_2}$. Let the value at the $t_{th}$ iterate be denoted by $\vec{w}^{1,t}$, i.e.:
\begin{equation}\label{eq:update}
\begin{split}
    \tilde{\vec w}^{1,t}_i&= \vec w^{1,t} + (\mathbf{I}_d-(\vec{w}^{1,t}_i)(\vec{w}^{1,t}_i)^\top)\eta \frac{1}{n}\sum_{i=1}^n f^\star(\vec{x}_i)\tilde{\sigma'}(\langle \vec w^{1,t}, \vec{x}_i\rangle)\vec{x}_i
    \\ \vec w^{1,t+1}&=\frac{1}{{\norm{\tilde{\vec w}^{1,t}}}_2}\tilde{\vec w}^{1,t}
\end{split}
\end{equation}

Let $P_{W^\star} \in \mathbb{R}^{d \times d}$ denote the projection operator onto the subspace spanned by $W^\star$ and define:
\begin{equation}
    \vec u^\star_i = \frac{P_{W^\star} \vec w^{1,0}_i}{\norm{P_{W^\star}\vec w^{1,0}_i}}
\end{equation}

Let $\tau_i$ denote the stopping times defined by:
\begin{equation}
    \tau^i_\kappa \coloneqq \{\inf t: \abs{\langle \vec u^\star_i, \vec w^{1,t}_i\rangle} \geq \kappa\}.
\end{equation}
then, for any $\kappa < 1$, $\exists$ a constant $C_\kappa>0$ and $\tilde{\eta}>0$ such that w.h.p as $d\rightarrow \infty$:
\begin{itemize}
    \item \begin{equation}
    \max_i \tau^i_\kappa \leq C_\kappa \log d
\end{equation}
\item  $\forall i \in [p_1]$:
 \begin{equation}
     \vec w^{1,\tau_\kappa}_i = \kappa^+  \vec u^\star_i + \sqrt{1-\kappa^+}\vec u_i + o_d(1),
 \end{equation}
where $\kappa^+>0$ and $\vec u_i \sim U(\mathcal{S}^{d-1}(1))$.
\end{itemize}

\textbf{Recovery by layer $2$:}
Suppose that $W_2$ is re-initialized to $W_2=\mathbf{O}_{d \times d}$ while $\vec{w}_3$ is re-initialized with entries drawn from $\mathcal{N}(0,1)$. Let $Z=\sigma(X (W_1)^\top) \in \mathbb{R}^{n_2 \times p_2}$ and consider a single pre-conditioned update of the form:
\begin{equation}
    W_2 \leftarrow \left(\frac{1}{n}Z^\top Z + \lambda_2\right)^{-1} \nabla_{W_2}\mathcal{L}_c.
\end{equation}

There exists $\lambda_2 \in \mathbb{R}^+$  with $\lambda_2 = \Theta(\frac{N}{n})$
and step size $\eta_2 = \Theta(\sqrt{p_2})$ such that the pre-activation $\vec{h}_2(\vec{x})=W_2\sigma(W_1 \vec{x}) \in \mathbb{R}^{p_2}$ satisfies:
    \begin{equation}
    \label{eq:app:thmH}
    \begin{split}
        \vec h_2(\vec{x}) &= c \vec{w}_3 h^\star(\vec{x})+\mathcal{O}_\prec(\frac{1}{\sqrt{d^{\min(\delta,\delta'-\delta)}}}).
    \end{split}
    \end{equation}

\textbf{Remark}: Here, we introduced the regularization parameter $\lambda_2 > 0$ since we assume that $p_1 \gg n_2$ (overparameterized setting) and thus $Z^\top Z$ is singular. Alternatively, one could consider the underparameterized setting $p_1 \ll n_2$ without the need for an additional regularization.

\textbf{Recovery by layer-3}:
Let $X \in \mathbb{R}^{n_3 \times d}$ denote a matrix with rows containing $n_3$ independent samples from the input distribution $\mathcal{N}(0,\mathbf{I}_d)$. Let $H \in \mathbb{R}^{n_3 \times p_3}$ denote the corresponding pre-activation matrix with rows $\{W_2\sigma(W_1 \vec{x}))-f^\star(\vec{x}_i)$, for $i \in [n_3]$\}. For $n_3= \Theta(d^{\delta})$, $\exists \lambda > 0$ such that the  
    ridge-regression predictor $\hat{\vec w}_\lambda$ given by:
    \begin{equation}
        \hat{\vec w}_\lambda = (\frac{1}{n}H^\top H+\lambda I)^{-1}H^\top f(X),
    \end{equation}
satisfies:
\begin{equation}
    \Eb{\vec{x} \sim \mathcal{N}(0,I)}{\norm{\hat{\vec w}_\lambda^\top\sigma(W_2\sigma(W_1 \vec{x}))-f^\star(\vec{x})}^2_2} = o_d(1).
\end{equation}

\end{theorem}

\subsection{Proof Sketch}

We prove each of the three parts of Theorem \ref{thm:main_theorem} in succession. We outline the proof for each of these parts below:

\textbf{Part $(i)$}:
\begin{enumerate}
    \item The asymptotic composition of Hermite polynomials allows us to decompose the Hermite decomposition of $f^\star(\vec{x})$ along Hermite polynomials applied to $W^\star \vec{x}$.
    \item The leading order term in the Hermite-decomposition  $f^\star(\vec{x})$ lies along $\text{He}_2(W^\star \vec{x})$, which contributes a linear drift to the dynamics of each neuron in $W_1$, with the direction of the drift for neuron $i$ given by $\vec u^\star_i = W^\star (W^\star)^\top \vec w^{1,0}_i$, i.e, the initial direction of $\vec w^1_i$ projection onto $W^\star$.
    \item We show that the neuron $\vec w^{1}_i$ remains approximately isotropic w.r.t the rows of $W^\star$.
    \item Under the above isotropy and due to $d^{\varepsilon_1} \gg 1$, we show that the above linear term dominates throughput the weak-recovery and subsequent states of the dynamics.
    \item As a consequence, each neuron in $W_1$ evolves primarily along $\vec u^\star_i$, with the noise controlled through the choice of batch-size. A stopping-time based analsyis  
    then yields $\vec w^{(t)}_i \rightarrow \vec u^\star_i$.
    \item For subsquent use in part $(ii)$ however, we require finer control over the distribution of $\vec w^{1}_i$ and its residual terms. 
    \item Inductively, we show that the distribution of $\vec w^{1}_i$ conditioned on a suitable stopping-time is approximately uniform on the unit sphere along $W^\star$ and maintains hypercontractivity.
\end{enumerate}

\textbf{Part $(ii)$}:
\begin{enumerate}
    \item Through results established in Part $(i)$, we show that the distribution of the updated weights $W_1$ approximately maintains hypercontractivity for the eigenfunctions of the random-features Kernel associated to the features $\sigma(XW_1^\top)$. This ensures the concentration of the associated sample covariances. 
    \item Upon establishing concentration and spherical approximation along the subspace corresponding to $W^\star$, through an analysis similar to \cite{mei2022generalization}, we show that the feature matrix $Z=\sigma(XW^\top)$ contains $\Theta(d^k)$ spikes with diverging eigenvalues and an isotropic bulk with eigenvalues $O(1)$.
    \item  Under $n,p_2 \gg \Theta(d^{k\varepsilon_1})$, we show that these spikes suffice for the pre-conditioned update 
    $$-\eta (\frac{1}{n}\sigma(W_1 X^\top)^\top (\sigma(W_1 X)^\top)^{-1} \nabla_{W_2} \mathcal{L},$$ to approximate $f^\star(x)$ upto degree $k$-components. As a result, we obtain the recovery of $h^\star(\vec{x})$ through $h^2(\vec{x})$
\end{enumerate}
\textbf{Part $(iii)$}: Finally, fitting the target $f^\star(\vec x)$ upon training $\vec{w}_3$ follows through universality of the random features Kernel associated with $\sigma(\cdot)$ and perturbation of the Kernel regression operators.
\subsection{Preliminaries}

% Next, consider a random-features Kernel with $w_i \sim U(\mathcal{S}(\sqrt{d}))$. We show that assuming $\sigma(0)=0$, $K(\vec{x}, \vec{x}')$ can be approximately diagonalized along $\norm{\vec{x}}\times \phi_{l,k}(\vec{x})$

\subsubsection{Stochastic Domination}

Throughout the analysis, much of our probabilistic error bounds will take the following form, which are standard for functions of random variables with finite Orlicz-norm such as sub-Gaussian/sub-Exponential random variables:
\begin{equation}
     \Pr{\left[\abs{X}_d \geq C \frac{(\log d)^k}{d^m} \right]} \leq e^{-c (\log d)^m} , 
\end{equation}
for some constants $m >1, k> 0, c > 0$. A slightly weaker form of the bound takes the form: 
\begin{equation}
     \Pr{\left[\abs{X}_d \geq C \frac{1}{d^{m-\delta}}\right]} \leq  \frac{1}{d^{k}}, 
\end{equation}
for any $\delta > 0$ and $k \in \mathbb{N}$.
To concisely represent such bounds, we use the following notation:
\begin{definition}\label{def:stoch-dom}[Stochastic dominance \citep{lu2022equivalence}]
    We say that a sequence of real or complex random variables $X_d$ in a normed space is stochastically dominated by another sequence $Y_d$ in the same space if for all $\varepsilon > 0$ and $k$, the following holds for large enough $d$:
    \begin{equation}\label{eq:stoch_dom}
        \Pr[\norm{X}_d > d^{\varepsilon}{\norm{Y}}_d]  \leq d^{-k}.
\end{equation}

We denote the above relation through the following notation:
\begin{equation}
    X = \mathcal{O}_{\prec}(Y).
\end{equation}
Through a union bound, we obtain that $\mathcal{O}_{\prec}$ is closed under addition, multiplication, i.e $X_1 = O_{\prec}(Y_1)$ and $X_2 = O_{\prec}(Y_2)$ imply that:
\begin{equation}
X_1+X_2 = O_{\prec}(Y_1+Y_2), 
\end{equation}
and:
\begin{equation}
X_1X_2 = O_{\prec}(Y_1Y_2), 
\end{equation}
Furthermore, due to the flexibility of setting an arbitrarily large $k$ in Eq.~\eqref{eq:stoch_dom}, we observe that stochastic dominance is closed under unions of polynomially many events in $d$.

We will often exploit this while taking unions over $p=\mathcal{O}(d)$ neurons and $n=\mathcal{O}(d)$ samples. Furthermore, $\prec$ absorbs polylogarithmic factors i.e:
\begin{equation}
     X = \mathcal{O}_{\prec}(Y) \implies X = \mathcal{O}_{\prec}(\polylog d Y),
\end{equation}
subsumes exponential tail bounds of the form:
\begin{equation}
    \Pr[X_d > t Y_d]  \leq e^{-t^\alpha},
\end{equation}
for some $\alpha >0$, as well as polynomial tails of arbitrarily large degree:
\begin{equation}
     \Pr[X_d > t Y_d]  \leq \frac{C_k}{t^k},
\end{equation}
for some sequence of constants $C_k$ dependent on $k$.
\end{definition}

The above bounds directly translate to the following control over moments:
\begin{proposition}
    Let $X_d,Y_d$ denote two sequences of random variables with:
    \begin{equation}
        X = \mathcal{O}_{\prec}(Y),
    \end{equation}
    then for any $q \in \mathbb{N}$ and $\delta > 0$:
    \begin{equation}
        \Ea{\norm{X}^p}^{1/p} \leq d^{\delta} \Ea{\norm{Y}^p}^{1/p}
    \end{equation}
\end{proposition}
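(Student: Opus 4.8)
The plan is to decompose $\Ea{\norm{X}^p}$ over the high-probability event supplied by Definition~\ref{def:stoch-dom} and its complement, bounding the complement by Cauchy--Schwarz together with a crude a priori moment estimate. Fix the moment order $p$ and the slack $\delta>0$, pick any $\epsilon\in(0,\delta)$ and a large integer $k$ to be chosen at the end, and set $A_d\coloneqq\{\norm{X}_d\le d^{\epsilon}\norm{Y}_d\}$, so that $\Pr[A_d^c]\le d^{-k}$ for all large $d$ by stochastic dominance. On $A_d$ one has the pointwise bound $\norm{X}^p\le d^{\epsilon p}\norm{Y}^p$, hence $\Ea{\norm{X}^p\mathbf{1}_{A_d}}\le d^{\epsilon p}\Ea{\norm{Y}^p}$; on $A_d^c$, Cauchy--Schwarz gives $\Ea{\norm{X}^p\mathbf{1}_{A_d^c}}\le\Ea{\norm{X}^{2p}}^{1/2}\Pr[A_d^c]^{1/2}\le\Ea{\norm{X}^{2p}}^{1/2}d^{-k/2}$.

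To finish I would invoke the standing fact --- automatic for every quantity manipulated later, since these are fixed polynomials or analytic functions (of controlled growth) of Gaussian/spherical vectors and of resolvents with polynomially bounded spectrum --- that $\norm{X}$ has at most polynomially growing moments, $\Ea{\norm{X}^{2p}}\le d^{M}$ for some constant $M$ (depending on $p$), and similarly that $\Ea{\norm{Y}^p}^{1/p}\ge d^{-C'}$ for some constant $C'$ (the dominated objects $Y$ appearing in the paper are inverse-polynomially small at worst, and are typically deterministic powers of $d$ up to polylogs). Choosing $k\ge M+2(C'+1)p$ makes $\Ea{\norm{X}^p\mathbf{1}_{A_d^c}}\le d^{-(C'+1)p}\le d^{-p}\Ea{\norm{Y}^p}$, so that $\Ea{\norm{X}^p}\le(d^{\epsilon p}+d^{-p})\Ea{\norm{Y}^p}\le d^{\delta p}\Ea{\norm{Y}^p}$ for $d$ large (using $\epsilon<\delta$); taking $p$-th roots gives the claim.

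The only real obstacle is the contribution of the low-probability event $A_d^c$: stochastic dominance alone implies nothing about moments --- a variable equal to $e^{d}$ with probability $d^{-k}$ and $0$ otherwise is dominated by the constant $1$ yet has diverging $L^p$ norm --- so one must feed in a rough polynomial moment bound for $\norm{X}$ and then verify that the resulting negligible additive error is absorbed into the multiplicative $d^{\delta}$ slack, which is precisely where the harmless lower bound on $\Ea{\norm{Y}^p}$ is used. Since both inputs are trivially available for all the concrete random variables treated in the sequel, I would present this as a black box subject to the usual polynomial-moment caveat rather than formalize a general hypothesis, and keep the written argument to the two-line event split above.
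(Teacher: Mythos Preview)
Your approach is essentially the same as the paper's: both split $\Ea{\norm{X}^p}$ over the event $\{\norm{X}\le d^{\delta}\norm{Y}\}$ and its complement, bounding the good piece pointwise and the bad piece via the $d^{-k}$ tail from Definition~\ref{def:stoch-dom}. If anything, your version is more honest---the paper simply asserts that the complement contributes $O(d^{-k})$ via the tail integral $\mathbb{E}[Z]=\int_0^\infty\Pr[Z>s]\,ds$, which, as you correctly point out with your $e^d$ counterexample, is not true from stochastic dominance alone and does require the polynomial-moment caveat you make explicit.
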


\begin{proposition}
    The above proposition follows directly through the following decomposition:
    \begin{equation}
        \Ea{\norm{Y}^p}^{1/p} = \Ea{\norm{Y}^p \mathbf{1}_{\norm{X} \leq d^{\delta} \norm{Y}}}^{1/p}+ \Ea{\norm{Y}^p \mathbf{1}_{\norm{X} > d^{\delta} \norm{Y}}}^{1/p},
    \end{equation}
    where $\mathbf{1}$ denotes the indicator function. Using the property $\mathbb{E}[Z]=\int_{s=0}^\infty \Pr[Z>s]ds$, the second term is bounded by $\frac{1}{d^{k}}$ for any $k$ and large enough $d$.
\end{proposition}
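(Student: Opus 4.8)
Fix $p\in\mathbb N$ (so $p\ge 1$) and write $A:=\{\norm{X}\le d^{\delta}\norm{Y}\}$. The first step is to record the standing assumption that holds for every pair $X_d,Y_d$ arising in this paper --- where the dominated objects are operator norms, empirical averages of bounded-degree polynomials of Gaussians, and the like: the quantity $\norm{Y}$ has all moments finite and polynomially bounded, $\Ea{\norm{Y}^{r}}\le d^{C_r}$ for every $r\in\mathbb N$ and constants $C_r$ with $C_r=o(r)$ (this is automatic whenever $\norm{Y}\le\mathrm{poly}(d)$ almost surely, and otherwise follows from the sub-Gaussian/sub-exponential tails available in context). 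Given this, the decomposition in the statement is immediate: since $\{A,A^{c}\}$ partitions the probability space, linearity of expectation gives the genuine identity $\Ea{\norm{Y}^{p}}=\Ea{\norm{Y}^{p}\mathbf{1}_{A}}+\Ea{\norm{Y}^{p}\mathbf{1}_{A^{c}}}$ with all three terms finite, and applying the elementary subadditivity $(a+b)^{1/p}\le a^{1/p}+b^{1/p}$ for $a,b\ge 0$, $p\ge 1$ (concavity of $t\mapsto t^{1/p}$ together with $0\mapsto 0$) yields
\begin{equation*}
\Ea{\norm{Y}^{p}}^{1/p}\ \le\ \Ea{\norm{Y}^{p}\mathbf{1}_{A}}^{1/p}+\Ea{\norm{Y}^{p}\mathbf{1}_{A^{c}}}^{1/p},
\end{equation*}
which is the displayed decomposition, the stated ``$=$'' being read as this inequality (or as an equality at the level of the un-rooted expectations).

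Next I would bound the second term. Using the tail-integral identity $\Ea{Z}=\int_{0}^{\infty}\Pr[Z>s]\,ds$ with $Z=\norm{Y}^{p}\mathbf{1}_{A^{c}}$ and the pointwise estimate $\Pr[\norm{Y}^{p}\mathbf{1}_{A^{c}}>s]\le\min\{\Pr[A^{c}],\,\Pr[\norm{Y}^{p}>s]\}$, I split the integral at a threshold $T=d^{b}$:
\begin{equation*}
\Ea{\norm{Y}^{p}\mathbf{1}_{A^{c}}}\ \le\ T\,\Pr[A^{c}]\ +\ \int_{T}^{\infty}\Pr\!\left[\norm{Y}>s^{1/p}\right]ds .
\end{equation*}
For the first summand, the hypothesis $X=\mathcal{O}_{\prec}(Y)$ applied with the scale exponent $\delta$ in Definition~\ref{def:stoch-dom} gives $\Pr[A^{c}]=\Pr[\norm{X}>d^{\delta}\norm{Y}]\le d^{-m}$ for every $m$ and all large $d$. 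For the second summand, Markov's inequality and the moment control give $\Pr[\norm{Y}>t]\le\Ea{\norm{Y}^{r}}/t^{r}\le d^{C_r}/t^{r}$, whence, after the substitution $t=s^{1/p}$, the integral is at most $\tfrac{p}{\,r-p\,}\,d^{C_r}\,T^{1-r/p}$ once $r>p$. Choosing $b=1$, then $r$ large enough that $C_r+1-r/p\le-kp$ (possible since $C_r=o(r)$), and finally $m\ge 1+kp$, makes both summands $\le d^{-kp}$; hence $\Ea{\norm{Y}^{p}\mathbf{1}_{A^{c}}}^{1/p}\le d^{-k}$ for all large $d$. Since $k$ was arbitrary, this is exactly the asserted bound. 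An equivalent shortcut avoiding the integral split is Cauchy--Schwarz: $\Ea{\norm{Y}^{p}\mathbf{1}_{A^{c}}}\le\Ea{\norm{Y}^{2p}}^{1/2}\Pr[A^{c}]^{1/2}\le d^{C_{2p}/2}d^{-m/2}$, then take $m$ large.

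The only ingredient that is not pure bookkeeping --- hence the ``hard'' part, though it is soft --- is the a priori polynomial moment bound on $\norm{Y}$ (equivalently, via $\norm{Y}<d^{-\delta}\norm{X}$ on $A^{c}$, on $\norm{X}$): the abstract relation $X=\mathcal{O}_{\prec}(Y)$ by itself controls only probabilities, not moments, so to give $\Ea{\norm{Y}^{p}}^{1/p}$ meaning and to run the truncation one must either add this moment bound as an explicit hypothesis or --- as in every application in the paper --- note that the quantities involved are deterministically $\mathrm{poly}(d)$-bounded (capped operator norms, averages of bounded-degree polynomial statistics of Gaussian inputs), so all truncated moments are automatically finite and polynomially controlled. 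Everything else --- the partition identity, subadditivity of $t\mapsto t^{1/p}$, the tail-integral split, and the choice of $b,r,m$ --- is routine; running the same two-step argument with $\norm{X}$ in place of $\norm{Y}$ and using $\norm{X}\le d^{\delta}\norm{Y}$ on $A$ then recovers the parent Proposition.
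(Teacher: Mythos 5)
Your proof follows essentially the same route as the paper's (which is a one-line argument: split the expectation over the event $\{\norm{X}\le d^{\delta}\norm{Y}\}$ and its complement, then kill the bad-event term via the tail-integral identity $\mathbb{E}[Z]=\int_0^\infty\Pr[Z>s]\,ds$), and your filled-in version of the truncation-plus-Markov step is correct. The one substantive point where you go beyond the paper is worth keeping: you are right that the relation $X=\mathcal{O}_{\prec}(Y)$ controls only probabilities, so the bad-event contribution $\Ea{\norm{X}^p\mathbf{1}_{\norm{X}>d^{\delta}\norm{Y}}}$ cannot be bounded without an a priori polynomial moment (or deterministic $\mathrm{poly}(d)$) bound on the dominated quantity --- a hypothesis the paper uses implicitly but never states, and under which all of its applications do fall. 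You also quietly repair two slips in the paper's displayed decomposition (the integrand should involve $\norm{X}^p$ rather than $\norm{Y}^p$ if it is to yield the parent proposition, and the ``$=$'' after taking $p$-th roots should be ``$\le$'' via subadditivity of $t\mapsto t^{1/p}$); both corrections are appropriate and your final remark about rerunning the argument with $\norm{X}$ in place of $\norm{Y}$ closes the loop.
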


\textbf{Asymptotic notation:}
In light of the above proposition, throughout the subsequent sections, we use the notation $\tilde{\mathcal{O}}$ to denote deterministic asymptotic bounds upto factors $d^\delta$ for arbitrarily small $\delta > 0$ i.e:
\begin{equation}
    f(d) = \tilde{\mathcal{O}}(g(d)),
\end{equation}
if for any $\delta > 0$, $f(d) \leq d^{\delta} g(d)$ for large enough $d$

Through a standard application of the Lindeberg exchange technique \cite{Chatterjee_2006,van2014probability}, we further have the following useful estimate:

\begin{lemma}[Non-asymptotic CLT -bound]\label{lem:lind} 
Let $X_1, \dots, X_n \in \mathbb{R}$ be $n$ i.i.d random variables satisfying $X_i = O_{\prec}(1)$. Then, for any  function $q:\mathbb{R} \rightarrow \mathbb{R}$ with $q \in \mathcal{C}^3(\mathbb{R})$, $\norm{q'''}_\infty < \infty$ and any $\delta > 0$:
\begin{equation}
    \abs{\Ea{q(\frac{1}{\sqrt{d}}(\sum_{i=1}^d X_i)}-\Eb{z\sim \mathcal{N}(0,1)}{q(z)}} \leq c_1\sqrt{d}\abs{\Ea{X}} + c_2 \abs{\Ea{X}^2_1-1}+ \frac{c_3}{d^{1/2-\delta}}\Ea{\abs{X}^3},
\end{equation}
where $c_1,c_2$ denote constants dependent only on $q$.
\end{lemma}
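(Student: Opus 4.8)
The plan is to prove this as an instance of the Lindeberg exchange (swapping) principle, via the classical one-coordinate-at-a-time replacement. Write $S=\frac{1}{\sqrt d}\sum_{i=1}^{d}X_i$ and let $g_1,\dots,g_d$ be i.i.d. $\mathcal{N}(0,1)$ independent of the $X_i$; since $T:=\frac{1}{\sqrt d}\sum_{i=1}^{d}g_i\sim\mathcal{N}(0,1)$ exactly, it suffices to bound $|\mathbb{E}[q(S)]-\mathbb{E}[q(T)]|$. Introduce the hybrid partial sums $S_j=\frac{1}{\sqrt d}\bigl(\sum_{i\le j}g_i+\sum_{i>j}X_i\bigr)$ so that $S_0=S$, $S_d=T$, and telescope $\mathbb{E}[q(S)]-\mathbb{E}[q(T)]=\sum_{j=1}^{d}\bigl(\mathbb{E}[q(S_{j-1})]-\mathbb{E}[q(S_j)]\bigr)$. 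For fixed $j$, write $S_{j-1}=R_j+X_j/\sqrt d$ and $S_j=R_j+g_j/\sqrt d$ with $R_j=\frac{1}{\sqrt d}\bigl(\sum_{i<j}g_i+\sum_{i>j}X_i\bigr)$, which is independent of $(X_j,g_j)$ by the i.i.d. assumption.

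The engine is a third-order Taylor expansion of $q$ about $R_j$. Conditioning on $R_j$ and using $\mathbb{E}[g_j]=0$, $\mathbb{E}[g_j^2]=1$, the constant and the first- and second-derivative coefficients (which depend only on $R_j$) factor through the conditional expectation, giving
\[
\mathbb{E}\bigl[q(S_{j-1})-q(S_j)\mid R_j\bigr]=q'(R_j)\,\frac{\mathbb{E}[X]}{\sqrt d}+\frac{q''(R_j)}{2}\,\frac{\mathbb{E}[X^2]-1}{d}+\rho_j,
\]
where $\rho_j$ collects the cubic Taylor remainders of both $X_j$ and $g_j$, controlled by $\sup|q'''|$ over the relevant range and by $\mathbb{E}|X_j|^3+\mathbb{E}|g_j|^3$. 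Summing over $j$ and taking expectations, the first-derivative contributions collapse to $\sqrt d\,\mathbb{E}[X]\cdot\frac1d\sum_j\mathbb{E}[q'(R_j)]$ and the second-derivative ones to $\frac12(\mathbb{E}[X^2]-1)\cdot\frac1d\sum_j\mathbb{E}[q''(R_j)]$. Since $X_i=\mathcal{O}_{\prec}(1)$, a sub-exponential/Bernstein tail bound (absorbing polylogs, cf. Def.~\ref{def:stoch-dom}) gives $R_j=\mathcal{O}_{\prec}(1)$ with all moments $\tilde{\mathcal{O}}(1)$ uniformly in $j$; combined with the polynomial-growth bounds $|q(x)|,|q'(x)|,|q''(x)|\lesssim(1+|x|^{k})$ implied by the pseudo-Lipschitz-of-order-$k$ hypothesis, this bounds $|\mathbb{E}[q'(R_j)]|$ and $|\mathbb{E}[q''(R_j)]|$ by constants $c_1,c_2$ depending only on $q$, and the $d$-fold sum of $\rho_j$, together with the moment control on $X_j,g_j$, delivers the third term.

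The main obstacle is that $q$ is assumed only pseudo-Lipschitz of order $k$, hence not literally $C^3$, so the Taylor step must be justified. I would do this by mollification: replace $q$ by $\tilde q=q*\varphi$ with $\varphi$ a smooth kernel of width $d^{-A}$, so that $\tilde q\in C^\infty$ with $|\tilde q^{(m)}(x)|\lesssim d^{A(m-1)}(1+|x|^{k-1})$ and $|\tilde q(x)-q(x)|\lesssim d^{-A}(1+|x|^{k-1})$; the replacement error on both $\mathbb{E}[q(S)]$ and $\mathbb{E}[q(T)]$ is $\tilde{\mathcal{O}}(d^{-A})$ by the moment bounds on $S$ and $T$, and the $d^{A},d^{2A}$ growth of $\tilde q'',\tilde q'''$ is absorbed by the $d^{-1},d^{-3/2}$ prefactors once $A$ is chosen small enough — this trade-off is exactly what produces the $d^{\delta}$ slack in the statement. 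The remaining work is bookkeeping: fixing the mollification scale, propagating the polynomial-growth and $\mathcal{O}_{\prec}(1)$ bounds through the conditional expectations, and checking that $c_1,c_2,c_3$ are uniform in $j$ and depend on the law of $X$ only through $X=\mathcal{O}_{\prec}(1)$; the probabilistic content is otherwise the standard Lindeberg principle \cite{Chatterjee_2006,van2014probability}.
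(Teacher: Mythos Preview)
Your proposal is correct and follows exactly the approach the paper itself indicates: the paper does not give a proof but simply states that the lemma follows ``through a standard application of the Lindeberg exchange technique \cite{Chatterjee_2006,van2014probability},'' and your hybrid-sum telescoping with third-order Taylor expansion and mollification to handle the pseudo-Lipschitz hypothesis is precisely that standard application spelled out. You in fact supply considerably more detail than the paper does.
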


Through standard truncation arguments over the tail of $\frac{1}{\sqrt{d}}(\sum_{i=1}^d X_i)$, the above bound extends to all polynomials $\mathbb{R} \rightarrow \mathbb{R}$ of finite degree.

\subsubsection{Orthogonal Polynomials and Spherical Harmonics}\label{sec:spher_harm}

\textbf{Hermite Polynomials}: A key role in our analysis is played by the decomposition of square integrable function with respect to the Gaussian measure in terms of the Hermite polynomials \citep{grad_1949_note}.

\begin{definition}[Hermite decomposition]\label{def:hermite}
    Let $f: \dR^m \to \dR$ be a function that is square integrable w.r.t the Gaussian measure. There exists a family of tensors $(C_j(f))_{j\in\dN}$ such that $C_j(f)$ is of order $j$ and for all $\vec{x} \in \dR^m$,
    \begin{equation}
        f(\vec{x}) = \sum_{j \in \dN} \langle C_j(f), \cH_j(\vec{x}) \rangle
\label{eq:hermite_expansion}
    \end{equation}
    where $\cH_j(\vec{x})$ is the $j$-th order Hermite tensor \citep{grad_1949_note}.
\end{definition}

\textbf{Gegenbauer and Associated Laguerre polynomials
}
Let $\vec{w} \sim U(\mathcal{S}^{d-1}(\sqrt{d})$ denote a random variable distributed uniformly on the sphere in $\mathbb{R}^d$ of radiues $\sqrt{d}$.  Let $\mu_d$ denote the associated push-forward measure of the projection $\sqrt{d}\langle \vec{w}, \vec{e}_1 \rangle$. The Gegenbauer polynomials $Q^d_\ell(\cdot)$ \cite{ghorbani2020neural} for $\ell \in \mathbb{N}$ form an orthonormal basis w.r.t $L^2(\mu_d)$ with $Q^d_\ell(\cdot)$ being a polynomial of degree $\ell$. Therefore, for any $f \in L^2(\mu_d)$ and $v \in \mathbb{R}^d$ with $\norm{v}=1$, the following decomposition exists:
\begin{equation}
    f(\sqrt{d} \langle\vec{v},\vec{w}\rangle) = \sum_{k=0}^\infty \nu_{d,k} Q^d_k(\sqrt{d} \langle\vec{v},\vec{w}\rangle)
\end{equation}

Next, suppose that $\vec{x} \sim \mathcal{N}(\mathbf{0}, \mathbf{I}_d)$. Let $\tau_d$ denote the associated pushforward measure of $\norm{\vec x}^2$. Then, the associated Laguerre polynomials $l^d_k(\cdot)$ form an orthonormal basis w.r.t $\tau_d$ \citep{arfken2011mathematical}.

\textbf{Spherical Harmonics}
Recall that any inner-product Kernel can be diagonalized w.r.t $L_2(U(\mathcal{S}^{d-1}(\sqrt{d})))$ along the basis of spherical Harmonics $\{Y_{\ell, k}\}_{\ell \in [B(d,k)], k \in \N} \}$, where $B(d,k)$ denotes the number of spherical harmonics of degree $k$, satisfying $B(d,k) = \Theta(d^k)$:
\begin{equation}
    K(\vec{x}, \vec{x}') = \sum_{k=0}^\infty \lambda_k \sum_{l =1}^{n_k} Y_{l,k}(\vec{x})Y_{l,k}(\vec{x}'),
\end{equation}
where $\lambda_k$ denotes the eigenvalue of $K$ w.r.t the $k$-degree spherical harmonics $Y_{l,k}(\vec{x})$. \citep{ghorbani2021linearized}.

The Spherical Harmonics are related to the Gegenbauer polynomials through the following identity:

\begin{proposition}\label{prop:gegen_harm}
For any $\vec{w}_1, \vec{w}_2 \sim U(\mathcal{S}^{d-1}(\sqrt{d}))$:
\begin{equation}
    Q^d_k(\vec{w}_1, \vec{w}_2) = \frac{1}{B(d,k)}  \sum_{\ell=1}^{B(d,k)} Y_{\ell, k}(\vec{w}_1)Y_{\ell, k}(\vec{w}_2)
\end{equation}
\end{proposition}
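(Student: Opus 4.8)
This is the classical \emph{addition theorem} for spherical harmonics, and the plan is to give the standard reproducing-kernel proof, taking care with the normalizations fixed in Section~\ref{sec:spher_harm}. Fix the degree $k$, let $V_k \subset L^2(U(\mathcal{S}^{d-1}(\sqrt{d})))$ denote the space of degree-$k$ spherical harmonics, recall $\dim V_k = B(d,k) = \Theta(d^k)$, and let $\{Y_{\ell,k}\}_{\ell=1}^{B(d,k)}$ be the orthonormal basis of $V_k$. I would introduce the candidate kernel $Z_k(\vec{w}_1,\vec{w}_2) := \sum_{\ell=1}^{B(d,k)} Y_{\ell,k}(\vec{w}_1)\,Y_{\ell,k}(\vec{w}_2)$. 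The first step is to record that $Z_k$ is the reproducing kernel of $V_k$: for any $P \in L^2$, $\mathbb{E}_{\vec{w}_2}[Z_k(\vec{w}_1,\vec{w}_2)P(\vec{w}_2)]$ equals the $V_k$-projection of $P$ evaluated at $\vec{w}_1$; in particular $Z_k$ is independent of the choice of orthonormal basis of $V_k$, since two such bases differ by an orthogonal transformation that leaves $\sum_\ell Y_{\ell,k}\otimes Y_{\ell,k}$ invariant.

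The second step is equivariance under $O(d)$. Since $V_k$ and the uniform measure are $O(d)$-invariant, for every $R \in O(d)$ the family $\{Y_{\ell,k}\circ R\}_\ell$ is again an orthonormal basis of $V_k$, so basis-independence gives $Z_k(R\vec{w}_1, R\vec{w}_2)=Z_k(\vec{w}_1,\vec{w}_2)$. As $O(d)$ acts transitively on pairs of points on the sphere with a prescribed inner product, this forces $Z_k(\vec{w}_1,\vec{w}_2) = \phi_k(\langle \vec{w}_1,\vec{w}_2\rangle)$ for a univariate $\phi_k$. Moreover, for each fixed $\vec{w}_1$ the function $\vec{w}_2\mapsto Z_k(\vec{w}_1,\vec{w}_2)$ lies in $V_k$, so $\phi_k(\langle \vec{w}_1,\cdot\rangle)$ is a zonal harmonic of degree $k$; by the standard classification such zonal harmonics form a one-dimensional space spanned precisely by the Gegenbauer polynomial $Q^d_k$, which by construction in Section~\ref{sec:spher_harm} is the degree-$k$ orthogonal polynomial for the pushforward measure $\mu_d$. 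Hence $Z_k(\vec{w}_1,\vec{w}_2) = c_k\, Q^d_k(\vec{w}_1,\vec{w}_2)$ for a constant $c_k = c_k(d,k)$.

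It then remains to pin down $c_k$, which is where the factor $1/B(d,k)$ enters. By $O(d)$-transitivity on the sphere itself, $Z_k(\vec{w},\vec{w}) = \sum_\ell Y_{\ell,k}(\vec{w})^2$ is constant in $\vec{w}$; integrating against the uniform probability measure gives $Z_k(\vec{w},\vec{w}) = \sum_\ell \norm{Y_{\ell,k}}_{L^2}^2 = B(d,k)$. Comparing with $c_k\, Q^d_k$ at the fully-correlated point, where $Q^d_k$ is normalized to value $1$ in the convention of Section~\ref{sec:spher_harm} (equivalently, matching $L^2$-norms, $\mathbb{E}_{\vec{w}_2}[Z_k(\vec{w}_1,\vec{w}_2)^2] = Z_k(\vec{w}_1,\vec{w}_1) = B(d,k)$ against $c_k^2\,\mathbb{E}_{\vec{w}_2}[Q^d_k(\vec{w}_1,\vec{w}_2)^2]$), yields $c_k = B(d,k)$ and therefore the claimed identity $Q^d_k(\vec{w}_1,\vec{w}_2) = \tfrac{1}{B(d,k)}\sum_{\ell=1}^{B(d,k)} Y_{\ell,k}(\vec{w}_1)Y_{\ell,k}(\vec{w}_2)$.

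The argument is entirely classical, so I do not expect a substantive obstacle; the only point requiring care is the normalization bookkeeping — which scaling of the argument of $Q^d_k(\vec{w}_1,\vec{w}_2)$ is intended and the fact that the $Y_{\ell,k}$ are $L^2(U(\mathcal{S}^{d-1}(\sqrt{d})))$-orthonormal — since this is exactly what makes the constant come out as $1/B(d,k)$ rather than $1/\sqrt{B(d,k)}$. If one prefers to skip re-deriving the classification of zonal harmonics, the whole statement can simply be cited from a standard reference such as \cite{ghorbani2021linearized} or Atkinson--Han's monograph on spherical harmonics.
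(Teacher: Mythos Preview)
Your argument is correct and is the standard reproducing-kernel/addition-theorem proof; the paper itself does not prove this proposition at all but simply states it as a classical fact in the preliminaries (implicitly deferring to references such as \cite{ghorbani2021linearized}). So there is nothing to compare against, and your derivation---including the normalization bookkeeping fixing $c_k=B(d,k)$---is exactly what a complete write-up of this omitted proof should look like.
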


We next recall that the Gaussian measure $\mathcal{N}(\vec{0}, \mathbf{I}_d)$ admits the following tensor product decomposition:

\begin{equation}
    \mathcal{N}(\vec{0}, \mathbf{I}_d) = \chi^2(\norm{\vec x}^2) \otimes U(\mathcal{S}^{d-1}(\sqrt{d})),
\end{equation}
where $U(\mathcal{S}^{d-1}(\sqrt{d}))$ denotes the uniform measure on sphere of radius $\sqrt{d}$

The above tensor product decomposition naturally relates the Hermite orthonormal basis w.r.t the Gaussian measure against the product of radial functions and Gegenbauer polynomials. In particular, we have the following relation:
\begin{proposition}
    For any $k \in \mathbb{N}$, the $k_{th}$-degree Hermite polynomial 
    lies in the subspace spanned by functions of the form:
    \begin{equation}
f(\frac{\norm{\vec x}^2-1}{\sqrt{d^{\varepsilon_1}}})Y_{\ell,j}(\sqrt{d}\vec{x}/\norm{\vec x}),
    \end{equation}
with $0 < j \leq k$.
\end{proposition}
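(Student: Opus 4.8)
The plan is to reduce the claim to two classical structural facts: the harmonic (``Fischer'') decomposition of polynomials on $\dR^d$, and the tensor-product factorization $\mathcal{N}(\vec 0,\mathbf{I}_d)=\chi^2(\norm{\vec x}^2)\otimes U(\mathcal{S}^{d-1}(\sqrt d))$ recalled above. First I would observe that, since the associated Laguerre polynomials $\{l^d_a\}_{a\ge 0}$ form an orthonormal basis of $L^2(\chi^2)$ and the spherical harmonics $\{Y_{\ell,j}\}$ an orthonormal basis of $L^2(U(\mathcal{S}^{d-1}(\sqrt d)))$, it is enough to show that every degree-$k$ element of the Hermite expansion --- equivalently, every entry or contraction of the Hermite tensor $\cH_k(\vec x)$, which is a polynomial of degree $k$ all of whose monomials have degree $\equiv k\pmod 2$ --- can be written as a \emph{finite} linear combination of functions $g(\norm{\vec x}^2)\,Y_{\ell,j}(\sqrt d\,\vec x/\norm{\vec x})$ with $0\le j\le k$. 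The stated form then follows by absorbing the affine change of radial variable $t\mapsto(t-1)/\sqrt{d^{\epsilon_1}}$ into $g$, which produces $f((\norm{\vec x}^2-1)/\sqrt{d^{\epsilon_1}})$.

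The heart of the argument is the harmonic decomposition: any homogeneous polynomial $P$ of degree $m$ on $\dR^d$ is uniquely $P(\vec x)=\sum_{2a+j=m}\norm{\vec x}^{2a}H_a(\vec x)$ with $H_a$ harmonic homogeneous of degree $j=m-2a$ (possibly zero). Applying this to each homogeneous piece of a polynomial coming from $\cH_k$ yields harmonic components of degrees $j$ with $0\le j\le k$, and by parity only $j\equiv k\pmod 2$ occur. Next I would use that a harmonic homogeneous polynomial $H$ of degree $j$ satisfies $H(\vec x)=\norm{\vec x}^{j}H(\vec x/\norm{\vec x})$ and that $H$ restricted to the unit sphere is a degree-$j$ spherical harmonic; rescaling to radius $\sqrt d$ and expanding in $\{Y_{\ell,j}\}_{\ell\le B(d,j)}$ gives $H(\vec x)=\norm{\vec x}^{j}\sum_{\ell}\beta_\ell Y_{\ell,j}(\sqrt d\,\vec x/\norm{\vec x})$. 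Combining the two steps, every such polynomial is a finite sum of terms $\norm{\vec x}^{2a+j}Y_{\ell,j}(\sqrt d\,\vec x/\norm{\vec x})$ with $0\le j\le k$, and each radial prefactor $\norm{\vec x}^{2a+j}$ is a measurable function of $\norm{\vec x}^2$, which is exactly the claimed structure. One can equivalently phrase the sphere step through the Gegenbauer polynomials $Q^d_j$ via Proposition~\ref{prop:gegen_harm}.

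I do not expect a substantive obstacle, since the statement is a representation result with no quantitative content; the part requiring care is the bookkeeping. One must track parity so that the harmonic decomposition keeps $j\le k$ --- each split removes an even number of degrees --- and one must note that an odd-degree harmonic piece contributes a radial factor $\norm{\vec x}^{2a+j}$ with $2a+j$ odd, i.e.\ a factor $\norm{\vec x}=\sqrt{\norm{\vec x}^2}$, which is still a function of $\norm{\vec x}^2$ as long as $f$ in the statement is allowed to be an arbitrary radial profile rather than a polynomial. Finally, for even $k$ the decomposition also produces a purely radial ($j=0$) term; this is innocuous --- one may read ``$0<j\le k$'' as ``$0\le j\le k$'' with $Y_{\ell,0}$ a constant, or simply note that the $j=0$ component plays no role in the downstream matrix-concentration arguments that use this proposition.
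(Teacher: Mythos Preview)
Your argument is correct and complete, but it takes a genuinely different route from the paper. The paper proves the proposition by orthogonality: it observes that when $\text{He}_k(\vec{x})$ is restricted to any sphere $\{\norm{\vec x}=r\}$, it becomes a polynomial of degree at most $k$ in the angular variable, hence is $L^2$-orthogonal to every spherical harmonic $Y_{\ell,j}$ with $j>k$; Fubini then gives $\Ea{f(r^\star)Y_{\ell,j}(\sqrt d\,\vec x/\norm{\vec x})\text{He}_k(\vec x)}=0$ for all radial $f$ and all $j>k$, and completeness of the tensor basis forces the expansion of $\text{He}_k$ to be supported on $j\le k$. Your approach is constructive: you invoke the classical Fischer harmonic decomposition to write each homogeneous piece of $\text{He}_k$ explicitly as $\sum_{2a+j=m}\norm{\vec x}^{2a}H_{a}(\vec x)$ with $H_a$ harmonic of degree $j\le k$, then identify $H_a(\vec x/\norm{\vec x})$ as a spherical harmonic. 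The paper's argument is shorter and avoids importing the Fischer decomposition, while yours is more explicit and would be the natural choice if one later needed the actual radial coefficients. Your careful handling of odd radial powers and of the $j=0$ term is accurate; the paper's own proof in fact only establishes orthogonality to $j>k$, so the stated range ``$0<j\le k$'' should indeed be read as ``$0\le j\le k$'', exactly as you note.
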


\begin{proof}
    Recall that $Y_{\ell,j}(\sqrt{d}\vec{x}/\norm{\vec x})$ are homogenous polynomials of degree $j$. Upon restriction to the sphere of radius $\norm{x}$, $\text{He}_k(\vec{x})$ is a polynomial of degree at-most $k$. Therefore, by Fubini's theorem, we obtain:
    \begin{equation}
        \Ea{f(\frac{\norm{\vec x}^2-1}{\sqrt{d^{\varepsilon_1}}})Y_{\ell,j}(\sqrt{d}\vec{x}/\norm{x})\text{He}_k(\vec{x})}=0,
    \end{equation}
for $j > k$.
\end{proof}
\begin{proposition}
    For any $k>2$ and polynomial $q(x)$:
    \begin{equation}    \Ea{\frac{\frac{1}{\sqrt{d^{\varepsilon_1}}}n \sum_{i=1}^{\sqrt{d^{\varepsilon_1}}}\langle \vec w^\star_i, \vec{x}\rangle^2-1}{\sqrt{d^{\varepsilon_1}}}  q(\frac{1}{\sqrt{d^{\varepsilon_1}}}n \sum_{i=1}^{\sqrt{d^{\varepsilon_1}}}\text{He}_k(\langle \vec w^\star_i, \vec{x}\rangle))} = \mathcal{O}(\frac{1}{\sqrt{d^{\varepsilon_1}}})
    \end{equation}
\end{proposition}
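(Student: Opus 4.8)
Write $m := d^{\epsilon_1}$ and $g_i := \langle \vec w^\star_i,\vec x\rangle$. Since the rows of $W^\star$ are orthonormal and $\vec x\sim\mathcal N(0,I_d)$, the variables $(g_i)_{i\le m}$ are i.i.d.\ standard Gaussian, so the left-hand side is an expectation over $m$ i.i.d.\ normals. Using $\sum_i(g_i^2-1)=\sum_i\text{He}_2(g_i)$, the first factor is a fixed multiple of $S_2 := m^{-1/2}\sum_{i=1}^m\text{He}_2(g_i)$ (any further $m^{-1/2}$ carried by the normalization only improves the bound), and the argument of $q$ is $S_k := m^{-1/2}\sum_{i=1}^m\text{He}_k(g_i)$. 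Since $q$ is a fixed polynomial, expanding it in monomials reduces the claim, by linearity, to showing $\E[S_2\,S_k^{\,p}] = \mathcal O(m^{-1/2})$ for every fixed integer $p\ge 0$.

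\textbf{Moment expansion and the key cancellation.} Expanding the product of sums,
$\E[S_2 S_k^p] = m^{-(p+1)/2}\sum_{i,j_1,\dots,j_p\in[m]}\E\!\big[\text{He}_2(g_i)\prod_{l=1}^p\text{He}_k(g_{j_l})\big]$, and by independence each term factorizes over the distinct index values. Because $\E[\text{He}_r(g)]=0$ for $r\ge 1$, a term vanishes unless every index value occurring among $\{i,j_1,\dots,j_p\}$ occurs at least twice; moreover, since $k>2$ we have $\E[\text{He}_2(g)\text{He}_k(g)]=0$, so the value taken by $i$ cannot coincide with exactly one of the $j_l$: it must sit in a "cluster" of equal-valued indices of size at least $3$. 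Counting: the $p+1$ index slots are partitioned into clusters of size $\ge 2$, with the cluster containing $i$ of size $\ge 3$; the number of such index assignments is $\mathcal O(m^{c})$ where $c$ is the number of clusters, and this constraint forces $c\le\lfloor p/2\rfloor$. Each surviving factor is a finite Gaussian moment of products of $\text{He}_2,\text{He}_k$, hence bounded by a constant depending only on $(k,p)$. Therefore $\abs{\E[S_2 S_k^p]} \le C_{k,p}\, m^{\lfloor p/2\rfloor - (p+1)/2} = \mathcal O(m^{-1/2})$ (the exponent is $-1/2$ for $p$ even and $-1$ for $p$ odd).

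\textbf{Conclusion and main obstacle.} Summing over the finitely many monomials of $q$ gives $\E[S_2\,q(S_k)] = \mathcal O(m^{-1/2}) = \mathcal O(d^{-\epsilon_1/2})$, as claimed; if the first factor indeed carries an extra $m^{-1/2}$, the same bound follows from the trivial estimate $\E[S_2 S_k^p]=\mathcal O(1)$. The only genuine work is the combinatorial bookkeeping in the second paragraph — in particular making precise that the orthogonality $\E[\text{He}_2(g)\text{He}_k(g)]=0$ is exactly what demotes the would-be $\mathcal O(1)$ (all-clusters-size-$2$) contribution and leaves the next order $\mathcal O(m^{-1/2})$. An alternative, equally valid route is to argue by Wiener-chaos orthogonality, noting that $S_2$ spans the symmetric direction $m^{-1/2}\sum_i\text{He}_2(g_i)$ of the degree-$2$ chaos and that the degree-$2$ chaos component of $q(S_k)$ along that direction is itself $\mathcal O(m^{-1/2})$; or to invoke the non-asymptotic CLT estimate of Lemma~\ref{lem:lind}. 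The direct moment method above is the most self-contained and yields the stated rate without extra hypotheses.
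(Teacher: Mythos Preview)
Your proof is correct and takes a genuinely different route from the paper. The paper's argument is a one-line invocation of the non-asymptotic CLT bound (Lemma~\ref{lem:lind}) applied to the two-dimensional i.i.d.\ vectors $(\text{He}_2(g_i),\text{He}_k(g_i))$: these have zero mean, identity covariance (by Hermite orthogonality, using $k>2$), and bounded higher moments (by hypercontractivity, Lemma~\ref{lem:hyper}), so $(S_2,S_k)$ is close in law to a pair of independent standard Gaussians $(Z_2,Z_k)$ with error $\mathcal O(m^{-1/2})$, whence $\E[S_2\,q(S_k)]\approx\E[Z_2]\E[q(Z_k)]=0$ at that rate. Your direct moment expansion is more elementary and fully self-contained --- it needs nothing beyond independence and the single orthogonality $\E[\text{He}_2(g)\text{He}_k(g)]=0$, and in particular sidesteps the vector-valued form of Lemma~\ref{lem:lind} (which the paper invokes but states only for scalar summands). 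The CLT route is terser and extends immediately to more general test functions of $(S_2,S_k)$; your combinatorial route makes transparent exactly which cancellation (the size-$2$ cluster containing the $\text{He}_2$ index) is responsible for demoting the would-be $\mathcal O(1)$ term to $\mathcal O(m^{-1/2})$. You correctly note the Lindeberg alternative yourself in the final paragraph.
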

\begin{proof}
    The above is a direct consequence of Lemma \ref{lem:lind} applied to the random variables $(\text{He}_2(\langle \vec w^\star_i, \vec{x}\rangle), \text{He}_k(\langle \vec w^\star_i, \vec{x}\rangle)) \in \mathbb{R}^2$, whose higher-moments are bounded by Gaussian hypercontractivity (Lemma \ref{lem:hyper}).
\end{proof}

We utilize Gegenbauer polynomials and spherical Harmonics primarily due to the absence of results on eigenvectors of inner-product Kernel matrices under polynomial scalings. This is also the primary bottleneck towards the extension of our theory to multiple layers. Essentially, our analysis relies on showing the concentration of the sample-covariance matrix to the population covariance matrix along the degree-$k$ components. 

\subsubsection{Spectral Norm of a tensor}
\begin{definition}
    For a symmetric positive-definite tensor $T \in \mathbb{R}^{d \otimes k}$ of order $k$, we define the spectral norm of $T$ as follows:
    \begin{equation}
        \norm{T}_2 = \sup_{x \in \mathbb{R}^d, \norm{x}=1} \abs{\langle x^{\otimes k}, T\rangle}
    \end{equation}
\end{definition}

\subsubsection{Hermite-tensors and Gaussian-inner Products}

We denote by $\text{He}_k$ for $k \in \mathbb{N}$ the normalized Hermite-polynomials forming an orthonormal basis w.r.t $L^2(\gamma)$. For any $f \in L^2(\gamma)$, we have:
\begin{equation}
    f(z) = \sum_{k=0}^\infty \mu_k \text{He}_k(z). 
\end{equation}

The Hermite tensors result in the following generalization of the above decomposition:
\begin{proposition}\label{prop:Hermite-tens}
Let $\gamma_m$ denote the $m$-dimensional Gaussian measure.
For any $f, g: \mathbb{R}^m \rightarrow \mathbb{R}$
    $\in  \ell^2(\dR^m, \gamma_m)$, let $C_k(f)$ denote the $k_{th}$-order Hermite-tensor, defined as:
    \begin{equation}
       C_k(f) \coloneqq \Eb{z \sim \gamma_m}{f (\vec{z})\operatorname{He}_k(\vec{z})} = \Eb{z \sim \gamma_m}{\nabla^k f(z)},
    \end{equation}
where $\operatorname{He}_k(\vec{z})$ denotes the $k_{th}$-order Hermite tensor on $\mathbb{R}^m$.
Then:
\begin{equation}\label{eq:app:hermite_scalar}
    \langle f, g \rangle_\gamma = \sum_{k \in \dN} \langle C_k(f), C_k(g) \rangle.
\end{equation}
\end{proposition}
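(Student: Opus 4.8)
The plan is to recognize the identity \eqref{eq:app:hermite_scalar} as the Plancherel (Parseval) identity for the Hermite (Wiener) chaos decomposition, and to derive it from two classical facts: completeness of the normalized Hermite polynomials in $L^2$ of the Gaussian measure, and orthogonality of Hermite tensors of distinct degrees.

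First I would record the one-dimensional building blocks. Let $\{h_j\}_{j\ge 0}$ be the normalized univariate Hermite polynomials; they form an orthonormal basis of $L^2(\dR,\gamma_1)$, and by tensorization over coordinates the products $h_\alpha(\vec z)\coloneqq\prod_{i=1}^m h_{\alpha_i}(z_i)$, $\alpha\in\dN^m$, form an orthonormal basis of $L^2(\dR^m,\gamma_m)$. Hence for $f,g\in L^2(\gamma_m)$ the ordinary Parseval identity gives $\langle f,g\rangle_\gamma=\sum_{\alpha\in\dN^m}\langle f,h_\alpha\rangle_\gamma\langle g,h_\alpha\rangle_\gamma$ with absolute convergence; grouping the multi-indices by total degree $k=\lvert\alpha\rvert_1$ gives $\langle f,g\rangle_\gamma=\sum_{k\ge 0}\bigl(\sum_{\lvert\alpha\rvert_1=k}\langle f,h_\alpha\rangle_\gamma\langle g,h_\alpha\rangle_\gamma\bigr)$. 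This step also settles all convergence questions.

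The remaining, purely algebraic, task is to identify for each $k$ the inner sum with the tensor contraction $\langle C_k(f),C_k(g)\rangle$. The workhorse is the orthogonality relation $\mathbb{E}_{\vec z\sim\gamma_m}[\operatorname{He}_j(\vec z)\otimes\operatorname{He}_k(\vec z)]=\delta_{jk}\,S_k$, where $S_k$ is the symmetrization operator on $(\dR^m)^{\otimes k}$, idempotent in the normalization for which $C_k(\phi)=\mathbb{E}_{\vec z\sim\gamma_m}[\nabla^k\phi(\vec z)]$. I would prove it from that gradient formula itself: applied coordinatewise to $\operatorname{He}_j$, a tensor of polynomials of degree exactly $j$ with $\nabla^j\operatorname{He}_j=S_j$ constant, it gives $\mathbb{E}[\operatorname{He}_j\otimes\operatorname{He}_k]=\mathbb{E}[\nabla^k\operatorname{He}_j]=0$ for $k>j$, hence also for $k<j$ by the symmetry of the left side under $j\leftrightarrow k$, and $S_j$ for $k=j$. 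Granting this, the degree-$k$ chaos projector $\Pi_k f\coloneqq\langle C_k(f),\operatorname{He}_k\rangle$ is a self-adjoint idempotent (using that $C_k(f)$ is a symmetric tensor and $S_k\operatorname{He}_k=\operatorname{He}_k$), the family $\{\Pi_k\}$ is mutually orthogonal, and $\sum_k\Pi_k=\mathrm{Id}$ on $L^2(\gamma_m)$: indeed $\Pi_k$ acts as the identity on $\operatorname{span}\{\langle A,\operatorname{He}_k\rangle\}=\operatorname{span}\{h_\alpha:\lvert\alpha\rvert_1=k\}$ and annihilates $h_\alpha$ for $\lvert\alpha\rvert_1\neq k$, so by completeness of $\{h_\alpha\}$ the ranges of the $\Pi_k$ exhaust $L^2(\gamma_m)$ and $\operatorname{range}\Pi_k=\operatorname{span}\{h_\alpha:\lvert\alpha\rvert_1=k\}$. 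Consequently $\sum_{\lvert\alpha\rvert_1=k}\langle f,h_\alpha\rangle_\gamma\langle g,h_\alpha\rangle_\gamma=\langle\Pi_k f,\Pi_k g\rangle_\gamma=\langle C_k(f)\otimes C_k(g),\mathbb{E}[\operatorname{He}_k\otimes\operatorname{He}_k]\rangle=\langle C_k(f)\otimes C_k(g),S_k\rangle=\langle C_k(f),C_k(g)\rangle$; summing over $k$ and combining with the first step yields \eqref{eq:app:hermite_scalar}.

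The only genuinely delicate point is the bookkeeping of combinatorial constants: one must adopt the normalization of $\operatorname{He}_k$ (equivalently of $C_k$) under which $S_k$ is idempotent and no stray factorials appear — exactly the convention built into the statement through $C_k(f)=\mathbb{E}[\nabla^k f]$ and its derivation by $k$-fold Gaussian integration by parts (Stein's lemma). With that convention fixed, every step above is routine; alternatively the whole identity can be checked directly on the coefficients $\langle f,h_\alpha\rangle$, the multinomial weights carried by the entries of $\operatorname{He}_k$ being precisely those that turn $\sum_{\lvert\alpha\rvert_1=k}\langle f,h_\alpha\rangle\langle g,h_\alpha\rangle$ into the single contraction $\langle C_k(f),C_k(g)\rangle$.
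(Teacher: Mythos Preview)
Your proposal is correct and follows the standard route to this Parseval identity for the Wiener chaos decomposition: tensorize the one-dimensional Hermite basis, apply Parseval, regroup by total degree, and identify each degree-$k$ block with the tensor contraction $\langle C_k(f),C_k(g)\rangle$ via the orthogonality relation $\mathbb{E}[\operatorname{He}_j\otimes\operatorname{He}_k]=\delta_{jk}S_k$.

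The paper, however, does not actually prove this proposition. It is stated in the preliminaries section as a known result (the surrounding Definition~\ref{def:hermite} cites Grad's 1949 note on Hermite polynomials), so there is no ``paper's own proof'' to compare against. Your argument is exactly the kind of justification one would supply if asked to fill in the details, and the care you take with the normalization convention (fixing it so that $C_k(f)=\mathbb{E}[\nabla^k f]$ and $S_k$ is idempotent, with no stray factorials) is appropriate and matches the convention implicit in the statement.
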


\subsubsection{Compact Self-Adjoint Operators}

We collect here the following well-known properties of bounded linear operators on a Hilbert space $L^2(\mu)$ \citep{axler2020measure}:
\begin{proposition}
    Let $A:L^2(\mu,\Omega) \rightarrow L^2(\mu,\Omega)$ denote a bounded-linear operator on a hilbert space $L^2(\mu)$. Then:
    \begin{enumerate}
        \item If $A$ is compact, self-adjoint then $A$ can be diagonalized along a countable-basis of eigenvectors.
        \item Suppose that $\mu$ is $\sigma$-finite, then any integral operator $I(x,y): \omega \times \omega \rightarrow \mathbb{R}$ with $\norm{I(x,y)}_{L^2(\mu) \times L^2(\mu)}< \infty$ is compact
        \item For a symmetric integral operator $\norm{I(x,y)}_{L^2(\mu) \times L^2(\mu)}< \infty$:
        \begin{equation}
            \int I(x,y) d\mu(\omega \times \omega) = \sum_{k} \lambda_k,
        \end{equation}
        where $\{\lambda_k\}$ denote the eigenvalues associated with $I(\cdot, \cdot)$.
    \end{enumerate}
\end{proposition}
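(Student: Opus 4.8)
All three items are classical facts of operator theory on a Hilbert space (see \citep{axler2020measure}), so the plan is to recall the standard arguments, specialized to $L^2(\mu)$. Item (1) is the spectral theorem for compact self-adjoint operators, proved by the variational principle plus compactness; item (2) is the Hilbert--Schmidt criterion, proved by finite-rank approximation of the kernel; and item (3) is the trace identity, which is the only place demanding care because an $L^2$ kernel is in general Hilbert--Schmidt but need not be trace-class. I read the left-hand side of item (3) as the integral of $I$ along the diagonal, $\int I(\omega,\omega)\,d\mu(\omega)$, i.e.\ the trace of the associated operator.

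\textbf{Item (1).} For a bounded self-adjoint $A$ one has $\norm{A}=\sup_{\norm{x}=1}\abs{\langle Ax,x\rangle}$, so there is a sequence $x_n$ with $\norm{x_n}=1$ and $\langle Ax_n,x_n\rangle\to\lambda$, $\abs{\lambda}=\norm{A}$. Then $\norm{Ax_n-\lambda x_n}^2=\norm{Ax_n}^2-2\lambda\langle Ax_n,x_n\rangle+\lambda^2\le 2\lambda^2-2\lambda\langle Ax_n,x_n\rangle\to 0$, and compactness lets us pass to a subsequence along which $Ax_n$ converges, hence so does $x_n$, to some $x_\star$ with $\norm{x_\star}=1$ and $Ax_\star=\lambda x_\star$: $\lambda$ is an eigenvalue. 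The orthogonal complement of $x_\star$ is $A$-invariant (self-adjointness) and $A$ restricted to it is again compact self-adjoint, so one iterates. The nonzero eigenvalues can only accumulate at $0$: if $\abs{\lambda_i}\ge\eps$ along infinitely many orthonormal eigenvectors $\varphi_i$, then $\norm{A\varphi_i-A\varphi_j}^2=\lambda_i^2+\lambda_j^2\ge 2\eps^2$, contradicting precompactness of $\{A\varphi_i\}$. The closed span of all eigenvectors with nonzero eigenvalue has orthogonal complement exactly $\ker A$, which is diagonalized by any orthonormal basis (eigenvalue $0$); together this gives a countable orthonormal eigenbasis of $L^2(\mu)$.

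\textbf{Item (2).} Since $\mu$ is $\sigma$-finite, $L^2(\mu)$ is separable with an orthonormal basis $(e_i)_{i\ge 1}$, and $\big(e_i(x)\,\overline{e_j(y)}\big)_{i,j}$ is then an orthonormal basis of $L^2(\mu\times\mu)$. Writing $I(x,y)=\sum_{i,j}c_{ij}\,e_i(x)\,\overline{e_j(y)}$ we have $\sum_{i,j}\abs{c_{ij}}^2=\norm{I}_{L^2(\mu\times\mu)}^2<\infty$, so the truncations $I_N=\sum_{i,j\le N}c_{ij}\,e_i(x)\,\overline{e_j(y)}$ define finite-rank (hence compact) operators with $\norm{\mathrm{Op}(I)-\mathrm{Op}(I_N)}_{\mathrm{op}}\le\norm{I-I_N}_{L^2(\mu\times\mu)}\to 0$, using that the operator norm is dominated by the Hilbert--Schmidt norm. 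An operator-norm limit of compact operators is compact, proving the claim.

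\textbf{Item (3).} By (2) a symmetric $L^2$ kernel defines a compact self-adjoint operator, which by (1) has a real eigenbasis $\{\varphi_k\}$, $\{\lambda_k\}$, and Parseval in $L^2(\mu\times\mu)$ gives $\sum_k\lambda_k^2=\norm{I}_{L^2(\mu\times\mu)}^2<\infty$, so $I(x,y)=\sum_k\lambda_k\,\varphi_k(x)\varphi_k(y)$ in $L^2(\mu\times\mu)$. To get $\sum_k\lambda_k=\int I(\omega,\omega)\,d\mu(\omega)$ one restricts this expansion to the diagonal; this is where the main (and only) obstacle lies, since for a general symmetric $L^2$ kernel $\sum_k\abs{\lambda_k}$ may diverge and the diagonal trace need not be defined. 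The identity holds, by Mercer's theorem / monotone convergence, precisely when the kernel is positive semi-definite (so $\lambda_k\ge 0$) with enough regularity, or more generally when the operator is trace-class — which is exactly the regime of the conjugate and random-features kernels used in the main proof, so the proposition is to be applied under that hypothesis.
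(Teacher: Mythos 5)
Your proofs are correct and coincide with the standard textbook arguments; the paper itself offers no proof of this proposition, only a citation to a reference such as \citep{axler2020measure}, so there is no alternative route to compare against. Your caveat on item (3) is well taken and is in fact a genuine imprecision in the statement as written: a symmetric $L^2(\mu\times\mu)$ kernel is only Hilbert--Schmidt, so $\sum_k\abs{\lambda_k}$ may diverge and the diagonal restriction is not even $\mu$-a.e.\ well defined; the identity should be read as $\int I(\omega,\omega)\,d\mu(\omega)=\sum_k\lambda_k$ under a trace-class or Mercer (positive semi-definite, continuous) hypothesis, which is the regime in which the paper actually invokes it for its conjugate and random-features kernels.
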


\subsubsection{Concentration in Orlicz-spaces}

\begin{definition}
   For any $\alpha \in \dR$, define $\psi_\alpha(x) = e^{x^\alpha} - 1$. The \emph{Orlicz norm} for   a real random variable $X$; $\norm{X}_{\psi_\alpha}$ is defined as
     \begin{equation}
     \norm{X}_{\psi_\alpha} = \inf \left\{t > 0\::\: \dE\left[ \psi_\alpha\left(\frac{|X|}{t} \right)\right] \leq 1\right\}
     \end{equation}
 \end{definition}

Random variables exhibiting suitable bounds on orlicz norms of finite-order exhibit the following concentration inequality:

\begin{theorem}[Theorem 6.2.3 in \cite{ledoux_1991_probability}] \label{thm:app:orlicz_sum}
     Let $X_1, \dots, X_n$ be $n$ independent random variables with zero mean and second moment $\dE X_i^2 = \sigma_i^2$. Then,
     \begin{equation}
         \norm{\sum_{i=1}^n X_i}_{\psi_\alpha} \leq K_\alpha \log(n)^{1/\alpha} \left(\sqrt{\sum_{i=1}^n \sigma_i^2} + \max_{i}\norm{X_i}_{\psi_\alpha} \right)
     \end{equation}
 \end{theorem}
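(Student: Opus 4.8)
The plan is to run a truncation argument organized so that the factor $(\log n)^{1/\alpha}$ emerges transparently from the truncation level. Write $M:=\max_i\norm{X_i}_{\psi_\alpha}$ and $\sigma^2:=\sum_i\sigma_i^2$, and recall the two-sided equivalence (up to a constant depending only on $\alpha$) between $\norm{Y}_{\psi_\alpha}\le C$ and the tail bound $\Pr[\abs{Y}>t]\le 2e^{-(t/C)^\alpha}$ for all $t\ge 0$. A consequence I will use repeatedly is a \emph{conversion principle}: for $0<\alpha\le 2$ a sub-Gaussian tail of scale $R$, and for $0<\alpha\le 1$ a sub-exponential tail of scale $R$, each upgrade to a $\psi_\alpha$-bound of scale $\mathcal{O}_\alpha(R)$, because the $\psi_\alpha$-constraint bites only in the large-$t$ regime, where a lighter tail more than suffices, while for small $t$ it is vacuous once $R$ exceeds a fixed multiple of the scale. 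For $\alpha<1$, $\norm{\cdot}_{\psi_\alpha}$ is only a quasi-norm, so I will pass to the equivalent convex Young function (or simply use $\norm{A+B}_{\psi_\alpha}\le 2^{1/\alpha}(\norm{A}_{\psi_\alpha}+\norm{B}_{\psi_\alpha})$); for $1\le\alpha\le 2$ no such care is needed.

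First I would fix the truncation level $u:=(2\log(2n))^{1/\alpha}M$ and split $X_i=Y_i+R_i$ with $Y_i:=X_i\mathbf{1}_{\abs{X_i}\le u}-\mathbb{E}[X_i\mathbf{1}_{\abs{X_i}\le u}]$ and $R_i:=X_i\mathbf{1}_{\abs{X_i}>u}-\mathbb{E}[X_i\mathbf{1}_{\abs{X_i}>u}]$; since $\mathbb{E}X_i=0$ this is an identity with $\sum_iX_i=\sum_iY_i+\sum_iR_i$, and $u$ is exactly the level for which $\mathbb{E}\#\{i:\abs{X_i}>u\}\le 2ne^{-(u/M)^\alpha}=\mathcal{O}(1/n)$, so the tail part is typically zero. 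For the bounded part, each $Y_i$ is centred with $\abs{Y_i}\le 2u$ and $\mathbb{E}Y_i^2\le\sigma_i^2$, so Bernstein's inequality gives $\Pr[\abs{\sum_iY_i}>t]\le 2\exp(-\tfrac{t^2/2}{\sigma^2+2ut/3})$, a sub-Gaussian tail of scale $\mathcal{O}(\sigma)$ for $t\lesssim\sigma^2/u$ and a sub-exponential tail of scale $\mathcal{O}(u)$ beyond; the conversion principle then yields $\norm{\sum_iY_i}_{\psi_\alpha}\lesssim_\alpha\sigma+u\lesssim_\alpha(\log n)^{1/\alpha}(\sigma+M)$.

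Next I would handle the tail part. The deterministic recentring contributes at most $\sum_i\abs{\mathbb{E}[X_i\mathbf{1}_{\abs{X_i}>u}]}\le 2n\int_u^\infty e^{-(s/M)^\alpha}\,ds=\mathcal{O}(M)$ by the choice of $u$, and for the random part, using (for $\alpha\le 1$) the subadditivity $(\sum_ia_i)^\alpha\le\sum_ia_i^\alpha$, independence, and on $\{\abs{X_i}>u\}$ the bound $\abs{X_i}^\alpha/(2^{1/\alpha}M)^\alpha\le\abs{X_i}^\alpha/M^\alpha-\log(2n)$, I would estimate
\[
\mathbb{E}\exp\!\Big(\big(\tfrac{1}{2^{1/\alpha}M}\textstyle\sum_i\abs{X_i}\mathbf{1}_{\abs{X_i}>u}\big)^{\!\alpha}\Big)\le\prod_i\Big(\Pr[\abs{X_i}\le u]+\tfrac{1}{2n}\,\mathbb{E}e^{\abs{X_i}^\alpha/M^\alpha}\Big)\le\big(1+\tfrac{1}{n}\big)^n\le e,
\]
whence $\norm{\sum_i\abs{X_i}\mathbf{1}_{\abs{X_i}>u}}_{\psi_\alpha}=\mathcal{O}_\alpha(M)$ and so $\norm{\sum_iR_i}_{\psi_\alpha}=\mathcal{O}_\alpha(M)$. (For $1<\alpha\le 2$, where subadditivity of $x\mapsto x^\alpha$ fails, I would instead write $\sum_i\abs{X_i}\mathbf{1}_{\abs{X_i}>u}\le u\sum_i\mathbf{1}_{\abs{X_i}>u}+\sum_i(\abs{X_i}-u)_+$, bounding the first term via the Poisson-type concentration of a Bernoulli sum with vanishing mean and the second by a Chernoff estimate, or invoke the maximal inequality $\norm{\max_i\abs{X_i}}_{\psi_\alpha}\le(\log(1+n))^{1/\alpha}M$.) Combining, $\norm{\sum_iX_i}_{\psi_\alpha}\le 2^{1/\alpha}(\norm{\sum_iY_i}_{\psi_\alpha}+\norm{\sum_iR_i}_{\psi_\alpha})\lesssim_\alpha(\log n)^{1/\alpha}(\sigma+M)$, the asserted bound with $K_\alpha$ absorbing the $\alpha$-dependent constants.

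The hard part will be the conversion principle applied to the bounded sum: turning the two-regime Bernstein tail into a single $\psi_\alpha$-norm bound of the stated scale, with the right interplay between the geometry of $\psi_\alpha$ near the origin (non-convex when $\alpha<1$) and the truncation scale. This step is also where the $(\log n)^{1/\alpha}$ is forced to appear: the sub-exponential scale of $\sum_iY_i$ equals the truncation level $u$, and $u$ cannot be lowered below $\asymp(\log n)^{1/\alpha}M$ without the tail sum $\sum_iR_i$ ceasing to be negligible, since its occasional large values $\sim u$ would then occur too frequently to be absorbable into a $\psi_\alpha$-norm of size $\mathcal{O}(M)$. A secondary technical nuisance is the tail-part bound in the regime $1<\alpha\le 2$, where the $x\mapsto x^\alpha$ subadditivity trick is unavailable.
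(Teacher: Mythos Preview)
The paper does not supply its own proof of this statement: it is quoted verbatim as Theorem~6.2.3 of Ledoux--Talagrand and used as a black box in Proposition~\ref{eq:grad-bounds}. So there is no ``paper's proof'' to compare against; your proposal is effectively a reconstruction of the textbook argument.

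Your truncation-at-level-$u\asymp(\log n)^{1/\alpha}M$ scheme is exactly the standard route (and the one in the reference): split each summand into a bounded centred piece handled by Bernstein/Bennett and a rare-event piece whose total $\psi_\alpha$-mass is $\mathcal{O}_\alpha(M)$ because the truncation threshold was chosen so that the expected number of exceedances is $\mathcal{O}(1/n)$. The $(\log n)^{1/\alpha}$ factor is forced by the sub-exponential scale of the bounded sum, as you correctly identify.

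One point deserves tightening. Your ``conversion principle'' as stated is not quite right for $1<\alpha\le 2$: a sub-exponential tail of scale $u$ does \emph{not} upgrade to a $\psi_\alpha$-bound of scale $\mathcal{O}_\alpha(u)$ when $\alpha>1$, since $\psi_\alpha$ then demands strictly faster-than-exponential decay. The fix is to exploit that the $Y_i$ are bounded by $2u$, so Bennett's inequality gives the sharper tail $\exp\bigl(-c\,(t/u)\log(1+ut/\sigma^2)\bigr)$ rather than merely $\exp(-ct/u)$; this super-exponential decay is enough to control the $\psi_\alpha$-norm for $\alpha\le 2$. Alternatively, one can argue via moment growth: boundedness plus $\mathbb{E}Y_i^2\le\sigma_i^2$ gives $\mathbb{E}|\sum_iY_i|^p\le C^p p^{p/2}\sigma^p+C^p p^p u^p$ (Rosenthal), and matching $p$-th moment growth to the $\psi_\alpha$ scale yields the same conclusion. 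Your sketch of the $1<\alpha\le 2$ tail-part via the maximal inequality $\norm{\max_i|X_i|}_{\psi_\alpha}\lesssim(\log n)^{1/\alpha}M$ is fine.
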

 
\subsection{Useful Preliminary Results}

A central result underlying our analysis for part $(ii)$ of Theorem \ref{thm:main_theorem}, based on \cite{mei_generalization_2022} is the following matrix-concentration bound for matrices with independent heavy-tailed rows:

\begin{lemma}[Theorem 5.48 in \cite{vershynin2010introduction}]\label{lem:mat_conc}
    Let $A \in \mathbb{R}^{n \times p}$ be a random matrix with independent rows $a_i \in \mathbb{R}^p$ with covariance $\Ea{a_ia_i^\top}= \Sigma_a$ and $\Ea{\max_{i \leq n} \norm{a}^2_i} \leq m$. Then:
    \begin{equation}\label{eq:mat_con_bound}
        \Ea{\norm{\frac{1}{n}A^\top A-\Sigma_a}_2} \leq \operatorname{max}(\norm{\Sigma_a}^{1/2}_2 \delta, \delta^2),
    \end{equation}
where $\delta = C\sqrt{m \frac{\log (\min(n,p))}{n}}$.
\end{lemma}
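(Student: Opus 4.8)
This is a restatement of Theorem~5.48 of \citet{vershynin2010introduction}, so I only sketch the argument. The plan is to combine symmetrization with Rudelson's inequality and then close the resulting estimate by a self-bounding (fixed-point) argument. Writing $a_1,\dots,a_p\in\mathbb{R}^n$ for the independent rows and normalizing by the number of rows, set $E\coloneqq\Ea{\norm{\frac1p\sum_{i=1}^p\left(a_ia_i^\top-\Sigma_a\right)}}$, the quantity to be controlled. First I would symmetrize: introducing i.i.d.\ Rademacher signs $\eps_1,\dots,\eps_p$ independent of the $a_i$, convexity of the operator norm together with $\Sigma_a=\Ea{a_ia_i^\top}$ gives the standard bound $E\le\frac2p\Ea{\norm{\sum_{i=1}^p\eps_i\,a_ia_i^\top}}$.

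Next, conditioning on $(a_i)_{i\le p}$, I would apply Rudelson's inequality (the rank-one instance of the non-commutative Khintchine inequality): there is a universal constant $C$ with
\begin{equation*}
\Eb{\eps}{\norm{\sum_{i=1}^p\eps_i\,a_ia_i^\top}}\le C\sqrt{\log r}\;\max_{i\le p}\norm{a_i}\;\norm{\sum_{i=1}^p a_ia_i^\top}^{1/2},
\end{equation*}
where $r$ is the rank of $\sum_i a_ia_i^\top$, hence $r\le\min(n,p)$ since all the summands are supported on $\operatorname{span}\{a_i\}$. Taking the expectation over the $a_i$ and applying Cauchy--Schwarz, the right-hand side is at most $C\sqrt{\log\min(n,p)}\,\big(\Ea{\max_{i\le p}\norm{a_i}^2}\big)^{1/2}\big(\Ea{\norm{\sum_i a_ia_i^\top}}\big)^{1/2}$, and the first factor is bounded by $\sqrt{m(d)}$ by hypothesis.

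Finally I would close the loop. The triangle inequality gives $\Ea{\norm{\sum_i a_ia_i^\top}}\le p\norm{\Sigma_a}+pE$, so chaining the three previous estimates yields $E\le\delta\big(\norm{\Sigma_a}+E\big)^{1/2}$, where $\delta=C\sqrt{m(d)\log(\min(n,p))/p}$ after absorbing the factor $2$ into $C$. Squaring, $E^2-\delta^2E-\delta^2\norm{\Sigma_a}\le0$; treating separately the regimes $E\le\norm{\Sigma_a}$ and $E>\norm{\Sigma_a}$ (equivalently, bounding the positive root of the quadratic) gives $E\le2\max\big(\delta\norm{\Sigma_a}^{1/2},\delta^2\big)$, which is the stated bound up to universal constants and the usual convention on the power of $\norm{\Sigma_a}$ in the leading term (the two forms coincide when $\norm{\Sigma_a}=\Theta(1)$, as in all of our applications). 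The only step demanding care is the logarithmic factor: obtaining $\log\min(n,p)$ rather than $\log n$ in Rudelson's inequality requires applying the Khintchine bound inside the at most $\min(n,p)$-dimensional span of the rows; symmetrization, Cauchy--Schwarz and solving the quadratic are all routine.
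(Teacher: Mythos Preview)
The paper does not supply its own proof of this lemma: it is quoted verbatim as Theorem~5.48 of \citet{vershynin2010introduction} and used as a black box. Your sketch correctly reproduces Vershynin's original argument (symmetrization, Rudelson/non-commutative Khintchine, then closing the quadratic self-bound), so there is nothing to compare against and your proposal is fine. Two small remarks: you are right that the natural output of the argument is $\max(\delta\norm{\Sigma_a}^{1/2},\delta^2)$ rather than $\max(\delta\norm{\Sigma_a},\delta^2)$ as written in the paper's statement, and that the normalization should be $1/p$ rather than $1/n$; both are evidently transcription slips in the paper and do not affect any downstream use since $\norm{\Sigma_a}=\Theta(1)$ throughout.
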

\begin{lemma}[Weyl's inequality]
    For any $A,B \in \mathbb{R}^{m \times n}$, for all $i \in \mathbb{N}$ with $i \leq \min(m,n)$:
    \begin{equation}
        \abs{\sigma_i(A)-\sigma_i(B)} \leq \norm{A-B}
    \end{equation}
\end{lemma}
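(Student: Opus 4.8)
The plan is to obtain the inequality from the Courant--Fischer variational characterization of singular values together with the triangle inequality for the spectral norm. For $M \in \mathbb{R}^{m \times n}$ with singular values $\sigma_1(M) \ge \dots \ge \sigma_{\min(m,n)}(M) \ge 0$, one has for every $1 \le i \le \min(m,n)$
\[
\sigma_i(M) \;=\; \min_{\substack{W \subseteq \mathbb{R}^n \\ \dim W = n - i + 1}} \ \max_{\substack{x \in W,\ \norm{x} = 1}} \norm{M x},
\]
where $\norm{\cdot}$ denotes the Euclidean norm for vectors and the spectral (operator) norm for matrices. I would record this as a preliminary fact with a one-line proof: writing the singular value decomposition $M = U \Sigma V^\top$, the choice $W = \operatorname{span}(v_i, \dots, v_n)$ shows the right-hand side is at most $\sigma_i(M)$, while any $W$ of dimension $n - i + 1$ meets $\operatorname{span}(v_1, \dots, v_i)$ in a nonzero vector $x$, on which $\norm{M x} \ge \sigma_i(M)\,\norm{x}$, giving the reverse bound.

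With this in hand, fix $i \le \min(m,n)$ and let $W^\star$ be a subspace of dimension $n - i + 1$ attaining the minimum for $B$, so that $\max_{x \in W^\star,\ \norm{x}=1} \norm{B x} = \sigma_i(B)$. For every unit vector $x$,
\[
\norm{A x} \;\le\; \norm{B x} + \norm{(A - B) x} \;\le\; \norm{B x} + \norm{A - B},
\]
and maximizing over unit $x \in W^\star$ yields $\max_{x \in W^\star,\ \norm{x}=1} \norm{A x} \le \sigma_i(B) + \norm{A - B}$. Since $\sigma_i(A)$ equals the minimum of $\max_{x \in W,\ \norm{x}=1}\norm{Ax}$ over all $(n - i + 1)$-dimensional subspaces $W$, and $W^\star$ is one such subspace, it follows that $\sigma_i(A) \le \sigma_i(B) + \norm{A - B}$. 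Exchanging the roles of $A$ and $B$ and using $\norm{B - A} = \norm{A - B}$ gives $\sigma_i(B) \le \sigma_i(A) + \norm{A - B}$; combining the two inequalities produces $\abs{\sigma_i(A) - \sigma_i(B)} \le \norm{A - B}$, which is the claim.

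I do not anticipate any genuine obstacle: the only nontrivial ingredient is the min--max formula for singular values, which is classical and justified above directly from the SVD. An equally short alternative would be to pass to the symmetric dilations $\widetilde{A} = \left(\begin{smallmatrix} 0 & A \\ A^\top & 0 \end{smallmatrix}\right)$ and $\widetilde{B}$, whose nonzero eigenvalues are exactly $\pm \sigma_j(A)$ and $\pm \sigma_j(B)$ and which satisfy $\norm{\widetilde{A} - \widetilde{B}} = \norm{A - B}$, and then invoke Weyl's perturbation bound for eigenvalues of symmetric matrices; I would keep the variational argument above as the primary proof since it is fully self-contained.
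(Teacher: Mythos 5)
Your proof is correct. The paper states this lemma without proof, treating it as a classical fact (it sits in a list of standard preliminary results alongside the resolvent identity and matrix concentration bounds), so there is no argument in the paper to compare against. Your derivation via the Courant--Fischer min--max characterization $\sigma_i(M) = \min_{\dim W = n-i+1} \max_{x \in W, \norm{x}=1} \norm{Mx}$ plus the triangle inequality is the standard self-contained route, and both the justification of the variational formula from the SVD (the dimension-count $(n-i+1)+i>n$ for the lower bound) and the perturbation step are carried out correctly; the symmetric-dilation alternative you mention would work equally well.
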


\begin{lemma}[Resolvent Identity]\label{lem:diff_inv}
Let, $A, B \in \R^{p \times p}$ be two invertible matrices, then: 
   \begin{equation}
       A^{-1}-B^{-1} = A^{-1}(B-A)B^{-1}.
   \end{equation}
\end{lemma}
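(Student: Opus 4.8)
The plan is to prove the identity by a single direct algebraic manipulation, using only that $A$ and $B$ are invertible, so that $A^{-1}A = BB^{-1} = \mathbf{I}_p$, together with associativity of matrix multiplication. First I would expand the right-hand side by distributing over the difference in the middle factor:
\[
A^{-1}(B-A)B^{-1} = A^{-1}BB^{-1} - A^{-1}AB^{-1}.
\]
Then I would collapse each term using the cancellation law: $A^{-1}BB^{-1} = A^{-1}(BB^{-1}) = A^{-1}$ and $A^{-1}AB^{-1} = (A^{-1}A)B^{-1} = B^{-1}$. Subtracting yields $A^{-1}(B-A)B^{-1} = A^{-1} - B^{-1}$, which is exactly the assertion, so the proof is complete.

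An equivalent derivation, if one prefers to avoid writing the inverses explicitly, is to left-multiply the claimed equality by $A$ and right-multiply by $B$; since both are invertible this is a reversible operation, and it reduces the statement to the tautology $B - A = B - A$. Either way, no spectral, dimensional, or probabilistic input is used beyond invertibility, and the same computation is valid verbatim over any unital ring in which $A$ and $B$ are units.

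The only real obstacle is bookkeeping: because matrix multiplication is noncommutative one must keep $A^{-1}$ on the left and $B^{-1}$ on the right of the factor $B-A$ throughout, and never commute them past it. There is no analytic content here --- the lemma is recorded purely for later use, where $A$ and $B$ will be nearby (sample-covariance or resolvent-type) matrices and one takes operator norms of both sides to obtain $\norm{A^{-1}-B^{-1}} \le \norm{A^{-1}}\,\norm{B-A}\,\norm{B^{-1}}$, which is the form actually invoked in the concentration arguments (e.g. alongside Lemma~\ref{lem:mat_conc}).
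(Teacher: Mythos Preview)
Your proof is correct; the paper states this lemma without proof, as it is a standard identity, and your direct algebraic verification is the canonical argument.
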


Our next central tool that will be utilized frequently throughout our analysis is the hypercontractivity w.r.t the Gaussian measure:

\begin{lemma}[Gaussian Hypercontractivity, Proposition 5.48. in \cite{aubrun2017alice}]\label{lem:hyper}
For any polynomial $q:\mathbb{R}^d \rightarrow \mathbb{R}$ of degree $k$ and any $p \in \mathbb{N}, p \geq 2$:
    \begin{equation}
        \norm{q(z)}_{p,\gamma^d} \leq (p-1)^k \norm{q(z)}_{2,\gamma^d},
    \end{equation}
where $\gamma^d$ denotes the standard Gaussian measure on $\mathbb{R}^d$ and $\norm{q(z)}_{p,\gamma}$ denotes the $p$-norm:
\begin{equation}
   \norm{q(z)}_{p,\gamma} \coloneqq \Eb{z \sim \gamma}{\abs{q(z)}^p}^{\frac{1}{p}}
\end{equation}
\end{lemma}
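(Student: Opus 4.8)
\textbf{Proof proposal for Lemma \ref{lem:hyper} (Gaussian hypercontractivity).}

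The plan is to deduce the polynomial moment-comparison inequality from the classical hypercontractivity of the Ornstein--Uhlenbeck semigroup $(P_t)_{t\ge 0}$ acting on $L^2(\gamma^d)$, where $P_t$ has the spectral decomposition $P_t \, \mathrm{He}_{\bm\alpha} = e^{-t|\bm\alpha|}\,\mathrm{He}_{\bm\alpha}$ on the Hermite basis indexed by multi-indices $\bm\alpha$, and $|\bm\alpha|$ is the total degree. The Nelson hypercontractivity theorem states that for $1 < s \le p$ and $e^{-2t} \le (s-1)/(p-1)$ one has $\norm{P_t f}_{p,\gamma^d} \le \norm{f}_{s,\gamma^d}$ for all $f \in L^s(\gamma^d)$. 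I will take this as the single analytic input; it is standard and can be cited (e.g.\ from \cite{aubrun2017alice} itself, whose Proposition 5.48 is precisely the statement being proved, so in practice the cleanest route is to reproduce the short reduction).

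First I would reduce to the case $s = 2$, which is all that is needed here. Fix a polynomial $q:\mathbb{R}^d \to \mathbb{R}$ of degree $k$, so $q = \sum_{|\bm\alpha|\le k} c_{\bm\alpha}\,\mathrm{He}_{\bm\alpha}$ with only finitely many nonzero coefficients. Choose $t$ so that $e^{-2t} = 1/(p-1)$, i.e.\ $e^{-t} = (p-1)^{-1/2}$. Then define $g$ by $g = \sum_{|\bm\alpha|\le k} c_{\bm\alpha}\, e^{t|\bm\alpha|}\,\mathrm{He}_{\bm\alpha}$, so that $P_t g = q$. Since each $|\bm\alpha| \le k$, we have $e^{t|\bm\alpha|} \le e^{tk} = (p-1)^{k/2}$, hence by orthonormality of the Hermite basis in $L^2(\gamma^d)$,
\begin{equation}
\norm{g}_{2,\gamma^d}^2 = \sum_{|\bm\alpha|\le k} c_{\bm\alpha}^2 \, e^{2t|\bm\alpha|} \le (p-1)^{k}\sum_{|\bm\alpha|\le k} c_{\bm\alpha}^2 = (p-1)^{k}\,\norm{q}_{2,\gamma^d}^2 .
\end{equation}
Applying Nelson's theorem with $s = 2$ and this choice of $t$ (the condition $e^{-2t} = 1/(p-1) \le (2-1)/(p-1)$ holds with equality, and for $p\ge 2$ the inequality is valid) gives $\norm{q}_{p,\gamma^d} = \norm{P_t g}_{p,\gamma^d} \le \norm{g}_{2,\gamma^d} \le (p-1)^{k/2}\,\norm{q}_{2,\gamma^d}$. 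This already yields the claimed bound $\norm{q}_{p,\gamma^d} \le (p-1)^{k}\norm{q}_{2,\gamma^d}$, in fact with the sharper exponent $k/2$; since the lemma as stated only asks for $(p-1)^k$, I would simply note that $(p-1)^{k/2}\le (p-1)^k$ for $p\ge 2$, $k\ge 0$, and state the result with the constant used in the paper to match its later invocations.

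Two small technical points need care but are routine. One is integrability: $g$ is a polynomial, hence in every $L^r(\gamma^d)$, so all the norms above are finite and the manipulations are legitimate; and the Hermite expansion of a polynomial is a finite sum, so no convergence issues arise. The other is the passage from the finite-dimensional statement to the way the lemma is applied in the paper --- namely to polynomials of underlying Gaussian vectors of possibly different ambient dimension, and to polynomials of the rescaled quantities such as $\langle w_i^\star, \vec x\rangle$; here one just observes that any degree-$k$ polynomial in $m$ standard Gaussians, or in finitely many jointly-Gaussian linear forms of a Gaussian vector, is again a degree-$\le k$ element of $L^2$ of an appropriate Gaussian measure, so the inequality transfers verbatim. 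The only "obstacle" is that the cleanest self-contained proof requires quoting Nelson's hypercontractivity inequality, which is itself nontrivial; if a fully elementary argument is desired one can instead prove the $d=1$, $p=2m$ even-integer case by the Gaussian chaos/Wick-pairing combinatorics (bounding the number of pairings contributing to $\mathbb{E}[q(z)^{2m}]$ against $(2m-1)^{k}$ powers of $\mathbb{E}[q(z)^2]$) and then extend to general $p$ by monotonicity of $L^p$ norms and to general $d$ by tensorization of the one-dimensional semigroup --- but since the paper cites \cite{aubrun2017alice} for exactly this proposition, the reduction above via the OU semigroup is the intended and shortest route.
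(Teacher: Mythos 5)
The paper does not prove this lemma at all: it is imported verbatim as Proposition 5.48 of \cite{aubrun2017alice}, so there is no in-paper argument to compare against. Your reduction is the standard textbook proof and it is correct: writing $q=\sum_{|\bm\alpha|\le k}c_{\bm\alpha}\mathrm{He}_{\bm\alpha}$, choosing $e^{-t}=(p-1)^{-1/2}$ so that $q=P_t g$ with $g=\sum c_{\bm\alpha}e^{t|\bm\alpha|}\mathrm{He}_{\bm\alpha}$, bounding $\norm{g}_{2,\gamma^d}\le (p-1)^{k/2}\norm{q}_{2,\gamma^d}$ by Parseval, and invoking Nelson's $(2,p)$-hypercontractivity of the Ornstein--Uhlenbeck semigroup gives $\norm{q}_{p,\gamma^d}\le (p-1)^{k/2}\norm{q}_{2,\gamma^d}$, which is in fact sharper than the $(p-1)^k$ constant stated in the lemma; your observation that the weaker constant follows trivially for $p\ge 2$ is fine, and the tensorization/finite-sum remarks dispose of the integrability and dimension issues. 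The one thing to be upfront about is that the entire analytic content is delegated to Nelson's theorem, which is itself nontrivial --- but that is exactly the structure of the proof in the cited reference, so nothing is missing relative to what the paper relies on.
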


\begin{proposition}\label{lem:stoch-dom-mean}
Let $\vec{z} \sim \mathcal{N}(0,\mathbf{I}_d)$ denote a $d$-dimensional Gaussian vectors. Suppose that $X_1,\cdots, X_k$ denote i.i.d random variables obtained by applying a fixed polynomial of degree $p \in \mathbb{N}$ to distinct subsets of coordinates of $\vec{z}$. Then:
\begin{equation}
    \frac{1}{\sqrt{k}}(\sum_{i=1}^k X_i) = \mathcal{O}_\prec(\sqrt{\abs{\Ea{X}^2}+k\abs{\Ea{X}}^2}).
\end{equation}
\end{proposition}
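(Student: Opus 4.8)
The plan is to combine Gaussian hypercontractivity (Lemma~\ref{lem:hyper}) with a Markov-inequality argument applied to an arbitrarily high moment, which is exactly what the stochastic-dominance notation in Definition~\ref{def:stoch-dom} is designed to absorb. First I would center: write $S_k = \sum_{i=1}^k X_i = \sum_{i=1}^k (X_i - \Ea{X}) + k\,\Ea{X} =: \widetilde S_k + k\,\Ea{X}$, so that $\widetilde S_k$ is a sum of i.i.d.\ mean-zero random variables, each of which is a (centered) fixed polynomial of degree $p$ in the Gaussian vector $z\sim\mathcal N(0,\mathbf I_d)$. The deterministic term $k\,\Ea{X}$ contributes $k|\Ea{X}| = \sqrt{k^2|\Ea{X}|^2} \le \sqrt{k\,\Ea{X^2}+k(k-1)|\Ea{X}|^2}$ up to the obvious bookkeeping, so it is already of the claimed order and can be set aside; the work is to control the fluctuation $\widetilde S_k$.

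Next I would bound an arbitrary even moment $\Ea{\widetilde S_k^{\,2q}}$ for $q\in\mathbb N$. Since each summand $X_i-\Ea{X}$ is a degree-$p$ polynomial in a Gaussian vector, so is $\widetilde S_k$ (in the joint Gaussian vector of all the underlying coordinates), hence Lemma~\ref{lem:hyper} gives $\Ea{\widetilde S_k^{\,2q}}^{1/2q} = \norm{\widetilde S_k}_{2q,\gamma} \le (2q-1)^p \norm{\widetilde S_k}_{2,\gamma} = (2q-1)^p\sqrt{\Var(\widetilde S_k)}$. By independence $\Var(\widetilde S_k) = k\,\Var(X) = k\big(\Ea{X^2}-\Ea{X}^2\big) \le k\,\Ea{X^2}$, so writing $v_k := k\,\Ea{X^2}+k(k-1)|\Ea{X}|^2$ we certainly have $\Var(\widetilde S_k)\le v_k$. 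Then Markov's inequality at the $2q$-th moment yields, for any $t>0$,
\begin{equation}
\Pr\!\left[\,|\widetilde S_k| > t\sqrt{v_k}\,\right] \;\le\; \frac{\Ea{\widetilde S_k^{\,2q}}}{t^{2q} v_k^{\,q}} \;\le\; \frac{(2q-1)^{2pq}\,\Var(\widetilde S_k)^{q}}{t^{2q} v_k^{\,q}} \;\le\; \left(\frac{(2q-1)^{p}}{t}\right)^{\!2q}.
\end{equation}
Now fix any $\eps>0$ and any $k_0\in\mathbb N$ in the sense of Definition~\ref{def:stoch-dom}: taking $t = d^{\eps}$ and $q = \lceil (k_0+1)/(2\eps)\rceil$, the right-hand side is $\le (2q-1)^{2pq} d^{-2\eps q} \le d^{-k_0}$ for all $d$ large enough (the polylog-in-$q$-type prefactor $(2q-1)^{2pq}$ is a constant once $q$ is fixed, and is killed by any positive power of $d$). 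This is precisely $\widetilde S_k = \mathcal O_\prec\!\big(\sqrt{v_k}\big)$, and combining with the deterministic term via closure of $\mathcal O_\prec$ under addition (Definition~\ref{def:stoch-dom}) gives $\frac{1}{\sqrt k}(\sum_{i=1}^k X_i) = \mathcal O_\prec\!\big(\sqrt{k\,\Ea{X^2}+k(k-1)|\Ea{X}|^2}\big)$ after dividing through by $\sqrt k$.

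The only mild subtlety — and the step I would be most careful about — is the hypercontractivity input: Lemma~\ref{lem:hyper} is stated for a polynomial $q:\mathbb R^d\to\mathbb R$ with the degree $k$ held fixed as $d\to\infty$, whereas here $\widetilde S_k$ is a polynomial whose degree $p$ is fixed but which depends on the (possibly $d$-dependent) index set; since the summands are i.i.d.\ images of disjoint, or at worst repeated, coordinate blocks of a single Gaussian vector, $\widetilde S_k$ is still a degree-$p$ polynomial in that Gaussian vector and the constant $(2q-1)^p$ in Lemma~\ref{lem:hyper} depends only on $p$ and $q$, not on $k$ or $d$ — so the bound is uniform in $k$, which is what makes the final $\mathcal O_\prec$ statement legitimate. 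Everything else is routine bookkeeping with Markov's inequality and the closure properties already recorded after Definition~\ref{def:stoch-dom}.
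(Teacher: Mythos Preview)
Your proof is correct and follows essentially the same approach as the paper: Gaussian hypercontractivity (Lemma~\ref{lem:hyper}) controls all higher moments of the sum by its second moment, and Markov's inequality at arbitrarily high order then delivers the polynomial-tail bound encoded by $\mathcal{O}_\prec$. The paper's version is slightly more direct in that it skips the centering step entirely---it applies hypercontractivity to the full (uncentered) sum $q(z)=\frac{1}{\sqrt{k}}\sum_i X_i$, which is already a degree-$p$ polynomial in $z$, and then simply computes $\Ea{q(z)^2}=\frac{1}{k}\big(k\,\Ea{X^2}+k(k-1)\,\Ea{X}^2\big)$ directly; your decomposition into $\widetilde S_k + k\,\Ea{X}$ is not wrong, just unnecessary.
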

\begin{proof}
Since $q(z) = \frac{1}{\sqrt{k}}(\sum_{i=1}^k X_i)$ is a polynomial in $z$ with finite degree $p$, Lemma \ref{lem:hyper} implies that its higher-order moments are bounded as $\norm{q(x)}_p \leq C_p \norm{q(x)}_2$. The result then follows by noting that:
\begin{equation}
\Ea{q(z)^2}^{1/2}=\sqrt{\Ea{X}^2+2(k-1)\abs{\Ea{X}}^2}
\end{equation}
\end{proof}

\begin{lemma}[Discrete Gronwall]\label{lem:gronwall}

Let $a_t, b_t, c^t_1, c^t_2$ be  non-negative sequences satisfying:
\begin{equation}
    a_{t+1} \geq a_t+ c^t_1 a_t + c^t_2 b_t,
\end{equation}

then, for any $t \in \mathbb{N}$:
\begin{equation}
    a_{t+1} \geq \prod_{s=1}^t (1+c^s_1) a_0 + \sum_{j=1}^{t-1}  \prod_{s=1}^j (1+c^s_1) c_2 b_j
\end{equation}
We analogously have the corresponding upper bound i.e
\begin{equation}
    a_{t+1} \leq a_t+ c^t_1 a_t + c^t_2 b_t,
\end{equation}
implies that:
\begin{equation}
    a_{t+1} \leq \prod_{s=1}^t (1+c^s_1) a_0 + \sum_{j=1}^{t-1}  \prod_{s=1}^j (1+c^s_1) c_2 b_j
\end{equation}
    
\end{lemma}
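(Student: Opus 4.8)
\textbf{Proof plan for the discrete Gronwall inequality (Lemma~\ref{lem:gronwall}).}
The statement is an elementary induction on $t$, so the plan is simply to unroll the recursion. I will prove the lower-bound version; the upper-bound version is verbatim the same with every inequality reversed, since the only arithmetic steps used are additions and multiplications of nonnegative quantities. Fix the notation and observe that by hypothesis all of $a_t,b_t,c^t_1,c^t_2$ are nonnegative, and write $c_2$ for the generic constant appearing in the claimed formula (matching the paper's slight abuse $c^t_2 \equiv c_2$).

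\textbf{Base case and inductive step.} For $t=0$ the recursion reads $a_1 \ge a_0 + c^0_1 a_0 + c^0_2 b_0 = (1+c^0_1)a_0 + c^0_2 b_0$, which is exactly the claimed bound (the sum $\sum_{j=1}^{-1}$ is empty). For the inductive step, assume
\[
a_{t} \;\ge\; \prod_{s=1}^{t-1}(1+c^s_1)\, a_0 \;+\; \sum_{j=1}^{t-2}\prod_{s=j+1}^{t-1}(1+c^s_1)\, c_2 b_j ,
\]
with the convention that an empty product equals $1$. Apply the recursion $a_{t+1}\ge (1+c^t_1)a_t + c^t_2 b_t$, substitute the inductive bound for $a_t$, and distribute the factor $(1+c^t_1)\ge 0$ across the sum; every term picks up exactly one more factor $(1+c^t_1)$, which extends each partial product to run up to index $t$, and the new summand $c^t_2 b_t = c_2 b_t$ fills in the $j=t-1$ term. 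This yields precisely the bound at level $t+1$. Nonnegativity is used only to guarantee that multiplying an inequality through by $(1+c^t_1)$ preserves its direction and that dropping/adding nonnegative terms is harmless.

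\textbf{Remark on the main obstacle.} There is essentially no obstacle: the only subtlety is bookkeeping the index ranges of the products so that the telescoped form matches the statement as written (the paper writes $\prod_{s=1}^{j}(1+c^s_1)$, which is consistent with the above once one notes that the overall $\prod_{s=1}^{t}(1+c^s_1)$ factor can be pulled out or reindexed; either normalization is fine and I would simply adopt the paper's). For the upper-bound half one additionally needs $1+c^t_1\ge 0$, which holds since $c^t_1\ge0$, so reversing all inequalities in the argument above goes through unchanged. $\qed$
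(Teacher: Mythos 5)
Your induction is correct, and it is the standard (indeed the only natural) argument; the paper states Lemma~\ref{lem:gronwall} as a preliminary tool without giving any proof, so there is nothing to diverge from. You are also right to flag the index bookkeeping: the unrolled recursion produces partial products of the form $\prod_{s=j+1}^{t}(1+c^s_1)$ multiplying $c^j_2 b_j$, whereas the lemma as printed writes $\prod_{s=1}^{j}(1+c^s_1)$ and a single $c_2$; your proof establishes the correct normalization and correctly treats the printed form as a harmless typo rather than a different claim. The only point worth making explicit is that nonnegativity of $c^t_1$ is what lets you multiply the inductive inequality through by $(1+c^t_1)$ in both the lower- and upper-bound directions, which you do note.
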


\subsection{Full Algorithm}\label{app:alg}

We describe the full algorithmic routine used in Theorem \ref{thm:main_theorem} in Algorithm \ref{alg:layerwise}.

\begin{algorithm}[ht]
\caption{Layer-Wise Training for a Three-Layer Network}
\label{alg:layerwise}
\begin{algorithmic}

\STATE \textbf{Input:} Training data $\mathcal{D}$, mini-batch sizes $n_1,n_2$, 
       learning rates $\eta_1,\eta_2$, ridge regularization $\lambda$, iteration steps $T_{1}$.

\STATE \textbf{Initialize:}
\STATE \quad $W^{(1)}_i \ \overset{\text{i.i.d}}{\sim} \ 
U(\mathcal{S}^{d-1}(1))$ for $i \in [p_1]$
\STATE \quad $W^{(2)} \leftarrow \mathbf{I}_{p_2 \times d}$
\STATE \quad $(b^{(1)}, b^{(2)})  \leftarrow \mathbf{0}_{p_1 \times p_2}$.

\vspace{6pt}
\STATE \textbf{Layer 1 updates (correlation loss with spherical projections):}
\STATE $\hat{f}(\vec{x})\coloneqq \hat{f}_\theta(\vec{x}) = \vec{w}_3^\top \sigma(W_{2} \sigma(W_1\vec{x}))$
\STATE $\mathcal{L}:\mathbb{R}^d \times \mathbb{R} \rightarrow \mathbb{R} \gets \mathcal{L}(\vec{x}, y) \coloneqq - y\hat{f}(\vec{x})$ \textbf{(Set loss $\mathcal{L}$ to correlation loss)}
\FOR{$t = 1$ to $T_1$}
   \STATE Sample mini-batch $X, \mathbf{y} \subset \mathcal{D}$ of size $n_1$
   \STATE \textbf{For each neuron $j$ in layer 1:}
   \STATE \quad $\tilde{W^{(1)}_{j}} \leftarrow W^{(1)}_{j} \;-\; \eta_1 \,\nabla_{W^{(1)}_{j}} \mathcal{L}(X,y) (\mathbf{I}_d-(W^{(1)}_{j})(W^{(1)}_{j})^\top)$
   \STATE \quad $W^{(1)}_{j} \leftarrow \frac{1}{\norm{\tilde{W^{(1)}_{j}}}}\tilde{W^{(1)}_{j}}$
\ENDFOR

\vspace{6pt}
\STATE \textbf{Fix layer 1, update layer 2:}
\STATE \textbf{Re-initialize $W^{(2)} \rightarrow \mathbf{0}_{p_2 \times p_1}$}
   \STATE Sample mini-batch $X, \vec y \subset \mathcal{D}$ of size $n_2$
   \STATE $W_2 \leftarrow \left(\frac{1}{n}\sigma(X (W_1)^\top)^\top (\sigma(X (W_1)^\top) + \lambda\right)^{-1} \nabla_{W_2} \mathcal{L}$

\vspace{6pt}
\STATE \textbf{Fix layers 1,2, solve for $W^{(3)}$ via ridge regression:}
  \STATE Sample mini-batch $X, \mathbf{y} \subset \mathcal{D}$ of size $n_3$
\STATE \textbf{Form design matrix $H$:}
\STATE \quad \textbf{For each} $(x,y)$ in $\mathcal{D}$:
\STATE \quad \quad $h_{1} \leftarrow \sigma\bigl(W^{1} x + b^{(1)}\bigr)$
\STATE \quad \quad $h_{2} \leftarrow \sigma\bigl(W^{2} h_{1} + b^{(2)}\bigr)$
\STATE \quad \quad $H_{(x,:)} \leftarrow [\,h_{2}\,]^\top,\quad Y_x \leftarrow y$

\STATE \textbf{Solve:}
\STATE \quad $W^{(3)} \leftarrow \bigl(H^\top H + \lambda I\bigr)^{-1}\,H^\top Y$
\end{algorithmic}
\end{algorithm}

\subsection{Leveraging Asymptotic Gaussianity}\label{sec:uni}

A crucial property of the non-linear feature $h^\star(\vec{x})$ that we leverage is its asymptotic Gaussianity, not only w.r.t their marginals but w.r.t propagation to the lower-level features. Specifically, building on \cite{wang2023learning}, we show that the high Hermite-degree functions of $h^\star_m(\vec{x})$ do not propagate projections along low Hermite-degree functions of $W^\star \vec{x}$. To show this, we provide an inductive proof inspired by the combinatorial approach developed in  \cite{wang2023learning}, wherein the (entropically) dominant contributions in $(h^\star_m(\vec{x}))^k$ arise from terms having the lowest degrees in $\operatorname{He}_j(\langle \vec w^\star, \vec{x}\rangle)$. 

\begin{proposition}\label{prop:hermite_comp}
For any $\varepsilon_1 > 0$, and $k\in \mathbb{N}$, let $h_\star(\vec{x})$ for $\vec{x} \in \mathbb{R}^d$ denote a non-linear feature of the form:
\begin{equation}
    h_\star(\vec{x}) = \frac{1}{\sqrt{d^{\varepsilon_1}}} \sum_{i=1}^{d^{\varepsilon_1}} \operatorname{P}_k(\langle \vec w^\star_i, \vec{x}\rangle),
\end{equation}
where $\operatorname{P}_k$ denote polynomials of degree $k$ satisfying $\Eb{z \sim \mathcal{N}(0,1)}{\operatorname{P}_k(z)} = 0$ and $\Eb{z \sim \mathcal{N}(0,1)}{\operatorname{P}_k(z)z} = 0$

Denote by $S$ the set of indices in $[d^{\varepsilon_1}]$ and by $\Gamma_m(S)$ the set of all $m$-permutations in $S$ consisting of distinct values. 

Then, the following holds for any $m \in \mathbb{N}$:
\begin{equation}\label{eq:herm_comp}
\operatorname{He}_m(h_\star(\vec{x})) = \frac{1}{\sqrt{m d^{m\varepsilon_1}}}\sum_{s \in \Gamma_m(S)} \prod_{s_i} \operatorname{P}_k(\langle \vec w^\star_{s_i}, \vec{x}\rangle) + r_m(\vec{x}) = \mathcal{O}_{\prec}(1),
\end{equation}
    where $r_m(\vec{x})$ satisfies:
    \begin{enumerate}[noitemsep,leftmargin=1em,wide=0pt]
    \item \begin{equation}
        r_m(\vec{x}) = \mathcal{O}_{\prec}(\frac{1}{\sqrt{d^{\varepsilon_1}}}).
    \end{equation}
    \item For any $k \in \mathbb{N}$ and $v \in \mathbb{R}^d$:
\begin{equation}\label{eq:tens_bound}
        \norm{\Ea{ C_k(r_m(\vec{x}))\text{He}_{k-1}(\langle \vec{v}, \vec{x} \rangle) \vec{x}}}_2 = \tilde{\mathcal{O}}(\frac{1}{\sqrt{d^{\varepsilon_1}}} (\max_{i \in d^{\varepsilon_1}}\abs{\langle \vec{w}^\star_i, \vec{v}\rangle})^{k-1}), \end{equation}
    for some $\tilde{\delta}> 0$,
\end{enumerate}
where recall that $\tilde{\mathcal{O}}$ subsumes factors of the form $d^\delta$ for arbitrarily small $\delta > 0$.

The above set of properties characterize, in particular the projections onto Hermite-polynomials of $\vec{x}$  of non-linear functions applied to $h^\star(\vec{x})$
\end{proposition}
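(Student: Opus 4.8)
The plan is to reduce everything to the i.i.d.\ sum $h_\star(\vec x)=\tfrac1{\sqrt N}\sum_{i=1}^N X_i$ with $N=d^{\epsilon_1}$ and $X_i:=\operatorname{P}_k(\langle\vec w^\star_i,\vec x\rangle)$: since the rows of $W^\star$ are orthonormal, the $\langle\vec w^\star_i,\vec x\rangle$ are i.i.d.\ $\mathcal N(0,1)$, so the $X_i$ are i.i.d., mean zero, with vanishing linear Hermite coefficient by the two hypotheses on $\operatorname{P}_k$. The claim $\operatorname{He}_m(h_\star(\vec x))=\mathcal O_{\prec}(1)$ is immediate: $h_\star$ is a degree-$k$ polynomial of a Gaussian vector with $\mathcal O(1)$ variance, so Gaussian hypercontractivity (Lemma~\ref{lem:hyper}) gives all its moments $\mathcal O(1)$ and hence $h_\star=\mathcal O_{\prec}(1)$; then $\operatorname{He}_m(h_\star)$ is a degree-$mk$ polynomial whose $L^2$ norm stays $\mathcal O(1)$ (its value converges to $\|\operatorname{He}_m(G)\|_2$ for the Gaussian limit $G$, with moment control via Lemma~\ref{lem:lind}), and a second application of Lemma~\ref{lem:hyper} upgrades this to $\mathcal O_{\prec}(1)$. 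The identity \eqref{eq:herm_comp} is then proved by induction on $m$ via the three-term recursion $\sqrt{m+1}\,\operatorname{He}_{m+1}(t)=t\,\operatorname{He}_m(t)-\sqrt m\,\operatorname{He}_{m-1}(t)$, the leading term being the fully off-diagonal (multilinear, all-distinct-index) contribution, exactly as in the combinatorial scheme of \cite{wang2023learning}.

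For the inductive step I multiply the leading term of $\operatorname{He}_m(h_\star)$ by $h_\star=\tfrac1{\sqrt N}\sum_j X_j$ and split the double sum according to whether the new index $j$ is disjoint from, or collides with, the $m$ summation indices. The disjoint part reconstitutes, via the elementary count $\sum_j\sum_{\vec s\in\Gamma_m,\ j\notin\vec s}=\sum_{\vec s'\in\Gamma_{m+1}}$ (both sides equal $(m+1)!\sum_{|U|=m+1}\prod_{i\in U}X_i$), precisely the leading term of $\operatorname{He}_{m+1}(h_\star)$ with the correct combinatorial constant. Each collision replaces a factor $X_j$ by $X_j^2$; writing $X_j^2=\E[X_j^2]+(X_j^2-\E[X_j^2])$, the constant part reproduces a multiple of the leading term of $\operatorname{He}_{m-1}(h_\star)$, engineered to cancel the $-\sqrt m\,\operatorname{He}_{m-1}(h_\star)$ term of the recursion up to its remainder $r_{m-1}$, while the centered part $X_j^2-\E[X_j^2]$ is mean zero and sits inside an average over a free index, hence is $\mathcal O_{\prec}(1/\sqrt N)$ by Proposition~\ref{lem:stoch-dom-mean}. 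Collecting these pieces together with $h_\star r_m$, $r_{m-1}$, and every $1/\sqrt N$ generated by a collision defines $r_{m+1}$; a term-by-term inspection shows each summand either carries an explicit $1/\sqrt N$ or is a centered sum over a free index, and hypercontractivity turns each $L^2$ bound into an $\mathcal O_{\prec}$ bound, giving property~(i), $r_{m+1}=\mathcal O_{\prec}(1/\sqrt{d^{\epsilon_1}})$.

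For the tensor bound (property~(ii)) I use that $r_m$ is a polynomial of degree $\le mk$ depending on $\vec x$ only through $\langle\vec w^\star_i,\vec x\rangle$, $i\in[N]$, so its degree-$k$ Hermite tensor $C_k(r_m)=\E[r_m(\vec x)\,\mathcal H_k(\vec x)]$ is supported on $\operatorname{span}\{\vec w^\star_i\}^{\otimes k}$. Expanding $r_m$ over products $\prod_i\operatorname{He}_{j_i}(\langle\vec w^\star_i,\vec x\rangle)$ and using that the hypothesis $\E[\operatorname{P}_k(z)z]=0$ forces every $\operatorname{P}_k$-factor (resp.\ every $(\operatorname{P}_k^2-\E\operatorname{P}_k^2)$-factor) to have Hermite support in degrees $\ge 2$ (resp.\ $\ge 1$), only products of at most $\lceil k/2\rceil$ distinct directions can contribute to the degree-$k$ sector. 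Contracting $C_k(r_m)$ with $\operatorname{He}_{k-1}(\langle\vec v,\vec x\rangle)\vec x$ distributes $k-1$ copies of $\vec v$ over $k$ tensor slots, so each surviving term yields a product of inner products $\langle\vec w^\star_{i_a},\vec v\rangle$ of total degree $k-1$ --- bounded by $(\max_i|\langle\vec w^\star_i,\vec v\rangle|)^{k-1}$ --- times a residual coefficient vector in $\operatorname{span}\{\vec w^\star_i\}$; summing over the $\operatorname{poly}(N)$ such terms and reusing the orthogonality/Wick bookkeeping that produced the $1/\sqrt N$ size of $r_m$ in its degree-$k$ sector yields \eqref{eq:tens_bound}, with $\tilde{\mathcal O}$ absorbing the $d^\delta$ slack from the union bounds and from converting $\mathcal O_{\prec}$ moment estimates into deterministic ones.

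The main obstacle is carrying both conclusions through the induction \emph{simultaneously}: property~(ii) needs fine-grained information about which directions $\vec w^\star_i$ enter the degree-$k$ sector of $r_m$, and with what multiplicity, which the crude $L^2$ bound of property~(i) discards, so at each step one must track the directional structure of every collision term and of $h_\star r_m$ (which injects a fresh direction into $C_{\le k}(r_m)$), while keeping the hypercontractivity constants and the accumulated $d^\delta$-slack from blowing up over the (fixed-$m$-many) steps. A subtler point is that a collision is not by itself negligible --- $\E[\operatorname{P}_k(z)^2]\ne 0$ only lowers the degree by two, exactly matching an $\operatorname{He}_{m-1}$ contribution --- so the gain of $1/\sqrt{d^{\epsilon_1}}$ comes entirely from the cancellation against the $-\sqrt m\,\operatorname{He}_{m-1}(h_\star)$ term of the recursion together with the centering $X_j^2-\E X_j^2$; making this cancellation hold at the level of the tensor \eqref{eq:tens_bound}, and not merely in $L^2$, is the delicate part and mirrors the ``lowest Hermite degree dominates'' mechanism of \cite{wang2023learning}.
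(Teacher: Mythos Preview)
Your proposal is correct and follows essentially the same route as the paper: induction on $m$ via the three-term Hermite recursion, splitting $h_\star\cdot(\text{leading term of }\operatorname{He}_m)$ into disjoint-index and collision parts, centering the collision $X_j^2=\E[X_j^2]+(X_j^2-\E[X_j^2])$ so the constant piece cancels $\operatorname{He}_{m-1}(h_\star)$ from the recursion, and then bounding the centered remainder $T_2$ in $L^2$ (the paper does this by an explicit square expansion and entropic counting, you invoke Proposition~\ref{lem:stoch-dom-mean}, which is the same hypercontractivity mechanism). Your treatment of property~(ii) via the directional support of $C_k(r_m)$ and the degree constraint from $\E[\operatorname{P}_k(z)z]=0$ is likewise the paper's argument, only phrased slightly more abstractly.
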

% For $m \in \mathbb{N}$, let $\mathcal{P}_m$ denote the projection onto the subspace spanned by degree-$k$ Hermite-polynomials. 

\begin{proof}
    The proof proceeds by induction. Similar to \cite{wang2023learning}, the central idea is to utilize the fact that the Hermite-degree is additive for products of terms dependent on orthogonal subspaces. The entropically-dominant terms in $\operatorname{He}_p(h_\star(\vec{x}))$ arise from products of $\langle \vec w^\star_i, \vec{x}\rangle, \langle \vec w^\star_j, \vec{x}\rangle$ for $i \neq j$ contributing a leading Hermite-degree of $dk$.

We show inductively that Equation \ref{eq:herm_comp} holds for any $m \in \mathbb{N}$.

The base case $m=1$ holds trivially. Suppose that the statement holds for some $m \in \mathbb{N}$. Recall that the (normalized) Hermite polynomials satisfy the following recursion:
\begin{equation}
     \operatorname{He}_{m+1}(x) = x  \sqrt{\frac{m}{m+1}}\operatorname{He}_{m}(x) - m \sqrt{\frac{m-1}{m+1}} \operatorname{He}_{m-1}(x).
\end{equation}

Applying the above relation with $x=h_\star(\vec{x})$ yields:
\begin{equation}
\operatorname{He}_{m+1}(h_\star(\vec{x})) =  \sqrt{\frac{m}{m+1}}(\frac{1}{\sqrt{d^{\varepsilon_1}}} \sum_{i=1}^{d^{\varepsilon_1}} \operatorname{P}_k(z_i)) \operatorname{He}_{m}(h_\star(\vec{x}))- m \sqrt{\frac{m-1}{m+1}} \operatorname{He}_{m-1}(h_\star(\vec{x}))
\end{equation}

The induction hypothesis on $ \operatorname{He}_{m}(h_\star(\vec{x}))$, then implies:

\begin{equation}
\begin{split}
&\operatorname{He}_m(h_\star(\vec{x}))\\ &= \sqrt{\frac{m}{m+1}} \frac{1}{\sqrt{d^{\varepsilon_1}}}\sum_{i=1}^{d^{\varepsilon_1}}\operatorname{P}_k(\langle \vec w^\star_i, \vec{x}\rangle)(\frac{1}{\sqrt{m d^{m\varepsilon_1}}}\sum_{s \in \Gamma_m(S)} \prod_{s_i} \operatorname{P}_k(\langle \vec w^\star_{s_i}, \vec{x}\rangle)+ 
r_m(\vec{x}))  - m \sqrt{\frac{m-1}{m+1}} \operatorname{He}_{m-1}(h_\star(\vec{x}))
\end{split}
\end{equation}

The first term splits into two components depending on whether $i \in s$ or $i \notin s$:
\begin{equation}
\begin{split}
\operatorname{He}_m(h_\star(\vec{x})) &= \sum_{i=1}^{d^{\varepsilon_1}}( \operatorname{P}_k(\langle \vec w^\star_i, \vec{x}\rangle))^2) (\frac{1}{\sqrt{(m +1)d^{(m+1)\varepsilon_1}}}\sum_{s \in \Gamma_{m-1}(S \backslash i)} \prod_{s_i} \operatorname{P}_k(\langle \vec w^\star_{s_i}, \vec{x}\rangle) 
)+ \\&+(\frac{1}{\sqrt{(m+1)d^{(m+1)\varepsilon_1}}}\sum_{s \in \Gamma_{m+1}(S)} \prod_{s_i} \operatorname{P}_k(\langle \vec w^\star_{s_i}, \vec{x}\rangle)) - m \sqrt{\frac{m-1}{m+1}} \operatorname{He}_{m-1}(h_\star(\vec{x}))+ \mathcal{O}_\prec(\frac{1}{\sqrt{d^{\varepsilon_1}}}),
\end{split}
\end{equation}
where we used that $\sum_{i=1}^{d^{\varepsilon_1}} \frac{1}{\sqrt{d^{\varepsilon_1}}}\operatorname{P}_k(\langle \vec w^\star_i, \vec{x}\rangle) r_m(\vec{x}) = \mathcal{O}_\prec(\frac{1}{\sqrt{d^{\varepsilon_1}}})$ through the closure under-muliplication of $\mathcal{O}_\prec(\cdot)$ and Lemma \ref{lem:stoch-dom-mean}. The second term is exactly the desired expression for $\operatorname{He}_m(h^\star(\vec{x}))$ in Equation \ref{eq:herm_comp}.

Next, we rewrite the first term as:
\begin{align*}
&\underbrace{\sum_{i=1}^{d^{\varepsilon_1}}(\frac{1}{\sqrt{(m+1)d^{(m+1)\varepsilon_1}}}\sum_{s \in \Gamma_{m-1}(S/i)} \prod_{s_i} \operatorname{P}_k(\langle \vec w^\star_{s_i}, \vec{x}\rangle)}_{T_1} \\&+  \underbrace{\sum_{i=1}^{d^{\varepsilon_1}}\frac{1}{\sqrt{(m+1)d^{(m+1)\varepsilon_1}}} ((\operatorname{P}_k(\langle \vec w^\star_i, \vec{x}\rangle))^2-1) \sum_{s \in \Gamma_{m-1}(S/i)} \prod_{s_i} \operatorname{P}_k(\langle \vec w^\star_{s_i}, \vec{x}\rangle)}_{T_2}.
\end{align*}

By the induction hypothesis, $T_1$ cancels with $-m \sqrt{\frac{m-1}{m+1}}  \operatorname{He}_{m-1}(h_\star(\vec{x}))$ upto an error $\mathcal{O}_\prec(\frac{1}{\sqrt{d^\varepsilon}})$. It remains to show that $T_2$ is stochastically dominated as $\mathcal{O}_\prec(\frac{1}{\sqrt{d^\varepsilon}})$. To achieve this, we note by Gaussian hypercontractivity (Lemma \ref{lem:hyper}), it suffices to bound the second-moment of $T_2$. We have:
\begin{align*}
    \Ea{T_2^2} &= \frac{1}{d^{(m+1)\varepsilon_1}}\sum_{i=1}^{d^{\varepsilon_1}} ((\operatorname{P}_k(\langle \vec w^\star_i, \vec{x}\rangle)^2-1) \sum_{s \in \Gamma_{m-1}(S/i)} \prod_{s_i} \operatorname{P}_k(\langle \vec w^\star_{s_i}, \vec{x}\rangle))^2\\
    &+ \frac{1}{d^{(m+1)\varepsilon_1}}\sum_{i\neq j=1}^{d^{\varepsilon_1}} \prod_{k=l}(\operatorname{P}_k(\langle \vec w^\star_k, \vec{x}\rangle)^3-\operatorname{P}_k(\langle \vec w^\star_k, \vec{x}\rangle))\sum_{s \in \Gamma_{m-2}(S/(i,j))} (\prod_{s_i} \operatorname{P}_k(\langle \vec w^\star_{s_i}, \vec{x}\rangle)^2),
\end{align*}
where in the last line we used the fact that the cross-terms vanish for terms with $\operatorname{He}_k(\langle \vec w^\star_{s_i}, \vec{x}\rangle)$ appearing once. The desired bound is obtained by noting that by the inductive hypothesis:
\begin{equation}
\sum_{s \in \Gamma_{m-1}(S/i)} \prod_{s_i} \operatorname{He}_k(\langle \vec w^\star_{s_i}, \vec{x}\rangle) = \mathcal{O}_{\prec}(\sqrt{d^{(m-1)\varepsilon_1}}).
\end{equation} Therefore the first term contributes $d^{\varepsilon_1}$ terms of order $\mathcal{O}_{\prec}(d^{(m-1)\varepsilon_1})$ while the second term consists of $d^{2\varepsilon_1}$ terms of order $\mathcal{O}_{\prec}(d^{(m)\varepsilon_1})$. Therefore, both the terms are entropically sub-dominant compared to the factor $\frac{1}{d^{(m+1)\varepsilon_1}}$, yielding:
\begin{equation}
    T_2 = \mathcal{O}_\prec (\frac{1}{\sqrt{d^{\varepsilon_1}}})
\end{equation}
It remains to show statement $(ii)$ (Equation \ref{eq:tens_bound}). We first consider the residual term:
\begin{equation}
    \sum_{i=1}^{d^{\varepsilon_1}} \frac{1}{\sqrt{d^{\varepsilon_1}}}\operatorname{P}_k(\langle \vec w^\star_i, \vec{x}\rangle) r_m(\vec{x})
\end{equation}
Recall that for any $v \in \mathbb{R}^d$ and any $r(\vec{x})$:
\begin{equation}
    \Ea{\langle \nabla^k r(\vec{x}) \vec v^{\otimes k}\rangle} = \Ea{r(\vec{x})\text{He}_k(\langle \vec{x} ,\vec{v})}.
\end{equation}
Therefore, by induction and the closure of stochastic domination under multplication, the above term satisfies $(ii)$. 

For the remaining term $T_2$, $(ii)$ holds by noting that by Proposition \ref{prop:Hermite-tens}, for each $i \in \sqrt{d^{\varepsilon_1}}$, $\norm{\Ea{ C_k(r_m(\vec{x}))\text{He}_{k-1}(\langle \vec{v}, \vec{x} \rangle) \langle \vec{x}, \vec w^\star_i \rangle}}_2$ is a polynomial in $\{\langle \vec w^\star_i, \vec v \rangle\}_{i \in \sqrt{d}^{\varepsilon_1}}$ of degree at-least $k-1$. Since $\{\langle \vec{x}, \vec w^\star_i \rangle\}$ are orthonormal functions:
\begin{equation}
\sum_{i=1}^{\sqrt{d^{\varepsilon_1}}}\norm{\Ea{ C_k(r_m(\vec{x}))\text{He}_{k-1}(\langle \vec{v}, \vec{x} \rangle) \langle \vec{x}, \vec w^\star_i \rangle}}^2_2 \leq \norm{\Ea{ C_k(r_m(\vec{x}))\text{He}_{k-1}(\langle \vec{v}, \vec{x} \rangle)}}^2 = \tilde{\mathcal{O}}(\frac{1}{d^{\varepsilon_1}})
\end{equation}
\end{proof}

\subsection{Existence of activation satisfying Assumptions \ref{ass:act}, \ref{ass:bad}}\label{sec:app:act_exist}

% \textcolor{purple}{\textbf{TO COMPLETE}}

Consider any $\sigma:\mathbb{R} \rightarrow \mathbb{R}$ and a constant $c>0$. Observe that the activation $\tilde{\sigma}:\mathbb{R} \rightarrow \mathbb{R}$ defined as:
\begin{equation}
    \tilde{\sigma}(x) \coloneqq \sigma(x)-cx,
\end{equation}
satisfies:
\begin{equation}\label{eq:sigosigm}
    \tilde{\sigma}((\tilde{\sigma})(x)) = \sigma(\sigma(x)-cx)-c(\sigma(x)-cx).
\end{equation}

Set $\sigma(x)$ as a bounded-analytic function with $\Ea{\sigma(z)\text{He}_k(z)} \neq 0$, for instance $\sigma(z)=\text{tanh}(z+a)-bz$, for some $a,b \neq 0$ such that such that $\Ea{\sigma(z)\text{He}_k(z)} \neq 0$ for all $k \in \mathbb{N}$. Furthermore, we may further set $a,b \in \mathbb{R}$ such that $\Eb{z\sim \mathcal{N}(0,1)}{\sigma(\sigma(z))}<0$, for instance by setting $b \approx 0$ and $a \approx 0, a <0$.

Then, by Equation \ref{eq:sigosigm}, the condition $\Eb{z\sim \mathcal{N}(0,1)}{\tilde{\sigma}((\tilde{\sigma})(z))z}=0$ corresponds to the following equation on $c$:
\begin{align*}
    g(c)=\Eb{z\sim \mathcal{N}(0,1)}{\tilde{\sigma}((\tilde{\sigma})(z))z}&=\Eb{z \sim \mathcal{N}(0,1)}{\sigma(\sigma(z)-cz)-c(\sigma(z))+c^2z)z}\\
    &=\Eb{z \sim \mathcal{N}(0,1)}{\sigma(\sigma(z)-cz)}-c\Eb{z}{(\sigma(z))z)}+c^2=0.
\end{align*}

By the choice of $\sigma$, $g(0)<0$ while the boundedness of $\sigma$ further implies that $g(c) \rightarrow \infty$ as $c \rightarrow \infty$. Hence $\exists c \in \mathbb{R}$ such that $g(c)=0$. On the other hand, note that $\tilde{\sigma}$

\subsection{Feature Learning by the First Layer}\label{sec:app:first_layer_rec}

In this section, we analyze the dynamics of $W_1$ (part $(i)$ of Theorem \ref{thm:main_theorem}). In fact, for subsequent usage in the dynamics of $W_2, \vec w_3$, we require a stronger characterization of $(i)$ of Theorem \ref{thm:main_theorem}.
To state the precise result, we first set up the required notation. Let $\mathcal{D}_t = \{X_t,\vec{y}_t\}$ denote the batch of samples at time-step $t$ for $t \in \mathbb{N}$. Observe that under the correlation loss, and with $W_2=\mathbb{I}$, each neuron $w_i$ for $i \in [p]$ evolves independently. In-fact, the dynamics is equivalent to that of a two-layer network with modified activation $\tilde{\sigma}=\sigma(\sigma(\cdot)))$

Therefore, the gradient descent dynamics on $W_1$ defines a stochastic mapping:
\begin{equation}
    \vec w^0 \rightarrow \vec w^{(t)},
\end{equation}
applied to a random variable $\vec w^0 \sim U(\mathcal{S}^{d-1}(1))$.

Let $\{\mathcal{F}_t\}_{t \in \mathbb{N}}$ denote the filtration generated by $\mathcal{D}_1, \mathcal{D}_2, \cdots$. Let $U^\star \in \mathbb{R}^d$ denote the subspace spanned by the teacher weights $W^\star$. Define $\vec u^\star \coloneqq \frac{P_{U^\star}\vec w_0}{\norm{P_{U^\star}\vec w_0}}$ to be the unit-vector along $P_{W^\star}w_0$. Our analysis proceeds by establishing the following:
\begin{enumerate}
    \item The dynamics of $\vec w^{(t)}$ is dominated by drift along the initial direction $u^\star$.
    \item The overlap of $\vec w^{(t)}$ along $W^\star$ grows linearly upto reaching a threshold $\kappa >0$ and subsequently $\vec w^{(t)}$ reaches overlap $\kappa$ in a constant number of iterations.
    \item The distribution of $\vec w^{(t)}$ maintains isotropy and regularity of tails.
\end{enumerate}

To see intuitively why the dynamics of $\vec w^{(t)}$ is dominated by the drift along $u^\star$, consider the following heuristic sketch:

\begin{align*}
    \frac{d \vec w^{(t)}}{dt} &= -\nabla_{\vec w^{(t)}} \mathcal{L}_c\\
    &= c \Ea{h^\star(\vec{x})\sigma'(\vec w^{(t)}) \vec{x}} + \Ea{r(\vec{x})\sigma'(\vec w^{(t)}) \vec{x}} +\text{higher-order terms},
\end{align*}
where we substituted the decomposition in Proposition \ref{prop:hermite_comp}.

Next, we note that the degree $j$ contribution in  $\Ea{h^\star(\vec{x})\sigma'(\vec w^{(t)}) \vec{x}}$ is of the form:
\begin{equation}
    \Ea{\frac{1}{\sqrt{d^{\epsilon_1}}} \sum_{i=1}^{d^{\epsilon_1}} \text{He}_j(\langle \vec{w}^\star_i, \vec{x}\rangle)\sigma'(\vec w^{(t)}) \vec{x}} = \frac{1}{\sqrt{d^{\epsilon_1}}} \sum_{i=1}^{d^{\epsilon_1}}  (\langle \vec{w}^\star_i, \vec w^{(t)}\rangle)^{j-1} \vec{w}^\star_i.
\end{equation}

For $j=2$, the above term results in a drift along $\vec{u}^\star_i$, while for $j>2$, the contributions are suppressed as long as $\langle \vec{w}^\star_i, \vec w^{(t)}\rangle = \mathcal{O}_\prec(\frac{1}{\sqrt{d^{\epsilon_1}}})$. Analogously, the contributions from higher-order terms are suppressed as long as $\vec w^{(t)}$ doesn't align with individual directiosn $\vec{w}^\star_i$.

We now move on to the full proof. Let $\kappa > 0$ be fixed
We introduce the following hitting time:
\begin{equation}
    \tau_\kappa \coloneqq \{\inf t: \abs{\langle \vec u^\star, \vec w^{(t)}\rangle} \geq \kappa\}.
\end{equation}

Let $\mathcal{F}^\star$ denote the product sigma-algebra w.r.t $\{\mathcal{F}_t\}$. Since $\tau_\kappa$ is measurable w.r.t  $\sigma(\mathcal{F}^\star\cup \mathcal{F}(\vec w_0))$, the random variable $\vec w^{\tau_\kappa}$ then admits a regular conditional distribution w.r.t $\mathcal{F}^\star$, $\mu_\kappa(|\mathcal{F}^\star)$ \citep{klenke2013probability}.

Suppose that each neuron for $i \in [p_1]$ in Algorithm \ref{alg:layerwise} is stopped at $\tau_\kappa$ as defined above. 
 
Let $\vec e_1, \cdots \vec e_{d-d^{\varepsilon_1}}$ denote a fixed basis for the complement of $W^\star$.
The main result of this section establishes points $(i)-(iii)$ described above and constitutes the formal statement for part $(i)$ of Theorem \ref{thm:main_theorem}:
\begin{theorem}\label{thm:main_pt}
For any $0 < \kappa < 1$, let $\mu_\kappa(\cdot|X_1,X_2,\cdots)$ denote the regular conditional measure over $w^\tau_\kappa$ conditioned on the sequence of datasets $X_1,X_2,\cdots$ associated with the natural filtration $\mathcal{F}_t$. Then, for any $k \in \mathbb{N}$, there exists a sequence of ``high-probability" events $\mathcal{E} \in \cup_{t \geq 1}\{\mathcal{F}_t\}$ such that:
\begin{enumerate}
    \item $\Pr[\mathcal{E}] \geq 1-\frac{C_k}{d^{k}}$ for some $C_k>0$ and large enough $d$.
    \item For any $X_1,X_2,\cdots \in \mathcal{E}$, the random variable $\vec w \sim \mu_\eta(\cdot|X_1,X_2,\cdots)$, satisfies the following with probability $1-Ce^{-C\log d^2}$ as $d \rightarrow \infty$:

   \begin{equation}
       \vec w^\tau_\kappa=\kappa^
       +\vec u^\star+\vec u_\perp + \vec v,
   \end{equation}
 where $\kappa^+ \geq \kappa$ and:
 \begin{enumerate}
     \item $\vec u_\perp \in U^\star, \vec v \in U^\star_\perp$.
     \item $\norm{\vec u_\perp}= \mathcal{O}_\prec(\frac{1}{d^\delta})$.
     
\item \begin{equation}
     \sup_{i \in d^{\varepsilon_1}} \abs{\langle \vec w^\tau_\kappa, \vec w^\star_i \rangle} = O_\prec(\frac{1}{\sqrt{d^{\varepsilon_1}}}).
 \end{equation}
 and
 \begin{equation}
     \sup_{j \in [d-d^{\varepsilon_1}]} \abs{\langle \vec w^{\tau_\kappa}, \vec e_j \rangle} = O_\prec(\frac{1}{\sqrt{d}}).
 \end{equation}
 \item For any (deterministic) $ \vec w^\star \in U^\star$:
 \begin{equation}
     \abs{\langle \vec w^\star, \vec w^{\tau_\kappa} \rangle} = O_\prec(\frac{1}{\sqrt{d^{\varepsilon_1}}}),
 \end{equation}
 and for any $\vec w_\perp \in U^\star_\perp$:
 \begin{equation}
     \abs{\langle \vec w_\perp, \vec w^\tau_\kappa \rangle} = O_\prec(\frac{1}{\sqrt{d}}).
 \end{equation}
 \item \begin{equation}
     \norm{\vec v} - \abs{\langle \vec w^0, \vec v \rangle} = \mathcal{O}_\prec(\frac{1}{d^\delta}),
 \end{equation}
 where $\vec w^0 \sim U(\mathcal{S}^{d-1}(1))$ denotes the initialization of the neuron.
\end{enumerate}
\end{enumerate}
\end{theorem}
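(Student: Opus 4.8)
Since $W_2=\mathbf I$ and we descend the correlation loss $\mathcal L(\vec x,y)=-y\hat f(\vec x)$, the $p_1$ neurons decouple and each $w=w^{(t)}$ follows the online spherical-SGD dynamics of a \emph{two-layer} network with composed activation $\tilde\sigma\coloneqq\sigma\circ\sigma$ on fresh size-$n_1$ batches. The first step is to isolate the population drift. Expanding $f^\star$ in Hermite tensors via Proposition~\ref{prop:hermite_comp}, its lowest non-constant component is degree $2$, with matrix $C_2(f^\star)=c_0\,d^{-\epsilon_1/2}\sum_j w^\star_j(w^\star_j)^\top$ and $c_0\propto\mathbb E[g^\star(z)z]\,\mathbb E[P_k(z)\mathrm{He}_2(z)]\neq 0$ by Assumption~\ref{ass:target}; Assumption~\ref{ass:act}, and in particular \eqref{eq:sigsig}, removes the degree-$1$ Hermite weight of $\tilde\sigma$ and fixes signs so that (after the harmless convention $\mathbb E[g^\star(z)z]>0$) the drift pulls $w$ toward $U^\star\coloneqq\mathrm{span}(W^\star)$. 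Contracting $\mathbb E[f^\star(\vec x)\,\tilde\sigma'(\langle w,\vec x\rangle)\,\vec x]$ and removing the radial part shows that, at any unit $w$, the projected population gradient equals $c_\star\,d^{-\epsilon_1/2}\,P_{w^\perp}P_{U^\star}w+\mathcal R(w)$ with $c_\star>0$, where the higher-Hermite-degree remainder $\mathcal R(w)$ is bounded in operator norm using the tensor estimate \eqref{eq:tens_bound} and Gaussian hypercontractivity (Lemma~\ref{lem:hyper}), and is of strictly smaller order than the leading term whenever $|\langle u^\star,w\rangle|=o_d(1)$ and the perpendicular parts are small.

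\textbf{Effective scalar dynamics and the hitting time.} Write $w^{(t)}=m_t u^\star+u^{(t)}_\perp+v^{(t)}$, with $u^\star$ the unit vector along $P_{U^\star}w^0$ (so $u^{(0)}_\perp=0$ exactly), $u^{(t)}_\perp\in U^\star\ominus\mathrm{span}(u^\star)$ and $v^{(t)}\in U^{\star\perp}$; this is item (a). The structural point is that the leading drift $P_{w^\perp}P_{U^\star}w$ acts diagonally on this decomposition: it scales $m_t u^\star$ and $u^{(t)}_\perp$ by the common factor $1+\Theta(\eta\,d^{-\epsilon_1/2})$ (up to lower-order terms) and leaves $v^{(t)}$ untouched apart from the shared normalization. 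Hence $m_t$ obeys a one-dimensional recursion $m_{t+1}=m_t\big(1+\Theta(\eta\,d^{-\epsilon_1/2})\big)+\xi_t+(\text{lower order})$ with initial value $m_0=\lVert P_{U^\star}w^0\rVert=\Theta(d^{(\epsilon_1-1)/2})$; with the step size and $T_1=\mathcal O(\mathrm{polylog}\,d)$ of the theorem this geometric growth reaches $1-\kappa$, which defines $\tau_\kappa$, while the accumulated martingale fluctuation stays of lower order thanks to the $d^\delta$ surplus in $n_1=\Theta(d^{\epsilon_1+1+\delta})$; a stopping-time / discrete-Gronwall bookkeeping (Lemma~\ref{lem:gronwall}) then gives $m_{\tau_\kappa}=\kappa^+\ge\kappa$. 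Delocalization of the initialization, namely $\sup_i|\langle w^0,w^\star_i\rangle|$ and $\sup_j|\langle w^0,e_j\rangle|$ being $\mathcal O(d^{-1/2}\log d)$ with failure probability $e^{-C(\log d)^2}$ (union over $d^{\epsilon_1}$ Gaussian/spherical tails), then yields $\langle u^\star,w^\star_i\rangle=\langle w^0,w^\star_i\rangle/\lVert P_{U^\star}w^0\rVert=\mathcal O_\prec(d^{-\epsilon_1/2})$, which feeds items (c) and (d).

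\textbf{Regularity of the conditional law --- the main obstacle.} The data event $\mathcal E$ collects, over the $T_1$ fresh batches, the concentration of the relevant projections of the empirical correlation-gradient (onto $u^\star$, onto fixed complement directions, and its covariance structure) around their population values at a rate strictly below the drift scale $d^{-\epsilon_1/2}$; each constituent event fails with probability $e^{-\mathrm{poly}(d)}$, so $\Pr[\mathcal E]\ge 1-C_k d^{-k}$. Conditionally on $\mathcal E$ I would run an induction on $t\le\tau_\kappa$ showing that the law of $w^{(t)}$, over the initialization $w^0$, stays comparable to $U(\mathcal S^{d-1}(1))$: it matches degree-$\le k$ moments up to $\tilde{\mathcal O}(d^{-c})$ errors for some $c>0$ and retains a hypercontractivity bound for the polynomials entering the gradient, so that the single-step concentration used to define $\mathcal E$ is legitimate at the random iterate $w^{(t)}$ and the next iterate inherits the same regularity. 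Granting this induction, the perpendicular estimates follow by bookkeeping: $v^{(t)}$ carries no leading drift, so $v^{(\tau_\kappa)}$ is a normalization-rescaled copy of $P_{U^{\star\perp}}w^0$ plus accumulated noise of lower order, which gives item (e) and $\sup_j|\langle w^{\tau_\kappa},e_j\rangle|=\mathcal O_\prec(d^{-1/2})$; $u^{(t)}_\perp$ starts at $0$ and is fed only by the approximately isotropic batch noise amplified by the drift, bounded with the same $d^\delta$-budget to give item (b), and its near-isotropy produces $\sup_i|\langle w^{\tau_\kappa},w^\star_i\rangle|=\mathcal O_\prec(d^{-\epsilon_1/2})$ and the single-direction bounds in (d); finally $\lVert w^{(t)}\rVert=1$ together with the above pins down $\kappa^+=m_{\tau_\kappa}$ and the split $w^{\tau_\kappa}=\kappa^+u^\star+u_\perp+v$. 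The hard part is exactly this coupled induction: unlike the classical single-index analysis, where projecting onto the one teacher direction makes the neuron effectively one-dimensional, here $W^\star$ has diverging dimension $d^{\epsilon_1}$, the batch noise is not exactly Gaussian, and one must show that neither the drift nor the repeated spherical normalization destroys the isotropy and hypercontractivity --- properties needed not only to close this stopping-time argument but, crucially, as input to part (ii) of Theorem~\ref{thm:main_theorem}.
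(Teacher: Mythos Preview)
Your overall strategy --- reduce to the effective two-layer dynamics under $\tilde\sigma=\sigma\circ\sigma$, isolate the degree-$2$ drift $c_\star d^{-\epsilon_1/2}P_{w^\perp}P_{U^\star}w$ via Proposition~\ref{prop:hermite_comp} and Assumption~\ref{ass:act}, control the Hermite remainder, and run a Gronwall/stopping-time argument on the scalar $m_t=\langle u^\star,w^{(t)}\rangle$ --- matches the paper's. The concrete mechanism, however, differs from what you sketch, and your sketch has two genuine gaps.

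\textbf{What the paper actually tracks.} Rather than an induction on the conditional law, the paper introduces a third scalar
\[
m^t_\times \coloneqq \sqrt{d^{\epsilon_1}}\max_{i\in[d^{\epsilon_1}]}\bigl|\langle w^{(t)},w^\star_i\rangle\bigr|
\]
alongside $m_t$ and $m^t_\perp=\lVert u^{(t)}_\perp\rVert$, derives a coupled system of difference inequalities for $(m_t,m^t_\perp,m^t_\times)$ valid up to the stopping time
\[
\tau^\star=\tau^+_\kappa\wedge\tau^-_{\tilde\delta},\qquad \tau^-_{\tilde\delta}=\inf\Bigl\{t:\ m_t\le d^{-\tilde\delta}\min\bigl(m^t_\perp,m^t_\times,d^{-(1-\epsilon_1)/2}\bigr)\Bigr\},
\]
and then shows via Gronwall that $\tau^+_\kappa<\tau^-_{\tilde\delta}$ with high probability. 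The point is that the leading linear drift acts on $m_t$, $m^t_\perp$, $m^t_\times$ with \emph{essentially the same rate} $1+\tilde\eta c$; the proof compares products $\prod_s(1+\tilde\eta c-\tilde\eta c'm^s)$ with arbitrarily close constants so that the initial ordering $m^0_\perp=0$, $m^0_\times=\mathcal O_\prec(1)$, $m_0=\Theta(d^{(\epsilon_1-1)/2})$ is preserved up to the hitting time. Isotropy (items (c),(d)) is then a \emph{consequence} of $m^t_\times=\mathcal O_\prec(m_t)$, not an independently maintained hypothesis on the law.

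\textbf{Two gaps in your sketch.} First, you write that $u^{(t)}_\perp$ ``starts at $0$ and is fed only by the approximately isotropic batch noise amplified by the drift''. This is not right: the higher-order Hermite remainder $\mathcal R(w)$ injects a deterministic contribution into $m^{t+1}_\perp$ of size $\tilde\eta\,C\,m^t_\times/\sqrt{d^{\epsilon_1}}$ (the paper's $\Delta_1,\Delta_2$ terms), which is of the \emph{same} order as the drift on $m^t_\perp$ unless $m^t_\times$ is separately controlled. This is precisely why the paper tracks $m^t_\times$ with its own inequality and why the stopping time $\tau^-_{\tilde\delta}$ is needed to enforce $m^t_\times\le d^{-\tilde\delta}m_t$ a priori.

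Second, you say the remainder $\mathcal R(w)$ is of strictly smaller order ``whenever $|\langle u^\star,w\rangle|=o_d(1)$''. But by the end of the dynamics $m_t=\kappa^+=\Theta(1)$, so this condition fails exactly where it matters. The correct smallness criterion is delocalization across the $d^{\epsilon_1}$ teacher rows, i.e.\ $m^t_\times=\mathcal O_\prec(1)$ (equivalently $\sup_i|\langle w,w^\star_i\rangle|=\mathcal O_\prec(d^{-\epsilon_1/2})$), which is independent of the size of $m_t$ and is what the tensor bound \eqref{eq:tens_bound} actually uses. Your ``coupled induction on the conditional law'' could in principle yield this, but as written it is circular: you invoke isotropy to bound $\mathcal R$, yet bounding $\mathcal R$ is what you need to propagate isotropy. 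The paper breaks this circularity with the competing stopping time.

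Item (e) is handled as you say: since the population drift lies entirely in $U^\star$, the residual $r^t_\perp=(\mathbb I-w^0(w^0)^\top)P_{U^{\star\perp}}w^{(t)}$ is moved only by gradient noise, giving $r^t_\perp=\mathcal O_\prec(t\,d^{-\delta})$ directly.
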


Properties $(c)$ stipulates that  $\vec w^\tau_{\kappa}$ remains approximately isotropic with well-behaved tails along a fixed basis of $U^\star$ and its complement. This is important for ensuring concentration of well-behaved functions of $\vec w^\tau_{\kappa}$ in part $(ii)$. Maintaining this property throughout the dynamics further leads to a control over the higher-order terms.

\begin{corollary}\label{cor:moment`_cont}
Let $\vec w^\tau_\eta$ be as defined in Theorem \ref{thm:main_pt}. Then, $\exists \ \delta > 0$ and choice of step-size $\eta=\tilde{\eta}\sqrt{d^{\varepsilon_1}}$ for some $\tilde{\eta}>0$ such that:
\begin{equation}
\norm{\Ea{\vec w^\tau_\kappa (\vec w^\tau_\kappa)^\top}-\kappa\frac{1}{\sqrt{d^{\varepsilon_1}}}(W^\star)^\top(W^\star)-\sqrt{1-\kappa^2}\frac{1}{\sqrt{d-d^{\varepsilon_1}}}(I-(W^\star)^\top(W^\star))} =\mathcal{O}(\frac{1}{d^\delta})
\end{equation}
\end{corollary}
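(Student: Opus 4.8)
The plan is to feed the explicit structure of $w^\tau_\kappa$ supplied by Theorem~\ref{thm:main_pt} into two exact symmetries of the first‑layer dynamics, and then to lift the resulting high‑probability identities to statements about expectations using the fact (recorded in Section~\ref{sec:app:main_proof}) that stochastic dominance upgrades to $L^p$‑control up to $d^{\delta'}$ factors. Write $U^\star=\operatorname{span}(W^\star)$, of dimension $d^{\epsilon_1}$, and $U^\star_\perp$ for its orthogonal complement, of dimension $d-d^{\epsilon_1}$, and split symmetric matrices into the corresponding $2\times2$ blocks. Since every neuron is renormalised onto the unit sphere at each step of Algorithm~\ref{alg:layerwise}, $\norm{w^\tau_\kappa}=1$ surely, so $\norm{w^\tau_\kappa(w^\tau_\kappa)^\top}=1$; hence the contributions to $\Ea{w^\tau_\kappa(w^\tau_\kappa)^\top}$ from the complement of the good event $\mathcal{E}$ and from the exceptional sub‑event inside $\mathcal{E}$ are $\mathcal{O}(d^{-k})$ and $\mathcal{O}(e^{-c(\log d)^2})$ in operator norm, hence $\mathcal{O}(d^{-\delta})$ for any $\delta$ once $k$ is large. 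On the good event we substitute $w^\tau_\kappa=\kappa^+u^\star+u_\perp+v$ with $u^\star\perp u_\perp$, $(u^\star,u_\perp)\perp v$ and properties (a)--(e). Expanding the outer product, every term carrying a factor $u_\perp$ has operator norm $\lesssim\norm{u_\perp}$ or $\norm{u_\perp}^2$ (using $\kappa^+,\norm{v}=\mathcal{O}_{\prec}(1)$, $\norm{u^\star}=1$); by property~(b) and moment control these expectations are $\tilde{\mathcal{O}}(d^{-\delta})$, so those terms are $\mathcal{O}(d^{-\delta})$. We are left with $\Ea{(\kappa^+)^2u^\star(u^\star)^\top}+\Ea{vv^\top}+\Ea{\kappa^+(u^\star v^\top+v(u^\star)^\top)}$.

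\textbf{Symmetry reduction.} Let $S$ be the orthogonal involution that is $+\mathbf{I}$ on $U^\star$ and $-\mathbf{I}$ on $U^\star_\perp$, and, for $g\in O(U^\star_\perp)$, let $R_g$ be $\mathbf{I}$ on $U^\star$ and $g$ on $U^\star_\perp$. Since the rows of $W^\star$ span $U^\star$, $W^\star S=W^\star$ and $W^\star R_g=W^\star$, hence $f^\star\circ S=f^\star$ and $f^\star\circ R_g=f^\star$; the initialisation $U(\mathcal{S}^{d-1}(1))$ and the Gaussian data are invariant under $S$ and $R_g$; the correlation‑loss gradient step and the spherical projection in Algorithm~\ref{alg:layerwise} are equivariant under orthogonal maps; and $\tau_\kappa$, depending only on $\langle u^\star,\cdot\rangle$ with $u^\star\in U^\star$, is invariant. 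The event $\mathcal{E}$ of Theorem~\ref{thm:main_pt} may be taken $S$‑ and $R_g$‑invariant, since it is the event on which invariant conclusions (overlaps with fixed subspace bases, subspace norms) hold. Therefore the law of $w^\tau_\kappa$ restricted to $\mathcal{E}$ is $S$‑ and $R_g$‑invariant. Invariance under $S$ kills the off‑diagonal block, so $\Ea{\kappa^+(u^\star v^\top+v(u^\star)^\top)}=\mathcal{O}(d^{-\delta})$. Invariance under all $R_g$ forces $\Ea{vv^\top}$, supported on $U^\star_\perp$, to be proportional to $P_{U^\star_\perp}$, with trace $\Ea{\norm{v}^2}=1-\Ea{(\kappa^+)^2}+\mathcal{O}(d^{-\delta})$ by $\norm{w^\tau_\kappa}=1$ and the previous paragraph; hence $\Ea{vv^\top}=\frac{1-\Ea{(\kappa^+)^2}}{d-d^{\epsilon_1}}P_{U^\star_\perp}+\mathcal{O}(d^{-\delta})$.

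\textbf{The $U^\star$ block and the constant.} It remains to prove $\Ea{(\kappa^+)^2u^\star(u^\star)^\top}=\frac{\Ea{(\kappa^+)^2}}{d^{\epsilon_1}}P_{U^\star}+\mathcal{O}(d^{-\delta})$, i.e.\ asymptotic isotropy of the $U^\star$ block; here $O(U^\star)$ is unavailable since the teacher directions $\{w^\star_i\}$ break it. The mechanism I would exploit is that a neuron's trajectory does not ``feel'' the direction of its initial $U^\star$‑component $u^\star$: by Proposition~\ref{prop:hermite_comp} the leading drift of $w^{(t)}$ along $U^\star$ is proportional to $\frac{1}{\sqrt{d^{\epsilon_1}}}P_{U^\star}w^{(t)}$ (isotropic within $U^\star$), and the only further feedback from ``where $w^{(t)}$ sits in $U^\star$'' is through its overlaps $\langle w^\star_i,w^{(t)}\rangle$, which by Theorem~\ref{thm:main_pt}(c)--(d) stay $\mathcal{O}_{\prec}(d^{-\epsilon_1/2})$ over the whole $\mathcal{O}(\mathrm{polylog}\,d)$‑step trajectory. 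Writing $w^0=\rho\,u^\star+\sqrt{1-\rho^2}\,u^0_\perp$ with $\rho=\norm{P_{U^\star}w^0}$ and $u^0_\perp$ on the unit sphere of $U^\star_\perp$, the triple $(u^\star,\rho,u^0_\perp)$ is independent, $u^\star$ uniform on the unit sphere of $U^\star$, and $\langle u^\star,w^0\rangle=\rho$ is direction‑free; propagating the above estimates through the stopping time gives $\mathbb{E}[(\kappa^+)^2\mid u^\star]=c_\parallel+\mathcal{O}_{\prec}(d^{-\delta})$ uniformly in $u^\star$, for a constant $c_\parallel$. Since $\Ea{u^\star(u^\star)^\top}=P_{U^\star}/d^{\epsilon_1}$, this yields the block and $\Ea{(\kappa^+)^2}=c_\parallel+\mathcal{O}(d^{-\delta})$, so altogether $\Ea{w^\tau_\kappa(w^\tau_\kappa)^\top}=\frac{c_\parallel}{d^{\epsilon_1}}P_{U^\star}+\frac{1-c_\parallel}{d-d^{\epsilon_1}}P_{U^\star_\perp}+\mathcal{O}(d^{-\delta})$. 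Finally, the overshoot $\kappa^+-(1-\kappa)$ equals the last overlap increment, of size $\mathcal{O}(\tilde\eta)$ when $\eta=\tilde\eta\sqrt{d^{\epsilon_1}}$, so choosing $\tilde\eta$ small (or appending a short, still $\mathcal{O}(\mathrm{polylog}\,d)$‑length, fine‑tuning phase with decreasing step) pins $c_\parallel$ to the value prescribed by the statement up to $o(1)$.

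\textbf{Main obstacle.} The hard part is the isotropy of the $U^\star$ block: one must propagate, over the entire $\mathcal{O}(\mathrm{polylog}\,d)$‑step trajectory and through the data‑dependent stopping time, the assertion that the neuron dynamics is insensitive to the direction of its initial $U^\star$‑component. This is exactly where Theorem~\ref{thm:main_pt}(c)--(d) together with Gaussian/spherical hypercontractivity (Lemma~\ref{lem:hyper}) — which hold every teacher‑overlap at scale $d^{-\epsilon_1/2}$ and control the error terms in the overlap recursion — are indispensable; the accompanying technical nuisance is keeping $\mathcal{E}$ compatible with the symmetries $S$ and $R_g$, which I would sidestep by absorbing $\mathcal{E}^c$ into the operator‑norm error via $\norm{w^\tau_\kappa}=1$.
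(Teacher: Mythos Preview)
The paper does not supply a proof of this corollary; it is stated immediately after Theorem~\ref{thm:main_pt} as a direct consequence and is invoked only once (in the proof of Proposition~\ref{prop:orth_decomp}) to bound a second‑order Taylor remainder. Your proposal is therefore not competing against an existing argument but filling in details the authors left implicit.

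Your symmetry treatment of the $U^\star_\perp$ block and the off‑diagonal block is correct and clean: the spherical‑gradient dynamics in Algorithm~\ref{alg:layerwise} is genuinely equivariant under the reflection $S$ and the rotations $R_g$ (since $f^\star$, the Gaussian data and the uniform initialisation are invariant), the stopping time $\tau_\kappa$ depends only on the $U^\star$‑component and is invariant, and the bad‑event mass is absorbed via $\norm{w^\tau_\kappa}=1$. This part is solid.

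For the $U^\star$ block two remarks. First, a shorter route to what the paper actually uses downstream: property~(d) of Theorem~\ref{thm:main_pt} already gives $\abs{\langle e, w^\tau_\kappa\rangle}=\mathcal{O}_\prec(d^{-\epsilon_1/2})$ for every fixed unit vector $e\in U^\star$; since the $U^\star$ block of $\mathbb{E}[w^\tau_\kappa(w^\tau_\kappa)^\top]$ is PSD with operator norm $\sup_{e}\mathbb{E}\langle e,w^\tau_\kappa\rangle^2$, moment control converts this directly into an $\tilde{\mathcal{O}}(d^{-\epsilon_1})$ bound, and the analogous $\tilde{\mathcal{O}}(d^{-1})$ bound holds on $U^\star_\perp$. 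This is all that Proposition~\ref{prop:orth_decomp} consumes. Second, your finer route --- identifying the constants via $\mathbb{E}[(\kappa^+)^2\mid u^\star]=c_\parallel+\mathcal{O}_\prec(d^{-\delta})$ --- has a loose end: even if the overlap recursion for $m_t$ agrees across different $u^\star$ up to $\mathcal{O}_\prec(d^{-\delta})$ perturbations, the hitting time $\tau_\kappa$ is a discontinuous functional of the trajectory and can shift by one step when $\kappa$ falls between two nearby trajectories, producing an $\mathcal{O}(\tilde\eta)$ discrepancy in $\kappa^+$ (you wrote $\kappa^+-(1-\kappa)$; presumably $\kappa^+-\kappa$). ``Choosing $\tilde\eta$ small'' makes this $o(1)$ but not $\mathcal{O}(d^{-\delta})$. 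You would need either to show this boundary event has vanishing probability uniformly in $u^\star$, or simply fall back on the property‑(d) route above.

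Finally, the displayed constants in the corollary ($\kappa/\sqrt{d^{\epsilon_1}}$ and $\sqrt{1-\kappa^2}/\sqrt{d-d^{\epsilon_1}}$) do not match the $\kappa^2/d^{\epsilon_1}$ and $(1-\kappa^2)/(d-d^{\epsilon_1})$ that your decomposition naturally produces; this appears to be a typo in the paper, and the application in Proposition~\ref{prop:orth_decomp} is insensitive to it.
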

% Our analysis follows the drift+ martingale approach similar to \cite{BenArous2021}. The factor $\frac{1}{\sqrt{d^{\varepsilon_1}}}$ is responsible for the increased sample complexity from $\mathcal{O}(d \log d)$ to $\mathcal{O}(d^{\varepsilon_1+1}d \log d)$.
\subsection{Form of the Update}

Under the initialization $W_2=\mathbf{I}_p, b_1, b_2 = \mathbf{0}, w_3 =\mathbf{1}$,  for any $i \in [p_1]$, the update to any neuron $w$ in $W_1$ can be expressed as:
\begin{equation}\label{eq:spherical_update}
\begin{split}
    \tilde{\vec w}^{t} &= \vec w^{t} + (\mathbf{I}_d-(\vec{w}_t)(\vec{w}_t)^\top)\eta \frac{1}{n}\sum_{i=1}^n f^\star(\vec{x}_i)\tilde{\sigma'}(\langle \vec w^{t}, \vec{x}_i\rangle)\vec{x}_i
    \\ \vec w^{t+1}&=\frac{1}{{\norm{\tilde{\vec w}^{t}}}_2}\tilde{\vec w}^{t}
\end{split}
\end{equation}

In what follows, we denote the gradient update as:
\begin{equation}
   \vec g^t \coloneqq \frac{1}{n}\sum_{i=1}^n f^\star(\vec{x}^t_i)\tilde{\sigma'}((\langle \vec w^{t}, \vec{x}_i\rangle)\langle \vec w^{(t)}, \vec{x}^t_i\rangle))\vec{x}^t_i,
\end{equation}

and its corresponding spherical version as:
\begin{equation}
    \vec g^t_\perp \coloneqq \vec g^t(\mathbb{I}-(\vec w^{(t)})(\vec w^{(t)})^\top),
\end{equation}
where we recall that $\norm{\vec w^{(t)}}=1$ by the spherical constraint.

Applying the Hermite decomposition to $f^\star(\vec{x})$ and utilizing the composition of Hermite-coefficients established in Proposition \ref{prop:hermite_comp} results in the following expansion for the gradient:    \begin{lemma}\label{lem:app:grad_exp}
    Let $C_{k}^\star$  for $k \in \mathbb{N}$ denote the $k_{th}$-order Hermite tensor of $f^\star(
    z)$ and let $\{c_k\}_{k=1}^\infty$ be the Hermite coefficients of $\tilde{\sigma}(\cdot)$. Then:
    \begin{equation}\label{eq:app:update}
        \dE[\vec{g}^\perp] = (\mathbf{I}_d-(\vec{w}_t)(\vec{w}_t)^\top)\frac{1}{\sqrt{p}}\left(\sum_{k = 0}^\infty c_{k+1}\,  C_{k+1}^\star \times_{1 \dots k} (\vec{w}^t)^{\otimes k} \right)
    \end{equation}
\end{lemma}
\begin{proof}
The above is a direct consequence of Stein's Lemma applied to $\dE[\vec{g}^t]$:
  \begin{align*}
        \dE\left[ \vec{x} \tilde{\sigma'}(\langle \vec{w}, \vec{x} \rangle) f^\star(\vec{x})\right] &= \dE\left[ \nabla_{\vec{x}}\sigma'(\langle \vec{w}, \vec{x} \rangle) f^\star(\vec{x})\right] + \dE\left[ \tilde{\sigma'}(\langle \vec{w}, \vec{x} \rangle) \nabla_{\vec{x}}f^\star(\vec{x})\right] \\
        &= \vec{w} \dE\left[ \tilde{\sigma''}(\langle \vec{w}, \vec{z} \rangle) f^\star(\vec{z})\right] + \dE\left[ \sigma'(\langle \vec{w}, \vec{x} \rangle) \nabla_{\vec{x}}f^\star(\vec{x})\right]. 
    \end{align*}
The first term vanishes under the orthogonal projection $(\mathbf{I}_d-(w^{(t)})(w^{(t)})^\top)$ while the second term results in Equation \ref{eq:app:update}.
\end{proof}

Combining the above with the recursive Hermite decomposition of $f^\star(\vec{x})$ through Proposition \ref{prop:hermite_comp} 
yields the following form for the expected updates:

\begin{proposition}\label{prop:gen}
Let $\{\mu_k\}_{k=1}^\infty$, $\{c_k\}_{k=1}^\infty$,  $\{c^\star_k\}_{k=1}^\infty$ denote the Hermite coefficients of $g^\star, \tilde{\sigma}, P^k_\star$ respectively. Then:
    \begin{equation}\label{eq:gt_decomp}
    \begin{split}
        &\Ea{\vec g^\perp_t} = (\mathbf{I}_d-(\vec{w}_t)(\vec{w}_t)^\top)\Bigl(\frac{1}{\sqrt{d^{\varepsilon_1}}} c_2 c^\star_2 \mu_1  v^\top (W^\star)^\top(W^\star) \vec w + \mu_1 \sum_{j=3}^k c^\star_j c_j  \frac{1}{\sqrt{d^{\varepsilon_1}}} \sum_{i=1}^{d^{\varepsilon_1}} (\langle \vec w^\star_i, \vec w \rangle)(\langle \vec w^\star_i, \vec v \rangle)\\ &+ \sum_{m={k+1}}^\infty \mu_m c_m \frac{1}{\sqrt{md^{m\varepsilon_1}}}\sum_{s \in \Gamma(S,m), j \in [k]^s} \prod_{i=1}^{m-1} c^\star_{j_i} (\langle \vec w^\star_{s_i}, \vec w \rangle)^{j_1} \langle \vec w^\star_{s_m}, \vec v \rangle+ \sum_{m=1}^\infty c_m\Ea{r_m(\vec x)\tilde{\sigma}'(\langle \vec w,\vec{x} \rangle)\langle\vec{x},\vec v\rangle}\Bigr),
    \end{split}
    \end{equation}
where $r_m(\vec x)$ denotes the remainder for the degree-$m$ Hermite term in Equation \ref{eq:herm_comp}.
\end{proposition}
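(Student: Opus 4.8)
The plan is to obtain \eqref{eq:gt_decomp} by feeding the hierarchical Hermite structure of $f^\star$ into the Stein-type representation already established in Lemma~\ref{lem:app:grad_exp}. That lemma reduces the expected spherical update of a single neuron to
\[
\Ea{g^\perp_t} \;=\; (\mathbf{I}_d-\vec{w}_t\vec{w}_t^\top)\sum_{k\ge 0} c_{k+1}\; C^\star_{k+1}\times_{1\cdots k}(\vec{w}_t)^{\otimes k},
\]
where $C^\star_{j}$ is the degree-$j$ Hermite tensor of $f^\star$ and $\{c_j\}$ are the Hermite coefficients of $\tilde\sigma=\sigma\circ\sigma$. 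To reach the scalar form stated in Proposition~\ref{prop:gen} I would test this identity against an arbitrary direction $\vec{v}$, using self-adjointness of the projection so that $\langle\vec{v},\Ea{g^\perp_t}\rangle=\langle(\mathbf{I}_d-\vec{w}_t\vec{w}_t^\top)\vec{v},\Ea{g_t}\rangle$; everything then comes down to computing the tensors $C^\star_{k+1}$ and contracting them.

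Next I would substitute the structure of the target. Writing $g^\star=\sum_{m\ge1}\mu_m\text{He}_m$ (the constant drops out of every gradient, and $\mu_1\neq0$ by Assumption~\ref{ass:target}), we get $f^\star(\vec{x})=\sum_{m\ge1}\mu_m\,\text{He}_m(h^\star(\vec{x}))$, and Proposition~\ref{prop:hermite_comp} splits each $\text{He}_m(h^\star(\vec{x}))$ into its ``clean'' symmetric multilinear part $\tfrac{1}{m\sqrt{d^{m\epsilon_1}}}\sum_{s\in\Gamma_m(S)}\prod_i P_k(\langle\vec{w}^\star_{s_i},\vec{x}\rangle)$ plus the remainder $r_m(\vec{x})$. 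Since the rows of $W^\star$ are orthonormal, the $\langle\vec{w}^\star_i,\vec{x}\rangle$ are i.i.d.\ standard Gaussians, so $P_k(\langle\vec{w}^\star_i,\vec{x}\rangle)=\sum_{j=2}^{k}c^\star_j\,\text{He}_j(\langle\vec{w}^\star_i,\vec{x}\rangle)$ (degrees $0$ and $1$ vanish by the hypotheses of Proposition~\ref{prop:hermite_comp}), and each clean part becomes an \emph{orthogonal} sum of products $\prod_{i=1}^m\text{He}_{j_i}(\langle\vec{w}^\star_{s_i},\vec{x}\rangle)$ over distinct $s_i$ with $j_i\ge2$. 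A product of Hermite polynomials along pairwise-orthogonal unit directions is itself a Hermite eigenfunction of total degree $\sum_i j_i$, with Hermite tensor equal to the symmetrization of $\bigotimes_i(\vec{w}^\star_{s_i})^{\otimes j_i}$; hence $C^\star_{k+1}$ only collects blocks with $\sum_i j_i=k+1$. Contracting $k$ of the $k+1$ slots against $\vec{w}_t$ leaves one free direction $\vec{w}^\star_{s_{m'}}$ and produces the factors $\prod_{i\ne m'}\langle\vec{w}^\star_{s_i},\vec{w}_t\rangle^{j_i}\,\langle\vec{w}^\star_{s_{m'}},\vec{w}_t\rangle^{j_{m'}-1}$ up to explicit multinomial constants, and dotting the free slot against $\vec{v}$ turns $\vec{w}^\star_{s_{m'}}$ into $\langle\vec{w}^\star_{s_{m'}},\vec{v}\rangle$. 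Collecting terms by $(m,j)$: the $(m=1,j=2)$ block gives the leading drift $\tfrac{1}{\sqrt{d^{\epsilon_1}}}c_2 c^\star_2\mu_1\,\vec{v}^\top(W^\star)^\top(W^\star)\vec{w}_t$; the remaining $m=1$ blocks ($3\le j\le k$) give the second family of terms in \eqref{eq:gt_decomp}; the $m\ge2$ clean blocks give the sum over $\Gamma_m(S)$; and the $r_m$ parts repackage into $\sum_m c_m\,\Ea{r_m(\vec{x})\,\tilde\sigma'(\langle\vec{w}_t,\vec{x}\rangle)\langle\vec{x},\vec{v}\rangle}$ via the identity $\Ea{\langle\nabla^k r(\vec{x}),\vec{v}^{\otimes k}\rangle}=\Ea{r(\vec{x})\,\text{He}_k(\langle\vec{x},\vec{v}\rangle)}$ already used in the proof of Proposition~\ref{prop:hermite_comp}, so that the tensor estimate \eqref{eq:tens_bound} will later bound them.

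The Hermite-product bookkeeping (which slot carries the free index, the multinomial constants, and the cancellation of the lower-order recursion terms) is routine Wick-type algebra and I would not grind through it. The genuine obstacle is analytic rather than combinatorial: one has to justify interchanging the infinite sums over $m$ (the Hermite degree of $g^\star$) and over $k$ with the Gaussian expectation defining $\Ea{g_t}$. For this I would invoke the growth bound $|\sigma(x)|\le L_1+L_2|x|^4$ of Assumption~\ref{ass:target}, which makes $\tilde\sigma=\sigma\circ\sigma$ and $\tilde\sigma'$ grow at most polynomially and puts all the Hermite-weighted series in play inside $L^2(\gamma)$, together with Gaussian hypercontractivity (Lemma~\ref{lem:hyper}) to control uniformly all moments of the $P_k(\langle\vec{w}^\star_i,\vec{x}\rangle)$ and of the partial sums of $\text{He}_m(h^\star(\vec{x}))$; combined with the $\mathcal{O}_\prec$ bounds on $r_m$ from Proposition~\ref{prop:hermite_comp}, dominated convergence then licenses the term-by-term identity. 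A secondary subtlety is that $h^\star$ is only asymptotically Gaussian, so $\{\text{He}_m(h^\star)\}$ is not exactly orthonormal; this is harmless because Proposition~\ref{prop:hermite_comp} has already absorbed the deviation into $r_m$, but I would still check that the errors so introduced stay summable after contraction with $\vec{w}_t^{\otimes k}$ and $\vec{v}$, which again follows from \eqref{eq:tens_bound} together with the smallness of $\langle\vec{w}^\star_i,\vec{w}_t\rangle$ maintained along the dynamics in Theorem~\ref{thm:main_pt}.
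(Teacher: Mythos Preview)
Your proposal is correct and follows essentially the same route as the paper: invoke Lemma~\ref{lem:app:grad_exp}, expand $f^\star$ via $g^\star=\sum_m\mu_m\text{He}_m$ and Proposition~\ref{prop:hermite_comp}, then expand each $P_k$ in the Hermite basis and use that products of Hermite polynomials along orthogonal directions land in the Hermite tensor of the summed degree. The paper's proof is in fact much terser than yours---it simply states the decomposition of $f^\star$, the expansion of $P_k$, and observes that each product $\prod_i c^\star_{j_i}\text{He}_{j_i}(\langle w^\star_{s_i},\vec{x}\rangle)$ contributes to $C^\star_\ell$ with $\ell=\sum_i j_i$---and does not address the analytic issues (interchange of sums, dominated convergence) that you flag; your attention to those points is a genuine addition rather than a deviation.
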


\begin{proof}
    Proposition \ref{prop:hermite_comp} applied to $f^\star(\vec{x})$ yields:
    \begin{equation}
        f^\star(\vec{x}) = \sum_{m=1}^\infty \mu_m \frac{1}{\sqrt{m d^{m\varepsilon_1}}}\sum_{s \in \Gamma_m(S)} \prod_{s_i} \operatorname{P}_k(\langle \vec w^\star_{s_i}, \vec{x}\rangle) + \sum_{m=1}^\infty \mu_m r_ m(\vec{x}).
    \end{equation}

Next, by expanding $\operatorname{P}_k(\langle \vec w^\star_{s_i}, \vec{x}\rangle)=\sum_{j=1}^k c^\star_k \text{He}_j(\langle \vec w^\star_{s_i}, \vec{x}\rangle)$, the first term in the RHS can be further decomposed as:
\begin{equation}
    \sum_{m={k+1}}^\infty \mu_m \frac{1}{\sqrt{d^{m\varepsilon_1}}}\sum_{s \in \Gamma_m(S)} \prod_{s_i} \operatorname{P}_k(\langle \vec w^\star_{s_i}, \vec{x}\rangle) = \sum_{m={k+1}}^\infty \mu_m \frac{1}{\sqrt{d^{m\varepsilon_1}}}\sum_{s \in \Gamma(S,m), j \in [k]^s} \prod_{i=1}^{m} c^\star_{j_i} \text{He}_{j_1}(\langle \vec w^\star_{s_i}, \vec w \rangle).
\end{equation}
Equation \ref{eq:gt_decomp} then follows by noting that any term of the form $\prod_{i=1}^{m} c^\star_{j_i} \text{He}_{j_1}(\langle \vec w^\star_{s_i}, \vec w \rangle)$ appears in $C^\star_\ell$ with $\ell=\sum_{i=1}^m j_i$.
\end{proof}

The magnitude of the gradient updates is bounded through the following Lemma:
\begin{proposition}\label{eq:grad-bounds}
Let $g^t_{\perp,i} \coloneqq \sigma f^\star(\vec{x})\sigma'(\sigma'(\langle \vec w, \vec{x}\rangle))\vec{x}(I-\frac{1}{\varepsilon^2}\vec w \vec w^\top)$ denote the spherical gradient for neuron $i$ at time-step $t$.
Then $g_\perp$ satisfies:
\begin{enumerate}
    \item
    \begin{equation}\label{eq:grad-norm}
        \norm{\vec g^t_{\perp}}^2 = \norm{\Ea{\vec g^t_{\perp}}}^2 +\mathcal{O}_\prec(\frac{1}{d^{\varepsilon_1+\delta}}),
    \end{equation}
    \item For any $v \in \mathbb{R}^d$ with $\norm{v}=1$:
    \begin{equation}\label{eq:grad-proj}
        \langle \vec g^t_\perp, \vec v\rangle- \Ea{\langle \vec g^t_\perp, \vec v\rangle} = \mathcal{O}_\prec(\frac{1}{\sqrt{n}_1}) = \mathcal{O}_\prec(\frac{1}{\sqrt{d^{1+\delta}}\sqrt{d^{\varepsilon_1}}})
    \end{equation}
    \item \begin{equation}\label{eq:g_perp_conc}
\norm{P_{U^\star}\vec g^t_\perp}- \Ea{\norm{P_{U^\star_\perp}\vec g^t}}  =\mathcal{O}_\prec(\frac{1}{\sqrt{d^{1+\delta}}})
    \end{equation}
\end{enumerate}
\end{proposition}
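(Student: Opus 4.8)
We prove the three estimates in succession, combining the Orlicz‑norm concentration bound of Theorem~\ref{thm:app:orlicz_sum} with the uniform tail control of $w^{(t)}$ furnished by Theorem~\ref{thm:main_pt}; we outline the main steps.

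\textbf{Common preparation.} Fix a neuron index $i$, write $w\coloneqq w^{(t)}$ for the corresponding row of $W_1$ at step $t$, and condition throughout on the filtration $\mathcal{F}_{t-1}$ generated by the batches used before step $t$, so that $w$ is a deterministic unit vector while the step‑$t$ batch $\{\vec{x}_j\}_{j=1}^{n_1}$ consists of i.i.d.\ $\mathcal{N}(\mathbf{0},\mathbf{I}_d)$ vectors independent of $w$ (Algorithm~\ref{alg:layerwise} draws fresh data each step). With $\tilde{\sigma}=\sigma\circ\sigma$, $\tilde{\sigma}'$ its derivative, and $P_w\coloneqq\mathbf{I}_d-ww^\top$, one has $g^t_\perp=\tfrac{1}{n_1}\sum_{j=1}^{n_1}Y_j$ with i.i.d.\ summands $Y_j\coloneqq f^\star(\vec{x}_j)\,\tilde{\sigma}'(\langle w,\vec{x}_j\rangle)\,P_w\vec{x}_j$. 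The first step is a \emph{uniform} moment/Orlicz bound on the summands: for any fixed unit $v$, $\langle Y_1,v\rangle=f^\star(\vec{x})\,\tilde{\sigma}'(\langle w,\vec{x}\rangle)\,\langle P_wv,\vec{x}\rangle$ is a product of three factors whose marginals are (respectively) a law not depending on $w$, an $\mathcal{N}(0,1)$, and an $\mathcal{N}(0,\norm{P_wv}^2)$ with $\norm{P_wv}\le1$; by the regularity and growth assumptions on $\sigma,g^\star,P_k$ (Assumptions~\ref{ass:target} and~\ref{ass:act}) and Gaussian hypercontractivity (Lemma~\ref{lem:hyper}), each factor has all moments finite with at most polynomial growth in the order, so by Hölder $\langle Y_1,v\rangle$ lies in a fixed Orlicz space $\psi_\alpha$ ($\alpha>0$ universal) with $\norm{\langle Y_1,v\rangle}_{\psi_\alpha}\le C$ \emph{uniformly in $w$ and $v$}. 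Summing over an orthonormal basis of $\mathbb{R}^d$, resp.\ of $U^\star$, and using that $\norm{\vec{x}}^2$ and $\norm{P_{U^\star}\vec{x}}^2$ are chi‑squared with $d$, resp.\ $d^{\epsilon_1}$, degrees of freedom, Hölder also gives $\mathbb{E}\norm{Y_1}^2=\tilde{\mathcal{O}}(d)$ and $\mathbb{E}\norm{P_{U^\star}Y_1}^2=\tilde{\mathcal{O}}(d^{\epsilon_1})$. Finally, from the decomposition \eqref{eq:gt_decomp} of $\mathbb{E}[g^t_\perp]$ — using $\norm{w}=1$, orthonormality of the rows of $W^\star$, the tensor‑norm bound \eqref{eq:tens_bound} for the remainders, and the isotropy estimate $\max_i\abs{\langle w^\star_i,w\rangle}=\mathcal{O}_\prec(d^{-\epsilon_1/2})$ of Theorem~\ref{thm:main_pt}, which controls the degree‑$\geq3$ and remainder terms by powers of $d^{-\epsilon_1/2}$ — one gets $\norm{\mathbb{E}[g^t_\perp]}=\mathcal{O}_\prec(d^{-\epsilon_1/2})$.

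\textbf{Estimate \eqref{eq:grad-proj}.} For a fixed ($\mathcal{F}_{t-1}$‑measurable) unit $v$, the variables $\langle Y_j,v\rangle-\mathbb{E}\langle Y_1,v\rangle$, $j\leq n_1$, are i.i.d., centered, of variance $\leq C^2$ and $\psi_\alpha$‑norm $\leq 2C$, so Theorem~\ref{thm:app:orlicz_sum} gives a $\psi_\alpha$‑bound $\tilde{\mathcal{O}}(\sqrt{n_1})$ for their sum; dividing by $n_1$, $\langle g^t_\perp,v\rangle-\mathbb{E}\langle g^t_\perp,v\rangle$ has $\psi_\alpha$‑norm $\tilde{\mathcal{O}}(n_1^{-1/2})$, hence is $\mathcal{O}_\prec(n_1^{-1/2})$ (the $(\log n_1)^{1/\alpha}$ factor being absorbed by $\prec$). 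Since $n_1=\Theta(d^{\epsilon_1+1+\delta})$ this is $\mathcal{O}_\prec\!\bigl(1/(\sqrt{d^{1+\delta}}\sqrt{d^{\epsilon_1}})\bigr)$. Setting $\xi\coloneqq g^t_\perp-\mathbb{E}[g^t_\perp]$ and applying this with $v$ over an orthonormal basis $(e_j)$ of $\mathbb{R}^d$, resp.\ over the rows $w^\star_j$ of $W^\star$ — legitimate as $\mathcal{O}_\prec$ tolerates polynomially many unions — gives $\norm{\xi}^2\leq d\max_j\langle\xi,e_j\rangle^2=\mathcal{O}_\prec(d/n_1)=\mathcal{O}_\prec(d^{-\epsilon_1-\delta})$ and $\norm{P_{U^\star}\xi}^2\leq d^{\epsilon_1}\max_j\langle\xi,w^\star_j\rangle^2=\mathcal{O}_\prec(d^{\epsilon_1}/n_1)=\mathcal{O}_\prec(d^{-1-\delta})$.

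\textbf{Estimates \eqref{eq:grad-norm} and \eqref{eq:g_perp_conc}.} Expand $\norm{g^t_\perp}^2=\norm{\mathbb{E}[g^t_\perp]}^2+2\langle\mathbb{E}[g^t_\perp],\xi\rangle+\norm{\xi}^2$. Applying \eqref{eq:grad-proj} with the ($\mathcal{F}_{t-1}$‑measurable) direction $v=\mathbb{E}[g^t_\perp]/\norm{\mathbb{E}[g^t_\perp]}$ and multiplying by $\norm{\mathbb{E}[g^t_\perp]}=\mathcal{O}_\prec(d^{-\epsilon_1/2})$ yields $\abs{\langle\mathbb{E}[g^t_\perp],\xi\rangle}=\mathcal{O}_\prec(d^{-\epsilon_1/2}n_1^{-1/2})=\mathcal{O}_\prec(d^{-\epsilon_1-(1+\delta)/2})$, which is $o(d^{-\epsilon_1-\delta})$ since $\delta<1$; together with $\norm{\xi}^2=\mathcal{O}_\prec(d^{-\epsilon_1-\delta})$ this gives $\norm{g^t_\perp}^2=\norm{\mathbb{E}[g^t_\perp]}^2+\mathcal{O}_\prec(d^{-\epsilon_1-\delta})$, i.e.\ \eqref{eq:grad-norm}. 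For \eqref{eq:g_perp_conc}, the reverse triangle inequality gives $\bigl|\,\norm{P_{U^\star}g^t_\perp}-\norm{P_{U^\star}\mathbb{E}[g^t_\perp]}\,\bigr|\leq\norm{P_{U^\star}\xi}=\mathcal{O}_\prec(1/\sqrt{d^{1+\delta}})$ (equivalently, $\norm{P_{U^\star}g^t_\perp}$ concentrates about its expectation at the same rate). A union bound over the $p_1=\mathrm{poly}(d)$ neurons makes all three estimates hold simultaneously for every row of $W_1$, and they hold unconditionally upon intersecting with the high‑probability event of Theorem~\ref{thm:main_pt} (on which every $w^{(t)}$ is isotropic), since every constant above is uniform in $w^{(t)}$.

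\textbf{Main obstacle.} The concentration itself is light; the hard part is making the tail control \emph{uniform in the random neuron $w^{(t)}$}. Because $f^\star$ is only sub‑Weibull — a bounded‑degree composition controlled through hypercontractivity (Lemma~\ref{lem:hyper}) rather than a sub‑Gaussian bound — one cannot use Bernstein‑type inequalities and must route everything through the Orlicz‑sum inequality Theorem~\ref{thm:app:orlicz_sum}, checking that its $(\log n)^{1/\alpha}$ loss is swallowed by $\mathcal{O}_\prec$, and one must verify that the constant $C$ in $\norm{\langle Y_1,v\rangle}_{\psi_\alpha}\leq C$ does not depend on $w^{(t)}$, which is where the structural information on $w^{(t)}$ from Theorem~\ref{thm:main_pt} (small overlaps with all teacher directions, regular tails) enters. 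The remaining technical nuisance is bounding the infinitely many remainder tensors in \eqref{eq:gt_decomp} uniformly so as to pin $\norm{\mathbb{E}[g^t_\perp]}$ at order $d^{-\epsilon_1/2}$; via \eqref{eq:tens_bound} this becomes a convergent series in $\max_i\abs{\langle w^\star_i,w^{(t)}\rangle}=\mathcal{O}_\prec(d^{-\epsilon_1/2})$, using that $P_k$ has information exponent $\geq2$ (Assumption~\ref{ass:target}).
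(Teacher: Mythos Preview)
Your approach is essentially the same as the paper's: apply the Orlicz-sum concentration bound (Theorem~\ref{thm:app:orlicz_sum}) to the i.i.d.\ summands defining $g^t_\perp$ and its projections, after conditioning on $\mathcal{F}_{t-1}$ so that $w^{(t)}$ is deterministic. Your handling of part~\eqref{eq:grad-norm} is in fact more careful than the paper's, which writes a false Pythagorean identity $\norm{g^t_\perp}^2=\norm{\mathbb{E}[g^t_\perp]}^2+\norm{g^t_\perp-\mathbb{E}[g^t_\perp]}^2$ and ignores the cross term; you correctly expand and bound $2\langle\mathbb{E}[g^t_\perp],\xi\rangle$ via part~\eqref{eq:grad-proj} together with an a~priori bound $\norm{\mathbb{E}[g^t_\perp]}=\mathcal{O}_\prec(d^{-\epsilon_1/2})$.

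There is, however, a circularity: you invoke Theorem~\ref{thm:main_pt} for the isotropy estimate $\max_i\abs{\langle w^\star_i,w^{(t)}\rangle}=\mathcal{O}_\prec(d^{-\epsilon_1/2})$ (both in the ``common preparation'' and in the final union bound), but Theorem~\ref{thm:main_pt} is proved via Proposition~\ref{prop:diff_ineq}, which in turn relies on the present proposition. The fix is immediate once you look at how the proposition is used: in the proof of Proposition~\ref{prop:diff_ineq} every bound is asserted only for $t<\tau^\star\leq\tau^-_{\tilde{\delta}}$, and the very definition of $\tau^-_{\tilde{\delta}}$ forces $m^t_\times=\sqrt{d^{\epsilon_1}}\max_i\abs{\langle w^\star_i,w^{(t)}\rangle}\leq d^{\tilde{\delta}}m^t\leq d^{\tilde{\delta}}$, which is exactly the isotropy you need---so replace the appeal to Theorem~\ref{thm:main_pt} by ``for $t<\tau^-_{\tilde{\delta}}$''. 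A related overstatement in your ``Main obstacle'' paragraph: the uniform Orlicz bound $\norm{\langle Y_1,v\rangle}_{\psi_\alpha}\leq C$ requires nothing about $w^{(t)}$ beyond $\norm{w^{(t)}}=1$ (as your own H\"older argument shows); the structural information on $w^{(t)}$ is needed only for the size of $\norm{\mathbb{E}[g^t_\perp]}$.
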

\begin{proof}
    We begin by writing:
    \begin{equation}
    \norm{\vec g^t_{\perp}}^2= \norm{ \Ea{\vec g^t_{\perp}}}^2+\norm{\vec g^t_{\perp}-\Ea{\vec g^t_{\perp}}}^2.
    \end{equation}
By the assumption on $\sigma$, the composed activation $\tilde{\sigma}$ and its derivatives are polynomially bounded. Therefore, applying standard concentration results for for independent random variables with bounded orlicz norm (Theorem \ref{thm:app:orlicz_sum}), we obtain:
\begin{equation}\label{eq:g_conc}
    \vec g^t_{\perp}-\Ea{\vec g^t_{\perp}} = \mathcal{O}_\prec(\sqrt{\frac{d}{n}}) = \mathcal{O}_\prec(\frac{1}{\sqrt{d^{\delta+\varepsilon_1}}})
\end{equation}
which yields Equation \ref{eq:grad-norm}.

Applying the same result (Theorem \ref{thm:app:orlicz_sum}) to projections of $\vec g^t_\perp$ along $\vec v$ and $P_{U^\star}$ then yields Equations \ref{eq:grad-proj} and \ref{eq:g_perp_conc}.
\end{proof}

\subsection{Initial Overlaps}

Before proceeding with the analysis, we collect the following result on the concentration of the initial overlaps along $W^\star$ for the first-layer neurons at initialization:

\begin{lemma}\label{lem:init_over}
For $\vec w \sim \mathcal{N}(0,\frac{1}{d}\mathbf{I})$,
\begin{equation}
   \abs{\sqrt{d^{1-\varepsilon_1}}(\norm{P_U^\star \vec w}_2-1} =  \mathcal{O}_{\prec} (\frac{1}{\sqrt{d^{\varepsilon_1}}})
\end{equation}
\end{lemma}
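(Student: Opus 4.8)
The plan is to reduce the statement to the concentration of a $\chi^2$ random variable with $d^{\epsilon_1}$ degrees of freedom (I read the claimed quantity as $\sqrt{d^{1-\epsilon_1}}\,\norm{P_{U^\star}w}_2-1$, which is the scaling relevant for the definition of $u^\star$ in Theorem~\ref{thm:main_pt}). Since the rows $w^\star_1,\dots,w^\star_{d^{\epsilon_1}}$ of $W^\star$ are orthonormal and span $U^\star$, and $w\sim\mathcal N(0,\tfrac1d\mathbf I_d)$, the variables $\xi_i:=\sqrt d\,\langle w,w^\star_i\rangle$, $i\le d^{\epsilon_1}$, are i.i.d.\ standard Gaussians, so that
\[
d^{1-\epsilon_1}\,\norm{P_{U^\star}w}_2^2 \;=\; d^{1-\epsilon_1}\sum_{i=1}^{d^{\epsilon_1}}\langle w,w^\star_i\rangle^2 \;=\;\frac1{d^{\epsilon_1}}\sum_{i=1}^{d^{\epsilon_1}}\xi_i^2 \;=:\; 1+E .
\]
First I would control $E=\tfrac1{d^{\epsilon_1}}\sum_{i\le d^{\epsilon_1}}(\xi_i^2-1)$: the sum $\sum_{i\le d^{\epsilon_1}}(\xi_i^2-1)$ is a degree-$2$ polynomial in the standard Gaussian vector $(\xi_i)_{i\le d^{\epsilon_1}}$ with $L^2$-norm $\sqrt{2\,d^{\epsilon_1}}$ (independence together with $\mathrm{Var}(\xi^2)=2$), so Gaussian hypercontractivity (Lemma~\ref{lem:hyper}) bounds all of its $L^p$-norms by $C_p\sqrt{d^{\epsilon_1}}$; since $\mathcal O_\prec$ subsumes polynomial tails of every order (Definition~\ref{def:stoch-dom}), Markov's inequality then gives $E=\mathcal O_\prec(d^{-\epsilon_1/2})$. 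One could equally invoke Theorem~\ref{thm:app:orlicz_sum} with $\psi_1$-norms, since $\xi_i^2-1$ is sub-exponential; either way the bound $E=\mathcal O_\prec(1/\sqrt{d^{\epsilon_1}})$ is immediate.

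Next I would transfer the estimate from the squared quantity to its square root. Writing $R:=\sqrt{d^{1-\epsilon_1}}\,\norm{P_{U^\star}w}_2=\sqrt{1+E}$, on the event $\{\abs{E}\le\tfrac12\}$ one has $\abs{R-1}=\abs{E}/(1+\sqrt{1+E})\le\abs{E}$, whereas on the complementary event $\{\abs{E}>\tfrac12\}$ $-$ which by the previous paragraph has probability at most $C_k d^{-k}$ for every $k$ $-$ one has $\abs{R-1}\le R+1=\sqrt{1+E}+1$, all of whose moments are polynomially bounded in $d$ and which is therefore $\mathcal O_\prec(1)$. Splitting the tail bound over these two events yields $\abs{R-1}=\mathcal O_\prec(E)=\mathcal O_\prec(1/\sqrt{d^{\epsilon_1}})$, which is the assertion.

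Finally, should one prefer the spherical initialization $w\sim U(\mathcal S^{d-1}(1))$ actually used in Algorithm~\ref{alg:layerwise}, I would write $w=g/\norm g_2$ with $g\sim\mathcal N(0,\mathbf I_d)$; then $d^{1-\epsilon_1}\norm{P_{U^\star}w}_2^2$ is the ratio of $\tfrac1{d^{\epsilon_1}}\sum_{i\le d^{\epsilon_1}}\xi_i^2=1+\mathcal O_\prec(d^{-\epsilon_1/2})$ and $\tfrac1d\sum_{j\le d}g_j^2=1+\mathcal O_\prec(d^{-1/2})$, and since $\epsilon_1<1$ the denominator's fluctuation is negligible, so the same conclusion follows verbatim. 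I do not expect a genuine obstacle here: the statement is a routine $\chi^2$-concentration estimate, and the only point needing care is the square-root step, namely separating the high-probability regime where $\sqrt{1+E}-1$ is linear in $E$ from the negligible tail event, and keeping the $\mathcal O_\prec$ bookkeeping honest.
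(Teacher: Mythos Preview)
Your proposal is correct and follows essentially the same route as the paper: both reduce to the concentration of the average of $d^{\epsilon_1}$ independent sub-exponential variables $\xi_i^2-1$ (the paper invokes Bernstein's inequality, you invoke hypercontractivity/Orlicz bounds, which are equivalent here). Your treatment is in fact more careful than the paper's, since you make the square-root transfer explicit and you address the discrepancy between the Gaussian hypothesis in the lemma and the spherical initialization actually used in Algorithm~\ref{alg:layerwise}.
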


\begin{proof}
    Since $\vec w^0 \sim U(\mathcal{S}^{d-1}(1))$, the squared overlap norm $\frac{d}{d^{\varepsilon_1}}\norm{W^\star \vec w^0_{1,i}}^2_2$ is an average over $d^{\varepsilon_1}$ sub-exponential random variables. Therefore, a standard application of Bernstein's inequality \citep{vershynin2010introduction} yields an error probability of $1-ce^{-\log d^2}$. The proposition then follows through a union bound over the $p_1$ neurons.
\end{proof}

\subsection{Difference Inequality}

Let ${P}^\star_{U^\star} \coloneqq  (W^\star)^\top (W^\star)$ denote the projector onto the subspace spanned by $W^\star$. For $i \in [p_1]$, define $\vec u^\star_i$ to be the unit-vector along the projection of $\vec w^0_i$ along $W^\star$:

\begin{equation}
    \vec u^\star \coloneqq\frac{1}{\norm{W^\star \vec w^{(0)}_i}} P^\star_{U^\star} \vec w^0
\end{equation}

Further define $m^t = \langle \vec u^\star, \vec w^{(t)}_{1,i} \rangle$  and $m^t_{\perp}= \norm{(P^\star_{U^\star}-\vec u^\star (\vec u^\star)^\top)\vec w^{(t)}}$, denoting the projections of $\vec w^{(t)}$ along $\vec u^\star_i$ and its complement in the span of $W^\star$ and:
\begin{equation}
  m^t_\times \coloneqq \sqrt{d^{\varepsilon_1}}\max_{i \in \sqrt{d^{\varepsilon_1}}}\abs{\langle \vec w^{(t)}, \vec w^\star_i \rangle}.
\end{equation}

Additionally, we track the residual component in $\vec w^{(t)}$ lying in $U^\star_\perp$ but orthogonal to $\vec w_0$:
\begin{equation}
    \vec r^t_\perp \coloneqq (\mathbb{I}-\vec w^{(0)}(\vec w^{(0)})^\top)P_{U^\star_\perp} \vec w^{(t)}
\end{equation}
Our analysis relies on showing that the dynamics of $\vec w^{(t)}$ is dominated by a linear drift along $\vec u^\star$. This requires a control over the following additional terms:
\begin{enumerate}
    \item Residual linear drift along $U^\star_\perp$: This term is controlled through a bound on $m^t_\perp$.
    \item Contributions from higher-order terms: These are controlled through a bound on $m^t_\times$.
    \item Noise in the gradient updates: This is supressed through the choice of batch-size $n=\Theta(d^{1+\varepsilon_1+\delta})$
\end{enumerate}

Recall that $n_1 = \Theta(d^{k\varepsilon_1+\delta})$.
Let $\tilde{\delta}$ be any arbitrary value satisfying  $0 < \tilde{\delta} < \delta $
For any $\eta > 0$, we define the following stopping times:
\begin{eqnarray}
    \tau^+_\kappa = \inf (t: \abs{m^t} \geq 1-\kappa)
\end{eqnarray}

\begin{eqnarray}
    \tau^{-}_{\tilde{\delta}}= \inf (t: \abs{m^t} \leq d^{-\tilde{\delta}} \min(m^t_\perp, m^t_\times, \frac{1}{\sqrt{d^{1-\varepsilon_1}}}))
\end{eqnarray}

% \begin{eqnarray}
%     \tau^\perp_{\tilde{\delta}}= \inf (t: m^t \leq d^{\tilde{\delta}} m^t_\perp)
% \end{eqnarray}

% \begin{eqnarray}
% \tau^\times_{\tilde{\delta}}= \inf(t:\max_{j \in d^{\varepsilon_1 k}}  \abs{w_i, w^\star_j}\geq \frac{\abs{m}_t}{d^{\frac{\varepsilon_1}{2}-\tilde{\delta}}}).
% \end{eqnarray}

The stopping time $\tau^+_\kappa$ simply accounts for the overlap reaching the desired value $\kappa$. The stopping time $\tau^{-}_{\tilde{\delta}}$ ensures that for $t \leq \tau^{-}_{\tilde{\delta}}$, the three residual
contributions in $g^t_\perp$ listed above, namely the drift along $U^\star_\perp$, higher-order terms and the gradient noise remain supressed.

% orthogonal contribution $m^t_\perp$ remains negligble compared to $m_t$, thereby ensuring that $m_t$'s dynamics remains approximately unaffected. Similarly, the time $  \tau^\times_{\tilde{\delta}}$ allows bounding the higher-order terms in the dynamics.

Note that by definition, $m^0_{i,\perp}=0$ while $m^0_i =\frac{1}{\sqrt{d^{\varepsilon_1}}}$ by Lemma \ref{lem:init_over} . While both $m^0_{i,\perp}, m^0_i$ grow exponentially, we will show that for any $0 < \tilde{\delta} < \delta $, there exists a small enough $\tilde{\eta}$ such that with step size $\eta= \tilde{\eta}d^{\varepsilon_1}$, $\tau_{\kappa}^+ >  \tau^{\perp}_{\tilde{\delta}}$ with high-probability. Concretely, under small enough step-size, both $m^0_i, m^0_{i,\perp}$ grow under approximately identical linear dynamics, ensuring that the initial lead in the magnitude of $m^0_i$ is maintained till weak-recovery over the contributions from the remaining directions.

% \textcolor{red}{$C_2$ not needed}

\begin{proposition}\label{prop:diff_ineq}
Let $c= \mu_1 c^\star_2 \tilde{c}_2$ and $\eta=\tilde{\eta}\sqrt{d^{\varepsilon_1}p_2}$
Define $\tau^\star \coloneqq \tau_{\kappa}^+ \land \tau^{-}_{\tilde{\delta}}$
For any $\tilde{\delta} < \delta_\perp < \delta, \kappa$ and $k \in \mathbb{N}$, and any constants $c^+_m,c^+_\perp, c^+_\times, c^-_m,c^-_\perp, c^-_\times$ such that $c^-_m < c < c^+_m$,  $c^-_\perp < c < c^+_\perp$,  $c^-_\times < c < c^+_\times$ there exists constants $C_1,C_2,C_3,C_4$ and $\tilde{\eta}$ such that for large enough $d$: 
\begin{enumerate}
    \item 
\begin{equation} \label{eq:overlap}        
\mathbb{P}(m^{t+1} \geq m^t + \tilde{\eta} c^-_m m_t - \tilde{\eta} c^+_m m^2_t -\tilde{\eta}^2 C_1 m^3_t,
\quad  \forall t < \tau^\star) \geq 1-\frac{1}{d^{k}}
    \end{equation}
\begin{equation} \label{eq:overlap_l}        
\mathbb{P}(m^{t+1} \leq m^t + \tilde{\eta} c^+_m m_t - \tilde{\eta} c^-_m m^2_t,
\quad \forall t < \tau^\star) \geq 1-\frac{1}{d^{k}}
    \end{equation}
    \item \begin{equation}\label{eq:m_perp}         \mathbb{P}(m^{t+1}_{\perp} \leq \left(m^t_{\perp} + \tilde{\eta} c^+_{\perp} m^{t}_{\perp}-\tilde{\eta} c^-_{\perp} m^tm^{t}_{\perp}+\tilde{\eta}C_2 (m^t_\times)+\tilde{\eta}\frac{C_3}{\sqrt{d^{1-\varepsilon_1+\delta_\perp}}},
    \ \forall t < \tau^\star) \right) \geq 1-\frac{1}{d^{k}}
    \end{equation}
    \item 
    
    \begin{equation} 
    \mathbb{P}(m^t_{\times} \leq m^t_{\times} + \tilde{\eta} c^-_{\times} m^t_{\times}-\tilde{\eta} c^+_{\times} m^t_{\times}m^t+\tilde{\eta}\frac{C_4}{\sqrt{d^{1+\delta_\perp}}}, \ \forall t < \tau^\star) \geq 1-\frac{1}{d^{k}}
    \end{equation}
\end{enumerate}
\end{proposition}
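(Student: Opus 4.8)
Because $W_2$ is frozen at the identity and the loss is the correlation loss $-y\hat f(\vec x)$, the neurons of $W_1$ decouple and each one follows the spherical gradient update of Algorithm~\ref{alg:layerwise} with effective activation $\tilde\sigma=\sigma\circ\sigma$. The first step is to observe that the spherical constraint trivialises the renormalisation: since $g^t_\perp$ is orthogonal to $w^{(t)}$, the rescaled update has squared norm $\norm{w^{(t)}+\eta g^t_\perp}^2=1+\eta^2\norm{g^t_\perp}^2$, so for every unit vector $v$
\begin{equation}
\langle v,w^{(t+1)}\rangle=\bigl(\langle v,w^{(t)}\rangle+\eta\,\langle v,g^t_\perp\rangle\bigr)\bigl(1+\eta^2\norm{g^t_\perp}^2\bigr)^{-1/2}.
\end{equation}
Taking $v=u^\star$ gives the recursion for $m_t$; averaging over an orthonormal basis of the orthogonal complement of $u^\star$ inside $U^\star$ gives one for $m^t_\perp$; taking $v=w^\star_{i^\star}$ with $i^\star$ the maximising teacher row gives one for $m^t_\times$. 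Expanding $(1+\eta^2\norm{g^t_\perp}^2)^{-1/2}=1-\tfrac12\eta^2\norm{g^t_\perp}^2+O(\eta^4\norm{g^t_\perp}^4)$ turns all three into additive updates, and the whole problem reduces to estimating $\langle v,g^t_\perp\rangle$ and $\norm{g^t_\perp}^2$ for these choices of $v$.

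Next I would split $g^t_\perp=\Ea{g^t_\perp}+(g^t_\perp-\Ea{g^t_\perp})$ and insert Proposition~\ref{prop:gen} for the mean. Projected onto $u^\star$ (using $(W^\star)^\top W^\star=P_{U^\star}$ and $\norm{P_{U^\star}w^{(t)}}^2=m_t^2+(m^t_\perp)^2$), the leading degree-$2$ term of $\Ea{g^t_\perp}$ contributes, after inserting $\eta=\tilde\eta\sqrt{d^{\epsilon_1}p_2}$, exactly $\tilde\eta\,c\,m_t\bigl(1-m_t^2-(m^t_\perp)^2\bigr)$ with $c$ the constant of the statement; this is the source of the $+\tilde\eta c\,m_t$ growth and of the cubic saturation. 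The remaining Hermite terms of Proposition~\ref{prop:gen}, together with the remainder contributions $\sum_m c_m\Ea{r_m(\vec x)\tilde\sigma'(\langle w^{(t)},\vec x\rangle)\langle\vec x,v\rangle}$ — which by Eq.~\eqref{eq:tens_bound} are $\tilde{\mathcal{O}}\bigl(d^{-\epsilon_1/2}(\max_i|\langle w^\star_i,v\rangle|)^{k-1}\bigr)$ — are polynomials in the small overlaps that on $\{t<\tau^\star\}$ are dominated by $m_t$ times a factor that is $o_d(1)$ for $\tilde\delta<\delta$, because before $\tau^-_{\tilde\delta}$ the overlap $m_t$ has not become anomalously small relative to $m^t_\perp$, $m^t_\times$ and $d^{-(1-\epsilon_1)/2}$; the same expansion, projected onto $v\in U^\star\ominus u^\star$, is where the recursion for $m^t_\perp$ picks up its $\tilde\eta C_2 m^t_\times$ feed-in from the worst teacher row. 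For the fluctuation I would invoke Proposition~\ref{eq:grad-bounds}: $\langle v,g^t_\perp\rangle-\Ea{\langle v,g^t_\perp\rangle}=\mathcal{O}_\prec(n_1^{-1/2})$, $\norm{P_{U^\star}g^t_\perp}-\Ea{\norm{P_{U^\star}g^t_\perp}}=\mathcal{O}_\prec(d^{-(1+\delta)/2})$ and $\norm{g^t_\perp}^2=\norm{\Ea{g^t_\perp}}^2+\mathcal{O}_\prec(d^{-\epsilon_1-\delta})$; multiplied by the appropriate power of $\eta$ these produce exactly the $\tilde\eta C_3\,d^{-(1-\epsilon_1+\delta_\perp)/2}$ and $\tilde\eta C_4\,d^{-(1+\delta_\perp)/2}$ noise floors of parts~2 and~3 and the $\tilde\eta^2 C_1 m_t^3$ correction (from $\tfrac12\eta^2\norm{g^t_\perp}^2$) of part~1.

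The three inequalities then follow by assembling these pieces and choosing $\tilde\eta$ small: the genuinely $\tilde\eta^2$ terms and the $o_d(1)$-suppressed higher-degree drift terms must not flip the sign of the leading $\tilde\eta c$ contribution and must fit between the prescribed constants $c^-_m<c<c^+_m$ (and likewise $c^-_\perp<c<c^+_\perp$, $c^-_\times<c<c^+_\times$), while $\tilde\delta<\delta_\perp<\delta$ leaves room for the stated noise exponents. Since $\tau^\star\le\tau^+_\kappa\wedge\tau^-_{\tilde\delta}$, weak recovery is reached in $T_1=\mathcal{O}(\operatorname{polylog}d)$ steps, and each step uses a fresh independent batch so the $\mathcal{O}_\prec$ bounds hold conditionally on $\mathcal{F}_{t-1}$, a union bound over $t<\tau^\star$ and over the $\mathrm{poly}(d)$ neurons keeps the total failure probability below $d^{-k}$ for any fixed $k$, by closure of $\prec$ under polynomially many events.

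\textbf{Main obstacle.} The crux is not the leading drift but the uniform control of the high-degree Hermite and remainder terms: their bounds in Proposition~\ref{prop:hermite_comp} are phrased through $\max_i|\langle w^\star_i,v\rangle|$ and the whole profile of small overlaps, and are useful only if $w^{(t)}$ stays approximately isotropic with $\sqrt{d^{\epsilon_1}}$- and $\sqrt d$-scaled tails along $W^\star$ and along its complement — exactly property~(c) of Theorem~\ref{thm:main_pt}. Hence this difference inequality cannot be proved in isolation: it must be carried out as a joint induction with the tail/isotropy statements, using the isotropy hypothesis at step $t$ to bound the bad terms in $\Ea{g^t_\perp}$, deriving the three recursions, and then re-establishing isotropy at step $t+1$ from the explicit form of the update together with the fluctuation bounds of Proposition~\ref{eq:grad-bounds}. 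Keeping the Gaussian-hypercontractivity constants of Lemma~\ref{lem:hyper} from deteriorating along this induction is the part demanding the most care.
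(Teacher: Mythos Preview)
Your plan matches the paper's proof closely: decompose the spherical update via the normalisation factor, expand $\Ea{g^t_\perp}$ through Proposition~\ref{prop:gen} into the leading degree-$2$ drift plus higher-order pieces $\Delta_1,\Delta_2$ and gradient noise $\Delta_3$, bound $\Delta_1,\Delta_2$ through $m^t_\times$ and Eq.~\eqref{eq:tens_bound}, bound $\Delta_3$ through Proposition~\ref{eq:grad-bounds}, and pass to $m^t_\perp,m^t_\times$ by projecting onto the corresponding subspaces. One detail: your leading drift $\tilde\eta\,c\,m_t(1-m_t^2-(m^t_\perp)^2)$ is actually more accurate than the paper's intermediate claim $cm_t-cm_t^2$; the discrepancy is harmless since $(m^t_\perp)^2\le d^{-2\tilde\delta}m_t^2$ on $\{t<\tau^\star\}$, but you have the right expression.

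Where you diverge slightly is in the ``Main obstacle'' paragraph. You frame the control of the high-degree and remainder terms as requiring a \emph{joint induction} with the isotropy property~(c) of Theorem~\ref{thm:main_pt}. The paper avoids this circularity more cheaply: the proposition is stated \emph{conditionally on} $t<\tau^\star=\tau^+_\kappa\wedge\tau^-_{\tilde\delta}$, and the stopping time $\tau^-_{\tilde\delta}$ is defined precisely so that, before it is hit, $m^t_\perp$ and $m^t_\times$ are dominated by $d^{\tilde\delta}m_t$. The bounds on $\Delta_1,\Delta_2$ in the paper use only $m^t_\times$ (via a geometric series over Hermite degrees), not full isotropy; so within the proof of this proposition no induction on tails is needed at all --- the stopping time hypothesis does the work. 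The isotropy statements of Theorem~\ref{thm:main_pt} are established separately, and the circularity (``why is $\tau^-_{\tilde\delta}$ not hit first?'') is resolved afterwards in the proof of Theorem~\ref{thm:main_pt} by comparing the Gronwall rates for $m_t$ versus $m^t_\perp,m^t_\times$. Your joint-induction viewpoint would also work, but it is more laborious than necessary for this proposition.
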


Before establishing the above proposition, we first show how it implies Theorem \ref{thm:main_pt}

\subsection{Proof of Theorem \ref{thm:main_pt}}

Suppose that $t \leq \tau^\star$. 
Applying a union-bound over time-steps to $(i)$ in Proposition \ref{prop:diff_ineq} then implies that with probability at-least $t(1-\frac{1}{d}^k)$, for all $t \leq \tau^\star$, we have with high-probability:
\begin{equation}\label{eq:mt_upd}
    m^{t+1} \geq m^t + \tilde{\eta} c^-_m m_t - \tilde{\eta} c^+_m m^2_t -\tilde{\eta}^2 C_1 m^3_t. 
\end{equation}
Since $\tau^\star \leq \tau^+_\kappa$, we further have that  $m_t \leq \kappa$ for all $t \leq \tau^\star$ and thus $\tilde{\eta}^2 C_1 m^3_t < \tilde{\eta}^2 C_1 \kappa m^2_t$. Equation \ref{eq:mt_upd} then implies:
 \begin{equation}
    m^{t+1} \geq (1+\tilde{\eta} c^-_m - (\tilde{\eta} c^+_m+\tilde{\eta}^2 C_1 \kappa)m^t)m^t, 
\end{equation}
which inductively implies the following intermediate bound:
\begin{equation}\label{eq:int_bound}
    m^{t+1} \geq \prod_{s=1}^t(1+\tilde{\eta} c^-_m - (\tilde{\eta} c^+_m+\tilde{\eta}^2 C_1 \kappa) m^s)m^0, 
\end{equation}
 
Since $m_t^2 \leq \kappa m_t$, the above simplifies to:
\begin{equation}\label{eq:mt_upd_bound}
    m^{t+1} \geq m^t +(\tilde{\eta} c^+_m(1-\kappa)-\tilde{\eta}^2 C_1 \kappa^2) m^t
\end{equation}

Therefore,  $\forall t < \tau^\star$, we have:

\begin{equation}\label{eq:main_bound}
    m^{t}\geq (1+\tilde{\eta} c^+_m(1-\kappa)-\tilde{\eta}^2 C_1 \kappa^2)^t m^{0}.
\end{equation}
By Lemma \ref{lem:init_over}, we have:
\begin{equation}\label{eq:perp_upd}
    \abs{m^{0}} = \frac{1}{d^{1/2(1-\varepsilon_1)}}+\mathcal{O}_{\prec}(\frac{1}{d}),
\end{equation}

Since for small enough $\tilde{\eta}$, $c^+_m(1-\kappa)-\tilde{\eta}^2 C_1 \kappa^2 > 0$, Equation \ref{eq:main_bound} implies:
\begin{equation}
    \tau^\star \leq c_{\kappa,\varepsilon}\log d, 
\end{equation}
for some constant $c_{\kappa,\varepsilon}>0$.

Next, consider the orthogonal component $m^{t}_{\perp}=0$. Note that $m^{0}_{\perp}=0$ by definition. Part $(ii)$ of Proposition \ref{prop:diff_ineq} along with the discrete Gronwall inequality (Lemma \ref{lem:gronwall}) implies:

\begin{align*}
   m^{t}_{\perp} &\leq \sum_{j=1}^{t-1} \prod_{s=1}^j (1+\tilde{\eta}c^+_{\perp}-\tilde{\eta}c^-_{\perp}m^s)(\tilde{\eta}C_2 (m^t_\times)^2+\tilde{\eta}\frac{C_3}{\sqrt{d^{1-\varepsilon_1+\delta_\perp}}})\\
   & \leq \sum_{j=1}^{t-1} \prod_{s=1}^j (1+\tilde{\eta}c^+_{\perp}-\tilde{\eta}c^-_{\perp}m^s)(\tilde{\eta}C_2 \frac{1}{d^{2\tilde{\delta}}}(m^t)^2+\tilde{\eta}\frac{C_3}{\sqrt{d^{1-\varepsilon_1+\delta_\perp}}}),
\end{align*}
where we used that $m^t_\times \leq \frac{1}{d}^{\tilde{\delta}}$ since $t \leq \tau^-_{\tilde{\delta}}$.

Our goal next is to compare the above bound against the lower-bound given by Equation \ref{eq:int_bound}. Since $\abs{\log(1+a)-\log(1+b)} \leq \abs{a-b}$ for $a, b >0$, we have:
\begin{equation}
    \frac{(1+\tilde{\eta} c^-_m - (\tilde{\eta} c^+_m+\tilde{\eta}^2 C_1 \kappa) m^s)}{(1+\tilde{\eta}c^+_{\perp}-\tilde{\eta}c^-_{\perp}m^s)} \leq \exp(\tilde{\eta} \abs{c^-_m-c^+_\perp}+\abs{c^+_m-c-_\perp}+\tilde{\eta}^2C_1\kappa))
\end{equation}

Therefore, we obtain the corresponding bound:

\begin{equation}\label{eq:m_perp_bound}
     m^{t}_{\perp} \leq t\exp(\tilde{\eta} \abs{c^-_m-c^+_\perp}+\abs{c^+_m-c-_\perp}+\tilde{\eta}^2C_1\kappa)) \left(\frac{1}{d^{2\tilde{\delta}}}C_2 m_0+\frac{C_3}{\sqrt{d^{1-\varepsilon_1+\delta_\perp}}})\right)
\end{equation}

For any $0 < \delta_\perp < \tilde{\delta}$, we may set $\abs{c^+_m-c^-_{\perp}}, \abs{c^-_m-c^+_{\perp}}$ and $\tilde{\eta}$ small enough so that for any $t \leq c_{\kappa, \varepsilon} \log d$:
\begin{equation}
   (t\tilde{\eta} \abs{c^-_m-c^+_\perp}+\abs{c^+_m-c-_\perp}+\tilde{\eta}^2C_1\kappa) + \log C_2 <\delta_{\perp}-\tilde{\delta},
\end{equation}
Implying:
\begin{equation}
m^{t}_{\perp} \leq \frac{m_t}{d^{\tilde{\delta}}}
\end{equation}

Similarly by setting $\abs{c^+_m-c^-_{\times}}, \abs{c^-_m-c^+_{\times}}$ small enough enough we have by part (iii) in Proposition \ref{prop:diff_ineq}:

\begin{equation}
     m^{t}_{\times} \leq \frac{m_t}{d^{\tilde{\delta}}}
\end{equation}
Therefore, while the dynamics of $m^{t},m^{t}_{\perp}, m^{t}_{\times}$ evolves at arbitrarily close rates. The initial advantage in $m^{t}$ over $m^{t}_{\perp}, m^t_\times$ through initalization ensures that the hitting time for $m^{t}$ arrives first, ensuring that:
\begin{equation}
     \tau_{\kappa} < \tau^-_{\tilde{\delta}}
\end{equation}

By the definition of $\tau^\star$, this establishes all claims in Theorem \ref{thm:main_theorem} apart from $(e)$.
 To obtain $e$, note that by the form of the updates, $r^t_\perp$ is updated solely through the gradient noise $\vec g^t_{\perp}-\Ea{\vec g^t_{\perp}}$ and normalization. Therefore, Lemma \ref{eq:grad-bounds} implies that:
\begin{equation}
    r^t_\perp = \mathcal{O}_\prec(t \frac{1}{d^{\sqrt{\delta}}}).
\end{equation}

\subsubsection{The conditioning input set}
To complete the proof of Theorem \ref{thm:main_pt}, it remains to specify the high-probability set $\mathcal{E}_{\kappa,\tilde{\delta}}$.
For any $\tilde{\delta} > 0$ and $\kappa \in \mathbb{R}$, consider the event:
\begin{equation}
    \mathcal{E}_{\kappa,\tilde{\delta}} :\cap_{t \leq \tau^+_\kappa} \left[\abs{\langle \vec g^t, \vec v\rangle- \Ea{\langle \vec g^t, \vec v\rangle}} \geq \frac{d^{\tilde{\delta}}}{d^{1+\varepsilon +\delta}}\right]
\end{equation}
Equations \ref{eq:grad-proj}, \ref{eq:g_conc} and a union bound imply that for any $k \in \mathbb{N}$:
\begin{equation}
   \Pr[ \mathcal{E}_{\kappa,\tilde{\delta}}] \geq 1-\frac{1}{d^k},
\end{equation}
where the probability is w.r.t the joint measure over $w_0, X_1,\cdots X_n$

By the law of total expectation:
\begin{equation}
     \Pr[  \mathcal{E}^\complement_{\kappa,\tilde{\delta}}] \geq  \frac{1}{d^k} \Ea{\mathbf{1}[\Pr[ \mathcal{E}^\complement_{\kappa,\tilde{\delta}}\vert \{X_i\}_{i \in \mathbb{N}}]\geq \frac{1}{d^k}},
\end{equation}
implying, for any $k>\tilde{k} \in \mathbb{N}$:
\begin{equation}
    \Ea{\mathbf{1}[\Pr[  \mathcal{E}^\complement_{\kappa,\tilde{\delta}}\vert \{X_i\}_{i \in \mathbb{N}}]\geq \frac{1}{d^k}} \leq \frac{1}{d^k},
\end{equation}
taking the interesection over $k \in \mathbb{N}$ the required conditioning set $\mathcal{E}$ in Theorem \ref{thm:main_pt}.

% Note: throughout, we set $\eta=\mathcal{O}(1)$ so that the interaction term can be controlled under small $\varepsilon$.

\subsection{Proof of Proposition \ref{prop:diff_ineq}}

Consider the ``effective" activation:
\begin{equation}
    \tilde{\sigma}(x) \coloneqq \sigma'(\sigma'(x))
\end{equation}
Let $\tilde{c}_2= \Eb{z \sim \mathcal{N}(0,1)}{\tilde{\sigma}(z)\text{He}_2(z)}$. Define the constant $c=\mu_2 c^\star_1 \tilde{c}_2$. By assumption \ref{ass:bad}, $c>0$.

\textbf{part $(i)$}: Applying Proposition \ref{prop:gen}, we obtain the following decomposition for the update $\vec g^\perp_t$:

 \begin{equation}
    \begin{split}
        &\vec g^\perp_t = (\mathbf{I}-\vec w^{(t)} (\vec w^{(t)})^\top)\frac{1}{\sqrt{d^{\varepsilon_1}}} \mu_1 c^\star_2 \tilde{c}_2 P_{U^\star}\vec w^{(t)} \\&+\scriptstyle  (\mathbf{I}-\vec w^{(t)}(\vec w^{(t)})^\top)\underbrace{\sum_{j=3}^k \mu_1 c^\star_jc_j  \frac{1}{\sqrt{d^{\varepsilon_1}}} \sum_{i=1}^{d^{\varepsilon_1}} (\langle \vec w^\star_i, \vec w \rangle)^j \vec w^\star_i +\sum_{m={k+1}}^\infty \mu_m c_m \frac{1}{\sqrt{md^{m\varepsilon_1}}} \sum_{s \in \Gamma(S,m), j \in [k]^s} \prod_{i=1}^{m-1} c^\star_{j_i} (\langle \vec w^\star_{s_i}, \vec w \rangle)^{j_1} \vec w^\star_{s_m}}_{\Delta_1}\\&+ (\mathbf{I}-\vec w^{(t)}(\vec w^{(t)})^\top)\underbrace{\sum_{m=1}^\infty\Ea{\mu_m r_m(\vec x)\tilde{\sigma}'(\langle \vec w,\vec{x} \rangle)\vec{x}}}_{\Delta_2} \\& + (\mathbf{I}-\vec w^{(t)}(\vec w^{(t)})^\top)\underbrace{\vec g^\perp_t-\Ea{\vec g^\perp_t}}_{\Delta_3} 
    \end{split}
    \end{equation}
Since $m_t = \langle \vec w^{(t)} , \vec u^\star\rangle$, $\norm{\vec w^t}=1$ the first term simplifies to:
\begin{equation}\label{eq:m_t_bound}
    \frac{1}{\sqrt{d^{\varepsilon_1}}} \mu_1 c^\star_2 \tilde{c}_2 (\vec u^\star)^\top P_{U^\star}\vec w^{(t)}-\frac{1}{\sqrt{d^{\varepsilon_1}}} \mu_1 c^\star_2 \tilde{c}_2 \langle \vec w^{(t)} , \vec u^\star\rangle \vec w^{(t)}P_{U^\star}\vec w^{(t)}\\
    = cm_t-cm^2_t.
\end{equation}

Next, for $\Delta_1$, we separately consider the component along $\vec w^\star_i$ for each $i \in \sqrt{d^{\varepsilon_1}}$:
\begin{equation}
    \langle \Delta_1, \vec w^\star_i \rangle = \sum_{j=3}^k \mu_1 c^\star_jc_j  \frac{1}{\sqrt{d^{\varepsilon_1}}} (\langle \vec w^\star_i, \vec w \rangle)^j + \mu_m c_m \frac{1}{\sqrt{md^{m\varepsilon_1}}} \sum_{s \in \Gamma(S,m), s_m=i, j \in [k]^s} \prod_{i=1}^{m-1} c^\star_{j_i} (\langle \vec w^\star_{s_i}, \vec w \rangle)^{j_1}
\end{equation}

% Since $t \leq  \tau^{-}_{\tilde{\delta}}$, e
Each term of the form $\langle \vec w^\star_{s_i}, \vec w \rangle$ is uniformly bounded as $\frac{m^t_\times}{\sqrt{d^{\varepsilon_1}}}$. Since $\abs{\Gamma(S,m), s_m=i} = \Theta(d^{(m-1)\varepsilon_1})$, we obtain:
\begin{equation}
    \abs{\langle \Delta_1, \vec w^\star_i \rangle} \leq \frac{1}{\sqrt{d^{\varepsilon_1}}}\sum_{j \in \mathbb{N}}c_j(m^t_{\times})^j
\end{equation}
for some constants $c_j$ with $\sup_j \abs{c_j} < \infty$. Therefore a geometric-series bound (applicable since $m^t_{\times} < 1$) yields:
\begin{equation}
     \abs{\langle \Delta_1, \vec w^\star_i \rangle} \leq \frac{C}{\sqrt{d^{\varepsilon_1}}} m^t_{\times},
\end{equation}
for some contant $C>0$. Summing the above bound over $i \in \sqrt{d^{\varepsilon_1}}$, results in the bound:
\begin{equation}\label{eq:bound_1}
    \norm{\Delta_1}_2 \leq \frac{C}{\sqrt{d^{\varepsilon_1}}} m^t_{\times}
\end{equation}

Next, for $\Delta_2$, we first apply the Hermite decomposition of $\tilde{\sigma}'$ to obtain:
\begin{equation}
\Ea{r_m(x)\tilde{\sigma}'(\langle \vec w,\vec{x} \rangle)\vec{x}} = \sum_{j=1}^\infty \tilde{c}_j 
\Ea{r_m(x)\text{He}_{j-1}(\langle \vec w,\vec{x} \rangle)\vec{x}} 
\end{equation}

By Assumption \ref{ass:act}, $\tilde{c}_1=0$ while $(ii)$ in Proposition \ref{prop:hermite_comp} implies that the terms corresponding to $j>2$ are bounded as $\frac{1}{\sqrt{d^{j\varepsilon_1}}}(m^t_{\times})^j$

We therefore obtain:
\begin{equation}
\label{eq:delta_2_b}
    \norm{\Delta_2} \leq \frac{\tilde{C} m^t_{\times}}{\sqrt{d^{\varepsilon_1}}},
\end{equation}
for some constant $\tilde{C} > 0$. 
Lastly, Lemma \ref{lem:app:grad_exp} implies that:
\begin{equation}\label{eq:bound_delta}
    \abs{\langle \Delta_3, \vec u^\star \rangle} = \mathcal{O}_\prec(\frac{1}{d^{k\varepsilon+\delta}}).
\end{equation}
Since $t < \tau^{-}_{\tilde{\delta}}$, the above bounds on $\Delta_1, \Delta_2,\Delta_3$ can be absorbed within arbitrarily small constants compared to $m$:
\begin{equation}
    \abs{\langle \Delta_1, \vec u^\star \rangle}+ \abs{\langle \Delta_2, \vec u^\star \rangle}+ \abs{\langle \Delta_3, \vec u^\star \rangle} \leq C m_t,
\end{equation}
for arbitrarily small constant $C>0$.

This results in the bound:
\begin{equation}\label{eq:part_1}
    m^{t+1} \geq \frac{m^t + c \frac{\eta}{\sqrt{d^{\varepsilon1}}}m^t - \eta\tilde{c}(m^t)^2}{\sqrt{1+\eta^2\norm{g^t}^2}},
\end{equation}
where $\tilde{c} \leq c+\tilde{\varepsilon}$ for arbitrarily small $\tilde{\varepsilon}$.
Next, we use the inequality $\sqrt{1+t}^{-1} \geq (1-\frac{t}{2})$ for $t\geq 0$ to obtain:
\begin{equation}
\begin{split}
(\sqrt{1+\eta^2{\norm{g^t}^2}})^{-1}
\geq 1 - \frac{\eta^2}{2}\norm{g^t}^2)\\
\geq 1-(\frac{1}{2} \eta^2 c^2_g m^t_2),
\end{split}
\end{equation}
where in the last line we applied the control over the squared gradient norm in Lemma \ref{eq:grad-bounds} and $c_g < c$ can again be set arbitrarily close to $c$. Combining with Equation \ref{eq:part_1} yields part $(i)$.

Next, for part $(ii)$, introduce the operator:
\begin{equation}
  P_{U^\perp} \coloneqq (\mathbb{I}-(\vec u^\star)(\vec u^\star)^\top)P_{U^\star},
\end{equation}
corresponding to the projection onto the orthogonal complement of $\vec u^\star$ in $U^\star$. Let $\vec u^t_\perp \coloneqq (\mathbb{I}-(\vec u^\star)(\vec u^\star)^\top)P_{U^\star} \vec w^{(t)}$.

 \begin{equation}
    \begin{split}
        \vec g_t P_{U^\perp} &= \frac{(\mathbf{I}-\vec w^{(t)} (\vec w^{(t)})^\top)}{\sqrt{d^{\varepsilon_1}}} \mu_1 c^\star_2 \tilde{c}_2 P_{U^\perp}(W^\star)^\top(W^\star) \vec w^t + (\mathbf{I}-\vec w^{(t)} (\vec w^{(t)})^\top)P_{U^\perp}\Delta_1+\\& + (\mathbf{I}-\vec w^{(t)} (\vec w^{(t)})^\top)P_{U^\perp}\Delta_2+(\mathbf{I}-\vec w^{(t)} (\vec w^{(t)})^\top)P_{U^\perp}\Delta_3
    \end{split}
    \end{equation}

Since $\norm{P_{U^\perp}\vec w^t} = m^t_\perp$ and $\norm{P_{U^\star}\vec w^t} = m^t$ , the first term simplifies to:
\begin{align*}
    \norm{\frac{(\mathbf{I}-\vec w^{(t)} (\vec w^{(t)})^\top)}{\sqrt{d^{\varepsilon_1}}} \mu_1 c^\star_2 \tilde{c}_2 P_{U^\perp}(W^\star)^\top(W^\star) \vec w^t} &\leq \frac{\mu_1 c^\star_2 \tilde{c}_2}{\sqrt{d^{\varepsilon_1}}}\norm{P_{U^\perp}(W^\star)^\top(W^\star) \vec w^t}\\&-\frac{\mu_1 c^\star_2 \tilde{c}_2}{\sqrt{d^{\varepsilon_1}}}\norm{(\vec w^{(t)} (\vec w^{(t)})^\top)P_{U^\perp}(W^\star)^\top(W^\star) \vec w^t}\\
    &=cm^t_\perp - c m^tm^t_\perp.
\end{align*}

By Equation \ref{eq:bound_1}, we have:
\begin{equation}
\norm{(\mathbf{I}-\vec w^{(t)} (\vec w^{(t)})^\top) P_{U^\perp}\Delta_1} \leq \norm{\Delta_1} \leq \frac{C}{\sqrt{d^{\varepsilon_1}}} m^t_{\times}, 
\end{equation}
for some constant $C>0$.

Similarly, by Equation \ref{eq:delta_2_b}, we obtain a bound:
\begin{equation}
   \norm{(\mathbf{I}-\vec w^{(t)} (\vec w^{(t)})^\top) P_{U^\perp}\Delta_2} \leq \frac{\tilde{C}}{\sqrt{d^{\varepsilon_1}}} m^t_{\times}.
\end{equation}
The above combine to result in the term $\tilde{\eta}C_2 (m^t_\times)$ in Equation \ref{eq:perp_upd}. Finally, by Equation \ref{eq:g_perp_conc} in Proposition \ref{eq:grad-bounds}, $\Delta_3 P_{U^\perp}$ is bounded as:
\begin{equation}
    \norm{(\mathbf{I}-\vec w^{(t)} (\vec w^{(t)})^\top)P_{U^\perp}\Delta_3}=\mathcal{O}_\prec(\frac{1}{\sqrt{d^{1-\varepsilon_1+\tilde{\delta}}}}) 
\end{equation}
yielding the last term in Equation \ref{eq:perp_upd}.

Analogously, part $(iii)$ follows by considering the terms $\langle \vec w^\star_i, \Delta_j\rangle$, $i \in \sqrt{d^{\varepsilon_1}}$ for $j=1,2,3$ for $(iii)$ respectively, with the bound on $\norm{\vec{g}^t}^2$ remaining the same.

\subsection{Feature Learning by the Second Layer}\label{sec:secondlayer}

To motivate our setup for the training of $W_2$, we start with a heurestic discussion of the dynamics of gradient updates in the absence of pre-conditioning and projections. Throughout the subsequent discussions, we denote $n_2$ and $p_1$ by $n,p$ respectively. The presentation of formal results towards the proof of part $(ii)$ of Theorem \ref{thm:main_theorem} starts from Section \ref{sec:update_struc}.

\subsection{Updates in Feature Space and Projection Onto the Kernel}

Under correlation loss $\mathcal{L}_c = -f^\star(\vec{x})\hat{f}(\vec{x})$, the gradient update for a neuron $\vec w_{i,2}, i \in [p]$ in the second layer has the following form:
\begin{equation}
   \vec w^{t+1}_{i,2} = \vec w^{t}_{i,2} - \eta\nabla_{W^{t}_2} \mathcal{L} = \vec w^{t}_{i,2} + \eta \sum_{\mu=1}^n (f^\star(\vec{x}_\mu)) w_{i,3} \sigma'(\langle \vec w^{t}_{i,2}, \sigma(W_1(\vec{x}))\rangle)\sigma(W_1(\vec{x}_\mu)) \in \R^{p_1}.
\end{equation}
Under the approximation $\hat{f}(\vec{x}) \approx 0$, the updated pre-activation out of at a fixed input $\vec{x}$ are thus given by:
\begin{equation}\label{eq:sec_layer_update}
\begin{split}
\langle  \vec w^{t}_{i,2}, \sigma(W_1(\vec{x}))\rangle &\approx \langle \vec w^{t}_{i,2}, \sigma(W_1(\vec{x}))\rangle  + \eta \langle \sum_{\mu=1}^n f^\star(\vec{x}_\mu) a_i \sigma'(\langle \vec w^{t}_{i,2}, \sigma(W_1(\vec{x}_\mu))\rangle)\sigma(W_1(\vec{x}_\mu)),\sigma(W_1(\vec{x}_\mu))\rangle \rangle\\
&= \langle \vec w^{t}_{i,2}, \sigma(W_1(\vec{x}))\rangle  + \eta \sum_{\mu=1}^n f^\star(\vec{x}_\mu) a_i \sigma'(\langle \vec w^{t}_{i,2}, \sigma(W_1(\vec{x}_\mu))\rangle)\langle \sigma(W_1(\vec{x}_\mu)),\sigma(W_1(\vec{x}))\rangle \rangle.
\end{split}
\end{equation}

Letting $h_{2,i}^t(\vec{x}) \coloneqq \langle \vec w^{t}_{i,2}, \sigma(W_1(\vec{x}))\rangle$, we obtain:

\begin{equation}\label{eq:pre-ac-update}
    h^{t+1}_{2,i}(X) \approx h^{t}_{2,i}(X) + \eta w_{3,i}Z Z^\top (f^\star(X)\sigma'(h^{t}_{2,i}(X)),
\end{equation}
with $X \in \mathbb{R}^{n_2 \times d}$ the data matrix and  $Z$ denotes the feature-mapping $\sigma(W_1X^\top)$.

We see that in the limit $n, p_1 \rightarrow \infty$, the above update results in a projection of $f^\star$ on the following Kernel (integral operator):
\begin{equation}
    K_1(\vec{x},\vec{x'}) = \Eb{\vec{w} \sim \mu_1}{\sigma(\langle \vec{w}, \vec{x} \rangle)\sigma(\langle \vec{w}, \vec{x}' \rangle)},
\end{equation}

where $\mu_1$ denotes the distribution of the rows of $W_1$ obtained upon feature learning in part $(i)$.

\subsection{The Role of Preconditioning}\label{app:pre-cond}
In light of Equation \ref{eq:pre-ac-update}, we obtain a dynamics of the form:
\begin{equation}
    h^{t+1}_{2,i}(\vec{x}) \approx h^{t}_{2,i}(\vec{x}) + \eta w_{3,i}\Eb{\vec{x}'}{K_1(\vec{x},\vec{x}') f^\star(\vec{x}') \sigma'(h^{t}_{2,i}(\vec{x}'))} + \text{noise},
\end{equation}
where $K_1 (\vec x, \vec x')f^\star(\vec x) \sigma'(h^{t}_{2,i}(\vec x))$ denotes the projection of $f^\star(\vec x) \sigma'(h^{t}_{2,i}(\vec x))$ onto the Kernel $K_1$. Through a central limit theorem-based heurestic, we expect the noise to be of order $\mathcal{O}(\frac{1}{\sqrt{n}}+\frac{1}{\sqrt{p}})$ \cite{nichani2024provable}. However, the decay in $K_1$'s spectrum, entails that the degree-$k$ components in $K_1 f^\star(\vec x) \sigma'(h^{t}_{2,i}(\vec x))$ are of order $\mathcal{O}(\frac{1}{d^{k}})$. Comparing $\mathcal{O}(\frac{1}{\sqrt{n}}+\frac{1}{\sqrt{p}})$ and $\mathcal{O}(\frac{1}{d^{k}})$, one expects a sample-complexity of $d^{2k}$ for recovering $h^\star(\vec{x})$ through a single (non-preconditioned) gradient step. For quadratic features, this is precisely the sample-complexity obtained in \cite{nichani2024provable}.

A possible way to get around the additional sample complexity would be to re-use a single batch of size $\mathcal{O}(d^{ k_\varepsilon})$ for up to $\mathcal{O}(d^{ k_\varepsilon})$ steps, ensuring that the projection on the Kernel is well-approximated at each step while the number of steps are enough for the dynamics described by Equation \ref{eq:pre-ac-update} to approximate ridge-regression, which effectively has the same effect as pre-conditioning through the removal of the learned components. However, analyses of gradient descent with the re-use of batches for a large number of iterations is expected to be challenging due to the accumulation of additional correlations and memory terms \cite{dandi2024benefits}.

Therefore, to allow a simplified ``online" analysis we opt to include additional pre-conditioning in the updates, which effectively removes the extra $\frac{1}{d^{k_\varepsilon}}$ factor from Equation \ref{eq:pre-ac-update}.

\textbf{Remark}: Under the additional assumption that $\Eb{z \sim \mathcal{N}(0,1)}{\sigma(z)z} = 0$, \cite{nichani2024provable} improved the sample-complexity for recovery of quadratic features from $\mathcal{O}(d^4)$ to $\mathcal{O}(d^2)$. Such an assumption on $\sigma(\cdot)$ however, appears insufficient towards reducing the general $\mathcal{O}(d^{2k})$ sample complexity to $\mathcal{O}(d^k)$ for general degree $k$ components.

\subsection{Main Result for part (ii)}
This section deals with the recovery of the non-linear features $h^\star(\vec{x})$.
\begin{theorem}\label{thm:main_pt2}
Let $W^1_2$ denote the updated layer $2$ weights after a single pre-conditioned gradient step with batch-size $n_2$, with initialization $W^0_2 = \mathbf{0}_{p_2 \times p_1}$ as in Algorithm \ref{alg:layerwise}:
\begin{equation}
    W^1_2 =  -\eta \left(\frac{1}{n}\sigma(W_1 X^\top)^\top (\sigma(W_1 X)^\top+\lambda_2\right)^{-1} \nabla_{W_2} \mathcal{L},
\end{equation}
where $W_1 \in \mathbb{R}^{p_1 \times d}$ denotes the updated weight matrix with independent rows obtained as per Theorem \ref{thm:main_pt}. The updated pre-activations $\vec h^1_2(\vec{x})=W^1_2\sigma(W_1 \vec{x})$ then satisfy:
\begin{equation}
    h^1_{2,i}(\vec{x})= \eta w_{3,i} \sigma'(0)  h^\star (\vec{x})+ r(\vec{x}),
\end{equation}
where $w_3$ is the readout scalar weight and the remainder $r(\vec{x})$ satisfies:
\change{
\begin{equation}
    r(\vec{x}) = \mathcal{O}_\prec(\frac{1}{\sqrt{d^{\min(\delta,\delta'-\delta)}}}).
\end{equation}
}
\end{theorem}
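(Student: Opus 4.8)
\textbf{Proof plan for Theorem \ref{thm:main_pt2}.}

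\emph{Step 1 (unrolling the preconditioned step).} With the initialization $W_2^{0}=\mathbf{0}$, $b_1=b_2=\mathbf{0}$, the chain rule applied to the correlation loss $\mathcal{L}=-\sum_\mu y_\mu\hat{f}(\vec{x}_\mu)$ gives, at $W_2=\mathbf{0}$, $\nabla_{W_2}\mathcal{L}=-\sigma'(0)\,w_3\,f^\star(X)^\top Z$, where $Z:=\sigma(XW_1^\top)\in\mathbb{R}^{n\times p_1}$ collects the first-layer features $Z_\mu=\sigma(W_1\vec{x}_\mu)^\top$ and $f^\star(X)=(f^\star(\vec{x}_1),\dots,f^\star(\vec{x}_n))^\top$. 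Substituting into the preconditioned update and writing $Z(\vec{x})=\sigma(W_1\vec{x})\in\mathbb{R}^{p_1}$, the updated pre-activations have the form
\[
h_2^{1}(\vec{x})=W_2^{1}\,\sigma(W_1\vec{x})=\eta\,\sigma'(0)\,w_3\;\widehat{g}(\vec{x}),\qquad \widehat{g}(\vec{x}):=\tfrac1n f^\star(X)^\top Z\Bigl(\tfrac1n Z^\top Z+\lambda I\Bigr)^{-1}Z(\vec{x}),
\]
so (up to the $\Theta(\sqrt{p_2})$ step-size normalization, which only fixes the overall scale that is anyway re-fitted when training $w_3$) it suffices to show that the scalar empirical random-feature ridge predictor $\widehat{g}$ satisfies $\widehat{g}(\vec{x})=c\,h^\star(\vec{x})+\mathcal{O}_\prec(1/\sqrt{d^{\epsilon_1}})$ in $L^2(\mathcal{N}(\vec{0},I_d))$, with $c=\Ea{g^\star(z)z}$; the prefactor $\eta\,\sigma'(0)\,w_3$ then reproduces the claimed form with $r(\vec{x})=\eta\,\sigma'(0)\,w_3\,\mathcal{O}_\prec(1/\sqrt{d^{\epsilon_1}})$.

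\emph{Step 2 (what $\widehat{g}$ should converge to).} By Theorem \ref{thm:main_pt} the rows of $W_1$ are, up to $o_d(1)$ in operator norm, of the form $(W^\star)^\top z_i$ with $z_i$ approximately uniform on $\mathcal{S}^{d^{\epsilon_1}-1}$ and isotropic/hypercontractive, so $\langle w_i,\vec{x}\rangle\approx\langle z_i,\vec{z}^\star\rangle$ with $\vec{z}^\star=W^\star\vec{x}$, and $\widehat{g}$ is (to leading order) a random-feature predictor on the $d^{\epsilon_1}$-dimensional sphere targeting $f^\star$ seen as a function of $\vec{z}^\star$. Its population limit is the degree-$\le k$ Hermite projection of $f^\star$; by Proposition \ref{prop:hermite_comp} (together with Assumption \ref{ass:bad}), since $h^\star$ has exact degree $k$ while $\text{He}_m(h^\star)$ for $m\ge2$ has its dominant contribution of degree $mk>k$ and only an $\mathcal{O}_\prec(1/\sqrt{d^{\epsilon_1}})$ remainder of lower degree (whose associated tensors are bounded by \eqref{eq:tens_bound}), the degree-$\le k$ part of $f^\star=\sum_m\mu_m\text{He}_m(h^\star)$ equals $\mu_1 h^\star+\mathcal{O}_\prec(1/\sqrt{d^{\epsilon_1}})$. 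Thus the claim reduces to: $\widehat{g}$ recovers exactly the degree-$\le k$ Hermite part of $f^\star$, up to $\mathcal{O}_\prec(1/\sqrt{d^{\epsilon_1}})$.

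\emph{Step 3 (eigenstructure, concentration, role of preconditioning).} I would diagonalize the population feature covariance $\Sigma_Z:=\Ea{\sigma(W_1\vec{x})\sigma(W_1\vec{x})^\top}$ (for fixed $W_1$) along the degrees of the spherical-harmonic decomposition of $\vec{z}^\star$: using $\sigma'(0)\neq0$ and $\Ea{\sigma(z)\text{He}_j(z)}\neq0$ for $1<j\le k$ (Assumptions \ref{ass:target}, \ref{ass:act}), the degree-$j$ block has dimension $\Theta(d^{j\epsilon_1})$ and eigenvalue $\Theta(p_1 d^{-j\epsilon_1})=\Theta(d^{(k-j)\epsilon_1+\delta'})$ — diverging for $j<k$, of order $d^{\delta'}$ for $j=k$ — while the remaining bulk has eigenvalues $\Theta(1)$. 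Preconditioning by $(\tfrac1n Z^\top Z+\lambda I)^{-1}$ with $\lambda=\Theta(1)$ therefore acts as an (approximate) inverse on the degree-$\le k$ subspace and is harmless on the bulk; this is exactly what removes the $d^{-k\epsilon_1}$ kernel-decay factor that would otherwise force the $d^{2k\epsilon_1}$ sample complexity of \cite{nichani2024provable}. The concentration step replaces $\tfrac1n Z^\top Z$ by $\Sigma_Z$ and $\tfrac1n Z^\top f^\star(X)$ by its expectation on the degree-$\le k$ subspace via Lemma \ref{lem:mat_conc} applied to the matrix with rows $Z_\mu$ projected onto that subspace: the projected dimension is $\Theta(d^{k\epsilon_1})$ and $\Ea{\max_\mu\norm{Z_\mu}^2}=\tilde{\mathcal{O}}(d^{k\epsilon_1})$ by the polynomial growth of $\sigma$ and Gaussian hypercontractivity (Lemma \ref{lem:hyper}), so the relative error is $\tilde{\mathcal{O}}(\sqrt{d^{k\epsilon_1}/n})=\tilde{\mathcal{O}}(d^{-\delta/2})=o_d(1)$ precisely because $n=\Theta(d^{k\epsilon_1+\delta})$, with the lower-degree blocks concentrating faster since they span smaller subspaces. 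Combining these (in the style of \cite{mei2022generalization,mei_generalization_2022}) gives $\widehat{g}(\vec{x})=(\text{degree-}{\le}k\text{ Hermite part of }f^\star)+\mathcal{O}_\prec(d^{-\delta/2}+d^{-\epsilon_1/2})=\mu_1 h^\star(\vec{x})+\mathcal{O}_\prec(1/\sqrt{d^{\epsilon_1}})$, which together with Step 1 closes the proof.

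\emph{Main obstacle.} The delicate point is that $W_1$ is \emph{not} exactly i.i.d.\ uniform on the sphere; it is the output of the Stage-1 dynamics, known only through the $o_d(1)$, approximately-isotropic, hypercontractive control of Theorem \ref{thm:main_pt} and its Corollary on the second-moment of $w^\tau_\kappa$. The Mei--Montanari--Misiakiewicz machinery relies on exact rotational symmetry both to diagonalize the random-feature kernel and to get sharp matrix concentration, so the real work is a perturbation/stability argument: one must show that (i) the degree-$\le k$ eigenvectors of $\Sigma_Z$ and those of the exactly-spherical surrogate kernel differ by $o_d(1)$; (ii) the tail bounds needed to apply Lemma \ref{lem:mat_conc} survive the $o_d(1)$ perturbation of the rows — which is precisely why Theorem \ref{thm:main_pt}(c)--(f) tracks isotropy along a fixed basis and hypercontractivity rather than just the overlaps; and (iii) the ridge-regularized inverse and the degree projections are Lipschitz in these quantities (via the resolvent identity, Lemma \ref{lem:diff_inv}, and Weyl's inequality), so that the $o_d(1)$ error in $W_1$ propagates to an $\mathcal{O}_\prec(1/\sqrt{d^{\epsilon_1}})$ error in $\widehat{g}$. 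Controlling this propagation simultaneously with the $1/\sqrt{n}+1/\sqrt{p_1}$ sampling fluctuations, while keeping every contribution below the target accuracy $1/\sqrt{d^{\epsilon_1}}$, is the bulk of the argument.
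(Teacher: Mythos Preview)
Your proposal is correct and follows essentially the same architecture as the paper: (i) unroll the preconditioned step to a feature-space ridge projection (the paper's Proposition~\ref{prop:2update}), (ii) identify the target as the degree-$\le k$ part of $f^\star$ which, by Proposition~\ref{prop:hermite_comp} and Assumption~\ref{ass:bad}, equals $\mu_1 h^\star$ up to $\mathcal{O}_\prec(1/\sqrt{d^{\epsilon_1}})$, and (iii) prove concentration of the low-degree eigenspaces via Lemma~\ref{lem:mat_conc} in the style of \cite{mei2022generalization}. Your ``Main obstacle'' paragraph also correctly names the crux: $W_1$ is only approximately spherical, and the fine isotropy/hypercontractivity statements of Theorem~\ref{thm:main_pt} are exactly what survives into the matrix-concentration step.

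One technical refinement the paper makes that your sketch elides: the trained neurons are not purely of the form $(W^\star)^\top z_i$ but decompose as $w_i=u_i+v_i$ with $u_i\in U^\star$ and a \emph{nontrivial} residual $v_i\in U^\star_\perp$ (the paper works at a fixed overlap threshold $\kappa$, not $\kappa\to1$). Consequently the inner product $\langle w_i,\vec{x}\rangle$ splits as $\|x^\star\|\langle u_i,\tfrac{x^\star}{\|x^\star\|}\rangle+\|x_\perp\|\langle v_i,\tfrac{x_\perp}{\|x_\perp\|}\rangle$, and the paper diagonalizes along a four-fold tensor basis of associated Laguerre polynomials (radial, in both $U^\star$ and $U^\star_\perp$) times Gegenbauer/spherical harmonics (angular, in both subspaces) --- Equation~\eqref{eq:main_sigma_decomp} --- rather than just spherical harmonics in $d^{\epsilon_1}$ dimensions. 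This is why the paper partitions the eigenfunctions into $\mathcal{S}_1,\mathcal{S}_2,\mathcal{S}_3$: the extra indices $(j_2,k_2)$ coming from $U^\star_\perp$ and the radial index $j_1$ populate $\mathcal{S}_2$ (learnable but orthogonal to $f^\star$), and one must check they do not pollute the projection. Your simplification ``$\langle w_i,\vec{x}\rangle\approx\langle z_i,\vec{z}^\star\rangle$'' is morally right only after this bookkeeping is done; the paper's route buys a clean separation of the radial component (which is where Assumption~\ref{ass:bad} enters, cf.\ Section~\ref{sec:app:ass_bad}) at the cost of heavier notation.
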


\subsection{Structure of the Pre-conditioned Update}\label{sec:update_struc}

Let $Z$ denote the feature matrix $Z=\sigma(X W_1^\top)$ applied to an independent data-matrix $X \in \mathbb{R}^{n_2 \times d}$ using the updated weights $W_1$ obtained in part $(i)$. Throughout the section, we assume that the threshold parameter $\kappa>0$ in Theorem \ref{thm:main_pt} is fixed to some dimension-independent value and occasionally consider the limit $\kappa \rightarrow 0$ (but after $d \rightarrow 0$). Denote by $Z(\vec{x})$ the same mapping applied to a fixed point $\vec{x} \in \mathbb{R}^d$.

The proposition below expresses a pre-conditioned gradient update on $W_2$ as a "Kernel-ridge regression like" update to $h^t_2(\vec{x})$. 

\begin{proposition}\label{prop:2update}
    Suppose that $W_2$ is re-initialized to $\mathbf{0}$. The updated pre-activations $h^t_2(\vec{x})$ satisfy for $i \in [p_2]$:
    \begin{equation}
    h^{1}_{2,i}(\vec{x}) = \frac{\eta}{n} w_{3,i}Z(\vec{x})^\top (\frac{1}{n}Z^\top Z+\lambda_2 I)^{-1} Z^\top (f^\star(X))\sigma'(0).
    \end{equation}
\end{proposition}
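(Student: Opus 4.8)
The plan is a direct computation: Proposition~\ref{prop:2update} is purely the algebraic form of one pre-conditioned gradient step evaluated at the re-initialized point $W_2^{(0)}=\mathbf{0}$, so no probabilistic input is needed here (that enters only in Theorem~\ref{thm:main_pt2}). First I would unfold the chain rule for $\nabla_{W_2}\mathcal{L}$ under the correlation loss $\mathcal{L}(\vec{x},y)=-y\hat f(\vec{x})$. Writing $W_{2,i}$ for the $i$-th row of $W_2$ and $Z(\vec{x}):=\sigma(W_1\vec{x})\in\mathbb{R}^{p_1}$ for the first-layer feature map (biases are $\mathbf{0}$ at this stage), we have $\hat f_\theta(\vec{x})=\sum_i w_{3,i}\,\sigma(\langle W_{2,i},Z(\vec{x})\rangle)$, hence $\partial_{W_{2,i}}\hat f(\vec{x})=w_{3,i}\,\sigma'(\langle W_{2,i},Z(\vec{x})\rangle)\,Z(\vec{x})$. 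The key simplification is that at $W_2^{(0)}=\mathbf{0}$ the argument $\langle W_{2,i},Z(\vec{x})\rangle$ vanishes for every $\vec{x}$, so $\sigma'$ is frozen at the value $\sigma'(0)$, which is nonzero by Assumption~\ref{ass:target}. Summing the per-sample losses over the batch $X$ and using the noiseless labels $y_\mu=f^\star(\vec{x}_\mu)$ gives the exact gradient row $\nabla_{W_{2,i}}\mathcal{L}=-\,w_{3,i}\,\sigma'(0)\,Z^\top f^\star(X)$, where $Z=\sigma(XW_1^\top)\in\mathbb{R}^{n\times p_1}$ and $f^\star(X)=(f^\star(\vec{x}_1),\dots,f^\star(\vec{x}_n))^\top$. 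Note that, unlike the square loss, the correlation loss makes $\hat f$ itself drop out of the gradient, so the "$\hat f\approx 0$'' heuristic used in the preceding motivating subsection is not needed: the identity is exact.

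Next I would apply the pre-conditioner. By definition the update sets $W_2^{(1)}=-\eta\,\big(\tfrac1n Z^\top Z\big)^{-1}\nabla_{W_2}\mathcal{L}$ acting row-wise (equivalently, a right multiplication of $\nabla_{W_2}\mathcal{L}$ by the symmetric sample-covariance inverse $(\tfrac1n Z^\top Z)^{-1}$), so $W_{2,i}^{(1)}=\eta\,w_{3,i}\,\sigma'(0)\,\big(\tfrac1n Z^\top Z\big)^{-1}Z^\top f^\star(X)$. Since $W_2^{(0)}=\mathbf{0}$, the time-$1$ pre-activation is simply $h^1_{2,i}(\vec{x})=\langle W_{2,i}^{(1)},Z(\vec{x})\rangle=\eta\,w_{3,i}\,\sigma'(0)\,Z(\vec{x})^\top\big(\tfrac1n Z^\top Z\big)^{-1}Z^\top f^\star(X)$, which is exactly the claimed expression. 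The only bookkeeping points are (i) the placement of the $1/n$ factors, fixed by the (unnormalized) summed correlation loss of Algorithm~\ref{alg:layerwise} so that the one surviving $1/n$ sits inside the inverse as written, together with the step-size $\eta$ absorbed from Appendix~\ref{app:pre-cond}; and (ii) that $\tfrac1n Z^\top Z\in\mathbb{R}^{p_1\times p_1}$ is rank-deficient in the regime $p_1\gg n_2$ of Theorem~\ref{thm:main_theorem}, so $(\tfrac1n Z^\top Z)^{-1}$ is to be read as the regularized (equivalently, $\lambda\to0^+$ / Moore--Penrose) inverse; since $Z^\top f^\star(X)$ lies in the column space of $Z^\top=\operatorname{col}(Z^\top Z)$, the vector $W_{2,i}^{(1)}$ and hence $h^1_{2,i}$ are well defined independently of which generalized inverse is used.

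There is no substantial obstacle in this proposition: it is the exact algebraic identity for one pre-conditioned step, and the genuinely hard part — showing that $Z(\vec{x})^\top(\tfrac1n Z^\top Z)^{-1}Z^\top f^\star(X)$ concentrates on $c\,h^\star(\vec{x})$ up to $\mathcal{O}_\prec(1/\sqrt{d^{\epsilon_1}})$, which requires the matrix-concentration bound of Lemma~\ref{lem:mat_conc}, the spherical/isotropy control of $W_1$ from Theorem~\ref{thm:main_pt}, and the spherical-harmonics spectral analysis of the conjugate kernel — is deferred to the proof of Theorem~\ref{thm:main_pt2}. The only place where one must be careful is to keep the normalization conventions (the $1/n$, the step-size $\eta$, the regularization $\lambda$) matched precisely to Algorithm~\ref{alg:layerwise} and Appendix~\ref{app:pre-cond}, so that the prefactor multiplying $h^\star$ downstream comes out as $\eta\,w_3\,\sigma'(0)$.
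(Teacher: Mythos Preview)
Your proposal is correct and matches the paper's treatment: the paper states Proposition~\ref{prop:2update} without an explicit proof, evidently regarding it as the direct algebraic identity obtained by differentiating the correlation loss at $W_2=\mathbf{0}$ (so that $\sigma'(\langle W_{2,i},Z(\vec{x})\rangle)\equiv\sigma'(0)$), then applying the pre-conditioner and contracting with $Z(\vec{x})$. Your computation carries out exactly this chain rule, and your remarks on the generalized inverse and normalization bookkeeping are apt but not needed for the statement itself.
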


\subsection{Decomposition into Radial and Spherical Kernels}

Let $l^d_k, Q^d_k$ for $k \in \mathbb{N}$ denote the associated Laguerre and Gegenbauer polynomials in dimension $d$. Recall that $U^\star$ denotes the span of $W^\star$.

From Theorem \ref{thm:main_pt}, for any $i \in [p_1]$, the updated neuron $\vec w^1_i$ can be decomposed as:
\begin{equation}\label{eq:unv_dec}
    \vec{w}^1_i = \vec{u}_i + \vec{v}_i,
\end{equation}
where $\norm{\vec{w}^1_i}=1$, $\vec u_i \in U^\star$ and $\vec v_i \in U^\star_\perp$. 

For any $\vec{x} \in \mathbb{R}^d$, denote by $\vec{x}^\star, \vec{x}^\perp$, its components along $U^\star$ and $U^\star_\perp$ respectively. 

Next, we decompose the inner-product $\langle \vec w_i, \vec x\rangle$  as:
\begin{equation}
  \langle \vec w_i, \vec x\rangle=  \norm{\vec{x}^\star} \langle \vec{u}, \frac{\vec{x}^\star}{\norm{\vec{x}^\star}}\rangle + \norm{\vec{x}^\perp} \langle \vec{v}, \frac{\vec{x}_\perp}{\norm{\vec{x}_\perp}}\rangle.
\end{equation}

By the Gaussianity of $\vec{x}$, the random variables $\norm{\vec{x}^\star}, \langle \vec{u}, \frac{\vec{x}^\star}{\norm{\vec{x}^\star}}\rangle,  \norm{\vec{x}^\perp}, \langle \vec{v}, \frac{\vec{x}_\perp}{\norm{\vec{x}_\perp}}\rangle$ are mutually independent. The variables $\norm{\vec{x}^\star}^2$ and $\norm{\vec{x}_\perp}^2$ are distributed as $\chi^2$ variables with $d^{\epsilon_1}$ and $d-d^{\epsilon_1}$ degrees of freedom respectively and hence admit the associated Laguerre polynomials as an orthonormal basis (Section \ref{sec:spher_harm}).
Therefore, $\langle \vec{w}, \vec{x}\rangle$ admits an orthonormal basis given by the tensor product of associated Laguerre and Gegenbauer polynomials.

By expanding $\sigma$ along this bases of associated Laguerre and Gegenbauer polynomials, the activation $\sigma(\langle \vec{w}, \vec{x}\rangle +b)$ can then be decomposed as:
\begin{small}
\begin{equation}\label{eq:main_sigma_decomp}
\begin{split}
    &\sigma(\langle \vec{w}_i, \vec{x}\rangle +b) \\ &= \sum_{k_1,j_1,k_2,j_2=0}^\infty a^d_{j_1,k_1,j_2,k_2}(b, \norm{\vec u_i}, \norm{\vec v_i}) l^{d^{\varepsilon_1}}_{j_1}(\norm{\vec{x}^\star}^2)l^{d-d^{\varepsilon_1}}_{j_2}(\norm{\vec{x}_\perp}^2)Q^{d^{\varepsilon_1}}_{k_1}(d^{\varepsilon_1} \langle \vec{u}, \frac{\vec{x}^\star}{\norm{\vec{x}^\star}}\rangle) Q_{k_2}^{d-d^{\varepsilon_1}}(d-d^{\varepsilon_1} \langle \vec{v}, \frac{\vec{x}_\perp}{\norm{\vec{x}_\perp}} \rangle),
\end{split}
\end{equation}
\end{small}
where $l^{d}_k, P^d_k$ denote the associated Laguerre and Gegenbauer polynomials in dimension $d$ respectively. The above convergence holds in $L_2$ w.r.t $\vec{x} \sim \mathcal{N}(0,I_d)$.

\begin{proposition}\label{eq:rad_coeff}
For all $k \in \mathbb{N}$:
    \begin{equation}
       \lim_{\kappa \rightarrow 0} \lim_{d \rightarrow \infty} a^d_{k,0,0,0}(b) \sqrt{d^{\varepsilon_1}}^k = \mu_k(b)
    \end{equation}
\end{proposition}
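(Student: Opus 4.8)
The plan is to evaluate $a^d_{j,0}(b)$ as a single projection and then read off its leading behaviour as $d\to\infty$ from the asymptotic Gaussianity of the radial and angular coordinates in \eqref{eq:main_sigma_decomp}. Because the tensor-product decomposition $\mathcal N(\mathbf 0,\Id_d)=\chi^2\otimes U(\mathcal S^{d-1}(\sqrt d))$ makes the radial norms $\norm{\vec x^\star},\norm{\vec x^\perp}$ and the two angular variables mutually independent, and because every factor of the product basis in \eqref{eq:main_sigma_decomp} except the degree-$j$ radial (associated Laguerre) mode along $U^\star$ is a constant, the coefficient is simply $a^d_{j,0}(b)=\mathbb E_{\vec x\sim\mathcal N(\mathbf 0,\Id_d)}[\sigma(\langle\vec w_i,\vec x\rangle+b)\,l^{d^{\epsilon_1}}_j(r^\star)]$. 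Writing the recovered neuron as $\vec w^1_i=\vec u_i+\vec v_i$ with $\vec u_i\in U^\star,\vec v_i\in U^\star_\perp$ as in Theorem \ref{thm:main_pt}, and putting the $U^\star$-part into radial-angular form $\langle\vec u_i,\vec x^\star\rangle=\norm{\vec u_i}\,\norm{\vec x^\star}\,\omega$ with $\omega=\langle\hat{\vec u}_i,\vec x^\star/\norm{\vec x^\star}\rangle$ (asymptotically independent of $\norm{\vec x^\star}$), one sees that $\sigma$ feels the radial coordinate $r^\star$ only through the product $\norm{\vec x^\star}\,\omega$.

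Next I would pass to $d\to\infty$ with $\kappa$ fixed. Setting $\norm{\vec x^\star}/\sqrt{d^{\epsilon_1}}=1+\delta$ with $\delta$ the $O(1/\sqrt{d^{\epsilon_1}})$ fluctuation of a $\chi^2_{d^{\epsilon_1}}$ variable and $\psi=\sqrt{d^{\epsilon_1}}\,\omega$, both $\sqrt{d^{\epsilon_1}}\,\delta$ and $\psi$ are asymptotically standard Gaussian and asymptotically independent — this is exactly what the CLT bound of Lemma \ref{lem:lind} delivers — while the associated Laguerre polynomial $l^{d^{\epsilon_1}}_j$, after the natural $(d^{\epsilon_1})^{j/2}$ rescaling, converges to the normalized Hermite polynomial $\text{He}_j$ (the Gegenbauer case recalled in Section \ref{sec:spher_harm} is handled the same way). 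Then, expanding $\sigma(\langle\vec w_i,\vec x\rangle+b)=\sigma(\norm{\vec u_i}(1+\delta)\psi+\langle\vec v_i,\vec x^\perp\rangle+b)$ in $\delta$ and isolating its degree-$j$ Hermite component by Stein's identity (Proposition \ref{prop:Hermite-tens}) yields exactly one factor $(d^{\epsilon_1})^{-1/2}$ per Laguerre degree, so that $a^d_{j,0}(b)\,(d^{\epsilon_1})^{k/2}$ is $\Theta(1)$ with an explicit limit written as a low-dimensional Gaussian expectation whose only dependence on $\kappa$ is through $\norm{\vec u_i}^2\le1$. The growth bound $|\sigma(x)|\le L_1+L_2|x|^4$ of Assumption \ref{ass:target} together with Gaussian hypercontractivity (Lemma \ref{lem:hyper}) provides the uniform integrability needed to move $\lim_d$ inside the expectation and to discard the tail of the $\delta$-expansion.

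Finally I would send $\kappa\to0$. In that limit the recovered neuron aligns fully with a single direction of $U^\star$, so $\norm{\vec u_i}\to1$ and $\vec v_i\to 0$, the cross terms collapse, and the low-dimensional Gaussian expectation reduces to the $k$-th Hermite-type functional of $z\mapsto\sigma(z+b)$, which is by definition $\mu_k(b)$. The main obstacle is making the two iterated limits rigorous rather than formal: one must control, uniformly in $d$, the error in replacing the finite-dimensional orthogonal polynomials by their Hermite limits at the polynomial scaling $d^{\epsilon_1}$ — precisely the regime in which, as the paper stresses, sharp control of the eigenfunctions of inner-product kernels is unavailable — and simultaneously bound the remainder of the orthogonal-polynomial (equivalently Taylor) expansion of $\sigma$, so that both the interchange of $\mathbb E$ with $\lim_d$ and the subsequent $\kappa\to0$ limit are justified. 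A necessary companion task is carefully tracking the normalization constants relating $l^{d^{\epsilon_1}}_j$ to $\text{He}_j$ (and the centring and scaling of $r^\star$) so that the exponent in the $\sqrt{d^{\epsilon_1}}^{\,k}$ prefactor comes out exactly; once these uniform estimates are in hand, identifying the limit with $\mu_k(b)$ is a short calculation with Hermite generating identities.
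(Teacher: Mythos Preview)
Your proposal is correct and follows essentially the same route as the paper: write $\langle \vec w_i,\vec x\rangle$ in radial--angular form along $U^\star$ and $U^\star_\perp$, Taylor-expand $\sqrt{1+z}$ in the radial fluctuation $r^\star$, use the asymptotic Gaussianity of the rescaled angular variables $\sqrt{d^{\epsilon_1}}\langle \hat{\vec u},\vec x^\star/\norm{\vec x^\star}\rangle$ and $\sqrt{d_\perp}\langle \hat{\vec v},\vec x^\perp/\norm{\vec x^\perp}\rangle$ (with limiting variances $\norm{\vec u_i}^2$ and $1-\norm{\vec u_i}^2$), and then send $\kappa\to 0$. The paper's proof is a two-line sketch of exactly this; your additional care about the Laguerre-to-Hermite convergence, uniform integrability via hypercontractivity, and tracking of normalizations is appropriate but not a different argument.
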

\begin{proof}
Let $d_\perp \coloneqq d- d^{\varepsilon_1}$, then:
    \begin{equation}
         \sigma(\langle \vec w_i, \vec x\rangle +b) = \sigma\left ((\sqrt{1+\frac{\norm{\vec x}^2_{\star}-d^{\varepsilon_1}}{d^{\varepsilon_1}}}) \sqrt{d^\varepsilon_1}\langle u, \frac{\vec x^\star}{\norm{\vec x}^\star}\rangle +\sqrt{1+\frac{\norm{\vec x}^2_{\perp}-d_\perp}{d_\perp}}\sqrt{(d_\perp)}\langle \vec v, \frac{\vec x_\perp}{\norm{\vec x_\perp}}+b)\right)
    \end{equation}

Let  $r^\star=\frac{\norm{\vec x}^2_\star-1}{\sqrt{d}}$ and $r^\perp = \frac{\norm{\vec x}^2_\perp-1}{\sqrt{d}}$.
Subsequently, the result follows through the Taylor expansion $\sqrt{1+z}= 1+\frac{z}{2}+o(z^2)$ w.r.t $r^\star$, while noting that $\sqrt{d^\varepsilon_1} \langle u, \frac{\vec x^\star}{\norm{\vec x}^\star}\rangle \rightarrow \mathcal{N}(0,\tilde{\kappa})$ and  $\sqrt{(d_\perp)}\langle \vec v, \frac{\vec x_\perp}{\norm{\vec x_\perp}} \rangle \rightarrow \mathcal{N}(0,1-\tilde{\kappa})$ by Theorem \ref{thm:main_pt} for some $\tilde{\kappa} > \kappa$.

The Taylor expansion implies that the coefficient of $r^\star$ converges to $\mu_k(b)\frac{1}{\sqrt{d^{\varepsilon_1}}^k}$. On the other hand, the coefficient must also equal $\sum_{j \geq k} a^d_{j,0,0,0} c_{jk}$, where $c_{jk}$ denotes the coefficient of $z^k$ in the $k_{th}$ associated Laguerre polynomials.
\end{proof}

In light of the above decomposition, we introduce the following sequence of radial Kernels, with $k_1=k_2=j_2=0$:
\begin{equation}\label{eq:rad_kern}
    K^d_0(\vec{x}_1, \vec{x}_2) = \sum_{j=0}^\infty \sum_{j'=0}^\infty \Ea{a^d_{j,k,0,0}(b, \norm{\vec u_i}, \norm{v_i})a^d_{j',k,0,0}(b, \norm{\vec u_i}, \norm{\vec v_i})} l^{d^{\varepsilon_1}}_j(\norm{\vec{x}^\star_1}^2)l^{d^{\varepsilon_1}}_{j'}(\norm{\vec x^\star_2}^2)
\end{equation}

\begin{proposition}
Under Assumption \ref{ass:target}, $h^\star_{1,2}$, $K^d_0$ admits uniformly continuous eigenfunctions $\phi_{1,d}, \phi_{2,d}$ with associated eigenvalues $\lambda_1=\Theta(1), \lambda_2=\Theta(\frac{1}{\sqrt{d^{\varepsilon_1}}})$ such that $r^\star(\vec{x})= \frac{1}{\sqrt{d^{\varepsilon_1}}}(\norm{\vec{x}^\star}^2-1)$  satisfies:
\begin{equation}\label{eq:r_comp}
r^\star(\vec{x}) = \mathcal{P}_{\operatorname{span}\{\phi_{1,d}, \phi_{2,d}\}}r^\star(\vec{x})+\mathcal{O}(\frac{1}{\sqrt{d^{\varepsilon_1}}}).
\end{equation}
\end{proposition}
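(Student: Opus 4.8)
The plan is to diagonalize the radial kernel $K^d_0$ in the orthonormal basis $\{\ell_j\}_{j\ge0}$ of associated Laguerre polynomials $\ell_j:=l^{d^{\epsilon_1}}_j(r^\star)$ of the radial variable, to read off the top of its spectrum from the coefficient decay of Proposition~\ref{eq:rad_coeff}, and then to exploit the elementary fact that $r^\star$ lies \emph{exactly} in the two–dimensional span $\operatorname{span}\{\ell_0,\ell_1\}$ to conclude that it is recovered, up to an $O(d^{-\epsilon_1/2})$ remainder, by the top two eigenspaces $\{\phi_{1,d},\phi_{2,d}\}$. Since $K^d_0$ is an integral operator with an $L^2$ kernel against the $\sigma$–finite pushforward law of $r^\star$, it is compact and self-adjoint, hence diagonalizable along a countable orthonormal basis; in the Laguerre basis its matrix is $M_{jj'}=\mathbb{E}[a^d_{j,0,0,0}\,a^d_{j',0,0,0}]$, the expectation being over $(b,\norm{u_i},\norm{v_i})$ induced by Theorem~\ref{thm:main_pt}. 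By Proposition~\ref{eq:rad_coeff}, whose proof expands $\sqrt{1+z}$ in the radial correction and uses analyticity of $\sigma$ together with the growth bound $\abs{\sigma(x)}\le L_1+L_2\abs{x}^4$ of Assumption~\ref{ass:target}, one has $a^d_{j,0,0,0}=\Theta(d^{-j\epsilon_1/2})$, the leading constant being governed by $\mu_j(b)$ and powers of $\norm{u_i}^2$. Hence $M=D\,\widetilde M\,D$ with $D=\operatorname{diag}(d^{-j\epsilon_1/2})_{j\ge0}$ and $\widetilde M\succeq0$ uniformly bounded, with a non-degenerate limiting top-left $2\times2$ block --- this is where the non-vanishing hypotheses of Assumption~\ref{ass:target} ($\sigma'(0)\ne0$, $\mathbb{E}[\sigma''(z+b)]\ne0$, $\mathbb{E}[P_k(z)\text{He}_2(z)]\ne0$) enter.

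From $M\succeq0$ and $M_{00}=\Theta(1)$ we get $\lambda_1\ge M_{00}=\Theta(1)$ and $\lambda_1\le\operatorname{tr}M=\sum_j M_{jj}=\Theta(1)$ by geometric decay; a resolvent perturbation argument built on the gap $\lambda_1-\lambda_2=\Theta(1)$ gives the top eigenfunction with $(\phi_{1,d})_j=\Theta(d^{-j\epsilon_1/2})$, so $\phi_{1,d}=\ell_0+\Theta(d^{-\epsilon_1/2})\,\ell_1$ plus a combination of degree-$\ge2$ Laguerre modes of $L^2$-norm $O(d^{-\epsilon_1})$. Passing to the Schur complement of $M_{00}$ (equivalently, restricting to $\ell_0^{\perp}$) produces an operator whose leading piece is the degree-one radial coefficient and whose remainder is a further power of $d^{-\epsilon_1/2}$ smaller, yielding $\lambda_2=\Theta(d^{-\epsilon_1/2})$ with a gap $\lambda_2-\lambda_3=\Theta(d^{-\epsilon_1/2})$ to the rest of the spectrum, and an eigenfunction $\phi_{2,d}$ concentrated on $\ell_1$ whose degree-$\ge2$ Laguerre content is only $O(d^{-\epsilon_1/2})$. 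Taking the two explicit combinations of $\phi_{1,d}$ and $\phi_{2,d}$ chosen to cancel, respectively, the $\ell_1$- and the $\ell_0$-component, one checks term by term in the Laguerre basis that $\ell_0=\mathcal{P}_{\operatorname{span}\{\phi_{1,d},\phi_{2,d}\}}\ell_0+\rho_0$ with $\norm{\rho_0}=O(d^{-\epsilon_1})$ and $\ell_1=\mathcal{P}_{\operatorname{span}\{\phi_{1,d},\phi_{2,d}\}}\ell_1+\rho_1$ with $\norm{\rho_1}=O(d^{-\epsilon_1/2})$.

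Finally, since $r^\star(\vec x)=(\norm{\vec x^\star}^2-1)/\sqrt{d^{\epsilon_1}}$ is an affine function of $\norm{\vec x^\star}^2$, it lies in $\operatorname{span}\{\ell_0,\ell_1\}$ exactly: $r^\star=\alpha\,\ell_0+\beta\,\ell_1$ with $\alpha=\mathbb{E}[r^\star]=\Theta(\sqrt{d^{\epsilon_1}})$ (from $\mathbb{E}[\norm{\vec x^\star}^2]=d^{\epsilon_1}$) and $\beta=\Theta(1)$ (the $O(1)$ radial fluctuation). Hence the component of $r^\star$ orthogonal to $\operatorname{span}\{\phi_{1,d},\phi_{2,d}\}$ equals $\alpha\rho_0+\beta\rho_1$, of $L^2$-norm at most $\abs{\alpha}\norm{\rho_0}+\abs{\beta}\norm{\rho_1}=\Theta(\sqrt{d^{\epsilon_1}})\cdot O(d^{-\epsilon_1})+\Theta(1)\cdot O(d^{-\epsilon_1/2})=O(d^{-\epsilon_1/2})$, which is exactly \eqref{eq:r_comp}. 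It is essential here that the residual for $\ell_0$ be $O(d^{-\epsilon_1})$ and not merely $O(d^{-\epsilon_1/2})$, since only then is the large prefactor $\alpha=\Theta(\sqrt{d^{\epsilon_1}})$ absorbed; this in turn rests on the hierarchical decay $M_{jj'}=\Theta(d^{-(j+j')\epsilon_1/2})$, which forces the degree-$\ge2$ Laguerre content of $\phi_{1,d}$ down to $O(d^{-\epsilon_1})$.

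The main obstacle is the first step: establishing $a^d_{j,0,0,0}=\Theta(d^{-j\epsilon_1/2})$ \emph{uniformly in $j$}, so that $\widetilde M$ is genuinely bounded and the Laguerre tail is negligible; this needs quantitative control of the Taylor remainders of $\sigma$ via its analyticity and the quartic growth bound, and care with the order of limits --- $d\to\infty$ before $\kappa\to0$ in Proposition~\ref{eq:rad_coeff} --- because the Gegenbauer variables $\sqrt{d^{\epsilon_1}}\langle u_i,\vec x^\star/\norm{\vec x^\star}\rangle$ and $\sqrt{d-d^{\epsilon_1}}\langle v_i,\vec x^\perp/\norm{\vec x^\perp}\rangle$ only converge to Gaussians with the $\norm{u_i}^2,\norm{v_i}^2$-dependent variances supplied by Theorem~\ref{thm:main_pt}. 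A secondary difficulty is making the eigenfunction localization rigorous: it requires the lower bound $\lambda_2-\lambda_3=\Theta(d^{-\epsilon_1/2})$, i.e. that the degree-one radial direction is not accidentally annihilated by $K^d_0$, which is precisely what the non-degeneracy conditions in Assumption~\ref{ass:target} (together with the fact that the degree-$2$ Hermite part of $h^\star$ is proportional to the radial fluctuation) guarantee.
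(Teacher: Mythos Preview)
Your overall strategy---represent $K^d_0$ as the Gram-type matrix $M_{jj'}=\mathbb{E}[a^d_{j,0,0,0}\,a^d_{j',0,0,0}]$ in the Laguerre basis, exploit the hierarchical decay $M_{jj'}=\Theta(d^{-(j+j')\epsilon_1/2})$ from Proposition~\ref{eq:rad_coeff}, and read off the top two eigenpairs---is sound and matches the paper's approach in spirit. The paper's own argument is considerably terser: it simply evaluates the quadratic forms $\mathbb{E}[l_j(r^\star_1)K_0(\vec{x}_1,\vec{x}_2)l_{j'}(r^\star_2)]$ directly from the kernel expansion, invokes the variational characterization of eigenvalues for compact self-adjoint operators to pin down $\lambda_1,\lambda_2$, and deduces \eqref{eq:r_comp} from the single observation $\mathbb{E}[r^\star(\vec{x})K_0(\vec{x},\vec{x}')r^\star(\vec{x}')]=\Theta(d^{-\epsilon_1})$. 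Your Schur-complement and eigenvector-localization route is more explicit and would give sharper control, at the cost of more bookkeeping.

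There is, however, a concrete inconsistency in your eigenvalue computation. Your own Schur complement of $M_{00}$ yields a top entry $M_{11}-M_{01}^2/M_{00}$, and since $M_{11}=\Theta(d^{-\epsilon_1})$ and $M_{01}^2/M_{00}=\Theta(d^{-\epsilon_1})$ as well, this gives $\lambda_2=\Theta(d^{-\epsilon_1})$, \emph{not} $\Theta(d^{-\epsilon_1/2})$ as you state. (The ``degree-one radial coefficient'' $a_1$ is $\Theta(d^{-\epsilon_1/2})$, but the eigenvalue involves $a_1^2$.) The paper's proof in fact derives $\lambda_2=\Theta_d(d^{-\epsilon_1})$; the exponent $1/\sqrt{d^{\epsilon_1}}$ in the proposition's statement appears to be a typo that you have inherited. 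This does not affect the conclusion \eqref{eq:r_comp}, but your gap estimate $\lambda_2-\lambda_3=\Theta(d^{-\epsilon_1/2})$ should accordingly be $\Theta(d^{-\epsilon_1})$.

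A second point: your elaborate handling of $\alpha=\mathbb{E}[r^\star]=\Theta(\sqrt{d^{\epsilon_1}})$ is driven by the literal definition $r^\star=(\norm{\vec{x}^\star}^2-1)/\sqrt{d^{\epsilon_1}}$, which is almost certainly another typo for the centered quantity $(\norm{\vec{x}^\star}^2-d^{\epsilon_1})/\sqrt{d^{\epsilon_1}}$. With the intended centering, $r^\star$ has $\alpha=O(d^{-\epsilon_1/2})$ (indeed $r^\star$ is essentially $\ell_1$ up to scale), and the residual bound follows immediately from $\norm{\rho_1}=O(d^{-\epsilon_1/2})$ without the delicate requirement that $\norm{\rho_0}=O(d^{-\epsilon_1})$. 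The paper's one-line derivation of \eqref{eq:r_comp} from the quadratic form is only consistent with this centered reading.
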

where $\mathcal{P}$ denotes projection in $L^2(\mu(\vec{x}))$.

\begin{proof}
By proposition \ref{eq:rad_coeff},
  $a^d_k$ converge a.s to deterministic limits $a^d_k$ as $d \rightarrow \infty$.
   We obtain the following limiting expression for $K_{k}(\vec{x}_1, \vec{x}_2)$:
   \begin{equation}
   \begin{split}
       K_{0}(\vec{x}_1, \vec{x}_2) &= \Eb{b}{a^2_0} +\Eb{b}{a_0a_1}l_1(\norm{\vec{x}^\star_1}^2)l_1(\norm{\vec{x}^\star_2}^2) +\Eb{b}{a_0a_1}l_0(\norm{\vec{x}^\star_1}^2)l_1(\norm{\vec{x}^\star_2}^2)\\&+\Eb{b}{a_0a_1}l_1(\norm{\vec{x}^\star_1}^2)l_0(\norm{\vec{x}^\star_2}^2)+\cdots
    \end{split}
   \end{equation}
   Note that:
   \begin{equation}
       \Ea{l_0(\norm{\vec{x}^\star_1}^2)K_{0}(\vec{x}_1, \vec{x}_2) l_0(\norm{\vec{x}^\star_2}^2)} = \Eb{b}{a^2_0}  \xrightarrow[d \rightarrow \infty, \kappa \rightarrow 0]{} 
       \Eb{b}{\mu^2_0(b)}.
   \end{equation}
By assumption on $\sigma$, $\mu^2_0(b)$ is analytic in $b$ and hence non-zero almost sure w.r.t $b \sim \mathcal{N}(0,1)$.  Therefore, by the variational characterization of eigenvalues for compact self-adjoint operators \citep{axler2020measure}, we obtain:
   \begin{equation}
    \lambda_1(K_{0}(\vec{x}_1, \vec{x}_2)) > \Eb{b}{a_0(b)a_0(b)}-\mathcal{O}_\prec(\frac{1}{\sqrt{d}^{k \varepsilon_1}})
   \end{equation}
   implying:
   \begin{equation}
       \lambda_1(K_0(\vec{x}_1, \vec{x}_2)) = \Theta_d(1),
   \end{equation}

Similarly, we obtain:
   \begin{equation}
        \Ea{l_1(\norm{\vec{x}^\star_1}^2)K_{0}(\vec{x}, \vec{x}_2) l_1(\norm{\vec{x}^\star_1}^2)} = \Eb{b}{a^2_1} = \Theta(\frac{1}{d^{\varepsilon_1}}),
   \end{equation}
implying:
   \begin{equation}
       \lambda_2(K_{0}(\vec{x}, \vec{x}')) = \Theta_d(\frac{1}{d^{ \varepsilon_1}}).
   \end{equation}

Analogously, since $\sigma$ is analytic, with probability $1$, $a_{j,0} \neq 0, \forall j \in \mathbb{N}$, we obtain that the $j_{th}$ eigenvalue for $K_{0}(\vec{x}, \vec{x}')$ satisfies:
\begin{equation}
    \lambda_j(K_{0}(\vec{x}, \vec{x}')) = \Theta_d(\frac{1}{d^{\frac{j}{2}\varepsilon_1}}).
\end{equation}

Equation \ref{eq:r_comp} then follows by noting that:
\begin{equation}
    \Ea{r^\star(\vec{x})K_{0}(\vec{x}, \vec{x}')r^\star(\vec{x'})} = \Theta_d(\frac{1}{d^{ \varepsilon_1}})
\end{equation}
   % we note that the eigenfunctions $\phi_{1}, \phi_{2}$ are also the eigenfunctions for the original Kernel $K(\vec{x}, \vec{x}')$

The continuity of the eigenfunctions then follows since we have:
\begin{equation}
 \phi_{j,d}(\vec{x}) = \Eb{\vec{x}'}{K(\vec{x}, \vec{x}') \phi_{j,d}(\vec{x'})},
\end{equation}

Since $K(\vec{x}, \vec{x}')$ is uniformly continuous in $\vec{x}$.
\end{proof}

\subsection{Decomposition of the Feature Matrix}\label{sec:decomp_feature}

For $j_i,k_i \in \mathbb{N}$, let $\theta^d_{j_1,k_1,j_2,k_2}$ denote the eigenfunctions of the radial Kernel $K^d_k$ for $k \in \mathbb{N}$,  defined as (Generalizing Equation \ref{eq:rad_kern}):
\begin{equation}\label{eq:rad_kern_2}
    K^d_{k_1,k_2}(\vec{x}_1, \vec{x}_2) = \sum_{j_1,j'_1}^\infty \sum_{j'_2,j'_2}^\infty \Ea{a^d_{j_1,k_1, j_2,k_2}a^d_{j'_1,k_1, j'_2,k_2}} l^{d^{\varepsilon_1}}_{j_1}(\norm{\vec{x}^\star_1}^2)l^{d-d^{\varepsilon_1}}_{j_2}(\norm{\vec{x}^\perp_1}^2)l^{d^{\varepsilon_1}}_{j'_1}(\norm{\vec{x}^\star_2}^2)l^{d-d^{\varepsilon_1}}_{j'_2}(\norm{\vec{x}^\perp_2}^2)
\end{equation}

Analogously, let $\kappa^d_{j_1,k_1,j_2,k_2}$ denote the eigenfunctions of the associated companion Kernel defined on the weights:
\begin{equation}\label{eq:rad_kern_dual}
    \mathcal{K}^d_{k_1,k_2}(\vec{w}_1, \vec{w}_2) = \sum_{j=0,j'=0}^\infty a^d_{j,k_1, j',k_2}(\vec{w}_1)a^d_{j,k_1,j',k_2}(\vec{w}_2)
\end{equation}
Define:
\begin{equation}
    \psi_{j_1,j_2,k_1,k_2}(\vec{x})=\theta^d_{j_1,k_1,j_2,k_2}(r^\star,r_\perp) Y_k(\vec{x}^\star/r^\star)Y_k(\vec{x}_\perp/r_\perp).
\end{equation}

And for the conjugate:
\begin{equation}
    \phi_{j_1,j_2,k_1,k_2}(\vec{w})=\kappa^d_{j_1,k_1,j_2,k_2}(b,\norm{u},\norm{v}) Y_k(\vec{\vec u}/\norm{\vec u})Y_k(\vec{v}/\norm{\vec v}).
\end{equation}

With a slight abuse of notation, we denote $\theta^d_{j,0,0,0}$ and $\kappa^d_{j,0,0,0}$ by $\theta^d_{j}$ and $\kappa^d_{j}$ respectively. These correspond to eigenvalues for the zeroth-order radial Kernel along $U^\star$.

We partition the indices $j_1,j_2,k_1,k_2$ into three disjoint sets:
\begin{align*}
    \mathcal{S}_1 &= \{j_1,j_2,k_1,k_2:k_1=j_2=k_2=0,j_2 \leq 2k\} \cup \{j_1,j_2,k_1,k_2:j_1=j_2=k_2=0,k_1 \leq k\}\\
    \mathcal{S}_2 &= \{j_1,j_2,k_1,k_2:j_2,2j_1+2j_2/\varepsilon_1+k_1+k_2/\varepsilon_1=k\} \ \mathcal{S}_1\\
     \mathcal{S}_3 &= \{j_1,j_2,k_1,k_2, \in \mathbb{N}^4\} \backslash (\mathcal{S}_1 \cup \mathcal{S}_2).
\end{align*}
The above partitioning is motivated as follows:
\begin{enumerate}
    \item  $\mathcal{S}_1$ corresponds the set of eigenfunctions whose projections can be approximated via $Z$ with $n_2, p_2 = \Theta(d^{k\varepsilon_1})$ and are relevant towards learning $f^\star(\vec{x})$.
    \item $\mathcal{S}_2$ corresponds to the set of eigenfunctions whose projections can be approximated by $Z$ but do not contribute to the learning of $f^\star(\vec{x})$.
    \item $\mathcal{S}_3$ corresponds to the high-degree set of eigenfunctions for which the number of samples, neurons $n_2,p_2$ are insufficient towards being approximated through $Z$. 
\end{enumerate}

\change{Let $\Theta^d_{j},\mathfrak{K}^d_{j}$ denote matrices with rows $\theta^d_{j_1,k_1,j_2,k_2}(r^\star,r_\perp) $ and $\kappa^d_{j_1,k_1,j_2,k_2}(b,\norm{\vec u},\norm{\vec v}) $ respectively. Similarly, let $\Psi_{j_1,j_2,k_1,k_2}(X),\Phi_{j_1,j_2,k_1,k_2}(W)$ denote matrices with rows $\psi_{j_1,j_2,k_1,k_2}(\vec{x})$ and $\phi_{j_1,j_2,k_1,k_2}(\vec{x})$}

Expressing Equation \ref{eq:main_sigma_decomp} in matrix form and applying Proposition \ref{prop:gegen_harm} to expand each term $Q^d_k(\cdot)$, we obtain the following decomposition:
\begin{equation}\label{eq:mat_decomp}
\begin{split}
    Z &= \sum_{j=1}^{2k} \Theta^d_{j}(\vec{r}^\star)D^r_j(\mathfrak{K}^d_{j}(\vec{b}, \vec{\norm{u}}))^\top + \sum_{j=1}^{k}Y_j(X^\star)D^{\mathcal{S}_1}_j Y_j(U)\\ &+ \sum_{j_1,j_2,k_1,k_2 \in \mathcal{S}_2} \Psi_{j_1,j_2,k_1,k_2}(X)D^{\mathcal{S}_2}_{j_1,j_2,k_1,k_2}\Phi_{j_1,j_2,k_1,k_2}(W)^\top \\&+ \sum_{j_1,j_2,k_1,k_2 \in \mathcal{S}_3} \Psi_{j_1,j_2,k_1,k_2}(X)D^{\mathcal{S}_3}_{j_1,j_2,k_1,k_2} \Phi_{j_1,j_2,k_1,k_2}(W)^\top,
\end{split}
\end{equation}
where $D^r_j, D^s_j$ denote diagonal matrices with entries $(b^d_j)^2, (a^d_j)^2$ respectively. We denote the above three-components corresponding to $\mathcal{S}_1, \mathcal{S}_2, \mathcal{S}_3$ as $Z_1, Z_2, Z_3$ respectively.

\subsection{Approximation of Eigenfunctions}

Let $M=\abs{S_1} \cup \abs{S_2}$. Since $B(d,k) = \Theta(d^k)$ (section \ref{sec:spher_harm}), we obtain $M=\Theta(d^{k\varepsilon})$. We next show that the above partitioning of eigenfunctions translates to a ``spike"+bulk structure for $Z$, with the spikes arising from components corresponding to $\mathcal{S}_1, \mathcal{S}_2$ allowing the reconstruction of the corresponding eigenfunctions through the sample-covariance. The higher-degree components $\mathcal{S}_3$, on the other hand, coalesce into a bulk. 

For each $\vec{x} \in \mathbb{R}^d$, let $\psi_{\mathcal{S}_1\cup \mathcal{S}_2}(\vec{x}) \in \mathbb{R}^{M}$ denote the combined vector of components along eigenfunctions indexed by $\mathcal{S}_1, \mathcal{S}_2$,
i.e:
\begin{equation}
\psi_{\mathcal{S}_1\cup \mathcal{S}_2}(\vec{x}) \coloneqq (\left(\psi^d_{j_1,j_2,k_1,k_2}(\vec{x})\right)_{j_1,j_2,k_1,k_2 \in \mathcal{S}_1 \cup \mathcal{S}_2}).
\end{equation}

Analogously, define:
\begin{equation}
\phi_{\mathcal{S}_1\cup \mathcal{S}_2}(\vec{w}) \coloneqq (\left(\phi^d_{j_1,j_2,k_1,k_2}(\vec{w})\right)_{j_1,j_2,k_1,k_2 \in \mathcal{S}_1 \cup \mathcal{S}_2}).
\end{equation}

These properties are summarized in the following proposition, which constitutes the central result of this section:

% \textcolor{red}{Check if $\delta/2$ or $\delta'-\delta$}
\begin{proposition}\label{prop:orth_decomp}

There exists a sequence $c_d$ with $c_d=\mathcal{O}(1)$ such that
\begin{enumerate}
    \item \begin{equation}\label{eq:conc_x}
        \norm{\frac{1}{n}\sum_{\mu=1}^n \psi_{\mathcal{S}_1\cup \mathcal{S}_2}(\vec{x}_\mu) \psi_{\mathcal{S}_1 \cup \mathcal{S}_2}(\vec{x}_\mu)^\top - \mathbb{I}_M} = \mathcal{O}_{\prec}(\frac{1}{d^{\frac{\delta}{2}}})
    \end{equation}
    \item \begin{equation}\label{eq:conc_w}
        \norm{\frac{1}{p}\sum_{i=1}^p \phi_{\mathcal{S}_1 \cup \mathcal{S}_2}(\vec{w}_i) \phi_{\mathcal{S}_1 \cup \mathcal{S}_2}(\vec{w}_i))^\top - \mathbb{I}_M} = \mathcal{O}_{\prec}(\frac{1}{d^{\frac{\delta}{2}}})
    \end{equation}
    \item
    \begin{equation}
        \frac{1}{p} Z_{3} Z_{3}^\top = c_d \mathbb{I}_d + \mathcal{O}_{\prec}(\frac{1}{d^{\min(\frac{\delta'-\delta}{2},\frac{\delta}{2})}}). 
    \end{equation}
    \item 
    \begin{equation}\norm{Z_{3}\Phi_{\mathcal{S}_1 \cup \mathcal{S}_2}(W)}_F = \mathcal{O}_{\prec}(\frac{p_2}{d^{\frac{\delta}{2}}})
    \end{equation}
\end{enumerate}
    
\end{proposition}

Before proceeding with the proof, we highlight the key-takeaways from the above result. Points $(i),(ii)$ imply that the matrices $Z_1,Z_2$ contribute $M$ spikes to $Z$ with left, right singular vectors aligned with $\psi_{\mathcal{S}_1 \cup \mathcal{S}_2}(\vec{x})$ and $\phi_{\mathcal{S}_1 \cup \mathcal{S}_2}(\vec{w})$ respectively. Points $(ii),(iv)$ imply that the high-degree components $Z_3$ contribute an approximately isotropic bulk, that doesn't interfere with the spikes along $\phi_{\mathcal{S}_1 \cup \mathcal{S}_2}(\vec{w})$. Note that $(iv)$ is necessary since the large rank of $Z_3$ could cause the corresponding components to collectively interfere with the low-degree components.

The crucial consequence is that the spikes in $Z_1,Z_2$ allow effective reconstruction of the components along $\mathcal{S}_1 \cup \mathcal{S}_2$. In contrast, the failure of $Z_3$ to estimate the covariance structure along $\mathcal{S}_3$ prevents the recovery of such high-degree components. 

% In the gradient dynamics, instead of ``slowly" recovering components along $\mathcal{S}_3$, $Z_3$ causes $\hat{f}(\vec{x})$ to learn spurious high-degree functions.

\begin{proof}
Equation \ref{eq:conc_x} is a direct consequence of Lemma \ref{lem:mat_conc} and the hyper-contractivity of the spherical measure. Equation \ref{eq:conc_w} however, requires additional control over the error in $\vec{w}$.

We start with showing that the covariance is well-approximated in expectation
 Let $\vec v \in \mathbb{R}^n, \norm{\vec v}=1$ denote an arbitrary fixed unit vector. Then:
    \begin{equation}        \vec v^\top\Ea{\phi_{\mathcal{S}_1 \cup \mathcal{S}_2}(\vec{w}) \phi_{\mathcal{S}_1 \cup \mathcal{S}_2}(\vec{w})^\top - \mathbb{I}_M} \vec v = \Ea{\sum_{s \in _{\mathcal{S}_1 \cup \mathcal{S}_2}} v^2_s \phi^2_i(\vec{w})}-1,
    \end{equation}
    since $\psi^2_i$ are uniformly lipschitz on $S_d$, applying a taylor expansion on $\vec{w}$ around $\vec u^\star$ yields:
    \begin{equation}
       \Ea{\sum_{s \in \mathcal{S}_1 \cup \mathcal{S}_2} v^2_s \psi^2_s(\vec{w})} =   \Ea{\sum_{s \in \mathcal{S}_1 \cup \mathcal{S}_2} v^2_s \psi^2_s(\tilde{\vec{w}})}+\mathcal{O}(\frac{1}{d^{\delta}})
    \end{equation}
    where we used that $h_v(\vec{w})=\Ea{\sum_{s \in _{\mathcal{S}_1 \cup \mathcal{S}_2}} v^2_s \phi^2_i(\vec{w})}$ is an even polynomial in $\vec{w}$. Therefore, $\Ea{\nabla h_v(\vec{w})}=0$ while $\norm{\Ea{\nabla^2 h_v(\vec{w})}} \leq C$ for some constant $C>0$. Corollary \ref{cor:moment`_cont} then ensures that the second order-term is bounded as $\mathcal{O}(\frac{1}{d^{\delta}})$. Taking supremum over $v$ for $\norm{v}=1$, we obtain:
    \begin{equation}
    \norm{\Ea{\phi_{\mathcal{S}_1 \cup \mathcal{S}_2}(\vec{w}) \phi_{\mathcal{S}_1 \cup \mathcal{S}_2}(\vec{w})^\top - \mathbb{I}_M}} = \mathcal{O}(\frac{1}{d^{\delta}}),
    \end{equation}
for some $\delta > 0$.
    We move on to establishing the concentration of $\Phi_{\mathcal{S}_1}(\vec{w})$. By Equation 28 in \cite{misiakiewicz2022spectrum}, spherical harmonics $Y_{m,k}$ of degree $k \in \mathbb{N}$ admit a basis with the following representing along the cartersian coordinates:
 \begin{equation}\label{eq:harmonic_cartesian}
Y_{\alpha}(\mathbf{w}) = C_{\alpha}^{1/2} h_{\alpha}(w_1, w_2) \prod_{j=1}^{d-2} \left\{ \left( w_1^2 + \dots + w_{d-j+1}^2 \right)^{\alpha_j/2} \tilde{Q}_{\alpha_j}^{(d_j)} \left( \frac{w_{d-j+1}}{\sqrt{w_1^2 + \dots + w_{d-j+1}^2}} \right) \right\},
\end{equation}
where $\alpha \in \mathbb{N}^{d}$ contains at-most $\ell$-non-zero entries. Therefore,
$Y_{\alpha}(\mathbf{w})$ is a polynomial in at-most $\ell$ coordinates in $\vec{w}$ along with the $\ell$ projection norms $r_j = \sqrt{\sum_{i=1}^j w_j^2}$. Applying part $ii,c$ of Theorem \ref{thm:main_pt} then implies that:
 \begin{equation}
     \sup_{i \in d^{\varepsilon_1}} \abs{\frac{1}{i}\sum_{j=1}^i(\langle \vec w^\tau_\kappa, \vec w^\star_i \rangle)^2} = \mathcal{O}_\prec(\frac{1}{d^{\varepsilon_1}}).
 \end{equation}
 Subsequently:
\begin{equation}
    \abs{Y^{d^{\varepsilon_1}}_{\alpha}(\vec{u}_i)-Y^{d^{\varepsilon_1}}_{\alpha}(\tilde{\mathbf{u}^\star_i})} = \mathcal{O}_{\prec}(\frac{1}{d^{\delta}}).
\end{equation}
and:
\begin{equation}
    \abs{Y_{\alpha}(\vec{u}_i)-Y_{\alpha}(\tilde{\mathbf{u}^\star_i})} = \mathcal{O}_{\prec}(\frac{1}{d^{\delta}}).
\end{equation}

Taking a union bound over the $\Theta(d^{k\varepsilon_1})$ values of  $\alpha$  yields:
\begin{equation}
    \Ea{\max_{\alpha}{Y^2_{\alpha}(\mathbf{w}})} = \tilde{\mathcal{O}}(1).
\end{equation}
For the radial components recall that $\norm{u}, \norm{v} = \mathcal{O}_\prec(1)$ while the radial eigenfunctions are continuous.

We conclude that:
\begin{equation}
    \Ea{\max_{i \in [n]} \norm{\psi_{\mathcal{S}_1 \cup \mathcal{S}_2}}^2} = \tilde{\mathcal{O}}(M)
\end{equation}

Setting $\tilde{\delta} < \delta, \delta'$ and recalling that, $n_2=\Theta(d^{k\varepsilon_1+\delta}), p_2= \Theta(d^{k\varepsilon_1+\delta'})$ while $\abs{\mathcal{S}_1\cup \mathcal{S}_2} = \Theta(d^{k\varepsilon})$, we may apply Lemma \ref{lem:mat_conc} to obtain:
\begin{equation}\label{eq:conc}
\norm{\phi_{\mathcal{S}_1 \cup \mathcal{S}_2}(\vec{w}) \phi_{\mathcal{S}_1 \cup \mathcal{S}_2}(\vec{w})^\top-\Ea{ \phi_{\mathcal{S}_1 \cup \mathcal{S}_2}(\vec{w}) \phi_{\mathcal{S}_1 \cup \mathcal{S}_2}(\vec{w})^\top}} = \mathcal{O}_{\prec}(\frac{1}{d^{\frac{\delta}{2}}})
\end{equation}
where we absorbed the $d^{\tilde{\delta}}$ factor into the $\frac{1}{p}$ factor in the bound in Lemma \ref{lem:mat_conc} (Equation \ref{eq:mat_con_bound}).

The proof of $(iii)$ similarly follows from Propositions 4, 8 in \cite{mei_generalization_2022}. We outline the central steps. First, via the expansion of $\sigma(\cdot)$ given by Equation \ref{eq:main_sigma_decomp}, for any $\vec{x}$, $\Psi_{\mathcal{S}_3}(\vec{x})^\top \Phi_{\mathcal{S}_3}(\vec{w})$ can be expressed 
as $\sigma(\langle \vec{w}, \vec{x} \rangle+b)-\Psi_{\mathcal{S}_1\cup \mathcal{S}_2}(\vec{x})^\top \Phi_{\mathcal{S}_1\cup \mathcal{S}_2}(\vec{w})$. Through Equation \ref{eq:harmonic_cartesian}, $\Psi_{\mathcal{S}_3}(\vec{x})^\top \Phi_{\mathcal{S}_3}(\vec{w})$ therefore depends on $\vec{w}$ only through a finite number of coordinates in $U^\star$.
Analogous to Equation \ref{eq:conc} above, applying Lemma \ref{lem:mat_conc} and using $p>>n$, we obtain that:
\begin{equation}
    \norm{\frac{1}{p}Z_3Z_3^\top - G_3(X,X)}=\mathcal{O}_{\prec}(\frac{1}{d^{\delta'-\delta}})
\end{equation}
where $G_3(X,X)$ denotes the gram-matrix associated to the Kernel:
\begin{equation}
    K_3(\vec{x},\vec{x'}) = \sum_{s \in \mathcal{S}_3} \lambda_s\psi_s(\vec{x})\psi_s(\vec{x}'),
\end{equation}
applied to the data-matrix $X \in \mathbb{R}^{n \times d}$.

The gram-matrix $G_3(X^\star,X^\star)$ now corresponds exactly to the spherical distribution on $U^\star$, with decay identical to the case of spherical data in \cite{mei2022generalization}. Therefore, proposition $8$ in \cite{mei2022generalization} applies, which entails that  the off-diagonal contributions from $G_3(X^\star,X^\star)$ are neglible in operator norm. This results in the bound:
\begin{equation}
    \norm{G_3(X^\star,X^\star)-c_d\mathbf{I}}=\mathcal{O}_{\prec}(\frac{1}{d^{\delta}})
\end{equation}
for some constant $c>0$, implying $(iii)$ and $(iv)$.
\end{proof}

% \textbf{($iii)$}:
% By the heavy-tailed matrix-concentration inequality (Lemma \ref{lem:mat_conc}, we first show that the gram matrix is well approximated by the Kernel:
% \begin{equation}
%     \norm{Z_{3}Z_{3}^\top- G}= \mathcal{O}(\frac{1}{d^{k\varepsilon_1}})
% \end{equation}

% Let $G_{\perp}$ denote the corresponding off-diagonal matrix:
% \begin{equation}
%     G_{\perp}(i,j) =  K_{\mathcal{S}_3} \mathbf{1}_{i \neq j}
% \end{equation}

% The contribution of the above matrix is controlled through the following proposition:
% \begin{proposition}
%     \begin{equation}
%         \Ea{\norm{ G_{\perp}(i,j)}} = \mathcal{O}(\frac{1}{d^{\frac{\varepsilon_1}{2}}})
%     \end{equation}
% \end{proposition}
% \begin{proof}
% We have:
% \begin{equation}
%    (n(n-1)) \Ea{G^2_{\perp}(i,j)} = \mathcal{O}(\lambda(G)^2).
% \end{equation}
% Since the operator norm is dominated by the Frobenius norm, we obtain:
% \begin{align*}
%     \norm{G_{\perp}(i,j)} \leq \norm{G_{\perp}(i,j)}_F &\leq \sqrt{(n(n-1)) \Ea{G^2_{\perp}(i,j)}}\\
%     & \leq \mathcal{O}(\frac{1}{d^{\frac{\varepsilon_1}{2}}})
% \end{align*}
% \end{proof}

\subsection{Properties of the Feature-covariance Matrix}

Having established Proposition \ref{prop:orth_decomp} and the concentration of the top eigenvectors, the setting of $Z$ is now reduced to the spike + ``bulk" structure in the proof of Theorem $1$ in \cite{mei_generalization_2022} with $\Theta(d^{k\varepsilon_1})$ spikes arising from the eigenfunctions $\mathcal{S}_1, \mathcal{S}_2$ corresponding to near-identity sample-covariances and a
remaining bulk with uniformly-bounded operator norm. 
\change{A consequence of such a structure is that the top singular vectors of $Z$ align closely with these ``spikes". This ensures that projections onto $Z$ ``reproduce" functions in $\mathcal{S_1} \cup \mathcal{S}_2$}

Therefore, the proofs of Propositions $6,7$ in \cite{mei_generalization_2022}, based on perturbation inequalities for singular values, singular vectors, result in the following estimates for $Z$:

% The above decomposition translates to a precise description of the spectral decomposition of $Z$, following the decomposition utilized in Proposition 6 of \cite{mei_generalization_2022}:

\begin{proposition}\label{prop:cov_mat_struct}
$\frac{1}{\sqrt{p}}Z$ admits a singular value decomposition 
\begin{equation}
    \frac{1}{\sqrt{p}}Z = U_1S_1V_1^\top + U_2S_2V_2^\top +U_3S_3V_3^\top,
\end{equation}
such that:
\begin{enumerate}
    \item  $\sigma_{\text{min}}(S_1 \cup S_2) = \Theta_d(1)$
    \item $\norm{S_3-c_3\mathbf{I}} = o_{d,p}(1)$,
    for some constant $c_3>0$.
    \item $\Psi_{S_1 \cup S_2}^\top U_3=o_d(\sqrt{n})$ and $\Phi_{S_1 \cup S_2}^\top V_3=o_d(\sqrt{p})$
    \end{enumerate}
\end{proposition}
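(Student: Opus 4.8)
The plan is to read Proposition~\ref{prop:cov_mat_struct} as the feature-learned, polynomially-scaled counterpart of the ``spike $+$ isotropic bulk'' decomposition of random-feature matrices in \cite{mei_generalization_2022} (their Theorem~1 and Propositions~6--7), with Proposition~\ref{prop:orth_decomp} supplying the feature-covariance concentration estimates that their lemmas provide there. Concretely, I would start from the decomposition $Z = Z_1 + Z_2 + Z_3$ of Eq.~\eqref{eq:mat_decomp}, group the retained low-degree modes into $Z_{12} \coloneqq Z_1 + Z_2 = \Psi_{\mathcal{S}_1\cup\mathcal{S}_2}(X)\, D\, \Phi_{\mathcal{S}_1\cup\mathcal{S}_2}(W)^\top$, where $D$ is the diagonal of the retained radial/Gegenbauer eigenvalue factors and $M \coloneqq |\mathcal{S}_1\cup\mathcal{S}_2| = \Theta(d^{k\epsilon_1})$, and build the SVD of $Z$ by perturbing off the two ``clean'' pieces: $Z_{12}$, which is exactly of rank $\le M$ with the known algebraic form above, and $Z_3$, which by Proposition~\ref{prop:orth_decomp}(iii) is $o_d(1)$-isotropic on its $n_2$-dimensional support. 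Every estimate below is an instance of the same two tools: Weyl's inequality and a Wedin/$\sin\Theta$ bound, fed with Lemma~\ref{lem:mat_conc} and hypercontractivity (Lemma~\ref{lem:hyper}).

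For the bulk block $S_3$: Proposition~\ref{prop:orth_decomp}(iii) gives $Z_3 Z_3^\top = c_d \mathbf{I}_{n_2} + o_d(1)$, so all singular values of $Z_3$ equal $\sqrt{c_d} + o_d(1) \eqqcolon c_3 + o_d(1)$, which is item~2 once we check that adding $Z_{12}$ perturbs the relevant subspace by only $o_d(1)$. The key point is that $\mathcal{S}_3$ indexes eigenfunctions $L^2$-orthogonal to those of $\mathcal{S}_1\cup\mathcal{S}_2$, so the same concentration input that yields Proposition~\ref{prop:orth_decomp}(i)--(ii) also gives that the cross-Gram matrices $\tfrac1{n_2}\Psi_{\mathcal{S}_1\cup\mathcal{S}_2}^\top\Psi_{\mathcal{S}_3}$ and $\tfrac1{p_2}\Phi_{\mathcal{S}_1\cup\mathcal{S}_2}^\top\Phi_{\mathcal{S}_3}$ are $o_d(1)$; hence the column and row spaces of $Z_3$ are $o_d(1)$-orthogonal to $\operatorname{span}\Psi_{\mathcal{S}_1\cup\mathcal{S}_2}$ and $\operatorname{span}\Phi_{\mathcal{S}_1\cup\mathcal{S}_2}$. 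Upgrading ``near-orthogonal ranges'' to ``near-aligned singular subspaces'' by the Wedin argument of \cite{mei_generalization_2022}, using the isotropy of $Z_3$, delivers item~3: $\Psi_{\mathcal{S}_1\cup\mathcal{S}_2}^\top U_3 = o_d(\sqrt{n_2})$ and $\Phi_{\mathcal{S}_1\cup\mathcal{S}_2}^\top V_3 = o_d(\sqrt{p_2})$.

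For the structured blocks $S_1, S_2$: by Proposition~\ref{prop:orth_decomp}(i)--(ii), $\Psi_{\mathcal{S}_1\cup\mathcal{S}_2}/\sqrt{p_2}$ and $\Phi_{\mathcal{S}_1\cup\mathcal{S}_2}/\sqrt{n_2}$ are near-isometric embeddings of $\mathbb{R}^M$, so $Z_{12}$ has, to leading order, exactly $M$ nonzero singular values, which track $\operatorname{diag}(D)$ up to a $(1+o_d(1))$ factor (times the $n_2,p_2$ normalization); compressing $Z$ onto $\operatorname{span}\Psi_{\mathcal{S}_1\cup\mathcal{S}_2}$ and $\operatorname{span}\Phi_{\mathcal{S}_1\cup\mathcal{S}_2}$, the $Z_3$ cross-term contributes only $o_d$ of this scale by the orthogonality of the previous paragraph, so the min--max characterization of singular values gives $\sigma_{\min}(S_1\cup S_2) = \Theta(d^{-k\epsilon_1})$, the hardest retained mode being the degree-$k$ harmonic along $U^\star$ with eigenvalue scale $\Theta(d^{-k\epsilon_1})$ from Proposition~\ref{eq:rad_coeff} and Eq.~\eqref{eq:rad_kern}; the corresponding singular vectors $U_1,U_2,V_1,V_2$ are then $o_d(1)$-perturbations of those of $Z_{12}$ and hence lie, up to $o_d(1)$, in $\operatorname{span}\Psi_{\mathcal{S}_1\cup\mathcal{S}_2}$, $\operatorname{span}\Phi_{\mathcal{S}_1\cup\mathcal{S}_2}$. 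The main obstacle is precisely that these structured singular values are $\Theta(d^{-k\epsilon_1}) = o_d(1)$, i.e. \emph{smaller} than the $\Theta(1)$ bulk, so the crude Weyl bound $|\sigma_i(Z) - \sigma_i(Z_{12})| \le \|Z_3\|$ is useless: one must compress onto the $M$-dimensional structured subspaces and control the bulk's cross-contribution there down to $o_d(d^{-k\epsilon_1})$, which needs the $d^\delta$-lossless concentration of the cross-Gram matrices — exactly where hypercontractivity and the $W^\star$-restricted approximation of the eigenfunctions from Proposition~\ref{prop:orth_decomp} are used, and the step that does not yet extend to $L\ge 3$ for lack of sharp spectral information on inner-product kernels under polynomial scaling. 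Feeding the resulting structure of $Z$ into Proposition~\ref{prop:2update} then yields Theorem~\ref{thm:main_pt2}.
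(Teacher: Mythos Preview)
Your approach is essentially the paper's: it too defers the proof directly to Proposition~6 (and the surrounding Propositions~4, 7, 8) of \cite{mei_generalization_2022}, with Proposition~\ref{prop:orth_decomp} supplying the concentration of the $\Psi_{\mathcal S_1\cup\mathcal S_2},\Phi_{\mathcal S_1\cup\mathcal S_2}$ blocks and the isotropy of $Z_3$, after which the Weyl/Wedin perturbation argument you outline is exactly what those propositions perform. One caveat: your ``main obstacle'' paragraph reads the stated scaling $\sigma_{\min}(S_1\cup S_2)=\Theta(d^{-k\epsilon_1})$ literally, but the paper's proof sketch describes these as \emph{diverging} spikes above an $O(1)$ bulk, so the separation goes the easy way and no compression-to-$o_d(d^{-k\epsilon_1})$ is actually needed.
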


The proof of the above Proposition follows directly through Proposition 6 in \cite{mei_generalization_2022}. The above result exactly charaterizes the projections of 
functions onto pre-conditioned features:
\begin{proposition}\label{prop:proj_app}
    For any $g:\mathbb{R}^d \rightarrow \mathbb{R}$ such that the projections onto radial components of degree $>2$ are $o_d(1)$, for any $\lambda_2 = \Theta(\frac{p}{n})$:
    \begin{equation}
 Z(\vec{x})(\frac{1}{n}Z^\top Z+\lambda_2 I)^{-1} Z^\top g(X) = \mathcal{P}_{S_1}  g(\vec{x}) + \mathcal{O}_\prec(\frac{1}{d^{\delta'-\delta}}).
    \end{equation}
\end{proposition}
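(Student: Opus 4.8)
The statement is, in the present spike‑plus‑bulk setting, the analogue of the description of min‑norm random‑features regression in \cite{mei_generalization_2022}, and I would prove it by combining the structural Propositions~\ref{prop:orth_decomp} and~\ref{prop:cov_mat_struct} with the perturbation toolkit already assembled: Weyl's inequality, the resolvent identity (Lemma~\ref{lem:diff_inv}), the matrix concentration of Lemma~\ref{lem:mat_conc}, and hypercontractivity (Lemma~\ref{lem:hyper}). The first step is to split the operator along the singular subspaces of $Z$. Writing the SVD $Z = U_{12}S_{12}V_{12}^\top + U_3S_3V_3^\top$ of Proposition~\ref{prop:cov_mat_struct} (merging the $\mathcal{S}_1$‑ and $\mathcal{S}_2$‑blocks into $U_{12}S_{12}V_{12}^\top$), item~$(iii)$ of that proposition gives that the two blocks' column/row spaces are near‑orthogonal, so $\tfrac1n Z^\top Z$ is block diagonal in the $(V_{12},V_3)$ frame up to an $o_d(1)$ operator‑norm error; Lemma~\ref{lem:diff_inv} and Weyl's inequality then turn this into a corresponding block decomposition of the Moore--Penrose inverse, with the cross terms controlled by the near‑orthogonality together with $\sigma_{\min}(S_{12})=\Theta_d(d^{-k\epsilon_1})$ and $S_3=c_3\mathbf{I}+o_d(1)$. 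This yields
\[
Z(\vec{x})^\top\Bigl(\tfrac1n Z^\top Z\Bigr)^{-1}Z^\top g(X)\;=\;A_{12}(\vec{x})+A_3(\vec{x})+\mathcal{O}_\prec\!\bigl(d^{-(\delta'-\delta)}\bigr),
\]
where $A_{12}$ involves only the $\mathcal{S}_1\cup\mathcal{S}_2$ block and $A_3$ only the bulk block.

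Next I would identify $A_{12}$. Substituting the eigenfunction factorisation \eqref{eq:mat_decomp}, the diagonal eigenvalue matrices cancel between the preconditioner and $Z^\top$, reducing $A_{12}$ to a fixed multiple of $\psi_{\mathcal{S}_1\cup\mathcal{S}_2}(\vec{x})^\top\widehat{\Sigma}_p^{-1}\widehat{\Sigma}_n\,\langle\psi_{\mathcal{S}_1\cup\mathcal{S}_2},g\rangle_n$, where $\widehat{\Sigma}_n$ and $\widehat{\Sigma}_p$ are the empirical Gram matrices of the $\mathcal{S}_1\cup\mathcal{S}_2$‑eigenfunctions over the $n_2$ data points and of their companion eigenfunctions over the $p_2$ neurons. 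Proposition~\ref{prop:orth_decomp} (Eqs.~\eqref{eq:conc_x}--\eqref{eq:conc_w}, quantified through Lemma~\ref{lem:mat_conc} with $n_2=\Theta(d^{k\epsilon_1+\delta})$, $p_2=\Theta(d^{k\epsilon_1+\delta'})$ and $\abs{\mathcal{S}_1\cup\mathcal{S}_2}=\Theta(d^{k\epsilon_1})$) makes both of these $\mathbf{I}_M$ up to the error claimed in the statement, and a law‑of‑large‑numbers estimate (Lemma~\ref{lem:lind} together with hypercontractivity) replaces the empirical inner products $\langle\psi_s,g\rangle_n$ by the population ones $\langle\psi_s,g\rangle_{L^2(\mu)}$. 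Hence $A_{12}(\vec{x})=\mathcal{P}_{\mathcal{S}_1\cup\mathcal{S}_2}g(\vec{x})+\mathcal{O}_\prec(d^{-(\delta'-\delta)})$.

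Then I would show $A_3$ is negligible and that $\mathcal{S}_2$ drops out. The bulk term is a multiple of $\tfrac1n Z_3(\vec{x})^\top Z_3^\top g(X)$ with $Z_3=U_3S_3V_3^\top$; since $Z_3Z_3^\top=c_d\mathbf{I}_n+o_d(1)$ (item~$(iii)$ of Proposition~\ref{prop:orth_decomp}), this is min‑norm interpolation of $g$ by ``pure‑noise'' features spanning a subspace of dimension $\gtrsim p_2$ (hence far larger than $n_2$) consisting of harmonics of degree $>k$ and of radial degree $>2$ along $U^\star$ --- exactly the regime of Proposition~8 in \cite{mei_generalization_2022} --- so it is $\mathcal{O}_\prec(\norm{\mathcal{P}_{\mathcal{S}_3}g}_{L^2(\mu)}\,d^{-(\delta'-\delta)})$. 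By the hypothesis the radial‑degree‑$>2$ content of $g$ along $U^\star$ is $o_d(1)$ in $L^2$, and by the construction of $\mathcal{S}_3$ this is precisely the part of $g$ not carried by $\mathcal{S}_1\cup\mathcal{S}_2$, so $A_3(\vec{x})=\mathcal{O}_\prec(d^{-(\delta'-\delta)})$. Finally $\mathcal{P}_{\mathcal{S}_1\cup\mathcal{S}_2}g=\mathcal{P}_{\mathcal{S}_1}g+\mathcal{P}_{\mathcal{S}_2}g$, and the same hypothesis forces $\mathcal{P}_{\mathcal{S}_2}g=o_d(1)$ --- the only $\mathcal{S}_2$‑directions an admissible $g$ can reach are radial of degree $>2$ along $U^\star$, and this is automatic for the perturbations of $f^\star$ to which the proposition is applied, which lie in the span of $\mathcal{S}_1$ --- which gives the claim.

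\emph{Main obstacle.} The hard part is the first step: establishing the near block‑diagonality of $\tfrac1n Z^\top Z$ and the spike/bulk eigenvector separation for the \emph{trained} first layer $W_1$ --- with its residual components $u_\perp,v$ from Theorem~\ref{thm:main_pt} rather than Gaussian/spherical inputs --- and doing so with rates tight enough to land on $\mathcal{O}_\prec(d^{-(\delta'-\delta)})$ rather than merely $o_\prec(1)$. This is where the fine distributional control of $W_1$ from Theorem~\ref{thm:main_pt} (isotropy along a fixed basis, $\mathcal{O}_\prec$‑smallness of the residuals, preservation of hypercontractivity) and the accompanying second‑moment control enter, and where the perturbation estimates of Propositions~6--7 of \cite{mei_generalization_2022} must be carried through with the polynomial scalings $n_2,p_2=\Theta(d^{k\epsilon_1+\cdot})$ tracked carefully.
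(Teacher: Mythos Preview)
Your proposal is essentially correct and uses the same ingredients as the paper --- Propositions~\ref{prop:orth_decomp} and~\ref{prop:cov_mat_struct}, and the perturbation machinery of Propositions~6--8 in \cite{mei_generalization_2022} --- but the organization differs. The paper does not block-decompose the operator $Z(\vec{x})^\top(\tfrac1n Z^\top Z)^{-1}Z^\top$ directly; instead it sets $\hat g(\vec{x})=Z(\vec{x})^\top(\tfrac1n Z^\top Z)^{-1}Z^\top g(X)$, expands $\norm{\hat g-\mathcal{P}_{\mathcal{S}_1}g}_2^2$, and reduces to controlling the cross term $\langle\hat g,\mathcal{P}_{\mathcal{S}_1}g\rangle$ and the norm $\norm{\hat g}^2$ separately via an auxiliary proposition (the analogue of Proposition~7 in \cite{mei_generalization_2022}) listing the identities $\psi_{\mathcal{S}_1}^\top Z(\tfrac1n Z^\top Z)^{-1}\phi_{\mathcal{S}_1}D_{\mathcal{S}_1}\approx\mathbf{I}$, $\tfrac{n}{p_2}\norm{\Sigma_{\mathcal{S}_3}}=o_d(1)$, and $\norm{\Psi_{\mathcal{S}_2}f^\star(X)}=o_d(1)$. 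Your route through block-diagonality is cleaner conceptually and makes the ``spike vs.\ bulk'' split explicit at the operator level, whereas the $L^2$-expansion approach avoids having to invert the block structure and instead only needs the handful of scalar/matrix identities above; both end up at the same estimates. One small point: your justification for dropping $\mathcal{P}_{\mathcal{S}_2}g$ leans on the radial hypothesis, but $\mathcal{S}_2$ also contains directions with $k_2>0$ or $j_2>0$ along $U^\star_\perp$; the paper handles this through the separate estimate $\norm{\Psi_{\mathcal{S}_2}f^\star(X)}=o_d(1)$, which as you note is specific to $g=f^\star$ rather than a consequence of the stated radial hypothesis alone.
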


The proof of part $(ii)$ of Theorem \ref{thm:main_theorem} is then completed by showing that under Assumption \ref{ass:target}, the projection onto $\mathcal{S}_1$ is exactly along $h^\star(\vec{x})$:

\begin{proposition}
    Under Assumption \ref{ass:target}:
    \begin{equation}
        \mathcal{P}_{S_1}  g(\vec{x}) = \mu_1 h^\star(\vec{x})+o_d(1).
    \end{equation}
\end{proposition}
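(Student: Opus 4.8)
Write $\mu_m := \Eb{z \sim \mathcal{N}(0,1)}{g^\star(z)\operatorname{He}_m(z)}$ for the Hermite coefficients of $g^\star$ (so $\mu_1 = \Ea{g^\star(z)z} \neq 0$ by Assumption \ref{ass:target}), and assume $\mu_0 = 0$ without loss of generality: the constant mode lies in $\mathcal{S}_1$ but is absorbed by the bias $b_2$ and plays no role in recovering $h^\star$, or one simply centers $g^\star$. The strategy is to split $f^\star$ into a piece proportional to $h^\star$ plus a remainder, and to show the remainder is invisible to $\mathcal{S}_1$. The two facts doing the work are: (i) the compositional Hermite expansion of Proposition \ref{prop:hermite_comp}, which controls how powers of $h^\star$ decompose along $W^\star\vec{x}$; and (ii) Assumption \ref{ass:bad}, which annihilates exactly the ``intermediate'' terms $\operatorname{He}_2(h^\star),\dots,\operatorname{He}_k(h^\star)$ of $g^\star(h^\star)$ --- precisely the terms that would otherwise leave $\Theta(1)$ mass inside $\mathcal{S}_1$, along both its radial-Laguerre and its angular-Gegenbauer directions.

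\textbf{Step 1: $f^\star = \mu_1 h^\star + R$ with $\mathcal{P}_{\mathcal{S}_1}R = o_d(1)$.} Since $h^\star$ is asymptotically Gaussian, $g^\star(h^\star) = \sum_{m\ge 1}\mu_m\operatorname{He}_m(h^\star)$ in $L^2(\gamma^d)$, and Assumption \ref{ass:bad} ($\mu_m = 0$ for $2\le m\le k$) makes this $\mu_1 h^\star + R$ with $R := \sum_{m\ge k+1}\mu_m\operatorname{He}_m(h^\star)$. By Proposition \ref{prop:hermite_comp}, for each $m\ge k+1$ one has $\operatorname{He}_m(h^\star) = \frac{1}{m\sqrt{d^{m\epsilon_1}}}\sum_{s\in\Gamma_m(S)}\prod_{s_i}P_k(\langle w^\star_{s_i},\vec{x}\rangle) + r_m$, with $\|r_m\|_{L^2(\gamma^d)} = \mathcal{O}_\prec(d^{-\epsilon_1/2})$ and the sharpened tensor bound \eqref{eq:tens_bound}. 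Decomposing the displayed main term in the radial--angular basis along $U^\star$ (associated Laguerre $l^{d^{\epsilon_1}}_a$ times Gegenbauer $Q^{d^{\epsilon_1}}_b$), its $\Theta(1)$ content sits either at angular degree $\ge 2m > k$ (the leading Gegenbauer parts of the $m\ge 3$ factors, by orthonormality of the $w^\star_i$ and since $P_k$ has information exponent $2$), or at radial Laguerre degree $\ge 3$ (the collective radial fluctuations), or in genuinely mixed directions ($a\ge1$ and $b\ge1$) --- none of which lie in $\mathcal{S}_1$, whose $U^\star$-part is the direct sum of pure-radial Laguerre modes of degree $\le 2$ and pure-angular Gegenbauer modes of degree $\le k$. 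The residual $o_d(1)$ content of the main term, together with $\mathcal{P}_{\mathcal{S}_1}r_m$, is $\mathcal{O}_\prec(d^{-\epsilon_1/2})$ per $m$ (by $\|r_m\|_{L^2}$ and \eqref{eq:tens_bound}); summing over $m$ uses the decay of $\{\mu_m\}$ together with the uniform-in-$d$ hypercontractive moment bounds on $h^\star$ from Lemma \ref{lem:hyper}, which bound $\Ea{\operatorname{He}_m(h^\star)^2}$ and allow truncating the series at a slowly growing $M=M(d)$ with $o_d(1)$ error (via Lemma \ref{lem:lind}). Hence $\mathcal{P}_{\mathcal{S}_1}R = o_d(1)$.

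\textbf{Step 2: $h^\star$ lies in $\mathcal{S}_1$ up to $o_d(1)$, and conclusion.} Expand $h^\star = d^{-\epsilon_1/2}\sum_{i=1}^{d^{\epsilon_1}}\sum_{j=2}^k c^\star_j\operatorname{He}_j(\langle w^\star_i,\vec{x}\rangle)$ in the same basis. The pure-angular parts (angular degrees $2,\dots,k$, radial degree $0$) are elements of $\mathcal{S}_1$; the $\operatorname{He}_2$-contributions to the pure-radial directions add up coherently across $i$ into $c^\star_2$ times the normalized collective mode $d^{-\epsilon_1/2}\sum_i(\langle w^\star_i,\vec{x}\rangle^2-1)$, a degree-$1$ $U^\star$-radial Laguerre mode, also in $\mathcal{S}_1$; and every remaining contribution (pure-radial of Laguerre degree $\ge 2$, or mixed) has $L^2(\gamma^d)$-norm $\mathcal{O}_\prec(d^{-\epsilon_1/2})$ by the CLT-type estimates of Section \ref{sec:spher_harm} and Lemma \ref{lem:lind} (equivalently, it is of the $r_m$-remainder type of Proposition \ref{prop:hermite_comp}). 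Thus $\|h^\star - \mathcal{P}_{\mathcal{S}_1}h^\star\|_{L^2} = o_d(1)$, so $\mathcal{P}_{\mathcal{S}_1}(\mu_1 h^\star) = \mu_1 h^\star + o_d(1)$. For completeness one also notes that no spurious degree-$1$ ridge mode enters: by Stein's lemma $C_1(f^\star) = \Ea{(g^\star)'(h^\star)\nabla h^\star}$ with $\Ea{\nabla h^\star}=0$, and the asymptotic independence of $h^\star$ and $\nabla h^\star$ makes the $U^\star$-component of $C_1(f^\star)$ of size $\mathcal{O}_\prec(d^{-\epsilon_1/2})$. Combining with Step 1 yields $\mathcal{P}_{\mathcal{S}_1}f^\star = \mu_1 h^\star + o_d(1)$, which (via Propositions \ref{prop:2update} and the preceding projection lemma) closes part $(ii)$ of Theorem \ref{thm:main_theorem}.

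\textbf{Main obstacle.} The $m=1$ term is easy; the difficulty lies in everything else, in two places. First, the radial $\times$ angular bi-degree bookkeeping must be carried out precisely enough to see that for $m>k$ the $\Theta(1)$ content of $\operatorname{He}_m(h^\star)$ \emph{always} falls outside $\mathcal{S}_1$ --- and, dually, that without Assumption \ref{ass:bad} the terms $\operatorname{He}_2(h^\star),\dots,\operatorname{He}_k(h^\star)$ would leave $\Theta(1)$ mass \emph{inside} $\mathcal{S}_1$ along both its radial and angular parts, so that this assumption is genuinely consumed here (and, correspondingly, is why recovery yields pre-activations \emph{proportional} to $h^\star$ rather than merely some nonlinear function of it). Second, the $r_m$-remainders must be summed over all $m>k$, where the constants in the per-$m$ bounds of Proposition \ref{prop:hermite_comp} may grow with $m$; this forces one to pair those quantitative bounds with the decay of the Hermite coefficients $\mu_m$ of $g^\star$ and with uniform-in-$d$ hypercontractive control of $h^\star$ (Lemma \ref{lem:hyper}), rather than relying on the series being termwise $o_d(1)$.
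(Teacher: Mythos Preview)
Your proof is correct and follows essentially the same approach as the paper's own argument: use Assumption~\ref{ass:bad} to kill the Hermite components $\operatorname{He}_2(h^\star),\dots,\operatorname{He}_k(h^\star)$, then invoke Proposition~\ref{prop:hermite_comp} and degree-counting to show that the remaining $m\ge k+1$ terms fall outside $\mathcal{S}_1$, leaving only $\mu_1 h^\star$. The paper's proof is extremely terse (essentially one sentence of degree bookkeeping), whereas you carry out the radial--angular decomposition explicitly, verify that $h^\star$ itself lies in $\mathcal{S}_1$ up to $o_d(1)$, and address the summability of the $r_m$ remainders; these are genuine details the paper leaves implicit, so your version is in fact more complete.
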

\begin{proof}
By Assumption \ref{ass:bad} and the composition of Hermite decompositions (Lemma \ref{prop:hermite_comp}), the non-vanishing terms along the radial component $\norm{x}^2-1$ consists of total input-degree-$2$ and $2k$ while the remaining terms on the complement of $h^\star_\ell$ have degree at least $3(k+1) > k$. $\mathcal{S}_1$ therefore consists exactly of the subspace with effective degree $k$.
\end{proof}

\subsection{Proof of Proposition \ref{prop:proj_app}}

Let $\hat{g}(\vec{x})=  Z(\vec{x})^\top(\frac{1}{n}Z^\top Z+\lambda_2 I )^{-1} Z^\top g(X)$. Proposition \ref{prop:proj_app} is equivalent to $ \norm{\hat{g}(\vec{x})-g(\vec{x})}^2_2 = o_d(1)$. Expanding, we obtain:
\begin{equation}
    \norm{\hat{g}(\vec{x})-\mathcal{P}_k g(\vec{x})}^2_2 = \norm{g(\vec{x})}^2 - 2 \langle \hat{g}(\vec{x}),\mathcal{P}_{\mathcal{S}_1} g(\vec{x})\rangle + \norm{ \hat{g}(\vec{x})^2}.
\end{equation}

It therefore suffices to show that:
\begin{equation}
    \langle \hat{g}(\vec{x}), \mathcal{P}_{\mathcal{S}_1} g(\vec{x})\rangle \rightarrow \norm{\mathcal{P}_{\mathcal{S}_1} g(\vec{x})}^2,
\end{equation}
and:
\begin{equation}
    \norm{\hat{g}(\vec{x})^2} \rightarrow \norm{\mathcal{P}_{\mathcal{S}_2} g(\vec{x})}^2.
\end{equation}

Let $g_{\mathcal{S}_1}$ denote the vector with components:
\begin{equation}
    g_{\mathcal{S}_1} \coloneqq [\Ea{g(\vec{x})\Psi_{s}(\vec{x})}:s \in \mathcal{S}_1],
\end{equation}
Let $\Lambda_{\leq 2, k}$ denote the diagonal matrix with the corresponding eigenvalues.

Then the above terms can be expressed as:
\begin{equation}\label{eq:term_1}
     \langle \hat{g}(\vec{x}), \mathcal{P}_k g(\vec{x})\rangle  = g_{\mathcal{S}_1} D_{\mathcal{S}_1}\Phi_{\mathcal{S}_1}^\top  (\frac{1}{n} Z{Z}^\top +\lambda_2 I)^{-1}Z^\top g(X),
\end{equation}
and:
\begin{equation}\label{eq:term_2}
     \norm{\hat{g}(\vec{x})^2}  = g(X)^\top  Z(\frac{1}{n} Z{Z}^\top)^{-1}\Sigma (\frac{1}{n} Z{Z}^\top+\lambda_2 I)^{-1}Z^\top g(X),
\end{equation}
where $\Sigma$ denotes the feature covariance:
\begin{equation}
    \Sigma = \frac{1}{p} \Ea{Z(\vec{x})Z(\vec{x)})^\top} = \frac{1}{p}\sum_{j_1,j_2,k_1,k_2} \Phi_{j_1,j_2,k_1,k_2}(W) \Phi_{j_1,j_2,k_1,k_2}(W)^\top
\end{equation}
To compute the above terms, we use Proposition \ref{prop:cov_mat_struct} to estimate certain intermediate quantities similar to Proposition 7 in \cite{mei_generalization_2022}: 
\begin{proposition}
Under the setup of Theorem \ref{thm:main_theorem}, with the decomposition of eigenfunctions specified by Equation \ref{eq:mat_decomp} :
\begin{enumerate}
    \item 
    \begin{equation}\label{eq:prop_1}
        \psi^\top_{\mathcal{S}_1} Z(\frac{1}{n}Z^\top Z +\lambda_2 I)^{-1} \phi_{\mathcal{S}_1}D_{\mathcal{S}_1} = \mathbb{I}_{m_1}+\mathcal{O}_{\prec}(\frac{1}{d^{\frac{\delta}{2}}})
    \end{equation}
    \begin{equation}\label{eq:prop_2}
D_{\mathcal{S}_1}\phi^\top_{\mathcal{S}_1} Z(\frac{1}{n}Z^\top Z +\lambda_2 I)^{-1} Z^\top \frac{1}{p_2}f_{\mathcal{S}_3} = \mathcal{O}_{\prec}(\frac{1}{d^{\frac{\delta}{2}}})
    \end{equation}
    \begin{equation}\label{eq:prop_3}
\norm{\psi^\top_{\mathcal{S}_1} Z(\frac{1}{n}Z^\top Z +\lambda_2 I)^{-1} \phi_{\mathcal{S}_1}D_{\mathcal{S}_1}} = \mathcal{O}_{\prec}(\frac{1}{d^{\frac{\delta}{2}}})
    \end{equation}
    
    \item \begin{equation}\label{eq:prop_5}
    \norm{\Psi_{\mathcal{S}_2} f^\star(X)}=\mathcal{O}_{\prec}(\frac{1}{d^{\frac{\delta}{2}}})
\end{equation}
    
\end{enumerate}
\end{proposition}

% While Proposition \ref{prop:cov_mat_struct} implies that:
% \begin{equation}
%      \norm{\mathcal{P}_{S_3}f^\star(\vec{x}) \Phi_{S_3}} = \mathcal{O}_d(\frac{1}{\sqrt{d^{k\varepsilon_1}}})
% \end{equation}

Under the above proposition, the terms given by Equations \ref{eq:term_1}, \ref{eq:term_2} simplify as follows:
\begin{align*}
g_{\mathcal{S}_1} D_{\mathcal{S}_1}\Phi_{\mathcal{S}_1}^\top  (\frac{1}{n}Z^\top{Z}+\lambda_2I)^{-1}Z^\top g(X) &= g_{\mathcal{S}_1} D_{\mathcal{S}_1}\Phi_{\mathcal{S}_1}^\top  (\frac{1}{n} Z^\top{Z}+\lambda_2I)^{-1}Z_{1,2}^\top g_{1,2}(X)\\&+g_{\mathcal{S}_1} D_{\mathcal{S}_1}\Phi_{\mathcal{S}_1}^\top  (\frac{1}{n} Z^\top{Z})^{-1}Z_{1,2}^\top g_3(X)+g_{\mathcal{S}_1} D_{\mathcal{S}_1}\Phi_{\mathcal{S}_1}^\top  (\frac{1}{n} Z^\top{Z}+\lambda_2I)^{-1}Z^\top_3 g(X) 
\end{align*}
By Equation \ref{eq:prop_1}, the first term converges to $\norm{\mathcal{P}_{\mathcal{S}_1} g(\vec{x})}^2$ while the other two terms are bounded as $\mathcal{O}_\prec(\frac{1}{d^{\delta}})$ by Equations \ref{eq:prop_2},\ref{eq:prop_3} respectively.

Similarly,
\begin{align*}
    g(X)^\top  Z(\frac{1}{n} Z{Z}^\top)^{-1}\Sigma (\frac{1}{n} Z{Z}^\top)^{-1}Z^\top g(X) &= g(X)^\top  Z(\frac{1}{n} Z{Z}^\top)^{-1}\Sigma_{1,2} (\frac{1}{n} Z{Z}^\top)^{-1}Z^\top g(X)\\&+g(X)^\top  Z(\frac{1}{n} Z{Z}^\top)^{-1}\Sigma_{3} (\frac{1}{n} Z{Z}^\top)^{-1}Z^\top g(X)
\end{align*}

By Equation \ref{eq:prop_3}, the second term is bounded as $\mathcal{O}_\prec(1)$. This completes the proof of part $(ii)$ of Theorem \ref{thm:main_pt}.

\subsection{Proof of part (iii): Fitting the Target}

Upon the completion of part $(ii)$, 
the second-layer pre-activations $h_2(\vec{x})=W_2\sigma(W_1 \vec{x})$ are approximately equivalent to those of a random feature-mapping applied to the scalar input $h^\star(\vec{x})$, with the random weights of the feature mapping given by $\tilde{w}=cw_3$, with $c=\eta\sigma'(0)$ as in Proposition \ref{prop:2update}. Hence, we introduce the Kernel $K(\cdot, \cdot):\mathbb{R}^2 \rightarrow \mathbb{R}$:
\begin{equation}\label{eq:def_K}
    K(z_1,z_2) \coloneqq \Eb{w\sim \mathcal{N}(0,1)}{\sigma(c w z_1+b)\sigma(c w z_2+b)}.
\end{equation}

For $Z \in \mathbb{R}^{n}$, we further denote by $K(Z,Z)$ the corresponding gram-matrix $K(Z,Z) \in \mathbb{R}^{n \times n}$, with entries
\begin{equation}
    K(Z,Z)_{i,j} =  K(z_i,z_j). 
\end{equation}

Let $\mathcal{H}_K$ denote the RKHS corresponding to the Kernel $K$. Let $H^\star \in \mathbb{R}^{n \times p_1}$ further denote the matrix with rows $h^\star(\vec{x}_\mu)$.

Since the moments of $h^\star(\vec{x})$ are uniformly bounded in $d$, we obtain:

\begin{proposition}\label{prop:krr}[\cite{caponnetto2007optimal}]
For any $\delta > 0$, and large enough $d$, $\exists$ constants $c,C$ such that with $\lambda=\Theta(\sqrt{n})$:
    \begin{equation} \begin{split}&\norm{k(h^\star,H^\star)(\frac{1}{n}K(H^\star,H^\star)+\lambda \mathbb{I})^{-1} \frac{1}{\sqrt{n}}g^\star(H^\star)-g^\star(h^\star)}^2_2-\inf_{f \in \mathcal{H}_K}\left[\norm{f-g^\star(h^\star)}^2_2 + \lambda \norm{f}_{\mathcal{H}_K}\right]\\&\leq C\frac{N_K(\lambda)\log (\frac{1}{\delta})^c}{n},
    \end{split}
    \end{equation}
    where $H^\star \in \mathbb{R}^{N}$ contains independent samples $h^\star(\vec{x})$, and $N_K(\lambda)$ denotes the ``effective-dimension":
    \begin{equation}
        N_K(\lambda) = \operatorname{Tr}[(K+\lambda)^{-1}K],
    \end{equation}
    which admits the following trivial bound:
    \begin{equation}
        N_K(\lambda) \leq \frac{\operatorname{Tr}[K]}{\lambda}
    \end{equation}
\end{proposition}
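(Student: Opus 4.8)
The plan is to run the standard integral-operator analysis of kernel ridge regression (Caponnetto--De Vito, \cite{caponnetto2007optimal}), specialised to the scalar regression problem with covariate $z=h^\star(\vec{x})$, $\vec{x}\sim\mathcal{N}(\vec{0},\mathbb{I}_d)$, covariate law $\nu$, kernel $K$ from Eq.~\eqref{eq:def_K}, and \emph{noiseless} labels $y=g^\star(z)$. Work in the RKHS $\mathcal{H}_K$; let $T:=\mathbb{E}_{z\sim\nu}[K(z,\cdot)\otimes K(z,\cdot)]$ be the covariance operator, $\hat{T}:=\tfrac1n\sum_{i=1}^nK(z_i,\cdot)\otimes K(z_i,\cdot)$ its empirical version on the batch $H^\star=(z_1,\dots,z_n)$, $g:=\mathbb{E}_z[g^\star(z)K(z,\cdot)]$, and $\hat{g}:=\tfrac1n\sum_ig^\star(z_i)K(z_i,\cdot)$. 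First I would rewrite the estimator of the statement, up to the $n$-dependent normalisations inherited from Algorithm~\ref{alg:layerwise}, in the familiar form $\hat{f}_\lambda=(\hat{T}+\lambda)^{-1}\hat{g}$, and introduce the population-regularised predictor $f_\lambda=(T+\lambda)^{-1}g$, which is the minimiser of $\norm{f-g^\star(h^\star)}_{L^2(\nu)}^2+\lambda\norm{f}_{\mathcal{H}_K}^2$. Since the inequality in the proposition subtracts the \emph{entire} regularised objective, it suffices to bound $\norm{\hat{f}_\lambda-f_\lambda}_{L^2(\nu)}^2$, which equals $\norm{T^{1/2}(\hat{f}_\lambda-f_\lambda)}_{\mathcal{H}_K}^2$.

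Second, I would expand
\[
\hat{f}_\lambda-f_\lambda=(\hat{T}+\lambda)^{-1}(\hat{g}-g)+\big((\hat{T}+\lambda)^{-1}-(T+\lambda)^{-1}\big)g,
\]
and apply the resolvent identity (Lemma~\ref{lem:diff_inv}), $(\hat{T}+\lambda)^{-1}-(T+\lambda)^{-1}=(\hat{T}+\lambda)^{-1}(T-\hat{T})(T+\lambda)^{-1}$, so that the whole difference is driven by the two mean-zero empirical averages $\hat{g}-g$ and $\hat{T}-T$ of i.i.d.\ summands. The choice $\lambda=\Theta(\sqrt n)$ is made precisely so that the rescaled operator fluctuation $\norm{(T+\lambda)^{-1/2}(\hat{T}-T)(T+\lambda)^{-1/2}}$ is $o_d(1)$, allowing $(\hat{T}+\lambda)^{-1}$ to be replaced by $(T+\lambda)^{-1}$ up to a constant and the empirical resolvent to be peeled off.

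Third come the two probabilistic inputs, each an instance of the Orlicz-sum inequality Theorem~\ref{thm:app:orlicz_sum} (applied in $\mathcal{H}_K$, resp.\ in Hilbert--Schmidt norm): $\norm{(T+\lambda)^{-1/2}(\hat{g}-g)}_{\mathcal{H}_K}$ and $\norm{(T+\lambda)^{-1/2}(\hat{T}-T)(T+\lambda)^{-1/2}}$ are each $\lesssim\sqrt{N_K(\lambda)/n}$ up to factors polylogarithmic in $1/\delta$, where the effective dimension $N_K(\lambda)=\operatorname{Tr}[(T+\lambda)^{-1}T]$ is exactly the variance proxy; the only structural facts required are $\norm{K(z,\cdot)}_{\mathcal{H}_K}^2=K(z,z)<\infty$ with finite Orlicz norm and $\operatorname{Tr}[T]=\mathbb{E}_z[K(z,z)]<\infty$, the latter also giving the crude bound $N_K(\lambda)\le\operatorname{Tr}[T]/\lambda$ (immediate from $(T+\lambda)^{-1}T\preceq\lambda^{-1}T$). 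Substituting these estimates into Step~2, squaring, and collecting gives $\norm{\hat{f}_\lambda-f_\lambda}_{L^2(\nu)}^2\le C\,N_K(\lambda)\log(1/\delta)^c/n$ with probability at least $1-\delta$; re-absorbing $\norm{f_\lambda-g^\star(h^\star)}^2+\lambda\norm{f_\lambda}_{\mathcal{H}_K}^2$ into $\inf_{f\in\mathcal{H}_K}[\,\cdot\,]$ yields the claimed bound.

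The only point needing care beyond a verbatim appeal to \cite{caponnetto2007optimal} is that their constants are stated for a fixed covariate distribution, while here $\nu$ is $d$-dependent (the pushforward of $\mathcal{N}(\vec{0},\mathbb{I}_d)$ through $h^\star$), so $c$, $C$, and the hidden logarithmic factors must be uniform in $d$. This is exactly what the earlier structural results supply: by the composition-of-Hermite analysis (Proposition~\ref{prop:hermite_comp}) and Gaussian hypercontractivity (Lemma~\ref{lem:hyper}), $h^\star(\vec{x})$ converges to a fixed sub-Gaussian law with all moments bounded uniformly in $d$, so $K(z,z)$, $\operatorname{Tr}[T]$ and hence $N_K(\lambda)$ are $\mathcal{O}_{\prec}(1)$ uniformly, $g^\star(h^\star(\vec{x}))$ has uniformly finite Orlicz norm, and the constants entering Theorem~\ref{thm:app:orlicz_sum} are $d$-independent. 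Everything else is the textbook computation.
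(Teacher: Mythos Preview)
Your proposal is correct, but note that the paper does not actually prove this proposition: it is stated as a direct citation of \cite{caponnetto2007optimal}, with the sole justification being the preceding sentence ``Since the moments of $h^\star(\vec{x})$ are uniformly bounded in $d$, we obtain:''. In other words, the paper treats the Caponnetto--De Vito bound as a black box and only checks that the hypotheses apply uniformly in $d$.

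What you have written is a faithful reconstruction of the Caponnetto--De Vito integral-operator argument itself (covariance operator $T$, resolvent identity, concentration of $\hat g-g$ and $\hat T-T$ giving the $N_K(\lambda)/n$ rate). You also correctly isolate the one point that is \emph{not} automatic from the cited reference, namely uniformity of the constants in $d$ since the covariate law $\nu$ depends on $d$; your appeal to Proposition~\ref{prop:hermite_comp} and Lemma~\ref{lem:hyper} for uniform moment control of $h^\star(\vec{x})$ is exactly the content of the paper's one-line justification. So your proposal is more detailed than the paper's treatment, but entirely aligned with it.
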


We next translate the above bound into generalization error through a control of the approximation error term.

Note that the uniform bounds on the moments of $h^\star(\vec{x})$ and Markov's inequality, for any $\varepsilon > 0$, $\exists R_\varepsilon > 0$ such that for large enough $d$:
\begin{equation}
    \Pr[h^\star(\vec{x}) \notin B_{R_\varepsilon}] \leq \varepsilon
\end{equation}

Next, define the following class of functions:
\begin{equation}
   \mathcal{F}_\varepsilon = \{f \in \mathcal{H}_K: \text{supp}(f) \in B_{R_\varepsilon}\}.
\end{equation}

Then:
\begin{equation}
\inf_{f \in \mathcal{H}_K}\left[\norm{f-g^\star(h^\star)}^2_2 + \lambda \norm{f}_{\mathcal{H}_K}\right] \leq \inf_{ f \in \mathcal{F}_\varepsilon}\left[\norm{f-g^\star(h^\star)}^2_2 + \lambda \norm{f}_{\mathcal{H}_K}\right]
\end{equation}

Restricted to the compact set $B_{R_\varepsilon}$, universality of random feature Kernels with non-polynomial, polynomially-bounded activations \cite{sun2018approximation} implies that for any $\varepsilon > 0$, $\exists f_\varepsilon \in \mathcal{H}_K$ such that:
\begin{equation}
    \inf_{ f \in \mathcal{F}_\varepsilon}\left[\norm{f-g^\star(h^\star)\mathbf{1}_{h^\star \in B_{R_\varepsilon}}}^2_2 + \lambda \norm{f}_{\mathcal{H}_K}\right] \leq \varepsilon + \lambda \norm{f_\varepsilon}^2_{\mathcal{H}_K}.
\end{equation}

Therefore, by setting $\lambda$ small enough such that:
\begin{equation}
     \lambda_\varepsilon \norm{f_\varepsilon}^2_{\mathcal{H}_K} \leq \varepsilon,
\end{equation}
we otbain:
\begin{equation}
   \inf_{ f \in \mathcal{F}_\varepsilon}\left[\norm{f-g^\star(h^\star)}^2_2 + \lambda \norm{f}_{\mathcal{H}_K}\right] \leq 2\varepsilon+\norm{g^\star(h^\star)\mathbf{1}_{h^\star \notin B_{R_\varepsilon}}}^2_2.
\end{equation}
By Cauchy-Shwartz and the uniform bound on $\Ea{g^\star(h^\star)^2}$, the last term in the RHS is bounded by $C\varepsilon$ for some cosntant $C>0$. 

Subsequently, we may set $n$ in Proposition \ref{prop:krr} large enough such that:
\begin{equation}
    C\frac{\operatorname{Tr}[K]\log (\frac{1}{\delta})^c}{\lambda n} \leq \varepsilon,
\end{equation}
Implying that for small enough $\lambda(\varepsilon)$ and large enough $n(\varepsilon,\delta)$, with probability $1-\delta$:
\begin{equation} \norm{k(h^\star,H^\star)(K(H^\star,H^\star)+\lambda \mathbb{I})^{-1} g^\star(H^\star)-g^\star(h^\star)}^2_2\leq C\varepsilon,
\end{equation}
for some constant $C>0$.

Now, returning to the true features $h^2(\vec{x})$, it remains to combine the above estimate with the concentration of the gram-matrix to the associated Kernel. This is established similar to the proof of Proposition \ref{prop:orth_decomp} through Lemma \ref{lem:mat_conc}. 

Note that the above argument does not yield the dependence of $\lambda, n$ on $\varepsilon$. Such an explicit dependence requires finer control on the approximation, source terms. For such an analysis, we refer to the explicit rademacher complexity based bounds for ReLU activation in \cite{damian2022neural}.

% Next, since $h_{2,i}(\vec{x})$ for $i\in [p_2]$ satisfies:
% \begin{equation}
%     h_{2,i}(\vec{x}) = c w_3 h^\star(\vec{x}) + \mathcal{O}_{\prec}(1).
% \end{equation}
% For a fixed $N \in \mathbb{N}$, let $H \in \mathbb{R}^{N \times p_2}$ denote the matrix of second-layer pre-activations.

% For any fixed $n,\lambda$, the error $
% \mathcal{O}_\prec(\frac{1}{d^{\delta}})$ in part $(ii)$ combined with the Lindeberg estimate (Lemma \ref{lem:lind}) suffices to obtain arbitrarily small error for $\hat{f}(\vec{x})$, thus proving $iii$. 

We remark that more quantitave estimates can be obtained through rademacher-complexity based analysis for specification activations such as Relu \citep{damian2022neural}.

\subsection{Relaxing Assumption \ref{ass:bad}}
\label{sec:app:ass_bad}
In this section, we adress the requirement of Assumption \ref{ass:bad} and steps towards relaxing it. Assumption \ref{ass:bad} simplifies our analysis by ensuring that $\mathcal{P}_{\mathcal{S}_1} f^\star(\vec{x})$ is exactly $h^\star(\vec{x})$ arising from the first-order Hermite coefficient of $g^\star(\vec{x})$. In general, however, the degree-$k$ approximation of $f^\star(\vec{x})$ may contain additional components involving higher-degree dependence on $h^\star(\vec{x})$. For instance, if $g^\star(\vec{x})$ has a non-zero second-order Hermite coeffient, then Lemma \ref{prop:hermite_comp} implies that $\operatorname{He}_2(h^\star(\vec{x}))$ can be decomposed into components of Hermite degree $4,\cdots, 2k$. Therefore, if $k\geq 4$, gradient updates to $W_2$ result in $h_2(\vec{x}) \approx c (h^\star + \text{higher order components})$. While ideally one would hope that the learning of such additional components would only help towards fitting $f^\star(\vec{x})$ by $w_3$, this would require the second-layer pre-activations to disentangle $h^\star$ and the remaining components i.e. to specialize across non-linear features. Analysis of such a specialization remains challenging due to the reasons described in Appendix \ref{sec:heuristic}. Therefore, relaxing Assumption \ref{ass:bad} requires going beyond the single-spike ($r=1$) non-linear feature learning.

Additionally, as we saw through the decomposition of the activation into radial and spherical arguments
(Equation \ref{eq:main_sigma_decomp}), the radial components exhibit slower-decay w.r.t the degree. Therefore, $d^{k \varepsilon}$ samples, neurons suffice towards learning degree-$k$ components on $\frac{1}{\sqrt{d^{\varepsilon_1}}}\norm{\vec x^\star}^2$ which correspond to degree $2k$ components on $\vec{x}$. We believe this to be an artifact of our choice $a^\star=1$, which leads to a special dependence along the radial component. Going beyond the isotropic $a^\star=1$ setting is however, challenging due to our reliance on diagonalization of the associated Kernel along a fixed basis.

\section{Deeper networks: Proof of Theorem \ref{thm:multi-layer}}\label{app:multiple_layers}

\subsection{Independence of features}\label{app:multiple_layers_ind}

The independence of $h^\star(\vec{x})$ follows by noting that by induction, for all $\ell \in [L]$, distinct components of $\vec h^\star_\ell(\vec{x})$ depend on projections of $\vec{x}$ along distinct subspaces.

% \lp{to complete}
% In this section, we discuss the main challenges that arise when extending the arguments for proving the main Theorem~\ref{thm:main_theorem} to MIGHT functions ($r>1$) and general depth targets ($\ell >3$). 

\begin{lemma}[Block–wise independence of hidden features]\label{lem:block_indep}
For every layer index $\ell\in[L]$, the random variables
\(
\bigl\{\,h^\star_{\ell,m}(\vec{x})\bigr\}_{m=1}^{d^{\varepsilon_\ell}}
\)
are mutually independent and each has zero mean and unit variance.
\end{lemma}

\begin{proof}
We prove this result by induction.
\begin{itemize}
\item {Base case $(\ell=1)$.}  
The first–layer features are
${\bf h}^\star_{1}(\vec{x})= W^\star\vec{x}$ with orthonormal rows.
Hence the components $\langle\vec w^\star_{1,m},\vec{x}\rangle$
are i.i.d.\ ${\mathcal N}(0,1)$ and independent.

\item Induction step. 
Assume the claim holds for layer $\ell-1$.
Fix $m\in[d^{\varepsilon_\ell}]$ and recall
\[
h^\star_{\ell,m}(\vec{x})
=\frac{1}{\sqrt{d^{\varepsilon_{\ell-1}-\varepsilon_{\ell}}}}
\,\vec a^{\star\top}_{\ell,m}\,
P_{k,m,\ell}\!\Bigl(
{\bf h}^\star_{\ell-1,\mathcal B_m}(\vec{x})
\Bigr),
\]
where $\mathcal B_m$ is the $m$-th disjoint block of indices of size
$d^{\varepsilon_{\ell-1}-\varepsilon_\ell}$.
By the induction hypothesis the entries of
${\bf h}^\star_{\ell-1,\mathcal B_m}(\vec{x})$
are independent of those in any other block
$\mathcal B_{m'}$ $(m'\neq m)$.
Because $P_{k,m,\ell}$ and the inner product with
$\vec a^{\star}_{\ell,m}$ are deterministic maps,
$h^\star_{\ell,m}(\vec{x})$ depends only on block $\mathcal B_m$ and
is independent of $h^\star_{\ell,m'}(\vec{x})$ for $m'\neq m$.
The variance–normalization follow from the definition of
$P_{k,m,\ell}$ and the scaling
$1/\sqrt{d^{\varepsilon_{\ell-1}-\varepsilon_\ell}}$.
\end{itemize}
This concludes the proof.
\end{proof}
\subsection{Proof of Theorem \ref{thm:multi-layer}}
By the independence and asymptotic Gaussianity of the features $\vec h^\star_{\ell}(\vec{x})$ we expect the above result to extend to a general number of layers. However, proving such a result in its full-generality requires accounting for the non-asymptotic rates for the tails of $\vec h^\star_{\ell}(\vec{x})$ and the associated kernels.

Instead, we prove a weaker result corresponding to the hierarchical weak-recovery of a single non-linear feature at a general level of depth, given by Theorem \ref{thm:multi-layer}.

% \begin{theorem}[Informal]
% Let $f^\star(\vec{x})$ be a target as in Equation \ref{eq:target_def_deep} with $r=1$. Consider a student model of the form 
% $\hat{f}_\theta(\vec{x}) = \vec{w}_L^\top \sigma(W_{L-1} \sigma( W h^\star_{L-2}(\vec{x})))$ with $W \in \mathbb{R}^{p \times d^{\varepsilon_{L-2}
% }}$ having rows independently sampled as $w_i \sim U(\mathcal{S}(1))$
% i.e a model with the ${L-2}_{th}$ layer having perfectly recovered $h^\star_{L-2}(\vec{x})$.
% Then, a single  step of pre-conditioned SGD on $W_{L}$ with batch-size $\Theta(d^{{k\varepsilon_\ell}+\delta})$ results in $h_{L-1}(\vec{x}) \coloneqq W_{L-1} \sigma( W h^\star_{L-1}(\vec{x}))$ recovering $h^\star_{L-1}(\vec{x})$.
% \label{th:multi}
% \end{theorem}

The central tool underlying our proof is a propagation of hyper-contractivity through the layers:

\begin{proposition}[Propagation of Hyper-contractivity]\label{prop:hyper_prop}
    Let $f:\mathbb{R} \rightarrow \mathbb{R}$ be a polynomial of finite-degree $k$. Then, for any $\ell \in \mathbb{N}$:
    \begin{enumerate}
        \item $\vec h^\star_\ell(\vec{x})= \mathcal{O}_\prec(1)$.
        \item $\Ea{\abs{\vec h^\star_\ell(\vec{x})}} = \tilde{\mathcal{O}}(\frac{1}{\sqrt{d^{\varepsilon_\ell}}})$
        \item $\Ea{\norm{\vec h^\star_\ell(\vec{x})\vec h^\star_\ell(\vec{x})^\top-\mathbb{I}_{p_\ell}}}=\tilde{\mathcal{O}}(\frac{1}{\sqrt{d}}).$
        
    \end{enumerate}
\end{proposition}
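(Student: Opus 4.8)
The plan is to prove the three items simultaneously by induction on the depth index $\ell$, exploiting at each step the \emph{exact} independence of the components of $\vec h^\star_{\ell-1}(\vec x)$ guaranteed by the tree-like construction, together with Gaussian hypercontractivity (Lemma~\ref{lem:hyper}) and the non-asymptotic central limit estimate of Lemma~\ref{lem:lind}. The base case $\ell=1$ is immediate from \eqref{eq:linear_feature}: since $W^\star$ has orthonormal rows and $\vec x\sim\mathcal N(\vec 0,\mathbb I_d)$, the vector $\vec h^\star_1(\vec x)=W^\star\vec x$ is an exact standard Gaussian, so (1) holds by Gaussian tails, the mean in (2) is exactly $\vec 0$, and in (3) one has $\Ea{\vec h^\star_1(\vec h^\star_1)^\top}=\mathbb I_{p_1}$ exactly.

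For the inductive step I would first record the structural observation that, because the blocks indexed by $m$ in \eqref{eq:non_linear_feature_def} descend from disjoint subtrees, each component $h^\star_{\ell,m}(\vec x)$ is a fixed polynomial of $\vec x$ whose degree is the product of the degrees of the fixed polynomials at levels below $\ell$ (hence a constant for fixed $\ell$), depending on $\vec x$ only through the projections $\{\langle w^\star_i,\vec x\rangle: i\in S_{\ell,m}\}$ onto a subset $S_{\ell,m}$ of the rows of $W^\star$, with $S_{\ell,m}\cap S_{\ell,m'}=\emptyset$ for $m\neq m'$; since the rows of $W^\star$ are orthonormal, distinct components $h^\star_{\ell,m},h^\star_{\ell,m'}$ are therefore exactly independent. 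Item (1) then follows by applying Gaussian hypercontractivity to the bounded-degree polynomial $h^\star_{\ell,m}(\vec x)$ to control all its $L^p$ norms by a constant times its $L^2$ norm (which the inductive hypothesis keeps of order $\Theta(1)$ through the second moments of $\vec h^\star_{\ell-1}$), yielding $h^\star_{\ell,m}(\vec x)=\mathcal O_\prec(1)$ for each $m$, and then union-bounding over the $d^{\epsilon_\ell}$ components, which is legitimate because $\mathcal O_\prec$ is closed under polynomially many unions.

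For item (2) I would write $h^\star_{\ell,m}(\vec x)=d^{-(\epsilon_{\ell-1}-\epsilon_\ell)/2}\sum_j a^\star_{\ell,m,j}\,P_{k,m,\ell}\bigl([\vec h^\star_{\ell-1}]_j\bigr)$ as a normalized sum of $d^{\epsilon_{\ell-1}-\epsilon_\ell}$ summands that are independent (by the inductive independence), $\mathcal O_\prec(1)$, mean zero up to the inductive bias, and of variance $1+o(1)$ under the normalization $\Ea{P_{k,m,\ell}(z)^2}=1$; the vanishing of the degree-$0$ and degree-$1$ Hermite parts of $P_{k,m,\ell}$ (Ass.~\ref{ass:target}) is what lets the $d^{-(\epsilon_{\ell-1}-\epsilon_\ell)/2}$ normalization exactly balance the fluctuation. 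Applying Lemma~\ref{lem:lind} with $q(z)=z$ and with $q(z)=z^2$ (and Proposition~\ref{lem:stoch-dom-mean} for concentration about the mean) transfers the layer-$(\ell-1)$ bias and second-moment discrepancy through $P_{k,m,\ell}$ and the averaging, giving the asserted smallness of $\Ea{\abs{h^\star_\ell(\vec x)}}$ and, as a by-product, $\abs{\Ea{(h^\star_{\ell,m})^2}-1}=o(1)$ at the needed rate. Item (3) is then obtained by bounding $\Ea{\vec h^\star_\ell(\vec h^\star_\ell)^\top}-\mathbb I_{p_\ell}$ entrywise: off-diagonal entries equal $\Ea{h^\star_{\ell,m}}\Ea{h^\star_{\ell,n}}$ by exact independence and are thus quadratically small by item (2), while the diagonal entries are controlled by the second-moment estimate just obtained; summing over the $O(d^{2\epsilon_\ell})$ entries, equivalently bounding the operator norm by the Frobenius norm, produces the $\mathcal O(d^{-1/2})$ rate.

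The step I expect to be the main obstacle is the quantitative bookkeeping of these non-asymptotic rates across the $L$ layers: the central-limit discrepancy produced at level $\ell-1$ is fed into the polynomial $P_{k,m,\ell}$ at level $\ell$, and one must verify that this composition never amplifies the error faster than the $d^{-(\epsilon_{\ell-1}-\epsilon_\ell)/2}$ averaging can absorb it — i.e.\ that the sequence of error exponents stays on the correct side of $\epsilon_\ell$ at every level, which is where the strict monotonicity $\epsilon_1>\epsilon_2>\cdots>\epsilon_{L-1}$ is used. A secondary difficulty is phrasing the ``disjoint blocks of an orthonormal frame are independent'' fact in a form strong enough to feed into the matrix-concentration arguments behind Theorem~\ref{thm:multi-layer} — controlling sample covariances of $\sigma(W\vec h^\star_{L-2}(\vec x))$ rather than merely distributions — which is exactly where the uniform tail bounds of item (1) do the work.
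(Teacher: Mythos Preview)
Your proposal is correct and follows essentially the same route as the paper: induction on $\ell$, with the inductive step driven by the exact independence of disjoint blocks, Gaussian hypercontractivity (Lemma~\ref{lem:hyper}) for item~(1), and the non-asymptotic CLT bound of Lemma~\ref{lem:lind} together with Proposition~\ref{lem:stoch-dom-mean} to propagate the bias and second-moment error through $P_{k,m,\ell}$ for item~(2). Your write-up is in fact more complete than the paper's own proof, which does not spell out item~(3) and leaves the ``disjoint subtrees imply independence'' step implicit; your additional observation that each $h^\star_{\ell,m}$ is a fixed-degree polynomial in the original Gaussian $\vec x$ (so hypercontractivity applies directly, without inducting through the sum) is a slight but harmless variant of the paper's argument.
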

\begin{proof}
    The proof proceeds by induction. For $\ell=1$, the statements hold by Gaussian-hypercontractivity (Lemma \ref{lem:hyper}) since $\text{He}_k$ for distinct $w^\star_i$ are uncorrelated, zero-mean random variables and thus $h^\star_2(\vec{x})$ has all moments of bounded order.
    
    Suppose the statements hold for some $\ell \in \mathbb{N}$. Applying Lemma \ref{lem:lind}, we obtain:
    \begin{equation}\label{eq:prop-bound}
\Ea{\abs{\text{He}_k(\vec h^\star_\ell)}} = \mathcal{O}(\frac{d^{\delta}}{\sqrt{d^{\varepsilon_\ell}}}),
    \end{equation}
    for any $\delta > 0$.
    Subsequently, applying \ref{lem:stoch-dom-mean} leads to the following propagation of tails:
    \begin{align*}
         {h}^\star_{\ell+1,m}(\vec{x}) &= \frac{1}{\sqrt{d^{\varepsilon_{\ell-1}-\varepsilon_{\ell}}}}\vec a^{\star^\top}_{\ell,m} P_{k, m, \ell}\left({\bf h}^\star_{\ell-1,\{1+(m-1)d^{\varepsilon_{\ell-1}-\varepsilon_{\ell}},\ldots, md^{\varepsilon_{\ell-1}-\varepsilon_{\ell}}  \}}(\vec{x})\right)\\
         &=\frac{1}{\sqrt{d^{\varepsilon_{\ell-1}-\varepsilon_{\ell}}}} \sum_{i=1}^{\sqrt{d^{\varepsilon_{\ell-1}-\varepsilon_{\ell}}}} \mathcal{O}_\prec(1) = \mathcal{O}_\prec(1),
    \end{align*}
where we used the bound $h^\star_{\ell}(\vec{x})=\mathcal{O}_\prec(1)$ by the induction hypothesis. By Lemma \ref{lem:stoch-dom-mean} and Equation \ref{eq:prop-bound}, for any $\delta > 0$, we obtain that:
\begin{equation}
    \Ea{\abs{h^\star_{\ell+1,m}(\vec{x})}}= \tilde{\mathcal{O}}(\sqrt{d^{\varepsilon_{\ell}-\varepsilon_{\ell+1}}} \frac{1}{\sqrt{d}^\varepsilon_{\ell}}) = \tilde{\mathcal{O}}(\frac{1}{\sqrt{\varepsilon_{\ell+1}}}).
\end{equation}
\end{proof}

The above proposition establishes that the hidden features $h^\star_\ell(\vec{x})$ maintain errors in means $\mathcal{O}_\prec(\frac{1}{\sqrt{d}^{\varepsilon_\ell}})$ and preserve tails of the form $\mathcal{O}_\prec(1)$.
 Theorem \ref{thm:multi-layer} then follows by noting that the above error bounds suffice for Proposition \ref{prop:orth_decomp} to hold for the feature-matrix $\sigma(Wh^\star_{L-1}(\vec{x}))$. Concretely, $h^\star_\ell(\vec{x})= \mathcal{O}_\prec(1)$ ensures that Lemma \ref{lem:mat_conc} applies while the errors in means, covariances suffice for the expected covariance of spherical harmonics to converge to $\mathbf{I}$.

Analogous to Section \ref{sec:decomp_feature}, we introduce the following partitioning of the indices:

\begin{align*}
    \mathcal{S}_1 &= \{j_1,k_1:k_1=0, j_1 \leq 2k\} \cup \{j_1,j_2,k_1,k_2:j_1=0,k_1 \leq k\}\\
     \mathcal{S}_2 &= \{j_1,k_1 \in \mathbb{N}^2\} \backslash (\mathcal{S}_1 \cup \mathcal{S}_1).
\end{align*}

Above, we only have two partitions as opposed to the three partitions in Section \ref{sec:decomp_feature} since the features $h^\star_{L-1}(\vec{x})$ are no longer partitioned into disjoint spaces, unlike the partitioning of $\vec{x}$ into $\vec{x}^\star, \vec{x}^\perp$ in Section \ref{sec:secondlayer}.

We again write:
\begin{equation}
    \sigma(Wh^\star_{L-1}(\vec{x})) = \Psi_{\mathcal{S}_1}(h^\star_{L-1}(\vec{x}))\Phi(W)_{\mathcal{S}_1}^\top + \Psi_{\mathcal{S}_2}(h^\star_{L-1}(\vec{x}))\Phi(W)_{\mathcal{S}_2}^\top,
\end{equation}

Unlike Proposition \ref{prop:orth_decomp} that involved approximations in $W$, the above decomposition involves approximating $h^\star_{L-1}(\vec{x})$ through equivalent Gaussian-inputs $\vec{x}$. The proof follows that of Proposition \ref{prop:orth_decomp}, with
Proposition \ref{prop:hyper_prop} implying that:
\begin{equation}
    \norm{\frac{1}{n}\sum_{\mu=1}^n \psi_{\mathcal{S}_1}(h^\star_{L-1}(\vec{x}_\mu))\psi_{\mathcal{S}_1}(h^\star_{L-1}(\vec{x}_\mu)))^\top - \mathbb{I}_{M}} = \mathcal{O}_\prec(\frac{1}{\sqrt{d^{\delta}}}).
\end{equation}

For the corresponding non-linear features along $W$, since the rows of $W$ are independently sampled along $U(\mathcal{S}_d(1)$, we directly have:
\begin{equation}
    \norm{\frac{1}{p}\sum_{\mu=1}^n \phi_{\mathcal{S}_1}(h^\star_{L-1}(\vec{w}_i))\Psi_{\mathcal{S}_1}(h^\star_{L-1}(\vec{w}_i)))^\top - \mathbb{I}_{M}} = \mathcal{O}_\prec(\frac{1}{\sqrt{d^{\delta}}}).
\end{equation}

The remainder of the proof follows that of Propositions \ref{prop:cov_mat_struct} and \ref{prop:proj_app}.

\section{Extension to MIGHTs}\label{app:might}

While our analysis is restricted to $r=1$, we discuss here the primary challenges and directions towards the extension of our results to $r>1$:
\begin{enumerate}
    \item \textbf{Spherical recovery}: Under the assumption, $a^\star_i=1$, and $\mu_1(g^\star)= c(1,1,\cdots)$ for some constant $c$,  
    our analysis for the recovery of $W^\star_1, \cdots, W^\star_r$ by $W_1$ remains identical. Specifically, each neuron $\vec{w}^1_i$ recovers $\vec u^\star = \frac{P_{W^\star_1, \cdots, W^\star_r} \vec{w}^1_i}{\norm{P_{W^\star_1, \cdots, W^\star_r} \vec{w}^1_i}}$.
    \item \textbf{Specialization}: Even under the above symmetric setup, a single pre-conditioned gradient step only leads to recovery by $h_2(\vec{x})$ of the symmetric direction $\frac{1}{\sqrt{r}}\sum_{i=1}^r h^\star_i(\vec{x})$.
    Hence, extension of our analysis to $r>1$ requires specialization through multiple pre-conditioned gradient steps. One promising approach to achieve such specialization is through the use of the staircase mechanism \cite{abbe2022merged,abbe2023sgd} in the target $g^\star(\cdot)$.
\end{enumerate}

% \lp{to complete}

\section{Details on the Numerical Investigation}
\label{sec:app:numerics}
In this section, we provide additional insights into the numerical illustrations presented in the main text. We refer to \href{https://github.com/IdePHICS/ComputationalDepth}{https://github.com/IdePHICS/ComputationalDepth} for the code.

\begin{figure*}[t]
\centering
\subfigure[Reinitializing]{\includegraphics[width=0.49\linewidth]{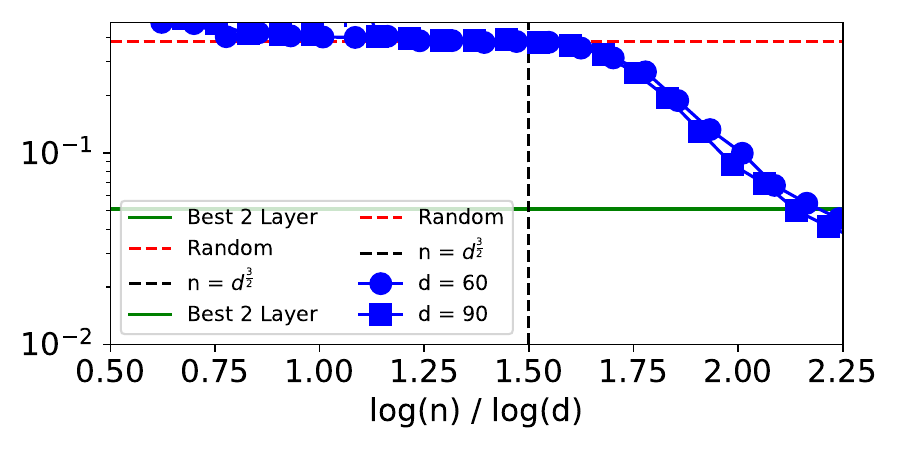}}
\subfigure[Without reinitializing]{\includegraphics[width=0.49\linewidth]{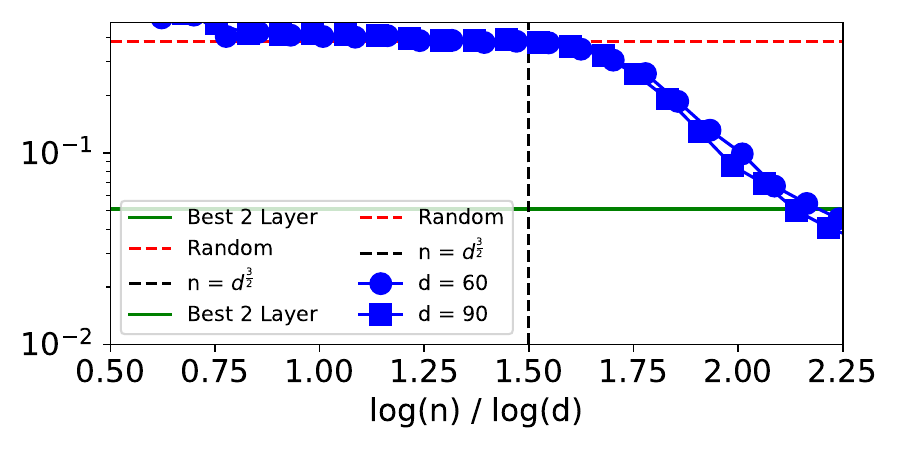}}
\caption{\textbf{Reinitialization of subsequent layers:} The plots compare the generalization error achieved by two variants of the layerwise procedure in Theorem~\ref{thm:main_theorem}. The left panel illustrates a routine with reinitialization of the subsequent layers against a procedure where this assumption is relaxed in the right panel. There is no substantial difference between the two algorithms when looking at the generalization performance. The target is $f^\star(\vec{x}) =  \tanh\left(
{\vec{a}^{\star^\top} \, 
P_3\left(W^\star \vec{x}\right) }/{\sqrt{d^{\varepsilon_1=1/2}}} \right)$ and the hyperparameters are listed in Sec.~\ref{sec:hyperparams}. }
    \label{fig:app:reinit_check}
\end{figure*}

\begin{figure*}[t]
\centering
\subfigure[Fresh batch layerwise]{\includegraphics[width=0.49\linewidth]{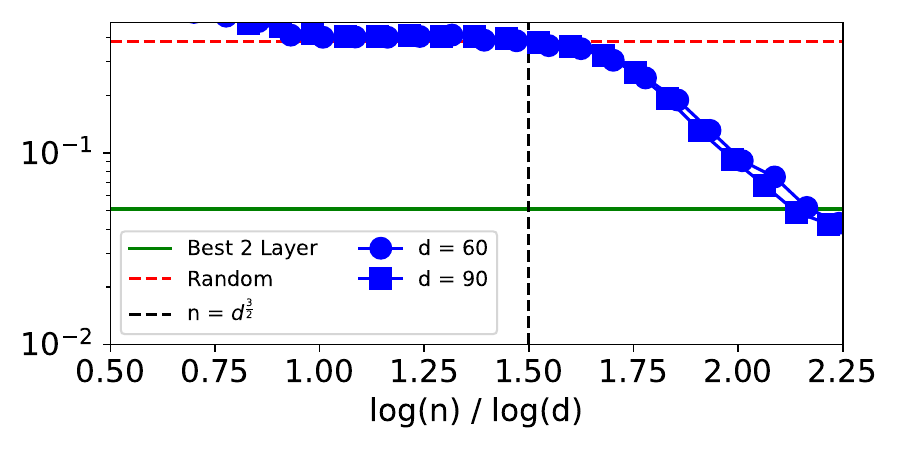}}
\subfigure[Reuse batch layerwise]{\includegraphics[width=0.49\linewidth]{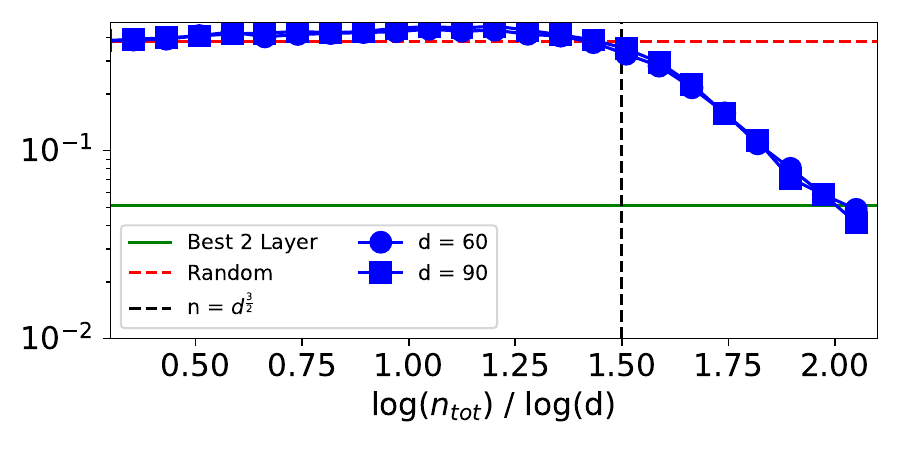}}
\caption{\textbf{Reuse of the same data batch over layers:} The plots compare the generalization error achieved by two variants of the layerwise procedure in Theorem~\ref{thm:main_theorem}. The left panel illustrates a routine without using the same batch of data for different layers of training, while on the right this assumption is relaxed by always holding constant the total number of samples seen for every layer. There is no substantial difference between the two algorithms when looking at the generalization performance. The target is $f^\star(\vec{x}) =  \tanh\left(
{\vec{a}^{\star^\top} \, 
P_3\left(W^\star \vec{x}\right) }/{\sqrt{d^{\varepsilon_1=1/2}}} \right)$ and the hyperparameters are listed in Sec.~\ref{sec:hyperparams}.}
    \label{fig:app:fresh_batch_check}
\end{figure*}

\subsection{Shallow methods} We illustrate in Fig.~\ref{fig:gen_error_fig1} the performance of two shallow methods: kernels (orange) and two-layer networks (green). At stake with three-layer architectures (red and blue), shallow methods are not able to perform non-linear feature learning, hence resulting in suboptimal performance. Below, we provide additional clarifications on these methods.

\paragraph{Kernel methods --} We consider a quadratic kernel $k(\vec x_1, \vec x_2) = (\vec x_1^\top, \vec x_2)^2 + (\vec x_1^\top, \vec x_2) + c = \vec \varphi_{\rm quad}^\top (\vec x_1) \vec \varphi_{\rm quad}(\vec x_2) $ that is an optimal choice among kernel mappings in the data regime explored ($n = o_d(d^{2+\delta})$), as follows by the asymptotics results in  \cite{mei_generalization_2022}. The feature map $\varphi_{\rm quad}$ is not learned, therefore we refer to kernel methods as ``fixed feature'' methods. The lack of feature learning, and therefore adaptation to the relevant low-dimensional subspaces present in the SIGHT target $f^\star$, results in a large error value achieved by the best possible kernel methods (signaled with an orange solid line in Fig.~\ref{fig:gen_error_fig1}) that serve as a lower bound for the simulations (shown as orange points). This bound coincides with the best quadratic approximation of the target as shown by \cite{mei_generalization_2022}. The figure shows also neatly the presence of the double descent peak when the number of data equals the dimension of the feature space, sometimes called the interpolation peak: $n_{\rm peak}=d(d-1)/2 + d + 1$; this is illustrated by a vertical orange dashed line in the left section of  Fig.~\ref{fig:gen_error_fig1}.

\paragraph{Two-layer networks -- } Two-layer networks are able, on the other hand, to capture linear features in the SIGHT target $f^\star$ (denoted $W^\star$ in eq.~\eqref{eq:3layer_target}). This is exemplified in Fig.~\ref{fig:gen_error_fig1} by the green points, with a net decrease in the test error with respect to kernel methods (orange ones). The generalization error shows a transition around the expected $\kappa = 1.5$, where Theorem~\ref{thm:main_theorem} predicts that the linear features $W^\star$ are recovered (shown in the illustration by a vertical black line). However, we observe that two-layer networks in this setting cannot surpass the green solid line, corresponding to the best quartic approximation of the target. This is explained by the fact that, although partial dimensionality reduction has been achieved $d \to d^{\varepsilon_1}=\sqrt{d}$, two-layer networks  are still performing random features in a $\sqrt{d}-$dimensional space. Therefore, with $n\simeq p =O(d^2) = O(\sqrt{d}^4)$ samples and neurons, we can fit the best quartic approximation of the target \citep{mei_generalization_2022}. 

\subsection{Three layer networks}
The results portrayed in Fig.~\ref{fig:gen_error_fig1} show a stark contrast between two and three-layer networks, with the latter surpassing the best possible performance for a shallow network (green solid line) thanks to the presence of non-linear feature learning. 

We consider two training routines: a) the layerwise procedure, resembling Theorem~\ref{thm:main_theorem} and algorithmically described in Alg.~\ref{alg:layerwise}; b) training using backpropagation and vanilla regularized gradient descent for all the layers jointly. 

\paragraph{Remark on Algorithm~\ref{alg:layerwise} --} Throughout this section we will consider a 
slight generalization of the routine in Alg.~\ref{alg:layerwise}: we will update the second layer weights reusing a single batch of size $\mathcal{O}(d^{k\varepsilon})$ for up to $\mathcal{O}(d^{ k\varepsilon})$ steps instead of using a single gradient step with preconditioning. We refer to Sec.~\ref{app:pre-cond} for discussion on the difficulties of analyzing rigorously such routine. 

Moreover, we do not follow all the theoretical prescriptions needed to prove rigorously the results and included in Alg.~\ref{alg:layerwise}. The goal of Figures~\ref{fig:app:reinit_check} and~\ref{fig:app:fresh_batch_check} is to exemplify the capability of lifting some of the theoretically needed assumptions. Respectively, in Fig.~\ref{fig:app:reinit_check} we analyze the presence of reinitialization of subsequent layers, and in Fig.~\ref{fig:app:fresh_batch_check} we consider the presence of shared batches across layers. In both cases, we do not observe a stark difference between the two settings. 

\begin{figure*}[t]
\centering
\subfigure[Layerwise training]{\includegraphics[width=0.49\linewidth]{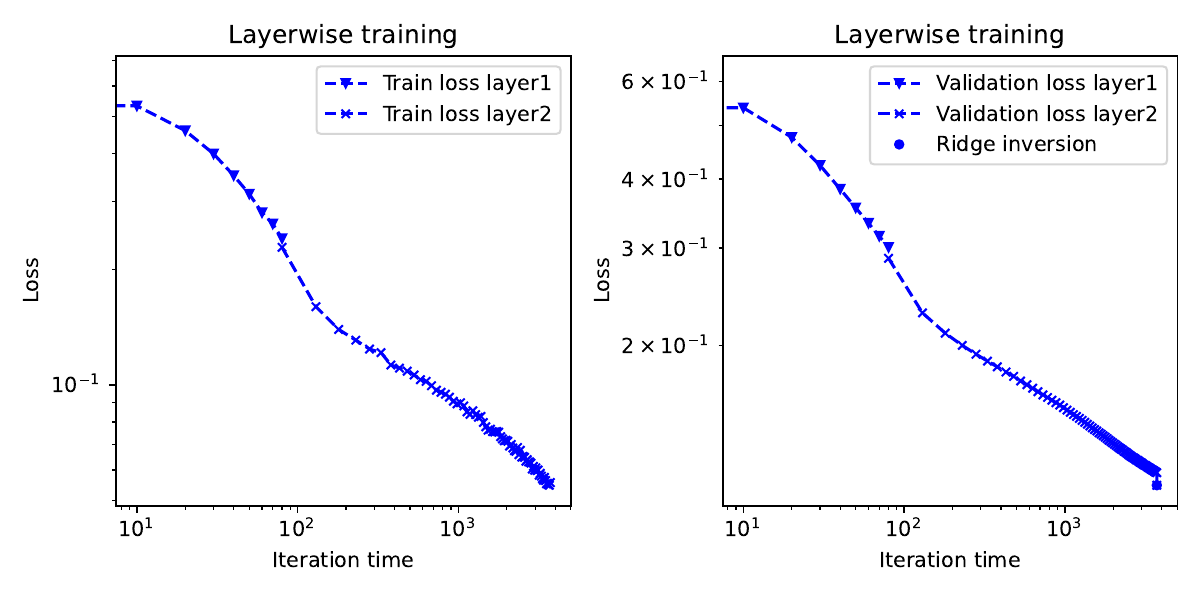}}
\subfigure[Joint Training]{\includegraphics[width=0.49\linewidth]{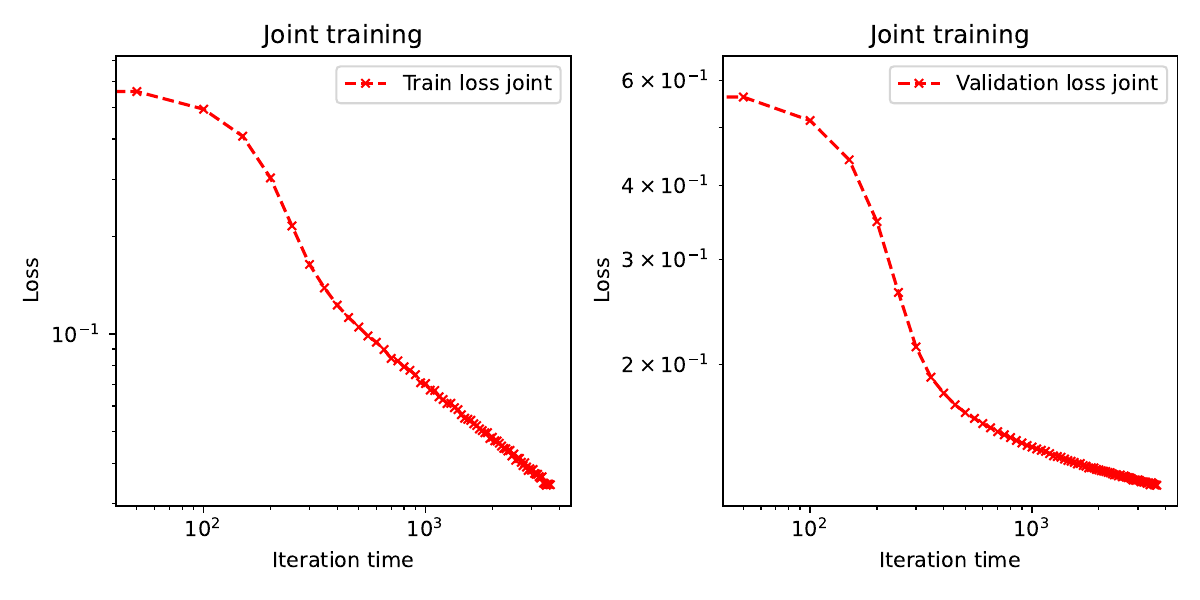}}
\caption{\textbf{Training/Validation loss:} The plots illustrate the behavior of the training and validation losses as a function of the iteration time. It shows respectively on the left the layerwise training procedure inspired by Theorem~\ref{thm:main_theorem} (Alg.~\ref{alg:layerwise}), while on the right standard joint training using backpropagation. The target is $f^\star(\vec{x}) =  \tanh\left(
{3\vec{a}^{\star^\top} \, 
P_3\left(W^\star \vec{x}\right) }/{\sqrt{d^{\varepsilon_1=1/2}}} \right)$ and the hyperparameters are listed in Sec.~\ref{sec:hyperparams}}
    \label{fig:app:multiple_layers}
\end{figure*}
Finally, we plot the training and validation loss curves that guided our analysis (See Fig.~\ref{fig:app:multiple_layers}).

\subsection{Visualizing Feature Learning}
In this section, we complement the observations made in Figure~\ref{fig:theorem_illustration}. The striking capability of a three-layer network, as unveiled by our hierarchical construction, is to perform non-linear feature learning (equivalent to having a $M_h >0$ as the input dimension diverges in the bottom panel of Fig.~\ref{fig:theorem_illustration}). Fig.~\ref{fig:theorem_illustration} uses the trained weights and illustrates the behavior of the sufficient statistics for different values of $\kappa$, showing a transition around the theoretically predicted value of $\kappa = 1.5$. We use $1000$ random samples to estimate the expectation defining the non-linear overlap $M_h$.

We exemplify in Fig.~\ref{fig:time_sufficient_stats} the ``dual'' plot of Fig.~\ref{fig:theorem_illustration} by showing the evolution in time of the sufficient statistics $M_W, M_h$ for two different values of $\kappa = \frac{\log n}{\log d}$. The plot shows that when $\kappa < 1.5$ (the critical threshold) feature learning is impossible, as it is reflected by the overlaps attaining the random guess value. On the other hand for $\kappa > 1.5 $ the overlaps grow far from the random initialization performance.

Additionally, we illustrate the evolution in time of the overlaps under the learning of MIGHT functions (eq.~\eqref{eq:3layer_target_might}) in Fig.~\ref{fig:parity_stair_comparison}. The figure exemplifies the necessity of Assumption~\ref{ass:target} that refers to the generalization of the information exponents \cite{BenArous2021, damian2024computational} of the multi-index target literature to the present hierarchical setting.

\subsection{Hyperparameters}
\label{sec:hyperparams}
In every figure showing sufficient statistics or generalization errors, we average over $20$ different seeds and plot the median. The regularization strengths for the different layers are optimized with standard hyperparameter sweeping for every value of $\kappa$ plotted, while the other hyperparameters are considered fixed. More precisely, we fix:
\begin{enumerate}
    \item First hidden layer size: $p_1 = \mathrm{int}(n_{max}^{1-\delta})$, with $n_{max}$ the maximal $n$ probed in the respective plot and $\delta = 0.1$
    \item Second hidden layer size: $p_2 = 600$. 
    \item Hidden layer size for two-layer network: $p = \mathrm{int}(p_1/25)$
    \item Learning rates: while the orders of magnitude for the different learning rates as a function of $d$ are provided in Alg.~\ref{alg:layerwise} for layerwise training we use fixed prefactor $\mathrm{lr}_1 = 1, \mathrm{lr}_2 = 2$. Concerning joint training we use instead for all the three layers all the prefactors equal to $0.2$.
    \item Minibatch size: $n_b = \mathrm{int}(\frac{7n}{10})$, with $n = d^\kappa$.
    \item Iteration time: we follow the prescriptions of Theorem~\ref{thm:main_theorem} iterating for $T_1 = O(\mathrm{polylog}(d))$ steps and $T_2 = O(d^{1.5})$ steps. In the numerical implementation we consider for layerwise training $T_1 = \mathrm{int(15 \log d)}, T_2 = \mathrm{int}(5d^{1.5})$. On the other hand, for standard training using backpropagation, we iterate jointly all the layers for $T_2$ steps.
\end{enumerate}

\begin{figure*}[t]
\centering
\subfigure[$\kappa = 1.2$]{\includegraphics[width=0.49\linewidth]{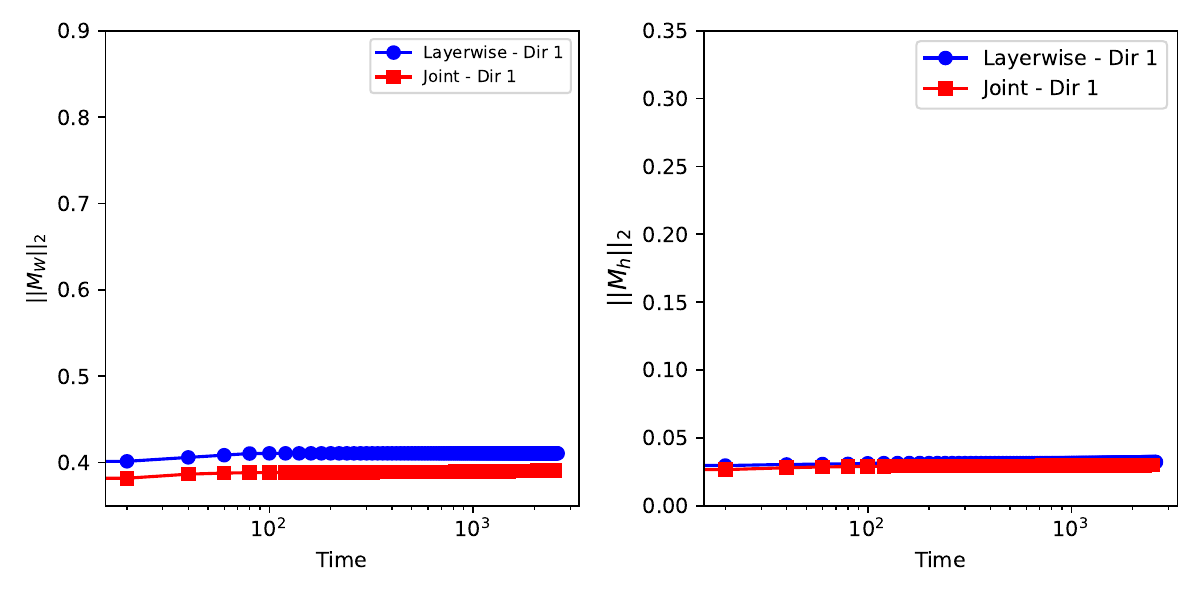}}
% \subfigure[$\kappa = 1.7$]{\includegraphics[width=0.33\linewidth]{figs/CosSimAllTogetherTime3Nmax4.pdf}}
\subfigure[$\kappa = 2$]{\includegraphics[width=0.49\linewidth]{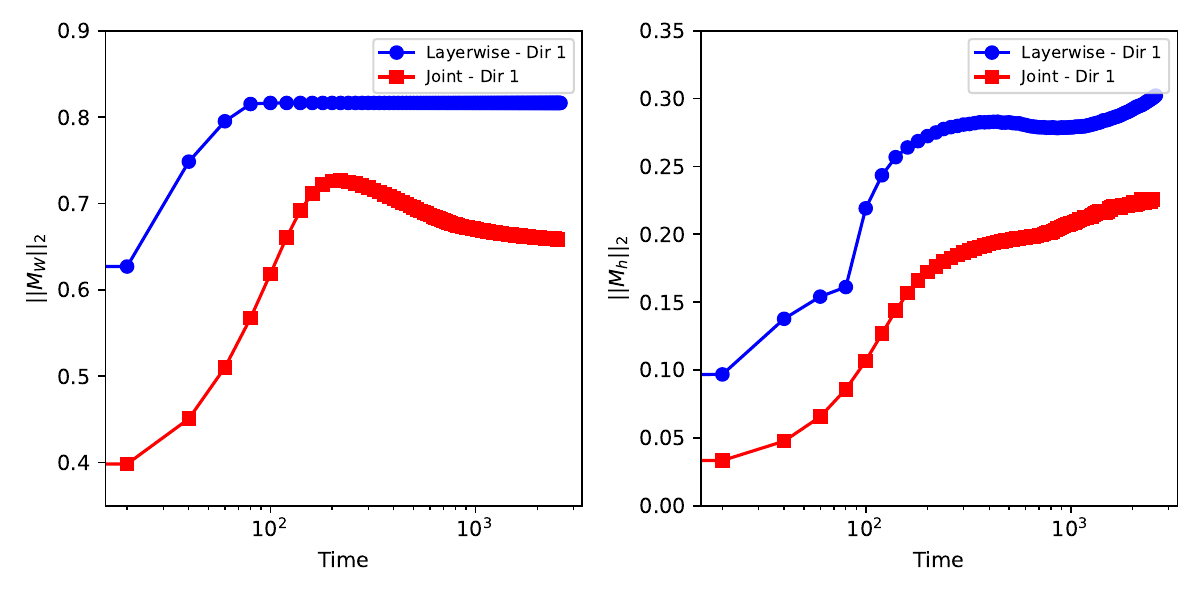}}
\caption{\textbf{Visualizing Feature Learning:} The plot shows the evolution of the Frobenius norm of the overlaps (Definition~\ref{def:sufficient_stat}) as a function of the training time $t$ for two different values of $\kappa = \frac{\log n}{\log d}$, respectively $\kappa = 1.2$ on the left and $\kappa = 2$ on the right. Different training methods are illustrated with different colors: in blue the layerwise training (Alg.~\ref{alg:layerwise}), in red standard joint training using backpropagation. The target is $f^\star(\vec{x}) =  \tanh\left(
{\vec{a}^{\star^\top} \, 
P_3\left(W^\star \vec{x}\right) }/{\sqrt{d^{\varepsilon_1=1/2}}} \right)$ and the hyperparameters are listed in Sec.~\ref{sec:hyperparams}}
    \label{fig:time_sufficient_stats}
\end{figure*}

\begin{figure*}
\centering
\subfigure[$g^\star(h^\star_1, h^\star_2, h^\star_3)= \mathrm{sign}(h^\star_1h^\star_2h^\star_3)$]{\includegraphics[width=0.49\linewidth]{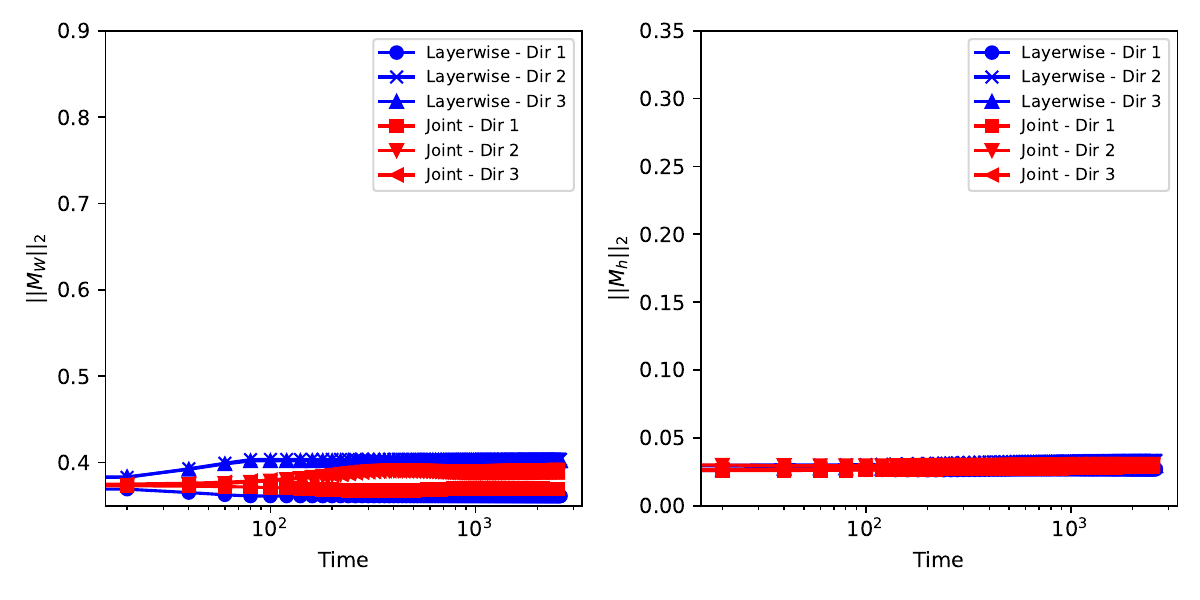}}
\subfigure[$g^\star(h^\star_1, h^\star_2)= h^\star_1 + h^\star_1 h^\star_2 $]{\includegraphics[width=0.49\linewidth]{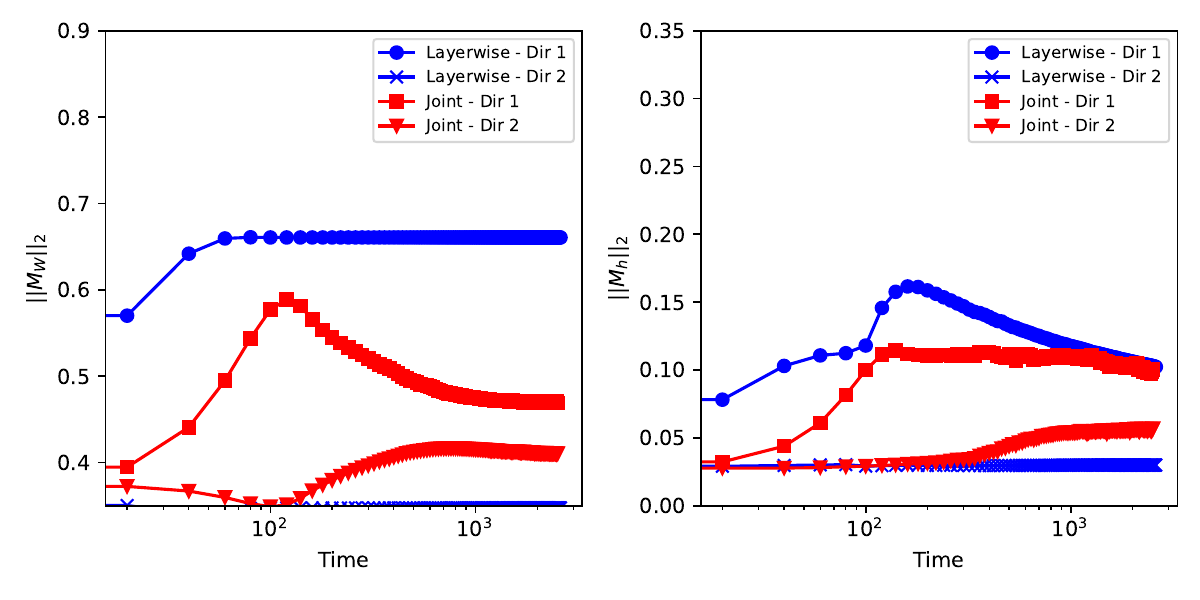}}
\caption{\textbf{Easy and Hard Features:} The plot shows the evolution of the Frobenius norm of the overlaps (Definition~\ref{def:sufficient_stat}) as a function of the training time $t$ for two different values MIGHT functions $f^\star (\vec x) = g^\star (\{h^\star_l\}_{l=1}^r)$ (See eq.~\eqref{eq:3layer_target_might}), with the non-linear features built as in Fig.~\ref{fig:theorem_illustration}, i.e.,  $h^\star(\vec{x}) \propto  
{\vec{a}^{\star^\top}} \, 
P_3(W^\star \vec{x})$ and $P_3 = \mathrm{He}_2 + \mathrm{He}_3$. The hyperparameters are listed in Sec.~\ref{sec:hyperparams}.  Different training methods are illustrated with different colors: in blue the layerwise training (Alg.~\ref{alg:layerwise}), in red standard backpropagation. The overlap component along different directions ($h^\star_l, l = 1 \cdots r$) are signaled with different markers.}
    \label{fig:parity_stair_comparison}
\end{figure*}
\end{document}